\documentclass{article} %

\usepackage[utf8]{inputenc} %
\usepackage[T1]{fontenc}    %
\usepackage{booktabs}       %
\usepackage{amsfonts}       %
\usepackage{nicefrac}       %
\usepackage[final]{microtype}      %
\usepackage[dvipsnames,table]{xcolor}         %
\usepackage{graphicx}
\usepackage{subcaption}
\usepackage{titletoc}

\usepackage[nohyperref,accepted]{icml2026}

\contentsmargin{2.55em}
\dottedcontents{lsection}[3.8em]{\small \itshape}{2.3em}{1pc}
\dottedcontents{lsubsection}[6.1em]{\small \itshape}{2.3em}{1pc}
\titlecontents*{lsubsubsection}[6.1em]{\footnotesize \itshape}{}{}{}[ \textbullet\ ][]

\usepackage{pgfplots}
\usepackage[edges]{forest}
\pgfplotsset{compat=newest}
\usetikzlibrary{trees,positioning,calc,backgrounds}

\usepackage{amsmath}
\usepackage{amsthm}
\usepackage{amssymb}
\usepackage{mathtools}
\usepackage{multirow}
\usepackage{tabularx}
\usepackage{enumitem}
\usepackage{bm}

\usepackage{pifont}%
\newcommand{\cmark}{\ding{51}}%
\newcommand{\xmark}{\ding{55}}%

\newcommand*{\R}{\mathbb{R}}
\newcommand*{\X}{\mathcal{X}}

\newcommand*{\N}{\mathbb{N}}

\newcommand*{\C}{\mathcal{C}}
\newcommand*{\F}{\mathcal{F}}

\newcommand*{\pr}{\mathbb{P}}
\newcommand*{\bfx}{{\bm{x}}}
\newcommand*{\bfX}{{\bm{X}}}

\newcommand*{\bfW}{{\bm{W}}}
\newcommand*{\bfy}{{\bm{y}}}
\newcommand*{\bfv}{{\bm{v}}}

\newcommand*{\bfz}{{\bm{z}}}
\newcommand*{\bfZ}{{\bm{Z}}}

\newcommand*{\bfu}{{\bm{u}}}
\newcommand*{\bfY}{{\bm{Y}}}
\newcommand*{\bsf}{{\bm{f}}}

\newcommand*{\bseps}{{\bm{\epsilon}}}

\newcommand*{\rmd}{\mathrm{d}}

\newcommand*{\ex}{\mathbb{E}}

\newcommand*{\innerprod}[2]{\left\langle #1, #2 \right\rangle}

\DeclareFontFamily{U}{mathx}{}
\DeclareFontShape{U}{mathx}{m}{n}{<-> mathx10}{}
\DeclareSymbolFont{mathx}{U}{mathx}{m}{n}
\DeclareMathAccent{\widecheck}{0}{mathx}{"71}

\definecolor{tn-pink}{HTML}{ff9ec4}       %
\definecolor{tn-rosewater}{HTML}{ffb3c1}  %
\definecolor{tn-flamingo}{HTML}{ff8c9e}   %
\definecolor{tn-purple}{HTML}{7c3aed}     %
\definecolor{tn-magenta}{HTML}{bb9af7}    %
\definecolor{tn-lavender}{HTML}{c0caf5}   %
\definecolor{tn-hotpink}{HTML}{ff007c}    %
\definecolor{tn-red}{HTML}{f7768e}        %
\definecolor{tn-maroon}{HTML}{db4b4b}     %
\definecolor{tn-orange}{HTML}{ff9e64}     %
\definecolor{tn-yellow}{HTML}{e0af68}     %
\definecolor{tn-green}{HTML}{9ece6a}      %
\definecolor{tn-teal}{HTML}{1abc9c}       %
\definecolor{tn-cyan}{HTML}{7dcfff}       %
\definecolor{tn-sapphire}{HTML}{2ac3de}   %
\definecolor{tn-blue}{HTML}{7aa2f7}       %
\definecolor{tn-indigo}{HTML}{3d59a1}     %
\definecolor{tn-tufte}{HTML}{a51c30}      %

\definecolor{tn-text}{HTML}{1a1b26}       %
\definecolor{tn-subtext1}{HTML}{24283b}   %
\definecolor{tn-subtext0}{HTML}{3b4261}   %
\definecolor{tn-overlay2}{HTML}{414868}   %
\definecolor{tn-overlay1}{HTML}{545c7e}   %
\definecolor{tn-overlay0}{HTML}{737aa2}   %
\definecolor{tn-surface2}{HTML}{a9b1d6}   %
\definecolor{tn-surface1}{HTML}{c0caf5}   %
\definecolor{tn-surface0}{HTML}{d5d6db}
\definecolor{tn-base}{HTML}{f7f8fb}       %
\definecolor{tn-mantle}{HTML}{eeeff3}
\definecolor{tn-crust}{HTML}{f5f6f9}

\colorlet{tn-rex}{tn-purple}
\definecolor{tn-night}{HTML}{188092}

\colorlet{ctp-pink}{tn-pink}
\colorlet{ctp-mauve}{tn-rex}

\newcommand{\myccsde}{\cellcolor{tn-purple!20}}
\newcommand{\mycc}{\cellcolor{tn-pink!20}}

\newcommand{\rex}{\textit{\textcolor{tn-rex}{Rex}}\xspace}
\newcommand{\princeps}{\textit{\textcolor{tn-rex}{Princeps}}\xspace}

\PassOptionsToPackage{hyphens}{url}\usepackage[breaklinks=true,bookmarks=false]{hyperref}
\hypersetup{
    colorlinks,
    linkcolor=tn-tufte,
    citecolor=tn-tufte,
    urlcolor=tn-tufte,
    pdftitle={Rex: A Family of Reversible Exponential (Stochastic) Runge-Kutta Solvers},
    pdfauthor={Zander W. Blasingame and Chen Liu},
    pdfsubject={Proceedings of the 43rd International Conference on Machine Learning (ICML 2026)},
    pdfkeywords={diffusion models, reversible solvers, Runge-Kutta, neural differential equations}
}

\usepackage[capitalise, noabbrev]{cleveref}
\crefname{assumption}{Assumption}{Assumptions}
\crefname{appendix}{Appendix}{Appendices}

\makeatletter
\AddToHook{cmd/appendix/before}{\def\cref@section@alias{appendix}\def\cref@subsection@alias{appendix}}
\makeatother

\usepackage{xspace}
\makeatletter
\DeclareRobustCommand\onedot{\futurelet\@let@token\@onedot}
\def\@onedot{\ifx\@let@token.\else.\null\fi\xspace}

\def\eg{\emph{e.g}\onedot} \def\Eg{\emph{E.g}\onedot}
\def\ie{\emph{i.e}\onedot} \def\Ie{\emph{I.e}\onedot}
\def\NB{\emph{N.B}\onedot} 
\def\cf{\emph{cf}\onedot} 
\def\etc{\emph{\&c}\onedot} 
\def\wrt{w.r.t\onedot}
\makeatother

\usepackage[many]{tcolorbox}
\tcbuselibrary{listings, skins}

\newtcolorbox{standoutbox}{
    enhanced,
    notitle,
    left=2mm, right=2mm,
    colback=tn-base,
    colframe=tn-rex,
    arc=6pt,
    boxrule=2pt,
}

\usepackage{thmtools}

\declaretheoremstyle[
    headfont=\color{tn-rex}\bfseries,
    bodyfont=\itshape,
    headpunct={.},
    postheadspace=0.5em,
]{thmstyle}

\declaretheoremstyle[
    headfont=\bfseries,
    bodyfont=\normalfont,
    headpunct={.},
    postheadspace=0.5em,
]{defstyle}

\declaretheorem[name=Theorem,parent=section,style=thmstyle]{theorem}
\declaretheorem[name=Proposition,sibling=theorem,style=thmstyle]{proposition}
\declaretheorem[name=Lemma,sibling=theorem,style=thmstyle]{lemma}
\declaretheorem[name=Corollary,parent=theorem]{corollary}

\declaretheorem[name=Definition,parent=section,style=defstyle]{definition}
\declaretheorem[name=Remark,sibling=definition,style=defstyle]{remark}

\tcolorboxenvironment{restatable}{
   enhanced,
   frame hidden,
   top=0mm,
   boxrule=0pt,
   notitle,
   left=2mm, right=2mm,
   borderline west={2pt}{0pt}{tn-rex},
   colback=tn-base,
   sharp corners,
}

\tcolorboxenvironment{proposition}{
   enhanced,
   frame hidden,
   top=0mm,
   boxrule=0pt,
   notitle,
   left=2mm, right=2mm,
   borderline west={2pt}{0pt}{tn-rex},
   colback=tn-base,
   sharp corners,
}

\tcolorboxenvironment{theorem}{
   enhanced,
   frame hidden,
   top=0mm,
   boxrule=0pt,
   notitle,
   left=2mm, right=2mm,
   borderline west={2pt}{0pt}{tn-rex},
   colback=tn-base,
   sharp corners,
}

\newtcolorbox{theorembox}{
    enhanced,
    frame hidden,
    sharp corners,
    boxrule=0pt,
    notitle,
    left=2mm, right=2mm,
    borderline west={2pt}{0pt}{tn-rex},
    colback=tn-base,
}

\tcolorboxenvironment{lemma}{
   enhanced,
   frame hidden,
   top=0mm,
   boxrule=0pt,
   notitle,
   left=2mm, right=2mm,
   borderline west={2pt}{0pt}{tn-rex},
   colback=tn-base,
   sharp corners,
}

\makeatletter %
\renewcommand{\l@tcolorbox}{\@dottedtocline{1}{0pt}{2.3em}}
\makeatother

\icmltitlerunning{Rex: A Family of Reversible Exponential (Stochastic) Runge-Kutta Solvers}

\begin{document}

\twocolumn[
    \icmltitle{Rex: A Family of Reversible Exponential (Stochastic) Runge-Kutta Solvers}

    \icmlsetsymbol{note}{*}

    \begin{icmlauthorlist}
        \icmlauthor{Zander W. Blasingame}{aithyra,clarkson}
        \icmlauthor{Chen Liu}{clarkson}
    \end{icmlauthorlist}

    \icmlaffiliation{aithyra}{AITHYRA}
    \icmlaffiliation{clarkson}{Clarkson University}

    \icmlcorrespondingauthor{ZB}{zblasingame@aithyra.at}

  \vskip 0.3in
]

\printAffiliationsAndNotice{}  %

\begin{abstract}
Deep generative models based on neural differential equations have become state-of-the-art for many generation tasks.
These models rely on ODE/SDE solvers that integrate from a prior distribution to the data distribution; in many applications it is also highly desirable to integrate in the inverse direction.
Standard solvers, however, accumulate discretization errors that prohibit \textit{exact inversion}, an inaccuracy that is unacceptable in precision-critical applications.
Existing inversion methods suffer from poor stability and low order of convergence, and are strictly limited to the ODE setting.
In this work, we propose \textit{Rex}, a family of reversible exponential (stochastic) Runge-Kutta solvers obtained by applying Lawson methods to convert any explicit (stochastic) Runge-Kutta scheme into an algebraically reversible one for both diffusion ODEs \emph{and} SDEs.
Beyond a rigorous theoretical analysis---establishing arbitrary-order convergence and a non-zero region of linear stability---we empirically demonstrate that \textit{Rex} achieves near-machine-precision reconstruction and improves Boltzmann sampling with flow models as well as image generation and editing with diffusion models.
Our code is available at: \url{https://github.com/zblasingame/Rex-solver}
\end{abstract}

\begin{figure}[t]
    \centering
    \begin{subfigure}{0.48\columnwidth}
    \centering
    \begin{tikzpicture}[
        x=0.6cm, y=0.6cm,
        fwd/.style={line width=1.2pt, color=tn-rex, line cap=round},
        bwd/.style={line width=1.2pt, color=tn-red, densely dashed, line cap=round},
        fdot/.style={circle, fill=tn-rex, inner sep=0pt, minimum size=3.5pt, draw=white, line width=0.5pt},
        bdot/.style={circle, fill=tn-red, inner sep=0pt, minimum size=3.5pt, draw=white, line width=0.5pt},
        ax/.style={->, color=tn-overlay1, line width=0.5pt},
    ]
    \def\S{0.8} \def\M{1.0}
    \draw[ax] (\M,0.2) -- (\M,4.6) node[above,text=tn-overlay1]{$\bfx$};
    \draw[ax] (\M-0.1,0.3) -- (\M+6.3*\S,0.3) node[right,text=tn-overlay1]{$t$};
    \node[text=tn-subtext0] at (\M,0.0) {\small $0$};
    \node[text=tn-subtext0] at (\M+6*\S,0.0) {\small $T$};
    \draw[fwd] (\M,1.2) .. controls ++(0.25,0.4) and ++(-0.25,-0.15) ..
      (\M+\S,1.8) .. controls ++(0.25,0.4) and ++(-0.25,-0.25) ..
      (\M+2*\S,2.5) .. controls ++(0.25,0.35) and ++(-0.25,-0.2) ..
      (\M+3*\S,3.1) .. controls ++(0.25,0.28) and ++(-0.25,-0.18) ..
      (\M+4*\S,3.5) .. controls ++(0.25,0.22) and ++(-0.25,-0.12) ..
      (\M+5*\S,3.9) .. controls ++(0.25,0.2) and ++(-0.25,-0.12) ..
      (\M+6*\S,4.2);
    \draw[bwd] (\M+6*\S,4.2) .. controls ++(-0.25,-0.35) and ++(0.25,0.12) ..
      (\M+5*\S,3.6) .. controls ++(-0.25,-0.45) and ++(0.25,0.18) ..
      (\M+4*\S,2.8) .. controls ++(-0.25,-0.4) and ++(0.25,0.18) ..
      (\M+3*\S,2.2) .. controls ++(-0.25,-0.4) and ++(0.25,0.22) ..
      (\M+2*\S,1.6) .. controls ++(-0.25,-0.35) and ++(0.25,0.15) ..
      (\M+\S,1.1) .. controls ++(-0.25,-0.3) and ++(0.25,0.12) ..
      (\M,0.6);
    \foreach \i/\y in {0/1.2,1/1.8,2/2.5,3/3.1,4/3.5,5/3.9,6/4.2}
      \node[fdot] at (\M+\i*\S,\y) {};
    \foreach \i/\y in {6/4.2,5/3.6,4/2.8,3/2.2,2/1.6,1/1.1,0/0.6}
      \node[bdot] at (\M+\i*\S,\y) {};
    \draw[tn-red,<->,line width=0.9pt] (\M-0.3,1.2) -- (\M-0.3,0.6);
    \node[fill=tn-red,text=white,font=\scriptsize\bfseries,rounded corners=2pt,inner sep=1.5pt]
      at (\M-0.65,0.9) {$\varepsilon$};
    \end{tikzpicture}
    \caption{Non-reversible: error $\varepsilon > 0$}
    \end{subfigure}%
    \hfill
    \begin{subfigure}{0.48\columnwidth}
    \centering
    \begin{tikzpicture}[
        x=0.6cm, y=0.6cm,
        fwd/.style={line width=1.2pt, color=tn-rex, line cap=round},
        bwdg/.style={line width=2.4pt, color=tn-hotpink, opacity=0.5, line cap=round},
        fdot/.style={circle, fill=tn-rex, inner sep=0pt, minimum size=3.5pt, draw=white, line width=0.5pt},
        bdot/.style={circle, fill=tn-hotpink, inner sep=0pt, minimum size=3.5pt, draw=white, line width=0.5pt},
        ax/.style={->, color=tn-overlay1, line width=0.5pt},
    ]
    \def\S{0.8} \def\M{1.0}
    \draw[ax] (\M,0.2) -- (\M,4.6) node[above,text=tn-overlay1]{$\bfx$};
    \draw[ax] (\M-0.1,0.3) -- (\M+6.3*\S,0.3) node[right,text=tn-overlay1]{$t$};
    \node[text=tn-subtext0] at (\M,0.0) {\small $0$};
    \node[text=tn-subtext0] at (\M+6*\S,0.0) {\small $T$};
    \draw[bwdg] (\M+6*\S,4.2) .. controls ++(-0.25,-0.2) and ++(0.25,0.12) ..
      (\M+5*\S,3.9) .. controls ++(-0.25,-0.22) and ++(0.25,0.15) ..
      (\M+4*\S,3.5) .. controls ++(-0.25,-0.28) and ++(0.25,0.18) ..
      (\M+3*\S,3.1) .. controls ++(-0.25,-0.35) and ++(0.25,0.2) ..
      (\M+2*\S,2.5) .. controls ++(-0.25,-0.35) and ++(0.25,0.25) ..
      (\M+\S,1.8) .. controls ++(-0.25,-0.32) and ++(0.25,0.2) ..
      (\M,1.2);
    \draw[fwd] (\M,1.2) .. controls ++(0.25,0.4) and ++(-0.25,-0.15) ..
      (\M+\S,1.8) .. controls ++(0.25,0.4) and ++(-0.25,-0.25) ..
      (\M+2*\S,2.5) .. controls ++(0.25,0.35) and ++(-0.25,-0.2) ..
      (\M+3*\S,3.1) .. controls ++(0.25,0.28) and ++(-0.25,-0.18) ..
      (\M+4*\S,3.5) .. controls ++(0.25,0.22) and ++(-0.25,-0.12) ..
      (\M+5*\S,3.9) .. controls ++(0.25,0.2) and ++(-0.25,-0.12) ..
      (\M+6*\S,4.2);
    \foreach \i/\y in {0/1.2,1/1.8,2/2.5,3/3.1,4/3.5,5/3.9,6/4.2}
      \node[fdot] at (\M+\i*\S,\y) {};
    \foreach \i/\y in {6/4.2,5/3.9,4/3.5,3/3.1,2/2.5,1/1.8,0/1.2}
      \node[bdot] at (\M+\i*\S,\y) {};
    \node[fill=tn-green,text=white,font=\scriptsize\bfseries,rounded corners=2pt,inner sep=1.5pt]
      at (\M-0.6,1.2) {$\varepsilon{=}0$};
    \end{tikzpicture}
    \caption{Reversible: error $\varepsilon = 0$}
    \end{subfigure}
    \caption{Conventional solvers (left) accumulate discretization error $\varepsilon > 0$ when integrated forward (\textcolor{tn-rex}{solid}) and then backward (\textcolor{tn-red}{dashed}), so the reconstructed trajectory drifts from the original. \rex (right) is \emph{algebraically reversible}: the backward step (\textcolor{tn-hotpink}{shaded}) lands exactly on the forward iterates, yielding $\varepsilon = 0$ regardless of step size.}
    \label{fig:teaser}
\end{figure}

\section{Introduction}
Deep generative models based on \textit{neural differential equations} \citep{kidger_thesis} have quickly become the state-of-the-art in generation tasks across many varied modalities from image generation \citep{rombach2022high}, protein generation \citep{skreta2024superposition}, Boltzmann sampling \citep{rehman2026falcon}, and biometrics \citep{blasingame2024leveraging}.
These models use the language of a neural It\^o \textit{stochastic differential equation} (SDE) \citep{kidger2021efficient} or neural \textit{ordinary differential equation} (ODE) \citep{chen2018neural}---sometimes referred to as a \textit{continuous normalizing flow} (CNF)---to describe the (stochastic) mapping from a prior source distribution to the target data distribution.
This can be achieved through a variety of numerical schemes \citep{lu2022dpmsolver,zhang2023gddim,zhangfast,gonzalez2024seeds} which integrate from the source to the target distribution.
The \textit{exact} inversion of this numerical method, \ie, going from the target distribution back to the source distribution, is invaluable in several key applications where precision is concerned.
\Eg, gradient descent through these models \citep{ben-hamu2024dflow,blasingame2024adjointdeis,mccallum2024efficient} for training, fine-tuning, and differentiable rewards; image editing \citep{wallace2023edict,wang2024belm}; and calculating accurate likelihoods of the generative models, enabling sampling from Boltzmann distributions \citep{rehman2026falcon}.

Whilst useful, designing such inversion methods is very tricky, as such solvers are plagued by issues of low order of convergence, lack of stability, amongst other undesirable properties; moreover, it is \textit{even} more difficult to construct such schemes for SDEs.
Recent work has developed exact inversion methods specifically for diffusion models \citep{wallace2023edict,zhang2023bdia,wang2024belm}.
Unfortunately, these schemes suffer from poor numerical stability which can hamper their real-world utility, particularly in contexts like editing of real samples.
Moreover, the previous approaches do not support the often useful SDE formulation of diffusion models.
Prior work \citep{nie2024blessing,wu2022cyclediffusion} has attempted to tackle SDEs, but these methods resort to storing the entire process in memory, which is only a trivial notion of reversibility.

To address these challenges we propose \textit{\textcolor{tn-rex}{Rex}}, a family of \textcolor{tn-rex}{\bfseries r}eversible \textcolor{tn-rex}{\bfseries ex}ponential (stochastic) Runge-Kutta solvers for diffusion models. Our contributions are: 
\begin{itemize}[topsep=0pt, partopsep=0pt, itemsep=3pt, parsep=0pt, leftmargin=*]
    \item We construct \rex, an \textit{algebraically reversible} family of solvers for both diffusion ODEs and SDEs. The \rex ODE solver inherits arbitrary order of convergence and a non-zero region of linear stability from the McCallum-Foster method, and \rex supports adaptive step sizes.
    \item We show that \rex is the reversible version of many popular solvers for diffusion models, including DDIM, DPM-Solver, and SEEDS-1.
    \item We empirically show that \rex achieves near-machine-precision reconstruction under exact inversion, while remaining competitive with prior reversible methods on unconditional generation, text-conditioned generation, and image editing.
    \item We demonstrate that \rex enables accurate likelihood-based Boltzmann sampling on tri-alanine.
\end{itemize}

\section{Preliminaries}

\subsection{Reversible Solvers}
\label{sec:reversible_overview}

Recently, researchers studying \textit{neural differential equations} have begun to propose several \textit{algebraically reversible solvers}.
Consider some prototypical neural ODE of the form $\dot\bfx_t = \bfu_\theta(t, \bfx_t)$ with vector field $\bfu_\theta \in \C^r(\R\times \R^d;\R^d)$ which satisfies the usual regularity conditions.
Then consider a single-step numerical scheme of the form
$\bfx_{n+1} = \bfx_n + \bm \Phi_h(t_n, \bfx_n, \bfu_\theta)$.
Every numerical scheme $\bm \Phi$ is reversible in the sense that we can rewrite
the forward step as an implicit scheme of the form $\bfx_n = \bfx_{n+1} - \bm \Phi_{h}(t_n, \bfx_n, \bfu_\theta)$; however, this requires fixed point iteration\footnote{If the step size $h$ is small enough.} and is both \textit{approximate} and computationally \textit{expensive}.
This type of reversibility is known as \textit{analytic reversibility} within the neural differential equations community \citep[Section 5.3.2.1]{kidger_thesis}.
What we would prefer, however, is a form of reversibility that can be expressed in \textit{closed-form}.

There are only a few such \textit{algebraically reversible} solvers which have been proposed within the last few years \citep{zhuang2021mali,kidger2021efficient,mccallum2024efficient} and only one of these works for SDEs, namely, the work of \citet{kidger2021efficient}.
Whilst only for ODEs the recent work of \citet{mccallum2024efficient} is highly interesting as it is the \textit{only} algebraically reversible scheme with a non-zero region of stability. 
We refer to the method proposed in \citet{mccallum2024efficient} as the \textit{McCallum-Foster} method and summarize it below in \cref{def:mccallum_foster}.

    \begin{definition}
        \label{def:mccallum_foster}
        Initialize $\hat\bfx_0 = \bfx_0$ and let $\zeta \in (0, 1]$.
        Consider a step size of $h$, then
        a forward step of the McCallum-Foster method is defined as
        \begin{subequations}
            \label{eq:mccallum_forward}
            \begin{align}
                \bfx_{n+1} &= \zeta \bfx_n + (1-\zeta)\hat\bfx_{n} + \bm \Phi_h(t_n, \hat\bfx_n),\\
                \hat\bfx_{n+1} &= \hat\bfx_n - \bm \Phi_{-h}(t_{n+1}, \bfx_{n+1}),
            \end{align}
        \end{subequations}
        and the backward step is given as
        \begin{subequations}
            \label{eq:mccallum_backward}
            \begin{align}
                \hat\bfx_n &= \hat\bfx_{n+1} + \bm \Phi_{-h}(t_{n+1}, \bfx_{n+1}),\\
                \bfx_{n} &= \zeta^{-1}\bfx_{n+1} + (1-\zeta^{-1}) \hat\bfx_n - \zeta^{-1} \bm \Phi_h(t_n, \hat\bfx_n).
            \end{align}
        \end{subequations}
    \end{definition}

\begin{figure*}[t]
    \centering
    \begin{subfigure}{0.5\textwidth}
        \resizebox{\textwidth}{!}{\begin{tikzpicture}[
    very thick,
    xnode/.style={circle, minimum size=0.9cm, inner sep=0},
    onode/.style={draw=black, circle, minimum size=0.5cm, inner sep=0},
    joint/.style={draw=black, fill=black, circle, minimum size=3pt, inner sep=0},
    connection/.style={shorten >=2, shorten <=2, ->, color=black},
    jconnection/.style={shorten >=2, ->, color=black},
]
    \node[draw=ctp-pink, fill=ctp-pink!10, xnode] (xn) at (0, 0) {$\bfx_n$};
    \node[onode] (p1) at (2.5, 0) {$+$};
    \node[onode] (p2) at (3.5, 0) {$+$};
    \node[joint] (j1) at (5, 0) {};
    \node[draw=ctp-pink, fill=ctp-pink!10, xnode] (xn1) at (8, 0) {$\bfx_{n+1}$};

    \node[draw=ctp-pink, fill=ctp-pink!10, xnode] (psi) at (3, -1.5) {$\bm \Psi_h$};
    \node[draw=ctp-mauve, fill=ctp-mauve!10, xnode] (psi_neg) at (5.5, -1.5) {$\bm \Psi_{-h}$};

    \node[draw=ctp-mauve, fill=ctp-mauve!10, xnode] (hxn) at (0, -3) {$\hat\bfx_n$};
    \node[draw=ctp-mauve, fill=ctp-mauve!10, xnode] (hxn1) at (8, -3) {$\hat\bfx_{n+1}$};
    \node[joint] (j2) at (1.5, -3) {};
    \node[joint] (j3) at (2.5, -3) {};
    \node[onode] (p3) at (6, -3) {$-$};

    \draw[connection] (xn) -- (p1) node[midway, above] {$\times \frac{\zeta}{\kappa_n}$};
    \draw[connection] (p1) -- (p2);
    \draw[connection] (p2) -- (xn1) node[midway, above, xshift=-1.25cm] {$\times \kappa_{n+1}$};
    \draw[jconnection] (j1) -- (psi_neg);
    \draw[connection] (psi_neg) -- (p3);
    
    \draw[connection] (hxn) -- (p3) node[midway, above, xshift=1cm] {$\times \frac{1}{\kappa_n}$};
    \draw[connection] (p3) -- (hxn1) node[midway, above] {$\times \kappa_{n+1}$};
    \draw[jconnection] (j2) -- (p1) node[midway, left] {$\times \frac{1-\zeta}{\kappa_n}$};
    \draw[jconnection] (j3) -- (psi);
    \draw[connection] (psi) -- (p2);

\end{tikzpicture}} 
        \caption{Forward step}
    \end{subfigure}%
    \begin{subfigure}{0.5\textwidth}
        \resizebox{\textwidth}{!}{\begin{tikzpicture}[
    very thick,
    xnode/.style={circle, minimum size=0.9cm, inner sep=0},
    onode/.style={draw=black, circle, minimum size=0.5cm, inner sep=0},
    joint/.style={draw=black, fill=black, circle, minimum size=3pt, inner sep=0},
    connection/.style={shorten >=2, shorten <=2, ->, color=black},
    jconnection/.style={shorten >=2, ->, color=black},
]
    \node[draw=ctp-pink, fill=ctp-pink!10, xnode] (xn) at (0, 0) {$\bfx_n$};
    \node[onode] (p1) at (1.5, 0) {$-$};
    \node[onode] (p2) at (2.5, 0) {$-$};
    \node[joint] (j1) at (6, 0) {};
    \node[draw=ctp-pink, fill=ctp-pink!10, xnode] (xn1) at (8, 0) {$\bfx_{n+1}$};

    \node[draw=ctp-pink, fill=ctp-pink!10, xnode] (psi) at (3, -1.5) {$\bm \Psi_h$};
    \node[draw=ctp-mauve, fill=ctp-mauve!10, xnode] (psi_neg) at (5.5, -1.5) {$\bm \Psi_{-h}$};

    \node[draw=ctp-mauve, fill=ctp-mauve!10, xnode] (hxn) at (0, -3) {$\hat\bfx_n$};
    \node[draw=ctp-mauve, fill=ctp-mauve!10, xnode] (hxn1) at (8, -3) {$\hat\bfx_{n+1}$};
    \node[joint] (j2) at (2.5, -3) {};
    \node[joint] (j3) at (3.5, -3) {};
    \node[onode] (p3) at (5, -3) {$+$};

    \draw[connection] (p1) -- (xn) node[midway, above] {$\times \frac{\kappa_n}{\zeta}$};
    \draw[connection] (p2) -- (p1);
    \draw[connection] (xn1) -- (p2) node[midway, above, xshift=-1.25cm] {$\times \frac{1}{\kappa_{n+1}}$};
    \draw[jconnection] (j1) -- (psi_neg);
    \draw[connection] (psi_neg) -- (p3);

    \draw[connection] (p3) -- (hxn) node[midway, above, xshift=1.5cm] {$\times \kappa_n$};
    \draw[connection] (hxn1) -- (p3) node[midway, above] {$\times \frac{1}{\kappa_{n+1}}$};
    \draw[jconnection] (j2) -- (p1) node[midway, left] {$\times (1-\zeta)$};
    \draw[jconnection] (j3) -- (psi);
    \draw[connection] (psi) -- (p2);

\end{tikzpicture}} 
        \caption{Backward step}
    \end{subfigure}
    \caption{An overview of the \rex solver. Here $\bm \Psi_h$ denotes the \princeps scheme (see \cref{sec:princeps}), $\zeta \in (0, 1)$ is a coupling parameter, and $\{\kappa_n\}_{n=1}^N$ denotes the set of weighting variables derived from the exponential schemes.
    The particular values of $\kappa_n$ are discussed in \cref{prop:rex}.
    The visualization of the computation graph is inspired by \citet[Figure 2]{mccallum2024efficient}.
    }
    \label{fig:rex_overview}
\end{figure*}

\subsection{Diffusion Models}
\label{sec:diffusion}
Diffusion models \citep{sohl2015diffusion,ho2020denoising,song2021denoising,song2021scorebased} have become one of the most popular paradigms for constructing generative models.
Consider the following It\^o \textit{stochastic differential equation} (SDE) defined on time interval $[0,T]$:
\begin{equation}
    \label{eq:diffusion_sde_forward}
    \rmd \bfX_t = f(t)\bfX_t \;\rmd t + g(t) \; \rmd \bfW_t,
\end{equation}
where $f, g \in \C^\infty([0,T])$\footnote{
We let $\C^{r}(X;Y)$ denote the class of $r$-th differentiable functions from $X$ to $Y$.
If $Y$ is omitted then $Y = \R$.
} form the drift and diffusion coefficients of the SDE and where $\{\bfW_t\}_{t \in [0,T]}$ is the standard Brownian motion on the time interval.
The coefficients $f, g$ are chosen such that the SDE maps clean samples from the data distribution $\bfX_0 \sim q(\bfX)$ at time 0 to an isotropic Gaussian at time $T$.
More specifically, for a \textit{noise schedule} $\alpha_t,\sigma_t \in \C^\infty([0,T];\R_{\geq0})$ consisting of a strictly monotonically decreasing function $\alpha_t$ and strictly monotonically increasing function $\sigma_t$, the drift and diffusion coefficients are given by
\begin{equation}
    f(t) = \frac{\dot\alpha_t}{\alpha_t}, \qquad g^2(t) = \dot\sigma_t^2 - 2 \frac{\dot\alpha_t}{\alpha_t}\sigma_t^2,
\end{equation}
where, with abuse of notation, $\dot\sigma_t^2$ denotes the time derivative of the function $\sigma_t^2$ \citep{lu2022dpmsolver,kingma2021variational}---this ensures that $\bfX_t \sim \mathcal N(\alpha_t \bfX_0, \sigma_t^2 \bm I)$.
However, we wish to map from \textit{noise} back to \textit{data}; as such, we employ the result of \citet{anderson1982reverse} to construct the \textit{reverse-time} diffusion SDE of \cref{eq:diffusion_sde_forward}, which is given by
\begin{equation}
    \label{eq:diffusion_sde_reverse}
    \rmd \bfX_t = [f(t)\bfX_t - g^2(t)\nabla_\bfx \log p_t(\bfX_t)]\;\rmd t + g(t) \; \rmd \overline\bfW_t,
\end{equation}
where $\rmd t$ is a \textit{negative} timestep, $\{\overline{\bfW}_t\}_{t \in [0,T]}$ is the standard Brownian motion in reverse-time, and $p_t(\bfx) \coloneq p(t, \bfx)$ is the marginal density function.
Then, if we can learn the \textit{score function} $(t, \bfx) \mapsto \nabla_\bfx \log p_t(\bfx)$ \citep{song2021scorebased}---or some other \textit{equivalent} reparameterization, \eg, noise prediction \citep{song2021denoising,ho2020denoising} or data prediction \citep{kingma2021variational}---we can then draw samples from our data distribution $q(\bfX)$ by first sampling some $\bfX_T \sim p(\bfX)$ from the Gaussian prior and then employing a numerical SDE solver, \eg, Euler-Maruyama, to solve \cref{eq:diffusion_sde_reverse} in reverse-time.
Notably, through careful massaging of the Fokker-Planck-Kolmogorov equation for the marginal density, one can construct an ODE which is equivalent in \textit{distribution} to \cref{eq:diffusion_sde_reverse} \citep{song2021scorebased,maoutsa2020interacting}, yielding the popular \textit{probability flow ODE}
\begin{equation}
    \label{eq:pf_ode}
    \frac{\rmd \bfx_t}{\rmd t} = f(t) \bfx_t - \frac{g^2(t)}{2}\nabla_\bfx \log p_t(\bfx_t).
\end{equation}

\begin{figure*}[t]
    \centering
    \resizebox{\textwidth}{!}{\begin{tikzpicture}[
    node distance=2, very thick,
    flow/.style={shorten >=3, shorten <=3, ->},
  ]

    \node (p0) {$\rmd \bfX_t = \left[a(t)\bfX_t + b(t)\bsf_\theta(t, \bfX_t)\right] \;\rmd t + g(t)\;\rmd \bfW_t$};
    \node[right=6 of p0] (p01) {$\rmd \bfY_\varsigma = \bsf_\theta(\varsigma, \Xi(\varsigma)\bfY_\varsigma) + \rmd \bfW_\varsigma$};

    \node[below=1 of p0] (pt) {$\bfX_{n+1} = \frac{\Xi(t_{n})}{\Xi(t_{n+1})}\bfX_n + \bm \Psi_h(t_n, \bfX_n, \bfW_n(\omega))$};
    \node[below=1 of p01] (pt1) {$\bfY_{n+1} = \bfY_n + \bm \Phi_h(\varsigma_n, \bfY_n, \bfW_n(\omega))$};

    \draw[flow] (p0) -- node[above, midway, align=center] {Exponential integrators \&\\change-of-variables} (p01);
    \draw[flow] (pt1) -- node[above, midway] {Lawson method} (pt);

    \draw[flow] (p01) -- (pt1);

    \draw[flow, dashed] (p0) -- (pt);

\end{tikzpicture}} 
    \caption{Construction of the \princeps $\bm \Psi$ for the diffusion SDE in \cref{eq:root} from an underlying explicit stochastic Runge-Kutta scheme $\bm \Phi$; the probability flow ODE case follows \textit{mutatis mutandis}.}
    \label{fig:rex_construct}
\end{figure*}

\section{Rex: A Family of Reversible Exponential (Stochastic) Runge-Kutta Solvers}
In this section we introduce the \rex family of reversible exponential Runge-Kutta solvers.
This family of \textit{bespoke} numerical schemes is specifically tailored to exploit the semi-linear structure of the diffusion ODE/SDE.
The elegance of \rex is that it can be built rather generally from many popular pre-existing ODE/SDE solvers.
Let $\bm \Phi$ denote our explicit (S)RK scheme of choice, \eg, Euler or the Dormand-Prince method.
Then we massage the probability flow ODE and reverse-time SDE into a sufficiently \textit{nice} form and apply $\bm \Phi$---we refer to this construction as \princeps, $\bm \Psi$.
Lastly, we construct a reversible scheme from this nice form of $\bm \Psi$.
We summarize this three-step recipe below.

\begin{standoutbox}\label{box:rex_recipe}
    \textbf{\textcolor{tn-rex}{Rex recipe.}}
    \begin{enumerate}
        \item $\bm \Phi$: select an explicit (S)RK scheme.
        \item $\bm \Psi$: build \princeps from the RK scheme.
        \item $\bm \Upsilon$: build \rex out of \princeps.
    \end{enumerate}
\end{standoutbox}

In \cref{fig:rex_overview} we present an overview of the \rex computational graph.
The graph is identical for both the ODE and SDE formulations; only the weighting terms $\{\kappa_n\}$ and the underlying numerical scheme $\bm \Psi_h$ differ.
The rest of this section is organized as follows:
first we construct the \princeps scheme $\bm \Psi$ from some explicit (S)RK scheme $\bm \Phi$ in \cref{sec:princeps}, then we construct \rex, $\bm \Upsilon$, from \princeps in \cref{sec:construct_rex}, and lastly in \cref{sec:rex_prop} we discuss the theoretical properties of \rex.

\subsection{Princeps}
\label{sec:princeps}

In this section we discuss how to build the underlying scheme, $\bm\Psi$, from which we construct the reversible \rex scheme.
As this scheme is \textit{one that is first} we hereafter refer to it as the \princeps scheme.
For simplicity of presentation we derive \princeps from the reverse-time diffusion SDE in \cref{eq:diffusion_sde_reverse}; however, this framework generalizes the \textit{probability flow} ODE formulation, see \cref{eq:pf_ode}, with the explicit derivations detailed in \cref{app:rex_ode}.

As the data and noise prediction formulations differ we will derive \princeps from a rather general view of the problem and elide the particular details, reserving them for \cref{app:rex_sde}.
Recall that the It\^o SDE in \cref{eq:diffusion_sde_reverse} has a semi-linear drift term and additive noise, this nice structure allows us to greatly simplify our discussion.
We will rewrite \cref{eq:diffusion_sde_reverse} as
\begin{equation}
    \label{eq:root}
    \rmd \bfX_t = [a(t) \bfX_t + b(t) \bsf_\theta(t, \bfX_t)]\; \rmd t + g(t) \; \rmd \overline\bfW_t,
\end{equation}
where $a(t), b(t)$ are the appropriate generalizations for the data and noise parameterizations and $\bsf_\theta$ denotes either the noise or data prediction model (see \cref{app:deriv_rex}; for background see \cref{sec:diffusion}).
This form can then be simplified via exponential integrators, \ie, $\exp -\int_0^t a(\tau)\;\rmd \tau$, and then a suitable change of variables, the result of which we show in \cref{prop:rootchange} with the proof in \cref{proof:reparam_root_sde}.

\begin{restatable}{proposition}{rootchange}
    \label{prop:rootchange}
    Let $\Xi(t) \coloneq \exp\left(\int_0^t a(\tau)\;\rmd \tau\right)$ be the reciprocal of the integrating factor of \cref{eq:root}.
    Assume that $g^2(t) = b(t)\Xi(t)$ and $b(t) > 0$.
    Then the SDE in \cref{eq:root} can be rewritten as
    \begin{equation}
        \label{eq:root_changed}
        \rmd \bfY_\varsigma = \bsf_\theta(\varsigma, \Xi(\varsigma) \bfY_\varsigma) \; \rmd \varsigma + \rmd \bfW_{\varsigma},
    \end{equation}
    where $\bfY_t = \Xi^{-1}(t)\bfX_t$ and $\varsigma_t = \int \Xi^{-1}(t)b(t)\;\rmd t$.
\end{restatable}
\begin{remark}
In \cref{proof:reparam_noise_sde,proof:reparam_data_sde} we provide the particular realizations of the time-changed SDE for the data and noise parametrizations.
\end{remark}

\cref{prop:rootchange} highlights the first half of constructing \princeps and is the upper pathway in \cref{fig:rex_construct}.
The rest of the section is devoted to constructing the lower pathway, \ie, how we create the exponentially weighted Stochastic Runge-Kutta scheme.
The next question is which stochastic Runge-Kutta formulation to use, as unlike in the ODE case there are many possible different formulations to choose from.

\paragraph{Stochastic Runge-Kutta.}
Constructing a numerical scheme for SDEs is greatly more complicated than ODEs due to the complexities of stochastic processes and in particular stochastic integrals.
Unlike numerical schemes for ODEs which are usually built upon truncated Taylor expansions, SDEs require constructing truncated It\^o or Stratonovich-Taylor expansions \citep{kloeden1991stratonovich} which results in numerous iterated stochastic integrals.
Approximating these iterated integrals, or equivalently L\'evy areas, of Brownian motion is quite difficult \citep{clark2005maximum,mrongowius2022approximation}; however, SDEs with certain constraints on the diffusion term
may use specialized solvers to further achieve a strong order of convergence with simple approximations of these iterated stochastic integrals.
As such there are several ways to express SRK methods depending on the choice of approximating these iterated integrals.
We choose to follow the work of \citet{foster2024high} which makes usage of the \textit{space-time L\'evy area} in constructing such methods.
The space-time L\'evy area (see \citeauthor{foster2020optimal}, \citeyear{foster2020optimal}, Definition 3.5; \cf \citeauthor{rossler2010runge}, \citeyear{rossler2010runge}) is defined below in \cref{def:space-time-levy}.
\begin{definition}[Space-time L\'evy area]
    \label{def:space-time-levy}
    The rescaled space-time L\'evy area of a Brownian motion $\{W_t\}$ on the interval $[s,t]$ corresponds to the signed area of the associated bridge process
    \begin{equation}
        \label{eq:space-time-levy-area}
        H_{s,t} \coloneq \frac 1h \int_s^t \left(W_{s,u} - \frac{u-s}{h} W_{s,t}\right)\;\rmd u,
    \end{equation}
    where $h \coloneq t - s$ and $W_{s,u} = W_u - W_s$ for $u \in [s, t]$.
\end{definition}

In particular, for additive-noise SDEs which our SDE in \cref{eq:root_changed} is, the It\^o and Stratonovich integrals coincide and the numerical scheme is significantly simpler, for more details we refer to \cref{app:srk}.

For the It\^o SDE in \cref{eq:root_changed} we follow the conventions of \citet{foster2024high} and write an $s$-stage SRK as below; this form is a direct generalization of \citet[Equation (6.1)]{foster2024high} to an arbitrary extended Butcher tableau (see the \href{https://github.com/patrick-kidger/diffrax}{\texttt{diffrax}} documentation).
\begin{subequations}
    \begin{align}
        \bsf_\theta^i &= \bsf_\theta(\varsigma_n + c_i h, \Xi(\varsigma_n + c_i h)\bfZ_i),\\
        \begin{split}
            \bfZ_i &= \bfY_n + h\left(\sum_{j=1}^{i-1} a_{ij} \bsf_\theta^j\right)\\
                   &\phantom{{}={}} + a_i^W \bfW_n + a_i^H \bm H_n,
        \end{split}\\
        \begin{split}
            \bfY_{n+1} &= \bfY_n + h\left(\sum_{i=1}^s b_{i} \bsf_\theta^i\right)\\
                       &\phantom{{}={}} + b^W \bfW_n + b^H \bm H_n.
        \end{split}
    \end{align}
\end{subequations}
where $h = \varsigma_{n+1} - \varsigma_n$ is the step size and $\bfW_n \coloneq \bfW_{t_n, t_{n+1}}$ and $\bm H_n \coloneq \bm H_{t_n, t_{n+1}}$ are the Brownian and L\'evy increments respectively;
and where $a_{ij}, a_{i}^W, a_{i}^H \in \R^{s \times s}$, $b_i, b^W, b^H \in \R^s$, and $c_i \in \R^s$ are the coefficients of an \textit{extended} Butcher tableau \citep[\cf][]{rossler2025class,foster2024high}; \cf the deterministic ODE setting in \citet[Section 6.1.4]{stewart2022numerical}.
Then by straightforward substitution for the identity of $\bfY$ we arrive at the \princeps scheme denoted $\bm \Psi$ in \cref{eq:princeps} below (see \cref{proof:reparam_data_sde,proof:reparam_noise_sde} for the explicit data and noise prediction realizations).
\begin{subequations}
    \label{eq:princeps}
    \begin{align}
        \bsf_\theta^i &= \bsf_\theta(\varsigma_n + c_i h, \Xi(\varsigma_n + c_i h)\bfZ_i),\\
        \begin{split}
            \bfZ_i &= \Xi^{-1}(\varsigma_n)\bfX_n + h\left(\sum_{j=1}^{i-1} a_{ij} \bsf_\theta^j\right)\\
                   &\phantom{{}={}} + a_i^W \bfW_n + a_i^H \bm H_n,
        \end{split}\\
        \begin{split}
            \bfX_{n+1} &= \frac{\Xi(\varsigma_{n+1})}{\Xi(\varsigma_n)}\bfX_n\\
                       &\phantom{{}={}} + \Xi(\varsigma_{n+1})\underbrace{\left[h\left(\sum_{i=1}^s b_{i} \bsf_\theta^i\right) + b^W \bfW_n + b^H \bm H_n\right]}_{\eqqcolon \bm\Psi},
        \end{split}
    \end{align}
\end{subequations}
from which we will build the \rex scheme.

\subsection{Rex}
\label{sec:construct_rex}

Equipped with \cref{eq:princeps} we are now ready to construct \rex.
The key idea is to construct a reversible scheme from an explicit (S)RK scheme (we provide more detail in \cref{app:srk}) for the reparameterized differential equation using the McCallum-Foster method and then apply Lawson methods to bring the scheme back to the original state variable; see \cref{fig:rex_construct}.
We provide a brief summary below; the full derivation is in \cref{proof:rex}.
\begin{restatable}[\rex]{proposition}{rexscheme}
    \label{prop:rex}
    Without loss of generality let $\bm \Phi$ denote an explicit SRK scheme for the SDE in \cref{eq:root_changed} with extended Butcher tableau $a_{ij}, b_i, c_i, a_i^W, a_i^H, b^W, b^H$.
    Fix an $\omega \in \Omega$ and let $\bfW$ be the Brownian motion over time variable $\varsigma$.
    Then the reversible solver constructed from $\bm \Phi$ in terms of the underlying state variable $\bfX_t$ is given by the forward step
    \begin{equation}
        \begin{aligned}
            \bfX_{n+1} &= \frac{\kappa_{n+1}}{\kappa_n}\left(\zeta \bfX_n + (1-\zeta)\hat\bfX_{n}\right)\\
                       &\phantom{{}={}} + \kappa_{n+1}\bm \Psi_h(\varsigma_n, \hat\bfX_n, \bfW_n(\omega)),\\
            \hat\bfX_{n+1} &= \frac{\kappa_{n+1}}{\kappa_n}\hat\bfX_n - \kappa_{n+1}\bm \Psi_{-h}(\varsigma_{n+1}, \bfX_{n+1}, \bfW_n(\omega)),
        \end{aligned}
    \end{equation}
    and backward step
    \begin{equation}
        \begin{aligned}
            \hat\bfX_n &= \frac{\kappa_n}{\kappa_{n+1}}\hat\bfX_{n+1} + \kappa_{n}\bm \Psi_{-h}(\varsigma_{n+1}, \bfX_{n+1}, \bfW_n(\omega)),\\
            \bfX_{n} &= \frac{\kappa_n}{\kappa_{n+1}}\zeta^{-1}\bfX_{n+1} + (1-\zeta^{-1}) \hat\bfX_n\\
                    &\phantom{{}={}} - \kappa_n\zeta^{-1} \bm \Psi_h(\varsigma_n, \hat\bfX_n, \bfW_n(\omega)),
        \end{aligned}
    \end{equation}
    with step size $h \coloneq \varsigma_{n+1} - \varsigma_n$ and where $\bm \Psi$ follows \cref{eq:princeps} with $\kappa_n = \Xi(\varsigma_n)$.
\end{restatable}

\begin{remark}
    We can recover all four cases with the appropriate choice of weighting coefficient and time variable.
    For data prediction SDEs we have, $\kappa_n = \frac{\sigma_n}{\gamma_n}$ and $\varsigma_t = \frac{\alpha_n^2}{\sigma_n^2}$; the ODE case is recovered for an explicit RK scheme with $\kappa_n = \sigma_n$ and $\varsigma_t = \frac{\alpha_n}{\sigma_n}$.
    For noise prediction models we have $\bsf_\theta$ denoting the noise prediction model with $\kappa_n = \alpha_n$ and $\varsigma_t = \frac{\sigma_n}{\alpha_n}$.
    The Brownian motion has a slightly different time-change in the noise prediction formulation; we defer the nuances to \cref{proof:reparam_noise_sde}.
\end{remark}

We now justify the design choices in \cref{prop:rex}, in particular our handling of stochasticity.
The key idea is to use the \textit{same} realization of the Brownian motion in both the forward pass and backward pass.
This has been explored in prior works studying the continuous adjoint equations for neural SDEs \citep{li2020sdes,kidger2021efficient} and essentially amounts to fixing the realization of the Brownian motion along with clever strategies for reconstructing the same realization of the Brownian motion.

\paragraph{Numerical Simulation of the Brownian Motion.}
The na\"ive approach to fixing the realization is to cache the entire trajectory, which is both expensive and prohibits adaptive step-size solvers.
Instead, recent work by \citet{li2020sdes,kidger2021efficient,jelinvcivc2024single} enables one to recalculate \textit{any} realization of the Brownian motion from a single seed given access to a splittable \textit{pseudo-random number generator} (PRNG) \citep{salmon2011parallel}; we adopt this approach and discuss the technical details in \cref{app:brownian_motion_sim}.

\subsection{Properties of Rex}
\label{sec:rex_prop}

\paragraph{Convergence Order.}
A nice property of the McCallum-Foster method is that the convergence order of the underlying explicit RK scheme $\bm \Phi$ is inherited by the resulting reversible scheme \citep[Theorem 2.1]{mccallum2024efficient}.
However, does this property hold true for \rex?
We show that \rex can achieve an arbitrarily high order of convergence in \cref{thm:rex_convergence} with the proof provided in \cref{proof:rex_convergence}.
\begin{restatable}[\rex is a $k$-th order solver]{theorem}{convergencerex}
    \label{thm:rex_convergence} 
    Let $\bm \Phi$ be a $k$-th order explicit Runge-Kutta scheme for the reparameterized probability flow ODE in \cref{eq:reparam_general_ode} with variance preserving noise schedule $(\alpha_t, \sigma_t)$.
    Then \rex constructed from $\bm \Phi$ is a $k$-th order solver, \ie, given the reversible solution $\{\bfx_n, \hat\bfx_n\}_{n=1}^N$ and true solution $\bfx_{t_n}$ we have
    \begin{equation}
        \|\bfx_n - \bfx_{t_n}\| \leq C h^k,
    \end{equation}
    for constants $C, h_{max} > 0$ and for step sizes $h \in [0, h_{max}]$.
\end{restatable}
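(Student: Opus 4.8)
The plan is to prove the theorem by reducing it to the convergence guarantee for the McCallum--Foster method (\citet[Theorem 2.1]{mccallum2024efficient}) through the \emph{exact} change-of-variables summarized in \cref{fig:rex_construct}. The central observation is that Rex is not merely an order-$k$-accurate approximation of the McCallum--Foster scheme applied to the reparameterized ODE in \cref{eq:reparam_data_ode}; rather, by \cref{lemma:reparam_data_ode} the map $\bfy_\gamma = \frac{\sigma_T}{\sigma_\gamma}\bfx_\gamma$ together with the time change $\gamma_t$ is an exact diffeomorphism, and the Lawson construction of $\bm\Psi$ from $\bm\Phi$ is built precisely so that this change-of-variables commutes with the discrete reversible step. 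Thus I would first show that the Rex iterates are \emph{exactly} the transform of the McCallum--Foster iterates for \cref{eq:reparam_data_ode}, then invoke the McCallum--Foster order theorem in the $\bfy$-variable, and finally transport the error estimate back to the $\bfx$-variable using that the conversion factor is bounded. Throughout, $h$ denotes the step in the SNR variable $\varsigma_t = \gamma_t$.

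First I would establish the key identity
\[
    \bm\Psi_{\pm h}(\varsigma_n, \bfx) = \tfrac{1}{\sigma_T}\,\bm\Phi_{\pm h}\!\left(t_n, \tfrac{\sigma_T}{\sigma_n}\bfx\right)
\]
in the ODE case $w_n = \sigma_n$, $\varsigma_t = \gamma_t$. This follows by unfolding the stage equations for $\bm\Psi$ in \cref{prop:rex}: substituting $\hat\bfZ_i = Y_i/\sigma_T$, where $Y_i$ are the Runge--Kutta stages of $\bm\Phi$ for the vector field $F(\gamma,\bfy) = \sigma_T\,\bfx_{0|\gamma}^\theta(\tfrac{\sigma_\gamma}{\sigma_T}\bfy)$, turns the Rex stage recursion into the stage recursion for $\bm\Phi$ up to the scalar factor $\sigma_T$, and the identical cancellation holds for the increment. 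Feeding this identity into the Rex forward step and multiplying through by $\sigma_{n+1}/\sigma_T$ reproduces exactly the McCallum--Foster update of \cref{def:mccallum_foster} in the transformed variables $\bfy_n = \tfrac{\sigma_T}{\sigma_n}\bfx_n$ and $\hat\bfy_n = \tfrac{\sigma_T}{\sigma_n}\hat\bfx_n$. A short induction on $n$, using the matching initialization $\hat\bfx_0 = \bfx_0$, then gives $\bfx_n = \tfrac{\sigma_{t_n}}{\sigma_T}\bfy_n$ exactly for all $n$.

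Next I would verify the hypotheses of the McCallum--Foster theorem for the reparameterized ODE. The variance-preserving assumption guarantees that the SNR time change $\gamma_t$ has a smooth inverse $t_\gamma$ on the relevant interval and that $\sigma_t$ is bounded away from $0$ and $\infty$, so that $F(\gamma,\bfy) = \sigma_T\,\bfx_{0|\gamma}^\theta(\tfrac{\sigma_\gamma}{\sigma_T}\bfy)$ inherits the regularity (local Lipschitzness and the smoothness needed for order $k$) from that assumed on $\bfx_{0|t}^\theta$. With $\bm\Phi$ an order-$k$ scheme for $F$ by hypothesis, \citet[Theorem 2.1]{mccallum2024efficient} yields constants $C', h_{\max} > 0$ with $\|\bfy_n - \bfy_{\gamma_n}\| \le C' h^k$ for $h \in [0, h_{\max}]$, where $\bfy_{\gamma_n}$ is the exact solution of \cref{eq:reparam_data_ode} at $\gamma_n = \gamma_{t_n}$. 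Transporting this back via the exact discrete identity of the previous step and the exact continuous relation $\bfx_{t_n} = \tfrac{\sigma_{t_n}}{\sigma_T}\bfy_{\gamma_n}$ gives
\[
    \|\bfx_n - \bfx_{t_n}\| = \tfrac{\sigma_{t_n}}{\sigma_T}\,\|\bfy_n - \bfy_{\gamma_n}\| \le \tfrac{\sup_{t\in[0,T]}\sigma_t}{\sigma_T}\,C'h^k \le C h^k,
\]
for $C \coloneq (\sup_{t\in[0,T]}\sigma_t)\,C'/\sigma_T$, which is the claimed bound.

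I expect the main obstacle to be the exactness of the discrete change-of-variables in the second step: one must confirm that the Lawson transform commutes with the coupled, two-register reversible step of \cref{def:mccallum_foster}, not merely with a single Runge--Kutta increment, so that no additional discretization error is injected when moving between $\bfy$ and $\bfx$. The remaining care is in checking that the regularity required by \citet[Theorem 2.1]{mccallum2024efficient} survives the time reparameterization, which is exactly where the variance-preserving hypothesis is used to control $t_\gamma$ and the factor $\sigma_\gamma/\sigma_T$.
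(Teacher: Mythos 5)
Your proposal is correct and takes essentially the same route as the paper: invoke the McCallum--Foster convergence theorem for the reparameterized ODE in the $\bfy$-variable, then transport the bound back through the exact change of variables $\bfx_n = \tfrac{\sigma_{t_n}}{\sigma_T}\bfy_n$ and absorb the factor $\sigma_{t_n}/\sigma_T \le 1/\sigma_T$ using $\sigma_t \le 1$ for variance-preserving schedules. The paper leaves the exactness of the discrete Lawson transform implicit (it is established separately in the derivation of Rex in \cref{app:deriv_rex}), whereas you verify it inline, but the argument is the same.
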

This result is for the ODE case; we refer to the generalized formulation of the probability flow ODE in \cref{eq:reparam_general_ode}.
Therefore, \rex inherits the convergence order of the base scheme, $\bm\Phi$, used to construct \princeps.
As a clear corollary we have
\begin{corollary}
    Let $\bm \Phi$ be a $k$-th order explicit Runge-Kutta scheme for the reparameterized probability flow ODE in \cref{eq:reparam_general_ode} with variance preserving noise schedule $(\alpha_t, \sigma_t)$.
    Then \princeps constructed from $\bm \Phi$ is a $k$-th order solver.
\end{corollary}

An analogous result can be shown for the \princeps scheme for diffusion SDEs, \ie, that it inherits the strong order of convergence from the underlying SRK scheme $\bm\Phi$.
We show this below with the full proof provided in \cref{proof:psi_convergence_sde}.

\begin{restatable}[Convergence order for \princeps]{theorem}{convergencepsi}
    \label{thm:psi_convergence_sde} 
    Let $\bm \Phi$ be a SRK scheme with strong order of convergence $\xi > 0$ for the reparameterized reverse-time diffusion SDE in \cref{eq:root_changed} with variance preserving noise schedule $(\alpha_t, \sigma_t)$ and $\alpha_T > 0$.
    Then $\bm \Psi$ constructed from $\bm \Phi$ has strong order of convergence $\xi$.
\end{restatable}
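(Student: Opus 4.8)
The plan is to exhibit $\bm\Psi$ as the \emph{exact} pushforward of the SRK scheme $\bm\Phi$ under the deterministic, time-dependent linear rescaling that carries the reparameterized SDE in \cref{eq:reparam_data_sde} onto the original SDE in \cref{eq:diffusion_sde_reverse_data}, and then to transport the strong error estimate across this rescaling. Writing $\lambda_t \coloneq \frac{\sigma_t^2 \alpha_T}{\sigma_T^2 \alpha_t}$ for the reciprocal of the Lawson integrating factor, a direct computation using $f = \dot\alpha_t/\alpha_t$ and $g^2 = \dot\sigma_t^2 - 2(\dot\alpha_t/\alpha_t)\sigma_t^2$ shows $\lambda_t = \exp\!\big(\int_T^t (f(u) + g^2(u)/\sigma_u^2)\,\rmd u\big)$, so that the exact flows obey $\bfX_t = \lambda_t \bfY_{\varrho_t}$ where $\bfY$ solves \cref{eq:reparam_data_sde}. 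Since strong order $\xi$ of $\bm\Phi$ means $\big(\ex\|\bfY_n^{\bm\Phi} - \bfY_{\varrho_n}\|^2\big)^{1/2} \le C h^\xi$ on the $\varrho$-grid for a fixed Brownian path, the objective reduces to obtaining the same rate for the $\bfX$-iterates of $\bm\Psi$.

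The first and central step is to prove the \emph{exact} discrete correspondence $\bfX_n^{\bm\Psi} = \lambda_{t_n} \bfY_n^{\bm\Phi}$ for every $n$, on the same realization $\omega$ (hence the same increments $\bfW_n(\omega)$ and space-time L\'evy areas $\bm H_n(\omega)$). I would argue by induction on $n$, and within each step by induction over the stages. The key observation is that the Lawson construction treats the linear term of \cref{eq:diffusion_sde_reverse_data} \emph{without discretization error}: the internal stage argument $w_{\varsigma_n + c_j h}\hat\bfZ_j$ in \cref{prop:rex} reconstructs exactly the $\bfX$-value associated to the $\bfY$-stage of $\bm\Phi$, and by \cref{prop:reparam_data_sde} the data-prediction evaluation $\bsf^\theta$ at that argument coincides with the reparameterized drift of the $\bfY$-SDE. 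Because the noise enters additively through the inherited coefficients $a_i^W, a_i^H, b^W, b^H$, the increment and L\'evy-area contributions are scaled by exactly the same factor, so no stochastic terms are lost or distorted. Matching the stage and update equations of $\bm\Psi$ term-by-term with those of $\bm\Phi$ then yields the claimed identity with scaling $\lambda_{t_n}$.

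With the exact correspondence in hand, the error transfers immediately: since the true solution also satisfies $\bfX_{t_n} = \lambda_{t_n}\bfY_{\varrho_n}$, we have
\begin{equation}
    \big(\ex\|\bfX_n^{\bm\Psi} - \bfX_{t_n}\|^2\big)^{1/2} = |\lambda_{t_n}|\,\big(\ex\|\bfY_n^{\bm\Phi} - \bfY_{\varrho_n}\|^2\big)^{1/2} \le |\lambda_{t_n}|\, C h^\xi.
\end{equation}
It then remains to bound $\sup_{t\in[0,T]}|\lambda_t|$ by a finite constant. Here the variance-preserving assumption together with $\alpha_T > 0$ does the work: monotonicity of the schedule gives $\sigma_t \le \sigma_T$ and $\alpha_t \ge \alpha_T > 0$ on $[0,T]$, whence $\lambda_t = \frac{\sigma_t^2\alpha_T}{\sigma_T^2\alpha_t} \le 1$, so that $C' \coloneq C\sup_t|\lambda_t| < \infty$ and $\bm\Psi$ inherits strong order $\xi$.

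I expect the main obstacle to be the rigorous verification of the exact discrete correspondence in the second step, specifically confirming that the reparameterization of \cref{prop:reparam_data_sde}---including the $\varrho$-time change and the $w$-weighting of the stage arguments---threads through the internal SRK stages \emph{and} the space-time L\'evy-area terms without introducing any residual error, so that the Lawson transformation is genuinely exact rather than merely order-$\xi$ accurate. A secondary care point is ensuring consistency of the $\varrho$-grid step size $h = \varsigma_{n+1} - \varsigma_n$ between $\bm\Phi$ and $\bm\Psi$, since the strong-order bound is stated in this reparameterized time variable; because $\varrho_t = \alpha_t^2/\sigma_t^2$ is a smooth strictly monotone bijection under the VP schedule, the two grids correspond exactly and no additional factors appear.
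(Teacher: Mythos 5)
Your proposal is correct and follows essentially the same route as the paper: the strong error bound for $\bm\Phi$ is transported through the exact linear change of variables $\bfX_t = \lambda_t \bfY_{\varrho_t}$ and the resulting prefactor is bounded using monotonicity of the VP schedule together with $\alpha_T > 0$ (your bound $\lambda_t \le 1$ is in fact algebraically cleaner than the paper's). The exact discrete correspondence $\bfX_n^{\bm\Psi} = \lambda_{t_n}\bfY_n^{\bm\Phi}$ that you propose to verify by induction is established in the paper not inside the convergence proof but in the separate derivation of the scheme (\cref{lemma:rex_sde}), where $\bm\Psi$ is defined precisely by that substitution, so your stage-by-stage check is a welcome explicit justification of a step the paper treats as definitional.
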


\paragraph{Relation to Existing Solvers.}
Next we show that several variants of \rex are actually the \textit{reversible versions} of several well-known solvers in the literature for diffusion models, \eg, the DPM-Solvers \citep{lu2022dpmsolver}.
We first begin by showing that \princeps subsumes many popular previous solvers developed for diffusion models in \cref{thm:rex_is_all} with the full derivations in \cref{app:rex_subsumes_all}.

\begin{restatable}[\princeps subsumes previous solvers]{theorem}{rexisall}
    \label{thm:rex_is_all}
    \princeps subsumes the following solvers for diffusion models
    \begin{enumerate}[topsep=0pt, partopsep=0pt, itemsep=3pt, parsep=0pt, leftmargin=*]
        \item DDIM \citep{song2021denoising},
        \item DPM-Solver-1, DPM-Solver-2, DPM-Solver-12 \citep{lu2022dpmsolver},
        \item DPM-Solver++1, DPM-Solver++(2S), SDE-DPM-Solver-1, SDE-DPM-Solver++1 \citep{lu2022dpm++},
        \item SEEDS-1 \citep{gonzalez2024seeds}, and
        \item gDDIM \citep{zhang2023gddim}.
    \end{enumerate}
\end{restatable}

As a natural corollary we have that \rex is then the reversible version of these highly popular solvers for diffusion models, \ie, we have that \rex (Euler) is \textit{Reversible DDIM} \etc. %
\begin{corollary}%
    \rex is the reversible version of the well-known solvers for diffusion models in \cref{thm:rex_is_all}.
\end{corollary}

\paragraph{Stability.}
One drawback of reversible solvers is their rather unimpressive stability, in fact until the work of \citet{mccallum2024efficient} there were no reversible methods which had a non-zero region of stability.
We discuss this in more detail in \cref{app:stability} along with illustrating the poor stability characteristics of BDIA and O-BELM (see \cref{corr:bdia_nowhere_linearly_stable,corr:obelm_is_nowhere_linearly_stable}).
However, since \rex is built upon the McCallum-Foster method, the ODE solver has a non-zero region of stability.\footnote{\Ie, in the sense of the linear test equation, see \cref{app:stability} for more details.}
This property will prove valuable later in our empirical studies.

\begin{figure}[t]
  \centering
  \begin{tikzpicture}
    \begin{axis}[
        ybar=0pt,
        bar width=8pt,
        width=\columnwidth,
        height=5cm,
        log origin y=infty,
        ymode=log,
        ymin=1e-11, ymax=2e0,
        ytick={1e-10,1e-8,1e-6,1e-4,1e-2,1e0},
        ylabel={Latent MSE},
        xtick={0,1,2,3,4},
        xticklabels={DDIM, EDICT, BDIA, O-BELM, \rex},
        xtick style={draw=none},
        xmin=-0.5, xmax=4.5,
        axis background/.style={fill=tn-base},
        ymajorgrids=true,
        major grid style={white, line width=0.8pt},
        axis line style={tn-subtext0, line width=0.6pt},
        tick style={thick, white},
        tick label style={font=\scriptsize},
        label style={font=\scriptsize},
        thick,
        clip=false,
        legend style={
            font=\scriptsize,
            at={(0.5,1.02)},
            anchor=south,
            legend columns=3,
            draw=none,
            fill=none,
            column sep=10pt,
        },
        legend image code/.code={\draw[#1, draw=none] (0cm,-0.07cm) rectangle (0.26cm,0.07cm);},
    ]
      \addplot[fill=tn-overlay1!85, draw=none, bar shift=-8pt] coordinates
        {(0, 3.57e-1) (1, 1.59e-6) (2, 3.26e-7) (3, 1.05e-7)};
      \addplot[fill=tn-rex,         draw=none, bar shift=-8pt, forget plot] coordinates
        {(4, 3.77e-9)};
      \addlegendentry{$10$ steps}
      \addplot[fill=tn-overlay1!55, draw=none, bar shift=0pt] coordinates
        {(0, 9.89e-2) (1, 1.98e-7) (2, 1.29e-7) (3, 2.24e-7)};
      \addplot[fill=tn-rex!60,      draw=none, bar shift=0pt, forget plot] coordinates
        {(4, 1.98e-9)};
      \addlegendentry{$20$ steps}
      \addplot[fill=tn-overlay1!30, draw=none, bar shift=8pt] coordinates
        {(0, 1.66e-2) (1, 2.23e-9) (2, 2.34e-8) (3, 3.35e-7)};
      \addplot[fill=tn-rex!30,      draw=none, bar shift=8pt, forget plot] coordinates
        {(4, 8.85e-10)};
      \addlegendentry{$50$ steps}
    \end{axis}
  \end{tikzpicture}
  \caption{Latent-space reconstruction MSE at FP32 precision (log scale; lower is better) for Stable Diffusion v1.5 ($512\times 512$, CFG $1.0$) over $100$ real images. \rex (Euler) is orders of magnitude below all baselines. Full numbers in \cref{tab:precision_study}.}
  \label{fig:precision_study}
  \vspace{-10mm}
\end{figure}

\begin{figure*}[t]
    \centering
\begin{tikzpicture}[font=\scriptsize]

\def\R{1.7}
\def\levels{5}
\def\gap{5.6}  %
\def\aFD{90}    %
\def\aFDi{30}   %
\def\aP{330}    %
\def\aRc{270}   %
\def\aD{210}    %
\def\aC{150}    %

\newcommand{\chartbg}{%
  \foreach \lv in {1,...,\levels}{%
    \pgfmathsetmacro{\rr}{\lv/\levels*\R}
    \draw[gray!22, very thin]
      ({cos(\aFD)*\rr},{sin(\aFD)*\rr}) --
      ({cos(\aFDi)*\rr},{sin(\aFDi)*\rr}) --
      ({cos(\aP)*\rr},{sin(\aP)*\rr}) --
      ({cos(\aRc)*\rr},{sin(\aRc)*\rr}) --
      ({cos(\aD)*\rr},{sin(\aD)*\rr}) --
      ({cos(\aC)*\rr},{sin(\aC)*\rr}) -- cycle;
  }
  \foreach \ang in {\aFD,\aFDi,\aP,\aRc,\aD,\aC}{%
    \draw[gray!22, very thin] (0,0)--({cos(\ang)*\R},{sin(\ang)*\R});
  }
  \node[gray!40, font=\tiny, anchor=west, inner sep=1pt]
    at ({cos(\aFD)*0.5*\R+0.05},{sin(\aFD)*0.5*\R}) {0.5};
}

\newcommand{\chartlabels}[1]{%
  \node[gray!60!black, font=\scriptsize\bfseries, anchor=south]
    at (0, {\R*1.08+0.55}) {#1 steps};
  \node[gray!65!black, font=\tiny\bfseries, anchor=south]
    at (0, {\R*1.08}) {FD};
  \node[gray!65!black, font=\tiny\bfseries, anchor=west]
    at ({\R*1.08*cos(\aFDi)+0.10}, {\R*1.00*sin(\aFDi)-0.04})
    {FD$_\infty$};
  \node[gray!65!black, font=\tiny\bfseries, anchor=west]
    at ({\R*1.08*cos(\aP)+0.10}, {\R*1.00*sin(\aP)+0.04})
    {Precision};
  \node[gray!65!black, font=\tiny\bfseries, anchor=north]
    at (0, {-\R*1.08-0.04}) {Recall};
  \node[gray!65!black, font=\tiny\bfseries, anchor=east]
    at ({\R*1.08*cos(\aD)-0.10}, {\R*1.00*sin(\aD)+0.04})
    {Density};
  \node[gray!65!black, font=\tiny\bfseries, anchor=east]
    at ({\R*1.08*cos(\aC)-0.10}, {\R*1.00*sin(\aC)-0.04})
    {Coverage};
}

\newcommand{\sline}[9]{%
  \draw[#1, line width=#2, dash pattern=#3]
    ({cos(\aFD)*#4*\R},{sin(\aFD)*#4*\R}) --
    ({cos(\aFDi)*#5*\R},{sin(\aFDi)*#5*\R}) --
    ({cos(\aP)*#6*\R},{sin(\aP)*#6*\R}) --
    ({cos(\aRc)*#7*\R},{sin(\aRc)*#7*\R}) --
    ({cos(\aD)*#8*\R},{sin(\aD)*#8*\R}) --
    ({cos(\aC)*#9*\R},{sin(\aC)*#9*\R}) -- cycle;
}

\newcommand{\sem}[6]{%
  \fill[tn-rex, opacity=0.06]
    ({cos(\aFD)*#1*\R},{sin(\aFD)*#1*\R}) --
    ({cos(\aFDi)*#2*\R},{sin(\aFDi)*#2*\R}) --
    ({cos(\aP)*#3*\R},{sin(\aP)*#3*\R}) --
    ({cos(\aRc)*#4*\R},{sin(\aRc)*#4*\R}) --
    ({cos(\aD)*#5*\R},{sin(\aD)*#5*\R}) --
    ({cos(\aC)*#6*\R},{sin(\aC)*#6*\R}) -- cycle;
  \draw[tn-rex, line width=1.2pt, dash pattern=on 4pt off 2pt]
    ({cos(\aFD)*#1*\R},{sin(\aFD)*#1*\R}) --
    ({cos(\aFDi)*#2*\R},{sin(\aFDi)*#2*\R}) --
    ({cos(\aP)*#3*\R},{sin(\aP)*#3*\R}) --
    ({cos(\aRc)*#4*\R},{sin(\aRc)*#4*\R}) --
    ({cos(\aD)*#5*\R},{sin(\aD)*#5*\R}) --
    ({cos(\aC)*#6*\R},{sin(\aC)*#6*\R}) -- cycle;
}

\begin{scope}[xshift=0cm]
  \chartbg
  \chartlabels{10}
  \sem{0.642}{0.637}{0.79}{0.10}{0.61}{0.37}
  \sline{tn-sapphire}{0.7pt}{}{0.376}{0.368}{0.49}{0.10}{0.19}{0.11}
  \sline{tn-teal}{0.7pt}{on 3pt off 1.5pt}{0.539}{0.532}{0.75}{0.14}{0.49}{0.27}
  \sline{tn-orange}{0.7pt}{}{0.435}{0.426}{0.61}{0.10}{0.28}{0.14}
  \sline{tn-yellow}{0.7pt}{}{0.647}{0.639}{0.78}{0.18}{0.56}{0.34}
  \sline{tn-pink}{1.0pt}{}{0.645}{0.638}{0.78}{0.21}{0.60}{0.37}
  \sline{tn-red}{1.0pt}{}{0.618}{0.617}{0.81}{0.22}{0.64}{0.36}
\end{scope}

\begin{scope}[xshift=\gap cm]
  \chartbg
  \chartlabels{20}
  \sem{0.851}{0.852}{0.86}{0.21}{0.91}{0.51}
  \sline{tn-sapphire}{0.7pt}{}{0.521}{0.512}{0.68}{0.15}{0.36}{0.21}
  \sline{tn-teal}{0.7pt}{on 3pt off 1.5pt}{0.687}{0.686}{0.79}{0.20}{0.62}{0.38}
  \sline{tn-orange}{0.7pt}{}{0.641}{0.634}{0.76}{0.19}{0.50}{0.30}
  \sline{tn-yellow}{0.7pt}{}{0.800}{0.797}{0.82}{0.23}{0.71}{0.43}
  \sline{tn-pink}{1.0pt}{}{0.726}{0.722}{0.81}{0.26}{0.66}{0.41}
  \sline{tn-red}{1.0pt}{}{0.716}{0.714}{0.82}{0.27}{0.71}{0.43}
\end{scope}

\begin{scope}[xshift=2*\gap cm]
  \chartbg
  \chartlabels{50}
  \sem{1.000}{1.000}{0.87}{0.28}{0.98}{0.56}
  \sline{tn-sapphire}{0.7pt}{}{0.711}{0.713}{0.78}{0.24}{0.60}{0.37}
  \sline{tn-teal}{0.7pt}{on 3pt off 1.5pt}{0.798}{0.794}{0.80}{0.26}{0.67}{0.45}
  \sline{tn-orange}{0.7pt}{}{0.783}{0.779}{0.82}{0.27}{0.70}{0.44}
  \sline{tn-yellow}{0.7pt}{}{0.823}{0.823}{0.84}{0.29}{0.77}{0.45}
  \sline{tn-pink}{1.0pt}{}{0.775}{0.770}{0.81}{0.29}{0.70}{0.44}
  \sline{tn-red}{1.0pt}{}{0.767}{0.764}{0.80}{0.27}{0.69}{0.44}
\end{scope}

\begin{scope}[xshift=\gap cm, yshift={-\R-2.7cm}]

  \def\hw{0.28}   %
  \def\rsp{1.95}  %
  \def\s{-3.55}   %

  \draw[tn-sapphire, line width=0.7pt]
    (\s,0)--(\s+2*\hw,0);
  \node[tn-sapphire, anchor=west, font=\tiny\bfseries]
    at (\s+2*\hw+0.07,0) {EDICT};

  \draw[tn-teal, line width=0.7pt, dash pattern=on 3pt off 1.5pt]
    (\s+\rsp,0)--(\s+\rsp+2*\hw,0);
  \node[tn-teal, anchor=west, font=\tiny\bfseries]
    at (\s+\rsp+2*\hw+0.07,0) {DDIM};

  \draw[tn-orange, line width=0.7pt]
    (\s+\rsp*2,0)--(\s+\rsp*2+2*\hw,0);
  \node[tn-orange, anchor=west, font=\tiny\bfseries]
    at (\s+\rsp*2+2*\hw+0.07,0) {BDIA};

  \draw[tn-yellow, line width=0.7pt]
    (\s+\rsp*3,0)--(\s+\rsp*3+2*\hw,0);
  \node[tn-yellow, anchor=west, font=\tiny\bfseries]
    at (\s+\rsp*3+2*\hw+0.07,0) {O-BELM};

  \def\rspB{2.45}
  \def\sB{-3.55}
  \def\ry{-0.38}

  \draw[tn-pink, line width=1.0pt]
    (\sB,\ry)--(\sB+2*\hw,\ry);
  \node[tn-pink, anchor=west, font=\tiny\bfseries]
    at (\sB+2*\hw+0.07,\ry) {\rex (Midpoint)};

  \draw[tn-red, line width=1.0pt]
    (\sB+\rspB,\ry)--(\sB+\rspB+2*\hw,\ry);
  \node[tn-red, anchor=west, font=\tiny\bfseries]
    at (\sB+\rspB+2*\hw+0.07,\ry) {\rex (RK4)};

  \fill[tn-rex, opacity=0.10]
    (\sB+\rspB*2,\ry-0.07) rectangle (\sB+\rspB*2+2*\hw,\ry+0.07);
  \draw[tn-rex, line width=1.2pt, dash pattern=on 4pt off 2pt]
    (\sB+\rspB*2,\ry)--(\sB+\rspB*2+2*\hw,\ry);
  \node[tn-rex, anchor=west, font=\tiny\bfseries]
    at (\sB+\rspB*2+2*\hw+0.07,\ry)
    {\rex (Euler-Maruyama) \textit{[SDE]}};

\end{scope}

\end{tikzpicture}
    \caption{Radar charts comparing reversible solvers for unconditional image generation on CelebA-HQ ($256 \times 256$) with a pre-trained DDPM at $10$, $20$, $50$ steps. Six metrics (FD, FD$_\infty$, Precision, Recall, Density, Coverage); \rex (Euler-Maruyama, mauve) attains the largest polygon at $20$ and $50$ steps. Raw values in \cref{tab:uncond_fid}.}
    \label{fig:uncond_radar}
\end{figure*}

\begin{figure}[t]
    \centering
    \begin{subfigure}{0.09\textwidth}
        \includegraphics[width=\textwidth]{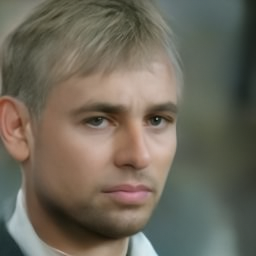}
        \caption{\tiny DDIM}
    \end{subfigure}%
    \begin{subfigure}{0.09\textwidth}
        \includegraphics[width=\textwidth]{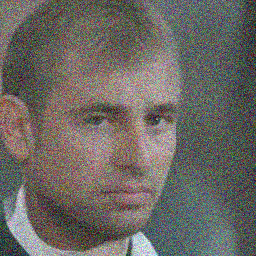}
        \caption{\tiny EDICT}
    \end{subfigure}%
    \begin{subfigure}{0.09\textwidth}
        \includegraphics[width=\textwidth]{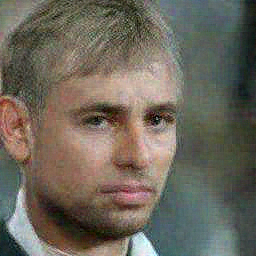}
        \caption{\tiny BDIA}
    \end{subfigure}%
    \begin{subfigure}{0.09\textwidth}
        \includegraphics[width=\textwidth]{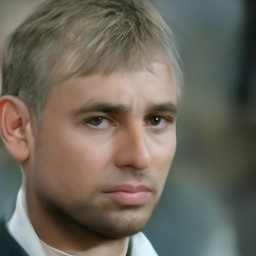}
        \caption{\tiny O-BELM}
    \end{subfigure}%
    \begin{subfigure}{0.09\textwidth}
        \includegraphics[width=\textwidth]{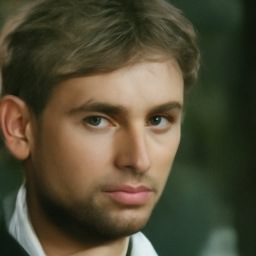}
        \caption{\tiny \rex (RK4)}
    \end{subfigure}
    \caption{Qualitative comparison of unconditional sampling on CelebA-HQ ($256 \times 256$) with a pre-trained DDPM at $10$ discretization steps; non-reversible DDIM included as a baseline.}
    \label{fig:comp_uncond_gen}
\end{figure}

\section{Empirical Results}
\label{sec:experiments}

In this section we conduct a number of empirical studies to illustrate the utility of \rex in two major contexts.
We primarily explore this in two contexts: 1) image generation and editing with exact inversion, and 2) ensuring invertibility for accurate likelihood calculations for Boltzmann sampling.
Unless stated otherwise $\zeta = 0.999$ for all experiments (see \cref{app:stability} for the rationale behind this choice).

\subsection{Reconstruction Error Under Finite Precision}
\label{sec:precision_study}
While \rex is algebraically reversible, finite-precision arithmetic on GPUs can in principle break strict reversibility.
We measure the round-trip reconstruction error (forward solve followed by reverse solve) over $100$ real images from the \href{https://huggingface.co/datasets/timbrooks/instructpix2pix-clip-filtered}{\texttt{pix2pix}} dataset encoded with Stable Diffusion v1.5 at $512\times 512$, reporting latent-space MSE to exclude VAE reconstruction error from the measurement.
\Cref{fig:precision_study} summarizes the FP32 results across $10$, $20$, and $50$ steps with CFG scale $1.0$; \rex (Euler) is the most accurate solver at every step count, by one to several orders of magnitude over most baselines.
Notably, O-BELM's error \textit{grows} with steps---consistent with its lack of a non-trivial linear stability region (see \cref{app:stability}).
Full results (FP16 and pixel-space MSE) and additional discussion are deferred to \cref{app:precision_full}.

\begin{figure}[t]
    \centering
    \begin{subfigure}{0.195\columnwidth}
        \includegraphics[width=\textwidth]{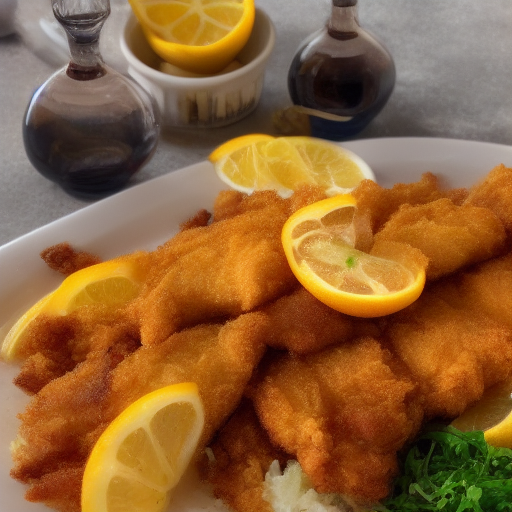}
    \end{subfigure}%
    \begin{subfigure}{0.195\columnwidth}
        \includegraphics[width=\textwidth]{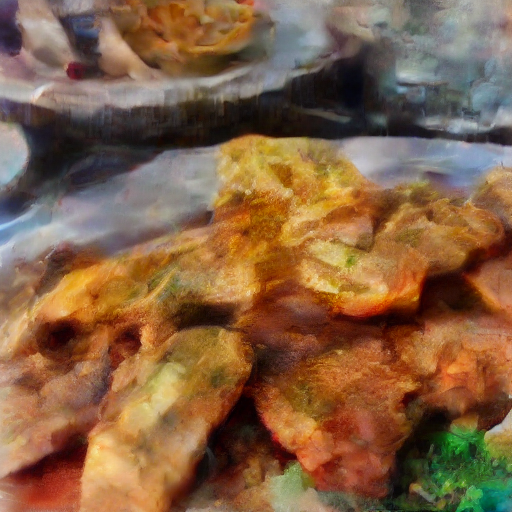}
    \end{subfigure}%
    \begin{subfigure}{0.195\columnwidth}
        \includegraphics[width=\textwidth]{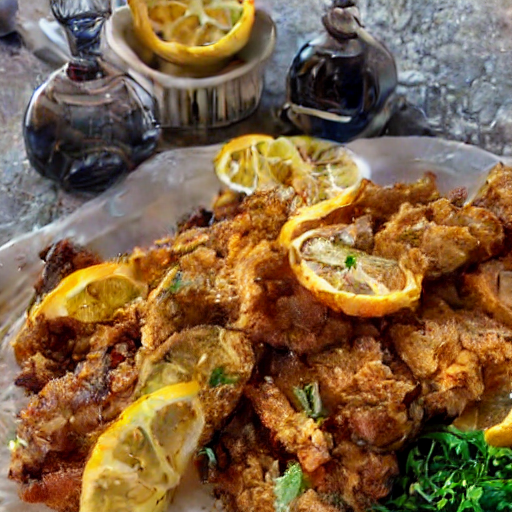}
    \end{subfigure}%
    \begin{subfigure}{0.195\columnwidth}
        \includegraphics[width=\textwidth]{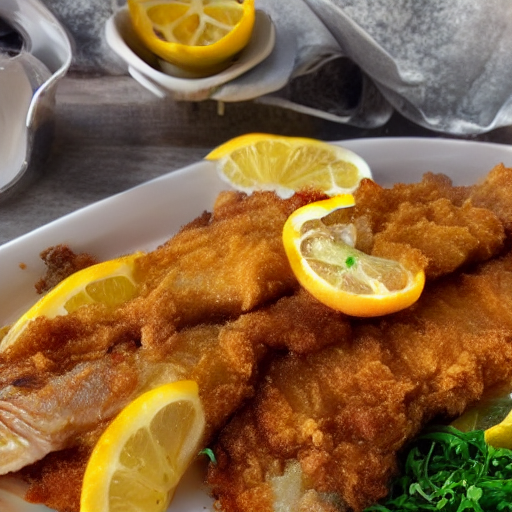}
    \end{subfigure}%
    \begin{subfigure}{0.195\columnwidth}
        \includegraphics[width=\textwidth]{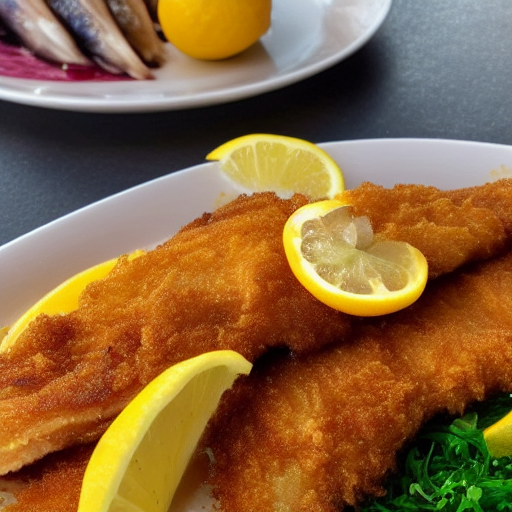}
    \end{subfigure}

    \begin{subfigure}{0.195\columnwidth}
        \includegraphics[width=\textwidth]{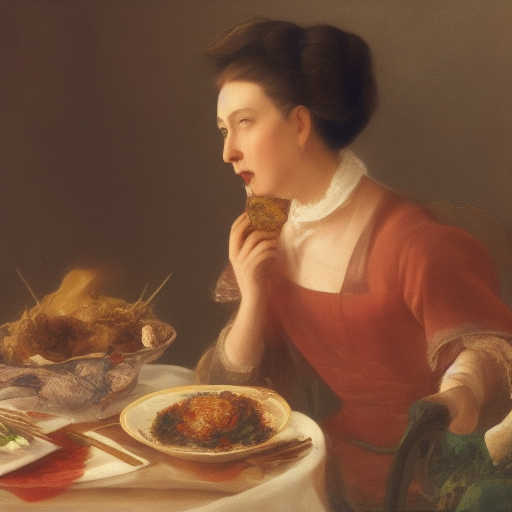}
    \end{subfigure}%
    \begin{subfigure}{0.195\columnwidth}
        \includegraphics[width=\textwidth]{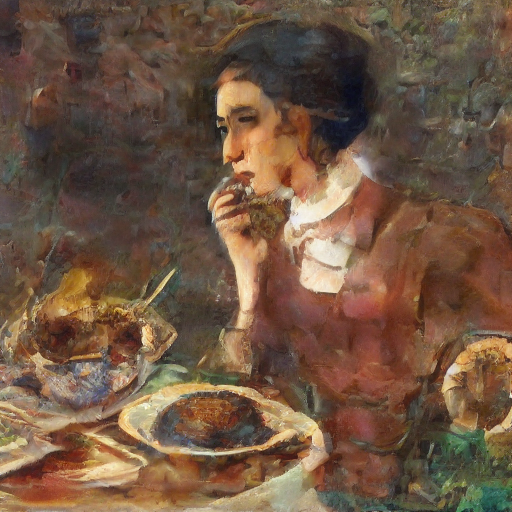}
    \end{subfigure}%
    \begin{subfigure}{0.195\columnwidth}
        \includegraphics[width=\textwidth]{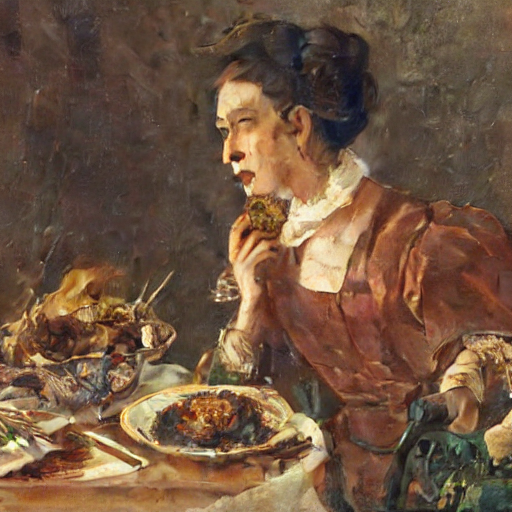}
    \end{subfigure}%
    \begin{subfigure}{0.195\columnwidth}
        \includegraphics[width=\textwidth]{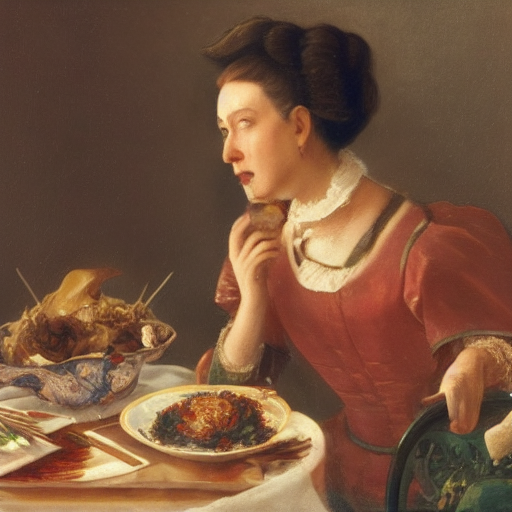}
    \end{subfigure}%
    \begin{subfigure}{0.195\columnwidth}
        \includegraphics[width=\textwidth]{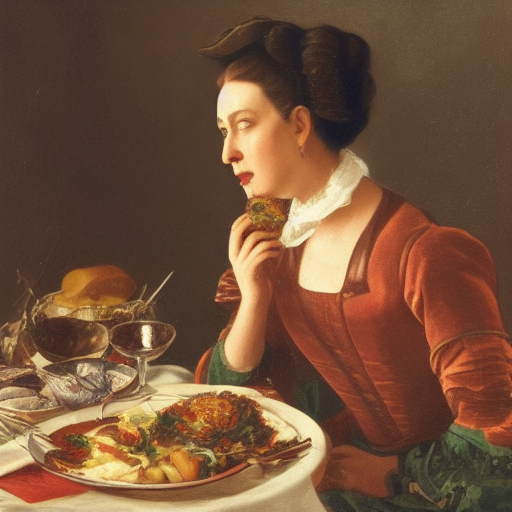}
    \end{subfigure}

    \begin{subfigure}{0.195\columnwidth}
        \includegraphics[width=\textwidth]{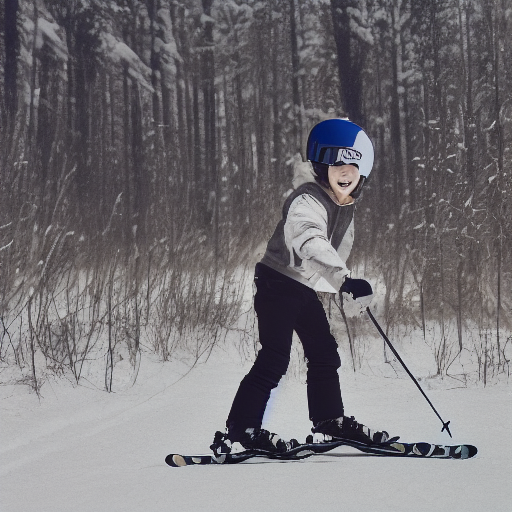}
        \caption{\tiny DDIM}
    \end{subfigure}%
    \begin{subfigure}{0.195\columnwidth}
        \includegraphics[width=\textwidth]{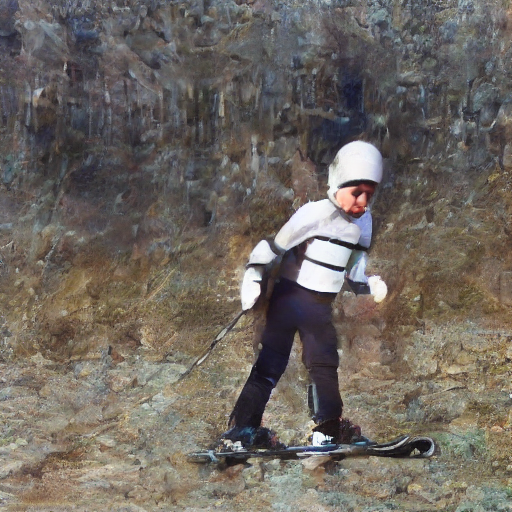}
        \caption{\tiny EDICT}
    \end{subfigure}%
    \begin{subfigure}{0.195\columnwidth}
        \includegraphics[width=\textwidth]{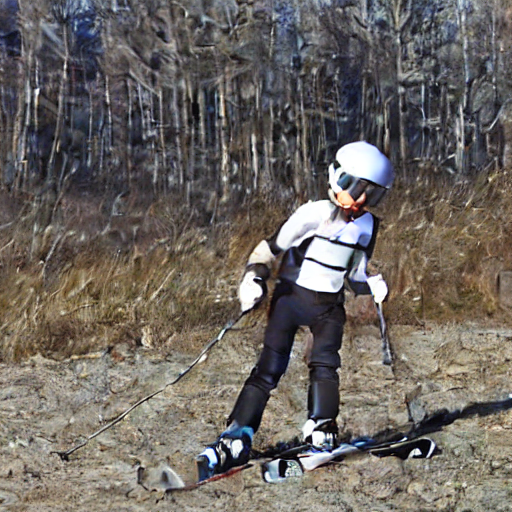}
        \caption{\tiny BDIA}
    \end{subfigure}%
    \begin{subfigure}{0.195\columnwidth}
        \includegraphics[width=\textwidth]{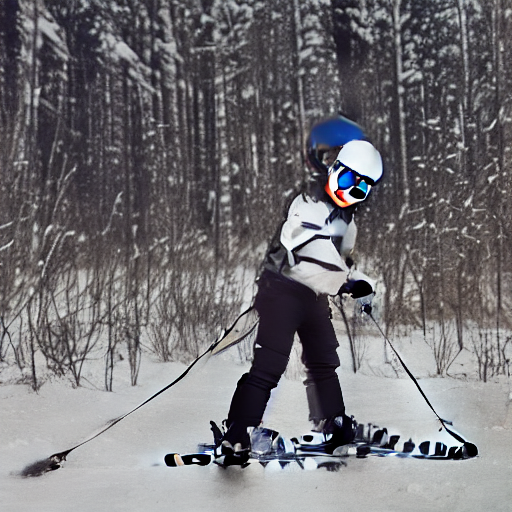}
        \caption{\tiny BELM}
    \end{subfigure}%
    \begin{subfigure}{0.195\columnwidth}
        \includegraphics[width=\textwidth]{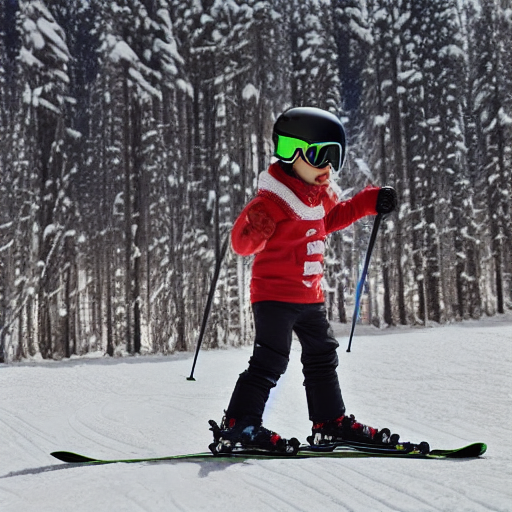}
        \caption{\tiny \rex}
    \end{subfigure}
    \caption{Qualitative comparison of text-to-image sampling with Stable Diffusion v1.5 ($512 \times 512$) at $10$ discretization steps. Prompts (top to bottom): ``White plate with fried fish and lemons sitting on top of it.'', ``A lady enjoying a meal of some sort.'', ``A young boy riding skis with ski poles.''.}
    \label{fig:comp_cond_gen}
\end{figure}

\subsection{Image Generation}

\subsubsection{Unconditional Image Generation}

Following prior works \citep{wang2024belm,wallace2023edict}, we begin by evaluating \rex as a standard sampling solver for diffusion models.
To evaluate this we drew $10^4$ samples using a DDPM model \citep{ho2020denoising} pre-trained on the CelebA-HQ \citep{karras2018progressive} dataset with the various solvers, each using the same fixed seed.
Following \citet{stein2023exposing}, we report the \textit{Fr\'echet distance} (FD) using a DINOv2 \citep{oquab2023dinov2} feature extractor along with FD$_\infty$ \citep{chong2020effectively}.
We also report precision and recall \citep{kynkaanniemi2019improved} along with density and coverage \citep{naeem2020reliable}, which together proxy fidelity and sample diversity.

In \cref{tab:uncond_fid} we compare pre-existing methods for exact inversion with diffusion models against \rex, and include the non-reversible DDIM \citep{song2021denoising} solver as a baseline.
The \rex family performs strongly, outperforming EDICT \citep{wallace2023edict} and BDIA \citep{zhang2023bdia} by a wide margin and frequently outperforming O-BELM \citep{wang2024belm}; \rex even surpasses the non-reversible DDIM baseline.
Our reversible SDE scheme performs well outside of the small step-size regime, a well-known limitation of SDE schemes.
Unlike the other reversible solvers, \rex's hyperparameters were not tuned for this benchmark.
In \cref{fig:comp_uncond_gen} we present a visual qualitative comparison of the different solvers using the same initial noise.
We provide additional experimental details in \cref{app:uncond_gen_details}.

For a discussion of why higher-order \rex (RK4) underperforms lower-order \rex (Euler) and the SDE variant on this benchmark, see \cref{app:higher_order_note}.

\begin{figure}[t]
    \centering
\begin{tikzpicture}[font=\scriptsize]

\def\Rc{2.4}
\def\levelsc{5}

\def\aCa{90}   \def\aCb{50}   \def\aCc{10}
\def\aIa{330}  \def\aIb{290}  \def\aIc{250}
\def\aPa{210}  \def\aPb{170}  \def\aPc{130}

\newcommand{\condgroupbracket}[3]{%
  \pgfmathsetmacro{\rArc}{\Rc+0.55}
  \pgfmathsetmacro{\rOut}{\Rc+0.68}
  \draw[tn-text, line width=0.8pt, line cap=round]
    ({\rArc*cos(#1-3)}, {\rArc*sin(#1-3)})
    arc[start angle=#1-3, end angle=#2+3, radius=\rArc];
  \draw[tn-text, line width=0.8pt, line cap=round]
    ({\rArc*cos(#1-3)}, {\rArc*sin(#1-3)}) --
    ({(\rArc-0.10)*cos(#1-3)}, {(\rArc-0.10)*sin(#1-3)});
  \draw[tn-text, line width=0.8pt, line cap=round]
    ({\rArc*cos(#2+3)}, {\rArc*sin(#2+3)}) --
    ({(\rArc-0.10)*cos(#2+3)}, {(\rArc-0.10)*sin(#2+3)});
  \draw[tn-text, line width=0.8pt, line cap=round]
    ({\rArc*cos(#3)}, {\rArc*sin(#3)}) --
    ({\rOut*cos(#3)}, {\rOut*sin(#3)});
}

\newcommand{\condbg}{%
  \foreach \lv in {1,...,\levelsc}{%
    \pgfmathsetmacro{\rr}{\lv/\levelsc*\Rc}
    \draw[gray!22, very thin]
      ({cos(\aCa)*\rr},{sin(\aCa)*\rr}) --
      ({cos(\aCb)*\rr},{sin(\aCb)*\rr}) --
      ({cos(\aCc)*\rr},{sin(\aCc)*\rr}) --
      ({cos(\aIa)*\rr},{sin(\aIa)*\rr}) --
      ({cos(\aIb)*\rr},{sin(\aIb)*\rr}) --
      ({cos(\aIc)*\rr},{sin(\aIc)*\rr}) --
      ({cos(\aPa)*\rr},{sin(\aPa)*\rr}) --
      ({cos(\aPb)*\rr},{sin(\aPb)*\rr}) --
      ({cos(\aPc)*\rr},{sin(\aPc)*\rr}) -- cycle;
  }
  \foreach \ang in {\aCa,\aCb,\aCc,\aIa,\aIb,\aIc,\aPa,\aPb,\aPc}{%
    \draw[gray!22, very thin] (0,0)--({cos(\ang)*\Rc},{sin(\ang)*\Rc});
  }
  \condgroupbracket{\aCc}{\aCa}{50}
  \condgroupbracket{\aIc}{\aIa}{290}
  \condgroupbracket{\aPc}{\aPa}{170}
}

\newcommand{\condlabels}{%
  \node[gray!65!black, font=\tiny\bfseries, anchor=south]
    at ({\Rc*1.06*cos(\aCa)}, {\Rc*1.06*sin(\aCa)}) {10};
  \node[gray!65!black, font=\tiny\bfseries, anchor=south west, inner sep=1pt]
    at ({\Rc*1.06*cos(\aCb)}, {\Rc*1.06*sin(\aCb)}) {20};
  \node[gray!65!black, font=\tiny\bfseries, anchor=west, inner sep=2pt]
    at ({\Rc*1.06*cos(\aCc)}, {\Rc*1.06*sin(\aCc)}) {50};
  \node[gray!65!black, font=\tiny\bfseries, anchor=west, inner sep=2pt]
    at ({\Rc*1.06*cos(\aIa)}, {\Rc*1.06*sin(\aIa)}) {10};
  \node[gray!65!black, font=\tiny\bfseries, anchor=north west, inner sep=1pt]
    at ({\Rc*1.06*cos(\aIb)}, {\Rc*1.06*sin(\aIb)}) {20};
  \node[gray!65!black, font=\tiny\bfseries, anchor=north, inner sep=1pt]
    at ({\Rc*1.06*cos(\aIc)}, {\Rc*1.06*sin(\aIc)}) {50};
  \node[gray!65!black, font=\tiny\bfseries, anchor=north, inner sep=1pt]
    at ({\Rc*1.06*cos(\aPa)}, {\Rc*1.06*sin(\aPa)}) {10};
  \node[gray!65!black, font=\tiny\bfseries, anchor=east, inner sep=2pt]
    at ({\Rc*1.06*cos(\aPb)}, {\Rc*1.06*sin(\aPb)}) {20};
  \node[gray!65!black, font=\tiny\bfseries, anchor=east, inner sep=2pt]
    at ({\Rc*1.06*cos(\aPc)}, {\Rc*1.06*sin(\aPc)}) {50};
  \node[tn-text, font=\tiny\bfseries, anchor=south, inner sep=1pt]
    at ({(\Rc+0.74)*cos(50)}, {(\Rc+0.74)*sin(50)})
    {CLIP ($\uparrow$)};
  \node[tn-text, font=\tiny\bfseries, anchor=west, inner sep=1pt]
    at ({(\Rc+0.74)*cos(290)}, {(\Rc+0.74)*sin(290)})
    {Image Reward ($\uparrow$)};
  \node[tn-text, font=\tiny\bfseries, anchor=east, inner sep=1pt]
    at ({(\Rc+0.74)*cos(170)}, {(\Rc+0.74)*sin(170)})
    {PickScore ($\uparrow$)};
}

\fill[tn-rex, opacity=0.06]
  ({cos(\aCa)*0.936*\Rc},{sin(\aCa)*0.936*\Rc}) --
  ({cos(\aCb)*0.912*\Rc},{sin(\aCb)*0.912*\Rc}) --
  ({cos(\aCc)*0.866*\Rc},{sin(\aCc)*0.866*\Rc}) --
  ({cos(\aIa)*0.901*\Rc},{sin(\aIa)*0.901*\Rc}) --
  ({cos(\aIb)*0.911*\Rc},{sin(\aIb)*0.911*\Rc}) --
  ({cos(\aIc)*0.924*\Rc},{sin(\aIc)*0.924*\Rc}) --
  ({cos(\aPa)*0.833*\Rc},{sin(\aPa)*0.833*\Rc}) --
  ({cos(\aPb)*0.887*\Rc},{sin(\aPb)*0.887*\Rc}) --
  ({cos(\aPc)*0.900*\Rc},{sin(\aPc)*0.900*\Rc}) -- cycle;

\condbg
\condlabels

\draw[tn-sapphire, line width=0.7pt]
  ({cos(\aCa)*0.194*\Rc},{sin(\aCa)*0.194*\Rc}) --
  ({cos(\aCb)*0.808*\Rc},{sin(\aCb)*0.808*\Rc}) --
  ({cos(\aCc)*0.834*\Rc},{sin(\aCc)*0.834*\Rc}) --
  ({cos(\aIa)*0.101*\Rc},{sin(\aIa)*0.101*\Rc}) --
  ({cos(\aIb)*0.703*\Rc},{sin(\aIb)*0.703*\Rc}) --
  ({cos(\aIc)*0.747*\Rc},{sin(\aIc)*0.747*\Rc}) --
  ({cos(\aPa)*0.173*\Rc},{sin(\aPa)*0.173*\Rc}) --
  ({cos(\aPb)*0.613*\Rc},{sin(\aPb)*0.613*\Rc}) --
  ({cos(\aPc)*0.683*\Rc},{sin(\aPc)*0.683*\Rc}) -- cycle;

\draw[tn-teal, line width=0.7pt, dash pattern=on 3pt off 1.5pt]
  ({cos(\aCa)*0.956*\Rc},{sin(\aCa)*0.956*\Rc}) --
  ({cos(\aCb)*0.952*\Rc},{sin(\aCb)*0.952*\Rc}) --
  ({cos(\aCc)*0.848*\Rc},{sin(\aCc)*0.848*\Rc}) --
  ({cos(\aIa)*0.796*\Rc},{sin(\aIa)*0.796*\Rc}) --
  ({cos(\aIb)*0.853*\Rc},{sin(\aIb)*0.853*\Rc}) --
  ({cos(\aIc)*0.915*\Rc},{sin(\aIc)*0.915*\Rc}) --
  ({cos(\aPa)*0.687*\Rc},{sin(\aPa)*0.687*\Rc}) --
  ({cos(\aPb)*0.763*\Rc},{sin(\aPb)*0.763*\Rc}) --
  ({cos(\aPc)*0.680*\Rc},{sin(\aPc)*0.680*\Rc}) -- cycle;

\draw[tn-orange, line width=0.7pt]
  ({cos(\aCa)*0.914*\Rc},{sin(\aCa)*0.914*\Rc}) --
  ({cos(\aCb)*0.896*\Rc},{sin(\aCb)*0.896*\Rc}) --
  ({cos(\aCc)*0.896*\Rc},{sin(\aCc)*0.896*\Rc}) --
  ({cos(\aIa)*0.774*\Rc},{sin(\aIa)*0.774*\Rc}) --
  ({cos(\aIb)*0.808*\Rc},{sin(\aIb)*0.808*\Rc}) --
  ({cos(\aIc)*0.814*\Rc},{sin(\aIc)*0.814*\Rc}) --
  ({cos(\aPa)*0.660*\Rc},{sin(\aPa)*0.660*\Rc}) --
  ({cos(\aPb)*0.720*\Rc},{sin(\aPb)*0.720*\Rc}) --
  ({cos(\aPc)*0.737*\Rc},{sin(\aPc)*0.737*\Rc}) -- cycle;

\draw[tn-yellow, line width=0.7pt]
  ({cos(\aCa)*0.894*\Rc},{sin(\aCa)*0.894*\Rc}) --
  ({cos(\aCb)*0.886*\Rc},{sin(\aCb)*0.886*\Rc}) --
  ({cos(\aCc)*0.902*\Rc},{sin(\aCc)*0.902*\Rc}) --
  ({cos(\aIa)*0.806*\Rc},{sin(\aIa)*0.806*\Rc}) --
  ({cos(\aIb)*0.836*\Rc},{sin(\aIb)*0.836*\Rc}) --
  ({cos(\aIc)*0.867*\Rc},{sin(\aIc)*0.867*\Rc}) --
  ({cos(\aPa)*0.627*\Rc},{sin(\aPa)*0.627*\Rc}) --
  ({cos(\aPb)*0.667*\Rc},{sin(\aPb)*0.667*\Rc}) --
  ({cos(\aPc)*0.720*\Rc},{sin(\aPc)*0.720*\Rc}) -- cycle;

\draw[tn-pink, line width=1.0pt]
  ({cos(\aCa)*0.924*\Rc},{sin(\aCa)*0.924*\Rc}) --
  ({cos(\aCb)*0.928*\Rc},{sin(\aCb)*0.928*\Rc}) --
  ({cos(\aCc)*0.920*\Rc},{sin(\aCc)*0.920*\Rc}) --
  ({cos(\aIa)*0.844*\Rc},{sin(\aIa)*0.844*\Rc}) --
  ({cos(\aIb)*0.877*\Rc},{sin(\aIb)*0.877*\Rc}) --
  ({cos(\aIc)*0.888*\Rc},{sin(\aIc)*0.888*\Rc}) --
  ({cos(\aPa)*0.760*\Rc},{sin(\aPa)*0.760*\Rc}) --
  ({cos(\aPb)*0.793*\Rc},{sin(\aPb)*0.793*\Rc}) --
  ({cos(\aPc)*0.803*\Rc},{sin(\aPc)*0.803*\Rc}) -- cycle;

\draw[tn-red, line width=1.0pt]
  ({cos(\aCa)*0.938*\Rc},{sin(\aCa)*0.938*\Rc}) --
  ({cos(\aCb)*0.920*\Rc},{sin(\aCb)*0.920*\Rc}) --
  ({cos(\aCc)*0.914*\Rc},{sin(\aCc)*0.914*\Rc}) --
  ({cos(\aIa)*0.864*\Rc},{sin(\aIa)*0.864*\Rc}) --
  ({cos(\aIb)*0.882*\Rc},{sin(\aIb)*0.882*\Rc}) --
  ({cos(\aIc)*0.886*\Rc},{sin(\aIc)*0.886*\Rc}) --
  ({cos(\aPa)*0.783*\Rc},{sin(\aPa)*0.783*\Rc}) --
  ({cos(\aPb)*0.800*\Rc},{sin(\aPb)*0.800*\Rc}) --
  ({cos(\aPc)*0.803*\Rc},{sin(\aPc)*0.803*\Rc}) -- cycle;

\draw[tn-rex, line width=1.2pt, dash pattern=on 4pt off 2pt]
  ({cos(\aCa)*0.936*\Rc},{sin(\aCa)*0.936*\Rc}) --
  ({cos(\aCb)*0.912*\Rc},{sin(\aCb)*0.912*\Rc}) --
  ({cos(\aCc)*0.866*\Rc},{sin(\aCc)*0.866*\Rc}) --
  ({cos(\aIa)*0.901*\Rc},{sin(\aIa)*0.901*\Rc}) --
  ({cos(\aIb)*0.911*\Rc},{sin(\aIb)*0.911*\Rc}) --
  ({cos(\aIc)*0.924*\Rc},{sin(\aIc)*0.924*\Rc}) --
  ({cos(\aPa)*0.833*\Rc},{sin(\aPa)*0.833*\Rc}) --
  ({cos(\aPb)*0.887*\Rc},{sin(\aPb)*0.887*\Rc}) --
  ({cos(\aPc)*0.900*\Rc},{sin(\aPc)*0.900*\Rc}) -- cycle;

\draw[tn-magenta, line width=1.2pt]
  ({cos(\aCa)*0.910*\Rc},{sin(\aCa)*0.910*\Rc}) --
  ({cos(\aCb)*0.912*\Rc},{sin(\aCb)*0.912*\Rc}) --
  ({cos(\aCc)*0.878*\Rc},{sin(\aCc)*0.878*\Rc}) --
  ({cos(\aIa)*0.911*\Rc},{sin(\aIa)*0.911*\Rc}) --
  ({cos(\aIb)*0.916*\Rc},{sin(\aIb)*0.916*\Rc}) --
  ({cos(\aIc)*0.924*\Rc},{sin(\aIc)*0.924*\Rc}) --
  ({cos(\aPa)*0.837*\Rc},{sin(\aPa)*0.837*\Rc}) --
  ({cos(\aPb)*0.887*\Rc},{sin(\aPb)*0.887*\Rc}) --
  ({cos(\aPc)*0.907*\Rc},{sin(\aPc)*0.907*\Rc}) -- cycle;

\begin{scope}[yshift=-4.4cm]
  \def\hw{0.24}            %
  \def\sw{0.48}            %
  \def\rsp{2.55}           %
  \def\s{-3.20}            %
  \def\dy{-0.42}           %
  \def\rA{0}
  \draw[tn-sapphire, line width=0.7pt]
    (\s,\rA)--(\s+\sw,\rA);
  \node[tn-sapphire, anchor=west, font=\tiny, inner sep=1pt]
    at (\s+\sw+0.05,\rA) {EDICT};
  \draw[tn-teal, line width=0.7pt, dash pattern=on 3pt off 1.5pt]
    (\s+\rsp,\rA)--(\s+\rsp+\sw,\rA);
  \node[tn-teal, anchor=west, font=\tiny, inner sep=1pt]
    at (\s+\rsp+\sw+0.05,\rA) {DDIM};
  \draw[tn-orange, line width=0.7pt]
    (\s+\rsp*2,\rA)--(\s+\rsp*2+\sw,\rA);
  \node[tn-orange, anchor=west, font=\tiny, inner sep=1pt]
    at (\s+\rsp*2+\sw+0.05,\rA) {BDIA ($\gamma{=}0.5$)};
  \def\rB{\dy}
  \draw[tn-yellow, line width=0.7pt]
    (\s,\rB)--(\s+\sw,\rB);
  \node[tn-yellow, anchor=west, font=\tiny, inner sep=1pt]
    at (\s+\sw+0.05,\rB) {O-BELM};
  \draw[tn-pink, line width=1.0pt]
    (\s+\rsp,\rB)--(\s+\rsp+\sw,\rB);
  \node[tn-pink, anchor=west, font=\tiny, inner sep=1pt]
    at (\s+\rsp+\sw+0.05,\rB) {\rex (Midpoint)};
  \draw[tn-red, line width=1.0pt]
    (\s+\rsp*2,\rB)--(\s+\rsp*2+\sw,\rB);
  \node[tn-red, anchor=west, font=\tiny, inner sep=1pt]
    at (\s+\rsp*2+\sw+0.05,\rB) {\rex (RK4)};
  \def\rC{2*\dy}
  \def\sC{-2.60}
  \def\rspC{3.20}
  \fill[tn-rex, opacity=0.10]
    (\sC,\rC-0.07) rectangle (\sC+\sw,\rC+0.07);
  \draw[tn-rex, line width=1.2pt, dash pattern=on 4pt off 2pt]
    (\sC,\rC)--(\sC+\sw,\rC);
  \node[tn-rex, anchor=west, font=\tiny, inner sep=1pt]
    at (\sC+\sw+0.05,\rC) {\rex (E-M) \textit{[SDE]}};
  \draw[tn-magenta, line width=1.2pt]
    (\sC+\rspC,\rC)--(\sC+\rspC+\sw,\rC);
  \node[tn-magenta, anchor=west, font=\tiny, inner sep=1pt]
    at (\sC+\rspC+\sw+0.05,\rC) {\rex (ShARK) \textit{[SDE]}};
\end{scope}

\end{tikzpicture}
    \caption{Radar chart comparing reversible solvers for text-to-image generation with Stable Diffusion v1.5 ($512 \times 512$) on $1000$ COCO captions. Three metrics (CLIP score, Image Reward, PickScore), each split across $10$/$20$/$50$ sampling steps; axes normalised to CLIP $\in [27,32]$, Image Reward $\in [-1.4, 0.4]$, PickScore $\in [19, 22]$. Raw values in \cref{tab:cond_gen}.}
    \label{fig:cond_radar}
\end{figure}

\subsubsection{Conditional Image Generation} 
To further evaluate \rex we drew text-conditioned samples using Stable Diffusion v1.5 \citep{rombach2022high} from 1000 randomly selected COCO captions \citep{lin2014microsoft}, with the various solvers each using the same fixed seed.
We report performance in terms of the CLIP Score \citep{hessel2021clipscore}, the state-of-the-art text-to-image scoring function PickScore \citep{kirstain2023pickapic}, and the Image Reward metric \citep{xu2023imagereward}, which assigns a score reflecting human preferences---namely, aesthetic quality and prompt adherence---and has recently become a popular evaluation metric for diffusion models \citep{skreta2025feynmankac}.
In \cref{tab:cond_gen} we compare pre-existing methods for exact inversion with diffusion models against \rex, and include the non-reversible DDIM solver as a baseline.
We observe that every \rex variant outperforms the other reversible solvers across all three metrics; the stochastic variants (Euler-Maruyama and ShARK) lead on Image Reward and PickScore.
In \cref{fig:comp_cond_gen} we present a visual qualitative comparison of the different solvers using the same initial noise.
We provide additional experimental details in \cref{app:cond_gen_details}.

\begin{figure}[h]
    \centering
\begin{tikzpicture}[font=\scriptsize]

\def\Re{2.4}
\def\levelse{5}

\def\aIR{90}
\def\aCL{0}
\def\aPS{270}
\def\aLP{180}

\foreach \lv in {1,...,\levelse}{%
  \pgfmathsetmacro{\rr}{\lv/\levelse*\Re}
  \draw[gray!22, very thin]
    ({cos(\aIR)*\rr},{sin(\aIR)*\rr}) --
    ({cos(\aCL)*\rr},{sin(\aCL)*\rr}) --
    ({cos(\aPS)*\rr},{sin(\aPS)*\rr}) --
    ({cos(\aLP)*\rr},{sin(\aLP)*\rr}) -- cycle;
}
\foreach \ang in {\aIR,\aCL,\aPS,\aLP}{%
  \draw[gray!22, very thin] (0,0)--({cos(\ang)*\Re},{sin(\ang)*\Re});
}

\node[tn-text, font=\scriptsize\bfseries, anchor=south]
  at (0, {\Re*1.04}) {Image Reward ($\uparrow$)};
\node[tn-text, font=\scriptsize\bfseries, anchor=west]
  at ({\Re*1.04}, 0) {CLIP ($\uparrow$)};
\node[tn-text, font=\scriptsize\bfseries, anchor=north]
  at (0, {-\Re*1.04}) {PickScore ($\uparrow$)};
\node[tn-text, font=\scriptsize\bfseries, anchor=east]
  at ({-\Re*1.04}, 0) {LPIPS ($\downarrow$)};

\fill[tn-rex, opacity=0.06]
  ({cos(\aIR)*0.930*\Re},{sin(\aIR)*0.930*\Re}) --
  ({cos(\aCL)*0.844*\Re},{sin(\aCL)*0.844*\Re}) --
  ({cos(\aPS)*0.904*\Re},{sin(\aPS)*0.904*\Re}) --
  ({cos(\aLP)*0.893*\Re},{sin(\aLP)*0.893*\Re}) -- cycle;

\draw[tn-teal, line width=0.7pt, dash pattern=on 3pt off 1.5pt]
  ({cos(\aIR)*0.922*\Re},{sin(\aIR)*0.922*\Re}) --
  ({cos(\aCL)*0.856*\Re},{sin(\aCL)*0.856*\Re}) --
  ({cos(\aPS)*0.746*\Re},{sin(\aPS)*0.746*\Re}) --
  ({cos(\aLP)*0.786*\Re},{sin(\aLP)*0.786*\Re}) -- cycle;

\draw[tn-orange, line width=0.7pt]
  ({cos(\aIR)*0.140*\Re},{sin(\aIR)*0.140*\Re}) --
  ({cos(\aCL)*0.189*\Re},{sin(\aCL)*0.189*\Re}) --
  ({cos(\aPS)*0.074*\Re},{sin(\aPS)*0.074*\Re}) --
  ({cos(\aLP)*0.115*\Re},{sin(\aLP)*0.115*\Re}) -- cycle;

\draw[tn-yellow, line width=0.7pt]
  ({cos(\aIR)*0.886*\Re},{sin(\aIR)*0.886*\Re}) --
  ({cos(\aCL)*0.844*\Re},{sin(\aCL)*0.844*\Re}) --
  ({cos(\aPS)*0.770*\Re},{sin(\aPS)*0.770*\Re}) --
  ({cos(\aLP)*0.860*\Re},{sin(\aLP)*0.860*\Re}) -- cycle;

\draw[tn-pink, line width=1.0pt]
  ({cos(\aIR)*0.928*\Re},{sin(\aIR)*0.928*\Re}) --
  ({cos(\aCL)*0.856*\Re},{sin(\aCL)*0.856*\Re}) --
  ({cos(\aPS)*0.915*\Re},{sin(\aPS)*0.915*\Re}) --
  ({cos(\aLP)*0.891*\Re},{sin(\aLP)*0.891*\Re}) -- cycle;

\draw[tn-rex, line width=1.2pt, dash pattern=on 4pt off 2pt]
  ({cos(\aIR)*0.930*\Re},{sin(\aIR)*0.930*\Re}) --
  ({cos(\aCL)*0.844*\Re},{sin(\aCL)*0.844*\Re}) --
  ({cos(\aPS)*0.904*\Re},{sin(\aPS)*0.904*\Re}) --
  ({cos(\aLP)*0.893*\Re},{sin(\aLP)*0.893*\Re}) -- cycle;

\begin{scope}[yshift=-3.4cm]
  \def\hw{0.24}
  \def\sw{0.48}
  \def\rsp{1.85}
  \def\s{-2.35}
  \def\dy{-0.42}
  \def\rA{0}
  \draw[tn-teal, line width=0.7pt, dash pattern=on 3pt off 1.5pt]
    (\s,\rA)--(\s+\sw,\rA);
  \node[tn-teal, anchor=west, font=\tiny, inner sep=1pt]
    at (\s+\sw+0.05,\rA) {DDIM};
  \draw[tn-orange, line width=0.7pt]
    (\s+\rsp,\rA)--(\s+\rsp+\sw,\rA);
  \node[tn-orange, anchor=west, font=\tiny, inner sep=1pt]
    at (\s+\rsp+\sw+0.05,\rA) {BDIA};
  \draw[tn-yellow, line width=0.7pt]
    (\s+\rsp*2,\rA)--(\s+\rsp*2+\sw,\rA);
  \node[tn-yellow, anchor=west, font=\tiny, inner sep=1pt]
    at (\s+\rsp*2+\sw+0.05,\rA) {O-BELM};
  \def\rB{\dy}
  \def\sB{-1.75}
  \def\rspB{2.20}
  \draw[tn-pink, line width=1.0pt]
    (\sB,\rB)--(\sB+\sw,\rB);
  \node[tn-pink, anchor=west, font=\tiny, inner sep=1pt]
    at (\sB+\sw+0.05,\rB) {\rex (Euler)};
  \fill[tn-rex, opacity=0.10]
    (\sB+\rspB,\rB-0.07) rectangle (\sB+\rspB+\sw,\rB+0.07);
  \draw[tn-rex, line width=1.2pt, dash pattern=on 4pt off 2pt]
    (\sB+\rspB,\rB)--(\sB+\rspB+\sw,\rB);
  \node[tn-rex, anchor=west, font=\tiny, inner sep=1pt]
    at (\sB+\rspB+\sw+0.05,\rB) {\rex (Dopri5)};
\end{scope}

\end{tikzpicture}
    \caption{Radar chart comparing reversible solvers on round-trip image editing with Stable Diffusion v1.5 on the \href{https://huggingface.co/datasets/timbrooks/instructpix2pix-clip-filtered}{\texttt{pix2pix}} dataset ($50$ inversion $+\,50$ generation steps); LPIPS is inverted to $1 - \text{LPIPS}$. Raw values in \cref{tab:image_edit}.}
    \label{fig:edit_radar}
\end{figure}

\subsection{Image Editing}
\label{sec:image_editing}
A central application of exact-inversion solvers is \textit{round-trip image editing}: given a real image $\bfx_0$ and a source caption $\bm c_{\text{src}}$, the solver inverts $\bfx_0$ to a latent $\bfx_T$, after which the diffusion model is re-sampled from $\bfx_T$ under a new edit caption $\bm c_{\text{edit}}$ to produce an edited image $\bfx_0'$.
The quality of the edit depends critically on the fidelity of the inversion: any reconstruction error propagates into spurious changes to regions that the prompt does not target.
We follow \citet{brooks2022instructpix2pix} and use the \href{https://huggingface.co/datasets/timbrooks/instructpix2pix-clip-filtered}{\texttt{pix2pix}} dataset, in which each example pairs an image with both a source description (\eg, ``a man riding a horse'') and an editing instruction (\eg, ``have him ride a dragon'').
For each pair we invert $\bfx_0$ to time $t = 0.6$ using $\bm c_{\text{src}}$ and then re-sample back to $t = 0$ using $\bm c_{\text{edit}}$.
In addition to the CLIP Score, Image Reward, and PickScore metrics used previously, we evaluate LPIPS \citep{zhang2018unreasonable} between $\bfx_0$ and $\bfx_0'$ to measure the perceptual preservation of non-edited content; further setup details are in \cref{app:image_edit_details}.

\Cref{fig:edit_radar} summarises the results, with raw numbers in \cref{tab:image_edit}.
Both the fixed-step \rex (Euler) and adaptive-step \rex (Dopri5) match or exceed the non-reversible DDIM baseline on every metric.
The LPIPS metric measured between the edited and original image drops from $0.214$ (DDIM) and $0.140$ (O-BELM) to $0.107$ for \rex (Dopri5), a $2\times$ reduction relative to DDIM and a $1.31\times$ reduction relative to the strongest reversible baseline.
BDIA fails catastrophically (LPIPS $0.885$, Image Reward $-2.21$), consistent with its lack of a non-zero region of stability (see \cref{app:stability,corr:bdia_nowhere_linearly_stable}).
We omit EDICT from \cref{fig:edit_radar} entirely because it collapsed to the (approximate) identity map on this benchmark, producing edited images visually indistinguishable from the source.
We highlight that \rex (Dopri5) is, to our knowledge, the first \textit{adaptive step-size} reversible solver applied to diffusion editing.

\subsection{Boltzmann Sampling}
We evaluate the usefulness of \rex on equilibrium conformation sampling of tri-alanine.
In particular, we are interested in drawing samples from a target Boltzmann distribution $p_{\text{target}}$ defined on $\R^d$ as
\begin{equation}
    p_{\text{target}}(\bfx) \propto \exp \left(-\mathcal E(\bfx)\right),%
\end{equation}
where $\mathcal E: \R^d \to \R$ is the energy of the system which can be efficiently computed for any $\bfx$.
The Boltzmann distribution is notoriously difficult to sample with classical simulation-based techniques such as molecular dynamics; instead, recent work has turned to deep generative models equipped with exact likelihoods, trained on a small biased dataset.
This biased model can then be corrected using self-normalized importance sampling \citep{liu2001monte}.
\citet{chen2018neural} showed that the exact likelihood of a neural ODE with learnt vector field $\bfu_t^\theta$ can be found as the solution to the following augmented ODE
\begin{equation}
    \begin{bmatrix}\bfx_t \\ \log p_t^\theta(\bfx_t)\end{bmatrix} = \begin{bmatrix}
        \bfx_0\\ \log p_0^\theta(\bfx_0)
    \end{bmatrix} + \int_0^t \begin{bmatrix}\bfu_s^\theta(\bfx_s)\\ -\innerprod{\nabla_\bfx}{\bfu_s^\theta(\bfx_s)}\end{bmatrix}\;\rmd s.
\end{equation}
In practice, however, one has to use a discretized numerical scheme $\bm \Phi$, yet its inverse $\bm \Phi^{-1}$ may not exist.
This means the change-of-variables used to compute probabilities may not be valid any longer, introducing errors; as discussed in \citet{rehman2026falcon}, this can pose a significant problem if not properly addressed.
\rex, denoted $\bm \Upsilon$, is reversible by construction: $\bm \Upsilon^{-1}$ exists, so the change-of-variables yields proper probabilities up to discretization error.

\paragraph{Baselines.}
We compare against a broad variety of standard baselines which includes equivariant CNFs \citep{klein2023equivariant}.
In particular, we compare against an improved version, dubbed ECNF++, proposed by \citet{tan2025scalable}, which introduces a modified flow matching loss, a deeper architecture, an improved optimizer and learning rate schedule, and exponential moving average.
We additionally compare to discrete normalizing flows in RegFlow \citep{rehman2026efficient} and the state-of-the-art \textit{Sequential Boltzmann Generator} (SBG) \citep{tan2025scalable}.
Following \citet{rehman2026falcon}, we train our own \textit{diffusion transformer} (DiT) \citep{peebles2023scalable} on tri-alanine, which represents our final baseline.
All the CNFs used the Dormand-Prince method \citep{dormand1980family}, a 5th-order Runge-Kutta scheme with an embedded 4th-order method for adaptive step sizing; the Butcher tableau is from \citet{shampine1986some}, and we use $\mathrm{atol} = \mathrm{rtol} = 10^{-5}$.

\begin{table}[h]
    \centering
    \scriptsize
    \caption{\footnotesize Quantitative results on tri-alanine over $10^4$ samples. Best in \textbf{bold}, second best \underline{underlined}.}
    \begin{tabular}{llccc}
        \toprule
        \textbf{Model} & \textbf{Numerical scheme} &
        \textbf{ESS} $(\uparrow)$ & $\mathcal E\text-\mathcal W_2 (\downarrow)$ & $\mathbb T\text-\mathcal{W}_2 (\downarrow)$\\
        \midrule
        RegFlow & - & 0.029 & 1.051 & 1.612 \\
        SBG (IS) & - & 0.052 & 0.758 & 0.502 \\
        SBG (SMC) & - & - & 0.598 & 0.503 \\
        ECNF++ & Dopri5 & 0.003 & 2.206 & 0.962 \\
        DiT & Dopri5 & \textbf{0.140} & \underline{0.737} & \textbf{0.468}\\
        \rowcolor{tn-rex!20} DiT & \rex (Dopri5) & \underline{0.104} & \textbf{0.495} & \underline{0.497}\\
        \bottomrule
    \end{tabular}
    \label{tab:bg_results}
\end{table}

\paragraph{Results.}
In \cref{tab:bg_results} we report the results of the sampling from the Boltzmann distribution in terms of the \textit{effective sample size} (ESS) and the 2-Wasserstein distance between both the energy distributions ($\mathcal E\text-\mathcal W_2$) and dihedral angles ($\mathbb T\text-\mathcal W_2$)---further info on these metrics is provided in \cref{app:bg:metrics}.
We see that applying the reversible \rex (Dopri5) to the DiT improves the $\mathcal E\text-\mathcal W_2$ metric to the best in the table, with a modest drop in ESS and a small increase in $\mathbb T\text-\mathcal W_2$; all three remain competitive with the state-of-the-art.
For this experiment we choose $\zeta = 0.001$ to optimize numerical stability (see \cref{app:stability}). For every $\zeta \in (0,1]$, the forward and backward updates remain algebraic inverses in exact arithmetic; however, the $\zeta^{-1}$ factors in the inverse update can amplify floating-point errors when $\zeta$ is small.
A qualitative comparison of the resulting energy distributions is provided in \cref{fig:rex-boltzmann}.

\section{Related Work}
As mentioned earlier in the preliminaries (see \cref{sec:reversible_overview}), there have been several works which have explored algebraically reversible schemes for (neural) differential equations, namely, the \textit{asynchronous leapfrog method} \citep{mutze2013asynchronous,zhuang2021mali}, \textit{reversible Heun method} \citep{kidger2021efficient}, and \textit{McCallum-Foster method} \citep{mccallum2024efficient}.
We discuss these methods in more detail in \cref{app:reversible_solvers}.
Contemporary work by \citet{shmelev2025explicit} explores \textit{approximately} invertible stochastic Runge-Kutta schemes.

Within the literature on diffusion models, the work on solvers for these models \citep[see][]{song2021denoising,lu2022dpmsolver,lu2022dpm++,zhang2023gddim} is relevant to our discussion on \princeps (see \cref{thm:rex_is_all}), in particular, the work by \citet{gonzalez2024seeds} which we discuss in more detail in \cref{app:related_sdes}.
Additionally, a few works have explored the exact inversion of diffusion models which we discuss in great detail in \cref{app:reversible_diffusion}.

\section{Conclusion}
We propose \rex, a family of reversible exponential (stochastic) Runge-Kutta solvers for diffusion models which can obtain an arbitrarily high order of convergence (for the ODE case).
The construction also naturally admits reversible adaptive step-size solvers, enabling key AI4Science applications.
Moreover, to the best of our knowledge, we propose the first method for exact inversion of diffusion SDEs without storing the entire trajectory of the Brownian motion.
We also showed that \princeps subsumes several previous popular solvers, recovering reversible versions of these schemes.
Our empirical studies show that \rex is both theoretically well-motivated and a capable, robust numerical scheme across a range of diffusion-model tasks.
While we have presented \rex primarily in the context of diffusion models, it applies to additive-noise SDEs that can be written in the semi-linear form.
This covers a broad class of generative models, including all the standard affine probability path flow matching formulations.
\rex can be incorporated into existing applications in which preserving the bijectivity of flow maps is important.

\section*{Acknowledgements}
ZB thanks Sam McCallum for his feedback and insight on material related to the McCallum-Foster method, ShARK, and space-time L\'evy area. ZB also wishes to acknowledge Alexander Tong and Danyal Rehman for their feedback on the Boltzmann generator experiments; and Danyal Rehman for generously providing a pre-trained DiT checkpoint for our Boltzmann sampling experiments.

\section*{Impact Statement}
We recognize that improving generative AI efficiency carries broader societal risks, including: (1) misuse for synthetic media generation, where more efficient inversion lowers the barrier to creating deepfakes or deceptive content; (2) amplification of training-data biases in higher-fidelity outputs; and (3) increased accessibility of generative tools, which may facilitate disinformation at scale.

\bibliographystyle{icml2026}
\bibliography{iclr2026_conference}

\newpage
\appendix
\crefalias{section}{appendix}
\crefalias{subsection}{appendix}
\crefalias{subsubsection}{appendix}

\onecolumn

\startcontents[]
\printcontents[]{l}{1}[3]{{\bfseries \large Appendices}}

\section{Detailed Discussion on Related Work}
\label{app:related_works}
In this section we provide a detailed comparison with relevant related works.
We begin by discussing algebraically reversible solvers in \cref{app:reversible_solvers}.
Then in \cref{app:stability} we introduce the stability of an ODE solver, a helpful tool in comparing reversible solvers.
Using this tool along with examining the convergence order, we compare a variety of reversible solvers for diffusion models in \cref{app:reversible_diffusion}.
Lastly, in \cref{app:related_sdes} we explore related work on constructing SDE solvers for diffusion models.

\subsection{Reversible Solvers}
\label{app:reversible_solvers}
The earliest work on reversible solvers can be traced back to the pioneering work on symplectic integrators by \citet{vogelaere1956methods,ruth1983canonical,feng1984difference}.
Due to symplectic integrators being developed for solving Hamiltonian systems, they are intrinsically reversible by construction \citep{greydanus2019hamiltonian}.
More recently, \citet{matsubara2021symplectic} explored the use of symplectic solvers for solving the continuous adjoint equations.
Likewise, work by \citet{pan2023adjointsymplectic} extended this idea, making use of symplectic solvers for solving the continuous adjoint equations for diffusion models.
However, in this section we will focus on non-symplectic reversible solvers.

Throughout this section we consider solving the following $d$-dimensional IVP:
\begin{equation}
    \label{eq:reversible:neural_ode_ex}
    \bfx(0) = \bfx_0, \qquad \frac{\rmd \bfx}{\rmd t}(t) = \bsf(t, \bfx(t)),
\end{equation}
over the time interval $[0,T]$ with numerical solution $\{\bfx_n\}_{n=0}^N$.

\subsubsection{Asynchronous Leapfrog Method}
To the best of our knowledge the \textit{asynchronous leapfrog definition} was the first algebraically reversible non-symplectic solver, initially proposed by \citet{mutze2013asynchronous} and popularized in a modern deep learning context by \citet{zhuang2021mali}.
The asynchronous leapfrog method is a modification of the leapfrog method which converts it from a multi-step to single-step method.
The method keeps track of a second state, $\{\bfv_n\}$ which is supposed to be \textit{sufficiently close} to the value of the vector field.
We define the method below in \cref{def:async_leapfrog}.

    \begin{definition}[Asynchronous leapfrog method]
        \label{def:async_leapfrog}
        Initialize $\bfv_0 = \bsf(0, \bfx_0)$.
        Consider a step size of $h$ and let $\hat t_n = t_n + h/2$, then
        a forward step of the asynchronous leapfrog method is defined as
        \begin{equation}
            \begin{aligned}
                \hat\bfx_n &= \bfx_n + \frac 12 \bfv_n h,\\
                \bfv_{n+1} &= 2\bsf(\hat t_n, \hat\bfx_n) - \bfv_n,\\
                \bfx_{n+1} &= \bfx_n + \bsf(\hat t_n, \hat\bfx_n)h,
            \end{aligned}
        \end{equation}
        and the backward step is given as
        \begin{equation}
            \begin{aligned}
                \hat\bfx_n &= \bfx_{n+1} - \frac 12 \bfv_{n+1} h,\\
                \bfx_{n} &= \bfx_{n+1} - \bsf(\hat t_n, \hat\bfx_n)h,\\
                \bfv_{n} &= 2\bsf(\hat t_n, \hat\bfx_n) - \bfv_{n+1}.
            \end{aligned}
        \end{equation}
    \end{definition}

\begin{remark}
    The method is a second-order solver \citep[Theorem 3.1]{zhuang2021mali}.
\end{remark}

\subsubsection{Reversible Heun Method}
Later work by \citet{kidger2021efficient} proposed the \textit{reversible Heun method}, a general purpose reversible solver which is symmetric and is an algebraically reversible SDE solver in addition to being a reversible ODE solver.
This solver keeps track of an auxiliary state variable $\hat\bfx_n$ and an extra copy of previous evaluations of the drift and diffusion coefficients.
We present this method below in \cref{def:reversible_heun}.

    \begin{definition}[Reversible Heun method for ODEs]
        \label{def:reversible_heun}
        Initialize $\hat\bfx_0 = \bfx_0$.
        Consider a step size of $h$, then
        a forward step of the reversible Heun method is defined as
        \begin{equation}
            \begin{aligned}
                \hat\bfx_{n+1} &= 2\bfx_n - \hat\bfx_n + \bsf(t_n,\hat\bfx_n)h,\\
                \bfx_{n+1} &= \bfx_n + \frac 12\left(\bsf(t_{n+1}, \hat\bfx_{n+1}) + \bsf(t_n, \hat\bfx_n)\right)h.
            \end{aligned}
        \end{equation}
        and the backward step is given as
        \begin{equation}
            \begin{aligned}
                \hat\bfx_{n} &= 2\bfx_{n+1} - \hat\bfx_{n+1} - \bsf(t_{n+1},\hat\bfx_{n+1})h,\\
                \bfx_{n} &= \bfx_{n+1} - \frac 12\left(\bsf(t_{n+1}, \hat\bfx_{n+1}) + \bsf(t_n, \hat\bfx_n)\right)h.
            \end{aligned}
        \end{equation}
    \end{definition}

\begin{remark}
    This method is a second-order solver \citep[Theorem 5.18]{kidger_thesis}.
\end{remark}

Recall that simulating SDEs in reverse-time is much trickier than simulating ODEs in reverse-time.
This observation is even more true of algebraically reversible methods for SDEs.
To the best of our knowledge, the only general reversible solver for SDEs is the reversible Heun method.
The main idea of the SDE formulation of the reversible Heun method is to extend the Euler-Heun method\footnote{This converges with strong order $\frac 12$ in the Stratonovich sense \citep{ruemelin1982numerical}.} like how Heun's method was extended to the reversible Heun solver for ODEs.
We define the method in \citet[Algorithm 1]{kidger2021efficient} below in \cref{def:reversible_heun_sde}.

    \begin{definition}[Reversible Heun method for SDEs]
        \label{def:reversible_heun_sde}
        Initialize $\hat\bfx_0 = \bfx_0$.
        Consider a step size of $h$ and let $\bfW_h \coloneq \bfW_{t_{n+1}} - \bfW_{t_n}$, then
        a forward step of the reversible Heun method is defined as
        \begin{equation}
            \begin{aligned}
                \hat\bfx_{n+1} &= 2\bfx_n - \hat\bfx_n + \bm \mu(t_n,\hat\bfx_n)h + \bm \sigma(t_n, \hat\bfx_n) \bfW_{h},\\
                \bfx_{n+1} &= \bfx_n + \frac 12\left(\bm \mu(t_{n+1}, \hat\bfx_{n+1}) + \bm \mu(t_n, \hat\bfx_n)\right)h\\
                           &+ \frac 12\left(\bm \sigma(t_{n+1}, \hat\bfx_{n+1}) + \bm \sigma(t_n, \hat\bfx_n)\right)\bfW_h.
            \end{aligned}
        \end{equation}
        and the backward step is given as
        \begin{equation}
            \begin{aligned}
                \hat\bfx_{n} &= 2\bfx_{n+1} - \hat\bfx_{n+1} - \bm \mu(t_{n+1},\hat\bfx_{n+1})h - \bm \sigma(t_n, \hat\bfx_n) \bfW_h,\\
                \bfx_{n} &= \bfx_{n+1} - \frac 12\left(\bm \mu(t_{n+1}, \hat\bfx_{n+1}) + \bm \mu(t_n, \hat\bfx_n)\right)h\\
                           &- \frac 12\left(\bm \sigma(t_{n+1}, \hat\bfx_{n+1}) + \bm \sigma(t_n, \hat\bfx_n)\right)\bfW_h.
            \end{aligned}
        \end{equation}
    \end{definition}

\begin{remark}
    This method requires some tractable solution for recalculating the Brownian motion from a splittable PRNG.
\end{remark}

\subsubsection{McCallum-Foster Method}
Recent work by \citet{mccallum2024efficient} created a general method for constructing $n$-th order solvers from pre-existing explicit single-step solvers while also addressing the stability issues that earlier methods suffered from.
As \citet{mccallum2024efficient} simply refer to their method as \textit{reversible X} where \textit{X} is the underlying single-step solver, we opt to refer to their method as the \textit{McCallum-Foster method}.
We restate the definition below.

    \begin{definition}[McCallum-Foster method]
        Initialize $\hat\bfx_0 = \bfx_0$ and let $\zeta \in (0, 1]$.
        Consider a step size of $h$, then
        a forward step of the McCallum-Foster method is defined as
        \begin{subequations}
            \begin{align}
                \bfx_{n+1} &= \zeta \bfx_n + (1-\zeta)\hat\bfx_{n} + \bm \Phi_h(t_n, \hat\bfx_n),\\
                \hat\bfx_{n+1} &= \hat\bfx_n - \bm \Phi_{-h}(t_{n+1}, \bfx_{n+1}),
            \end{align}
        \end{subequations}
        and the backward step is given as
        \begin{subequations}
            \begin{align}
                \hat\bfx_n &= \hat\bfx_{n+1} + \bm \Phi_{-h}(t_{n+1}, \bfx_{n+1}),\\
                \bfx_{n} &= \zeta^{-1}\bfx_{n+1} + (1-\zeta^{-1}) \hat\bfx_n - \zeta^{-1} \bm \Phi_h(t_n, \hat\bfx_n).
            \end{align}
        \end{subequations}
    \end{definition}

\begin{remark}
    \NB, the $\zeta$ and $\zeta^{-1}$ terms in the forward and backward steps determine the stability of the system.
\end{remark}

Interestingly, \citet[Theorem 2.1]{mccallum2024efficient} showed that this reversible method inherits the convergence order of single-step solver $\bm \Phi_h$, enabling the construction of an arbitrarily high-order reversible solver.
We restate this result below in \cref{thm:cov_order_mccallum}.
    \begin{theorem}[Convergence order of the McCallum-Foster method]
        \label{thm:cov_order_mccallum}
        Consider the ODE in \cref{eq:reversible:neural_ode_ex} over $[0, T]$ with fixed time horizon $T > 0$.
        Let $T = Nh$ where $N > 0$ is the number of discretization steps and $h > 0$ is the step size.
        Let $\bm \Phi$ be a $k$-th order ODE solver such that it satisfies the Lipschitz condition
        \begin{equation}
            \|\bm\Phi_\eta(\cdot, \bm a) - \bm\Phi_\eta(\cdot, \bm b)\| \leq L |\eta| \|\bm a - \bm b\|,
        \end{equation}
        for all $\bm a, \bm b \in \R^d$ and $\eta \in [-h_{max}, h_{max}]$ for some $h_{max} > 0$.
        Consider the reversible solution $\{\bfx_n, \hat\bfx_n\}_{n in \N}$ admitted by \cref{eq:mccallum_forward}.
        Then there exists constants $h_{max} > 0$, $C > 0$, such that, for $h \in (0, h_{max}]$,
        \begin{equation}
            \|\bfx_n - \bfx(t_n)\| \leq C h^k.
        \end{equation}
    \end{theorem}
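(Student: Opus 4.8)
The plan is to carry out a standard local-error/global-error analysis, but adapted to the \emph{coupled} pair $(\bfx_n, \hat\bfx_n)$ produced by \cref{eq:mccallum_forward}, and to exploit the convex-combination parameter $\zeta$ to keep the error amplification under control. First I would introduce the two error sequences $e_n \coloneq \bfx_n - \bfx(t_n)$ and $\hat e_n \coloneq \hat\bfx_n - \bfx(t_n)$, both of which target the \emph{same} true value $\bfx(t_n)$. The initialization $\hat\bfx_0 = \bfx_0 = \bfx(0)$ gives $e_0 = \hat e_0 = 0$. Because $\bm\Phi$ is a $k$-th order single-step scheme, applied with steps of either sign (consistent with the Lipschitz hypothesis being posed for $\eta \in [-h_{max}, h_{max}]$), the true solution satisfies the forward consistency $\bfx(t_{n+1}) = \bfx(t_n) + \bm\Phi_h(t_n, \bfx(t_n)) + \bm\tau_n$ and the backward consistency $\bfx(t_n) = \bfx(t_{n+1}) + \bm\Phi_{-h}(t_{n+1}, \bfx(t_{n+1})) + \hat{\bm\tau}_n$, with $\|\bm\tau_n\|, \|\hat{\bm\tau}_n\| \le C_0 h^{k+1}$ by the definition of a $k$-th order method.

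Next I would subtract these identities from the two lines of \cref{eq:mccallum_forward}. The first line, together with $\zeta\bfx(t_n) + (1-\zeta)\bfx(t_n) = \bfx(t_n)$ and the Lipschitz bound $\|\bm\Phi_h(t_n, \hat\bfx_n) - \bm\Phi_h(t_n, \bfx(t_n))\| \le Lh\|\hat e_n\|$, yields $\|e_{n+1}\| \le \zeta\|e_n\| + (1-\zeta+Lh)\|\hat e_n\| + C_0 h^{k+1}$. The second line, after substituting the backward consistency identity and the Lipschitz bound $\|\bm\Phi_{-h}(t_{n+1}, \bfx_{n+1}) - \bm\Phi_{-h}(t_{n+1}, \bfx(t_{n+1}))\| \le Lh\|e_{n+1}\|$, yields $\|\hat e_{n+1}\| \le Lh\|e_{n+1}\| + \|\hat e_n\| + C_0 h^{k+1}$. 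Writing $\bm\varepsilon_n \coloneq (\|e_n\|, \|\hat e_n\|)^\top$ and eliminating $\|e_{n+1}\|$ from the second inequality, the two estimates combine into the componentwise linear recurrence $\bm\varepsilon_{n+1} \le M(h)\,\bm\varepsilon_n + \bm d_n$, where $\bm d_n$ is bounded by $C_1 h^{k+1}(1,1)^\top$ and
\[
M(h) = \begin{pmatrix} \zeta & 1-\zeta \\ 0 & 1 \end{pmatrix} + h\,M_1 + O(h^2) ;
\]
all entries of $M(h)$ are nonnegative for $h$ small, so the inequality may be iterated. Writing $M_0$ for the leading matrix and using $\bm\varepsilon_0 = 0$, iteration gives $\bm\varepsilon_n \le \sum_{m=0}^{n-1} M(h)^m \bm d_{n-1-m}$.

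Everything then reduces to a uniform bound on the powers $M(h)^m$ for $m \le N = T/h$, and this is the crux and the main obstacle: since $m$ grows like $1/h$, a naive estimate $\|M(h)\| \le 1 + Ch$ fails to control $M(h)^m$ unless the constant $C$ is tied to the spectrum of $M_0$. Here the parameter $\zeta$ does the essential work: $M_0$ has eigenvalues $\zeta$ and $1$, so $\zeta \in (0,1]$ forces $\rho(M_0) = 1$, and since these eigenvalues are distinct (or $M_0 = I$ when $\zeta=1$) the matrix is diagonalizable, $M_0 = P\,\mathrm{diag}(\zeta,1)\,P^{-1}$. Working in the adapted norm $\|\bm v\|_* \coloneq \|P^{-1}\bm v\|_\infty$ gives $\|M_0\|_* = 1$ and hence $\|M(h)\|_* \le 1 + Ch$ for a constant $C$ depending only on $M_1$ and $P$. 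I would then conclude $\|M(h)^m\|_* \le (1+Ch)^m \le e^{Chm} \le e^{CT}$ uniformly for $m \le N$, sum the resulting series to get $\|\bm\varepsilon_n\|_* \le N\,e^{CT}\,C_2 h^{k+1} = (T/h)\,e^{CT}\,C_2 h^{k+1} = T\,e^{CT}\,C_2\,h^k$, where $C_2$ bounds $\|\bm d_j\|_*$, and finally pass back to the Euclidean norm by equivalence of norms on $\R^2$. Collecting constants yields $\|\bfx_n - \bfx(t_n)\| = \|e_n\| \le \|\bm\varepsilon_n\| \le C h^k$, which is the claim; the only step requiring genuine care is the spectral/weighted-norm control of $M(h)^m$, where the stabilizing role of $\zeta \in (0,1]$ in pinning $\rho(M_0)=1$ is indispensable.
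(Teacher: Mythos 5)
First, note that the paper does not prove \cref{thm:cov_order_mccallum} at all: it is restated verbatim from \citet[Theorem 2.1]{mccallum2024efficient}, so the comparison here is against the original argument rather than anything in this paper. Your proof is correct in substance and follows the same skeleton as that original argument: two error sequences $e_n = \bfx_n - \bfx(t_n)$ and $\hat e_n = \hat\bfx_n - \bfx(t_n)$ both measured against the \emph{same} true value, forward and backward consistency of $\bm\Phi$ with local errors $\mathcal O(h^{k+1})$, the two coupled one-step inequalities (which you derive exactly right from \cref{eq:mccallum_forward}), and a discrete Gronwall iteration. Where you diverge is in controlling the matrix powers, and here your spectral detour, while sound, is heavier than necessary --- and your closing claim that the diagonalization/adapted-norm step is ``indispensable'' is not correct. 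The limit matrix $M_0 = \bigl(\begin{smallmatrix} \zeta & 1-\zeta \\ 0 & 1 \end{smallmatrix}\bigr)$ has nonnegative entries and \emph{unit row sums}, so its induced $\ell^\infty$ operator norm is exactly $1$; with the perturbation one computes the row sums of $M(h)$ to be $1 + Lh$ and $1 + Lh + L^2h^2$, whence $\|M(h)\|_\infty \leq 1 + Ch$ and $\|M(h)^m\|_\infty \leq e^{CT}$ for $m \leq T/h$ with no spectral theory whatsoever. This is precisely where $\zeta \in (0,1]$ enters: it makes the first row of $M_0$ a convex combination (row sum exactly $1$), which is the coupling insight of \citet{mccallum2024efficient}; for $\zeta > 1$ the row sum would exceed $1$ and both the argument and the theorem fail.

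Two smaller points deserve care. First, a genuine (though fixable) wrinkle: your adapted norm $\|\bm v\|_* = \|P^{-1}\bm v\|_\infty$ is \emph{not} monotone with respect to the componentwise order (the entries of $P^{-1}$ need not be nonnegative), so you cannot take $\|\cdot\|_*$ of both sides of the componentwise bound $\bm\varepsilon_n \leq \sum_{m} M(h)^m \bm d_{n-1-m}$ as written. The standard repair is to compare $\bm\varepsilon_n$ with the solution $\bm s_n$ of the equality recursion $\bm s_{n+1} = M(h)\bm s_n + \bar{\bm d}$, $\bm s_0 = \bm 0$ (nonnegativity of all entries gives $\bm\varepsilon_n \leq \bm s_n$ componentwise), and then bound $\|\bm s_n\|$; alternatively, work in the $\ell^\infty$ norm throughout, which \emph{is} monotone on nonnegative vectors --- this simultaneously removes the diagonalization. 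Second, you quietly use that a $k$-th order method retains local error $\mathcal O(|\eta|^{k+1})$ for \emph{negative} steps $\eta = -h$ (your backward consistency identity). This is the same standing assumption as in the original proof and is true for explicit Runge--Kutta methods, whose local error expansion is algebraic in the signed step, but since ``$k$-th order solver'' is conventionally stated for $h > 0$ it should be made explicit rather than inferred from the Lipschitz hypothesis alone.
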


\subsubsection{Explicit and Effectively Symmetric Schemes}
Contemporary work by \citet{shmelev2025explicit} explores an \textit{explicit and effectively symmetric} (EES) Runge-Kutta schemes \citep{shmelev2025explicit_older} for neural SDEs.
The \textit{key} difference is that these schemes are only \textit{approximately} invertible rather than \textit{exactly} invertible.
The other large difference is they construct the scheme for \textit{rough differential equations} (RDEs) driven by a $\alpha$-H\"older branched rough path, $\alpha \in (0, 1]$.
More concretely, for some driving signal $\bfX: [0, T] \to \R^d$, \eg, a semi-martingale, lifted to a \textit{rough path} $\X$ (see \cref{app:rough_paths}), we consider the rough differential equation
\begin{equation}
    \rmd \bfY_t = \bsf(\bfY_t)\; \rmd \X_t,
\end{equation}
where $\bsf$ is sufficiently smooth and bounded with bounded derivatives.
Following \citet{redmann2020runge}, assume that there exists smooth paths $\{\bfX^h\}_{h > 0}$ for step-sizes $h > 0$ whose natural lifts to branch rough paths $\{\X^h\}_{h > 0}$ converge almost surely to $\X$ under the metric of $\alpha$-H\"older rough paths as $h \to 0$.
Then we have the solution with drive $\X^h$ given by
\begin{equation}
    \rmd \bfY_t^h = \bsf(\bfY_t^h) \; \rmd \X_t^h.
\end{equation}
\citet{shmelev2025explicit} then uses the following Runge-Kutta scheme for the RDE given by
\begin{subequations}
    \begin{align}
        \bfy_{n+1}^h &= \bfy_n^h + \sum_{m=1}^d \sum_{i=1}^s b_i \bsf_m(\bm k_i) \bfX_{t_n, t_{n+1}}^{(m)},\\
        \bm k_{i} &= \bfy_n^h + \sum_{m=1}^d \sum_{i=1}^s a_{ij} \bsf_m(\bm k_i) \bfX_{t_n, t_{n+1}}^{(m)},
    \end{align}
\end{subequations}
where $\bfX_{t_n, t_{n+1}}$ denotes the increment of $\bfX^h$ over $[t_n, t_{n+1}]$.
With an appropriate choice of coefficients $a_{ij}$ and $b_i$, the Runge-Kutta scheme for the RDE is \textit{approximately} reversible \citep{shmelev2025explicit}.
Clearly, this scheme is quite different from the SRK schemes we study and construct \textit{exactly} reversible schemes from (see \cref{app:srk}).

\subsection{A Note on Stability}
\label{app:stability}
Historically, the stability properties of reversible solvers has been one of their weakest attributes \citep{kidger_thesis}, limiting their use in practical applications.
We formally introduce the notation of stability following \citet[Definition C.39]{kidger_thesis}, which we rewrite below in \cref{def:region_of_stability}.

    \begin{definition}[Region of stability]
        \label{def:region_of_stability}
        Fix some numerical differential equation solver and let $\{\bfx_n^{\lambda,h}\}_{n \in \N}$ be the solution admitted by the numerical scheme solving the linear (or Dahlquist) test equation
        \begin{equation}
            \label{eq:reversible:linear_ode_ex}
            \bfx(0) = \bfx_0, \qquad \frac{\rmd \bfx}{\rmd t} = \lambda \bfx(t),
        \end{equation}
        where $\lambda \in \mathbb C$, $h > 0$ is the step size, and $\bfx_0 \in \R^d$ is a non-zero initial condition.
        The region of stability is defined as
        \begin{equation}
            \{ h\lambda \in \mathbb C : \{\bfx_n^{\lambda, h}\}_{n \in \N} \textrm{ is uniformly bounded over $t_n$}\}.
        \end{equation}
        \Ie, there exists a constant $C$ depending on $\lambda$ and $h$ but independent of $t_n$ such that $\|\bfx_n^{\lambda,h}\| < C$.
    \end{definition}

With the linear test equation \cref{eq:reversible:linear_ode_ex}, the ODE converges asymptotically when $\Re(\lambda) \leq 0$,\footnote{The ODE converges to 0 when $\Re(\lambda) < 0$.} and thus we are interested in numerical schemes which are bounded when the underlying analytical solution converges.
Ideally, a numerical scheme would converge for all $h\lambda$ with $\Re(\lambda) < 0$.\footnote{A region of stability which satisfies is known as a region of absolute stability.}
Thus, the larger the region of stability, the larger the step size we can take, wherein the numerical scheme still converges.

\begin{remark}
    \label{remark:bad_stability}
    Regrettably, the reversible Heun, leapfrog, and asynchronous leapfrog methods have poor stability properties.
    Specifically, the region of stability for all the methods is the complex interval $[-i, i]$, see \citet[Theorem 5.20]{kidger_thesis} for reversible Heun, \citet[Section 2]{shampine2009stability} for leapfrog, and \citet[Appendix A.4]{zhuang2021mali} for asynchronous leapfrog.
\end{remark}

In other words, all previous reversible solvers are nowhere linearly stable for any step size $h$.\footnote{Linearly stability refers to stability for linear test equations with $\Re(\lambda) < 0$.}
The instability in both asynchronous leapfrog and reversible Heun can be attributed to a step of general form $2A - B$, \ie, we can write the source of instability as
    \begin{align}
        &2\bsf(\hat t_n, \hat\bfx_n) - \bfv_n, \tag{asynchronous leapfrog}\\
        &2\bfx_{n+1} - \hat\bfx_{n+1}. \tag{reversible Heun}
    \end{align}
Thus the instability in these reversible schemes is caused by a decoupling between $\bfv_n$ and $\bsf(t_n, \bfx_n)$ (asynchronous leapfrog); and $\bfx_n$ and $\hat\bfx_n$ (reversible Heun).
The strategy of \citet{mccallum2024efficient} is to couple $\bfx_n$ and $\hat\bfx_n$ together with the coupling parameter $\zeta$.
Using this strategy, they showed that it was possible to construct a reversible solver with a non-trivial region of convergence.
Let $\bm \Phi_h(t_n,\bfx_n) = R(h\lambda)\bfx_n$ and let $R(h\lambda)$ denote the \textit{transfer function} used in analysis of Runge-Kutta methods with step size $h$ \citep[see][]{stewart2022numerical}.
We restate \citet[Theorem 2.3]{mccallum2024efficient} below.
\begin{theorem}[Region of stability for the McCallum-Foster method]
    Let $\bm \Phi$ be given by an explicit Runge-Kutta solver.
    Then the reversible numerical solution $\{\bfx_n,\hat\bfx_n\}_{n \in \N}$ given by \cref{eq:mccallum_forward} is linearly stable iff
    \begin{equation}
        |\Gamma| < 1 + \zeta,
    \end{equation}
    where
    \begin{equation}
        \Gamma = 1 + \zeta - (1-\zeta)R(-h\lambda) - R(-h\lambda)R(h\lambda).
    \end{equation}
\end{theorem}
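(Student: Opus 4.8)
The plan is to reduce the coupled two-state recurrence to the iteration of a single $2\times 2$ matrix and then read off stability from its eigenvalues. First I would specialize the forward step in \cref{eq:mccallum_forward} to the linear test equation \cref{eq:reversible:linear_ode_ex}, using that an explicit Runge--Kutta scheme acts as $\bm\Phi_{h}(t_n, \bfx) = R(h\lambda)\bfx$ and $\bm\Phi_{-h}(t_{n+1}, \bfx) = R(-h\lambda)\bfx$. Substituting the first line of \cref{eq:mccallum_forward} into the second to eliminate $\bfx_{n+1}$ yields a closed linear map $(\bfx_n, \hat\bfx_n) \mapsto (\bfx_{n+1}, \hat\bfx_{n+1})$ represented by a matrix $M$ whose entries are explicit polynomials in $R(h\lambda)$, $R(-h\lambda)$, and $\zeta$.

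The crucial computation is the trace and determinant of $M$. A direct calculation shows $\operatorname{tr} M = \Gamma$, exactly the quantity in the statement, while, remarkably, the two off-diagonal cross-terms cancel to leave $\det M = \zeta$. Hence the characteristic polynomial of $M$ is $\mu^2 - \Gamma\mu + \zeta$, so the two amplification factors $\mu_\pm$ satisfy $\mu_+ + \mu_- = \Gamma$ and $\mu_+\mu_- = \zeta$. This is the heart of the argument: the determinant being exactly the coupling parameter $\zeta$ is what makes the coupling strategy effective, since it pins the product of the amplification factors to a fixed value in $(0,1]$.

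By \cref{def:region_of_stability}, $\{(\bfx_n, \hat\bfx_n)\}$ is uniformly bounded precisely when the powers $M^n$ are bounded, i.e.\ when the spectral radius obeys $\rho(M) \leq 1$ with any unit-modulus eigenvalue being semisimple. It therefore remains to translate the root condition for $\mu^2 - \Gamma\mu + \zeta$ into the stated inequality. Using $\mu_+\mu_- = \zeta \in (0,1]$, I would split into two cases. When the roots form a complex-conjugate pair (negative discriminant), $|\mu_\pm|^2 = \mu_+\mu_- = \zeta \leq 1$ automatically, and the corresponding $\Gamma$ already satisfies $\Gamma^2 < 4\zeta \leq (1+\zeta)^2$. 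When the roots are real, requiring the larger-modulus root to lie in the unit disk, $\tfrac12(|\Gamma| + \sqrt{\Gamma^2 - 4\zeta}) \leq 1$, rearranges (after isolating the square root and squaring) exactly to $|\Gamma| \leq 1 + \zeta$, with the strict version giving strict containment in the open disk. Combining the cases yields the equivalence with $|\Gamma| < 1 + \zeta$.

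The main obstacle is this last root-location step: one must handle the real- and complex-root regimes separately and track the boundary $|\Gamma| = 1+\zeta$ carefully, checking that the unit-modulus eigenvalue there is non-defective, which holds because $\mu_+ \neq \mu_-$ whenever $\zeta < 1$. Equivalently one can invoke the Schur--Cohn criterion for a quadratic with the given sum and product of roots; I expect the elementary case analysis above to be the cleanest route and to be where all the real work lies, the trace/determinant computation being short once the substitution is set up.
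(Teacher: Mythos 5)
This theorem is restated in the paper from \citet[Theorem 2.3]{mccallum2024efficient} without proof, so there is no in-paper argument to compare against; I can only assess your proposal on its own terms. Your reduction is the right one and the algebra checks out: substituting $\bm\Phi_{\pm h}(\cdot,\bfx)=R(\pm h\lambda)\bfx$ into \cref{eq:mccallum_forward} and eliminating $\bfx_{n+1}$ gives a $2\times 2$ update matrix $M$ with $\operatorname{tr}M = 1+\zeta-(1-\zeta)R(-h\lambda)-R(-h\lambda)R(h\lambda)=\Gamma$ and $\det M=\zeta$, so the amplification factors are indeed the roots of $\mu^2-\Gamma\mu+\zeta$. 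This is exactly the mechanism behind the coupling parameter, and your identification of $\det M=\zeta$ as the crux is correct.

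The gap is in the root-location step. Your case split (real roots versus a complex-conjugate pair, governed by the sign of the discriminant) presumes $\Gamma\in\R$, but the region of stability in \cref{def:region_of_stability} ranges over $h\lambda\in\mathbb{C}$, so $R(\pm h\lambda)$ and hence $\Gamma$ are complex and the roots of $\mu^2-\Gamma\mu+\zeta$ need not be conjugates of one another. For complex $\Gamma$ the condition $|\Gamma|<1+\zeta$ is \emph{not} by itself equivalent to both roots lying in the unit disk: the Schur--Cohn criterion for $\mu^2-\Gamma\mu+\zeta$ with $0<\zeta<1$ reads $|\Gamma-\zeta\bar\Gamma|<1-\zeta^2$, which collapses to $|\Gamma|<1+\zeta$ only when $\Gamma$ is real (for purely imaginary $\Gamma$ it becomes $|\Gamma|<1-\zeta$). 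Concretely, take $\zeta=\tfrac12$ and roots $\mu_+=\tfrac{11}{10}i$, $\mu_-=-\tfrac{5}{11}i$: their product is $\zeta$, and $|\Gamma|=|\mu_++\mu_-|=\tfrac{71}{110}<\tfrac32=1+\zeta$, yet the spectral radius is $\tfrac{11}{10}>1$. So the sufficiency direction cannot be deduced from $|\Gamma|$ alone; a complete argument must either apply the complex-coefficient Schur--Cohn test or exploit the specific dependence of $\Gamma$ on $R(h\lambda)$ and $R(-h\lambda)$ rather than treating $\Gamma$ as a free parameter. Your necessity direction does survive in the complex case, since writing $r=|\mu_+|\in[\zeta,1]$ gives $|\Gamma|\le|\mu_+|+|\mu_-|=r+\zeta/r\le 1+\zeta$; it is only the converse implication, which is the substance of the ``iff,'' that your real-variable case analysis does not establish.
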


\begin{remark}
    The McCallum-Foster method when constructed from explicit Runge-Kutta methods have a \emph{non-trivial} region of stability.
    Note, however, that this region of stability is smaller than the original region of stability from the original Runge-Kutta method.
\end{remark}

\subsection{Exact Inversion of Diffusion Models}
\label{app:reversible_diffusion}
Independent of the work on reversible solvers for neural ODEs, several researchers have developed reversible methods for solving the probability flow ODE---often in the literature on diffusion models this is called the \textit{exact inversion} of diffusion models.
The discussion here covers prior \emph{reversible} solvers; the (non-reversible) explicit (S)RK solvers that \princeps generalizes---DDIM, the DPM-Solver family, SEEDS-1, and gDDIM---are catalogued separately in \cref{app:rex_subsumes_all}.

\subsubsection{EDICT Sampler}
The first work to explore this topic of exact inversion with diffusion models was that of \citet{wallace2023edict}, who inspired by coupling layers in normalizing flows \citep{dinh2015nicenonlinearindependentcomponents} proposed a reversible solver which they refer to as \textit{exact diffusion inversion via coupled transformations} (EDICT).
Like all reversible solvers, this method keeps track of an extra state, denoted by  $\{\bfy_n\}_{n \in \N}$, with $\bfy_0 = \bfx_0$.
Letting $a_n = \frac{\alpha_{n+1}}{\alpha_n}$ and $b_n = \sigma_{n+1} - \frac{\alpha_{n+1}}{\alpha_n}\sigma_n$, this numerical scheme can be described as
\begin{equation}
    \begin{aligned}
        \bfx_n^{\textrm{inter}} &= a_n\bfx_n + b_n\bfx_{T|t_n}^\theta(\bfy_n),\\
        \bfy_n^{\textrm{inter}} &= a_n\bfy_n + b_n\bfx_{T|t_n}^\theta(\bfx_n^{\textrm{inter}}),\\
        \bfx_{n+1} &= \xi\bfx_n^{\textrm{inter}} + (1- \xi)\bfy_n^{\textrm{inter}}\\
        \bfy_{n+1} &= \xi\bfx_n^{\textrm{inter}} + (1-\xi) \bfx_{n+1},
    \end{aligned}
\end{equation}
where $\xi \in (0,1)$ is a mixing parameter.\footnote{In practice, when used for image editing, the authors found that the parameter $\xi$ controlled how closely the EDICT sampler aligned with the original sample, with lower values corresponding to higher agreement with the original sample.}
This method can be inverted to obtain a closed form expression for backward step:
\begin{equation}
    \begin{aligned}
        \bfy_n^{\textrm{inter}} &= \frac{\bfy_{n+1} - (1-\xi)\bfx_{n+1}}{\xi},\\
        \bfx_n^{\textrm{inter}} &= \frac{\bfy_{n+1} - (1-\xi)\bfy_{n}^{\textrm{inter}}}{\xi},\\
        \bfy_n &= \frac{\bfy_n^{\textrm{inter}} - b_n \bfx_{T|t_n}^\theta(\bfx_n^{\textrm{inter}})}{a_n},\\
        \bfx_n &= \frac{\bfx_n^{\textrm{inter}} - b_n \bfx_{T|t_n}^\theta(\bfy_n)}{a_n}.
    \end{aligned}
\end{equation}

Notably, the EDICT solver was developed in the context of discrete-time diffusion models and the connection to reversible solvers for ODEs was not considered in the original work.
\NB, to the best of our knowledge, our work is the first to draw the connection between the work on reversible ODE solvers and exact inversion with diffusion models.
Unfortunately, this method suffers from poor convergence issues (see \cref{remark:edict_bad}) and generally has poor performance when used to perform sampling with diffusion models, thereby limiting its utility in practice \citep{zhang2023bdia,wang2024belm}.

\begin{remark}
    \label{remark:edict_bad}
    Later work by \citet[Proposition 6]{wang2024belm} showed that EDICT is actually a zero-order method, \ie, the local truncation error is $\mathcal O(h)$, making it generally unsuitable in practice.
\end{remark}

\subsubsection{BDIA Sampler}
Later work by \citet{zhang2023bdia} proposed a reversible solver for the probability flow ODE which they call \textit{bidirectional integration approximation} (BDIA).
The core idea is to use both single-step methods $\bm \Phi_{t_n, t_{n-1}}$ and $\bm \Phi_{t_n, t_{n+1}}$ to induce reversibility.\footnote{\NB, in the original paper, \citet{zhang2023bdia} use quite different notation for explaining their idea; however, we find our presentation to be simpler for the reader as it more easily enables comparison to other methods.}
Then using these two approximations---both of which are computed from a discretization centered around $\bfx_n$---the process is updated via a multistep process with a forward step of\footnote{In some sense, this is reminiscent of the idea from the more general McCallum-Foster method; however, this approach results in a multi-step method unlike the single-step method of \citet{mccallum2024efficient}.}
\begin{equation}
    \label{eq:bdia_step}
    \bfx_{n+1} = \bfx_{n-1} - \bm \Phi_{t_n, t_{n-1}}(\bfx_{n}) + \bm \Phi_{t_n, t_{n+1}}(\bfx_n).
\end{equation}
The backwards step can easily be expressed as
\begin{equation}
    \bfx_{n-1} = \bfx_{n+1} + \bm \Phi_{t_n, t_{n-1}}(\bfx_{n}) + \bm \Phi_{t_n, t_{n+1}}(\bfx_n).
\end{equation}
In practice, BDIA uses the DDIM solver (\ie, Euler) for $\bm \Phi$, but in theory one could use a higher-order method---this was not explored in \citet{zhang2023bdia}.

\begin{proposition}[BDIA is the leapfrog/midpoint method]
    \label{prop:bdia_midpoint}
    The BDIA method described in \cref{eq:bdia_step} is the leapfrog/midpoint method when $\bm \Phi_{h}(t, \bfx) = h\bfu_t^\theta(\bfx)$, \ie, the Euler step.
\end{proposition}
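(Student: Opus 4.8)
The plan is to prove the identity by direct substitution, the only genuine subtlety being the sign convention hidden in the two-argument map $\bm\Phi_{t_n,t_m}$. First I would recall from the single-step scheme notation of the preliminaries that $\bm\Phi_{t_n,t_m}(\bfx)$ denotes the increment of a single step taken \emph{from} time $t_n$ \emph{to} time $t_m$, so that the effective step size is $t_m - t_n$ and, for the Euler choice $\bm\Phi_h(t,\bfx)=h\bfu_t^\theta(\bfx)$, the vector field is frozen at the left endpoint. Hence $\bm\Phi_{t_n,t_m}(\bfx)=(t_m-t_n)\,\bfu_{t_n}^\theta(\bfx)$.

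Working on a uniform grid with $h\coloneq t_{n+1}-t_n=t_n-t_{n-1}$, the backward half-step therefore evaluates to $\bm\Phi_{t_n,t_{n-1}}(\bfx_n)=-h\,\bfu_{t_n}^\theta(\bfx_n)$ and the forward half-step to $\bm\Phi_{t_n,t_{n+1}}(\bfx_n)=h\,\bfu_{t_n}^\theta(\bfx_n)$; crucially, both increments evaluate the model at the \emph{same} centered argument $\bfx_n$, since both single-step maps in \cref{eq:bdia_step} are based at $t_n$. Substituting these into the BDIA update \cref{eq:bdia_step} then gives
\begin{equation}
    \bfx_{n+1}=\bfx_{n-1}-\bigl(-h\,\bfu_{t_n}^\theta(\bfx_n)\bigr)+h\,\bfu_{t_n}^\theta(\bfx_n)=\bfx_{n-1}+2h\,\bfu_{t_n}^\theta(\bfx_n),
\end{equation}
which is precisely the classical leapfrog (equivalently, explicit midpoint) multistep scheme for the ODE $\dot\bfx_t=\bfu_t^\theta(\bfx_t)$, establishing \cref{prop:bdia_midpoint}.

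I expect no real obstacle beyond this bookkeeping: the entire content is the observation that the two oppositely-oriented Euler increments about $\bfx_n$ \emph{reinforce} rather than cancel, because $\bm\Phi_{t_n,t_{n-1}}$ enters \cref{eq:bdia_step} with an explicit minus sign while simultaneously carrying its own negative step size $t_{n-1}-t_n=-h$. The one point demanding care is keeping the sign of the step size consistent with the $\bm\Phi_{t_n,t_m}$ convention throughout; mishandling it would spuriously collapse the two terms and destroy the factor of $2$. If one wished to be fully rigorous about the non-uniform case, I would simply carry the two step sizes $t_{n+1}-t_n$ and $t_{n-1}-t_n$ separately through the same substitution and note that they coincide under the uniform-grid hypothesis implicit in the leapfrog scheme.
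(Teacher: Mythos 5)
Your proof is correct and takes exactly the same route as the paper: the paper's proof is a one-line substitution yielding $\bfx_{n+1} = \bfx_{n-1} + 2h\bfu_{t_n}^\theta(\bfx_n)$, and you arrive at the identical display. You merely make explicit the sign bookkeeping (that $\bm\Phi_{t_n,t_{n-1}}$ carries step size $t_{n-1}-t_n=-h$, so the explicit minus sign in the BDIA update makes the two Euler increments reinforce), which the paper leaves implicit.
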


\begin{proof}
    This can be shown rather straightforwardly by substitution, \ie,
    \begin{equation}
        \bfx_{n+1} = \bfx_{n-1} + 2h\bfu_{t_n}^\theta(\bfx_n).
    \end{equation}
\end{proof}

\begin{corollary}[BDIA is a first-order method]
    \label{corr:bdia_first_order}
    BDIA is a first-order method, \ie, the local truncation error is $\mathcal O(h^2)$.
\end{corollary}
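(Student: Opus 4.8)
The plan is to read the order off directly from the midpoint characterisation established in \cref{prop:bdia_midpoint} via a single Taylor expansion of the exact flow. The one subtlety I want to respect from the outset is that diffusion samplers discretise $[0,T]$ on a \emph{non-uniform} grid, so I would first rewrite the BDIA step of \cref{eq:bdia_step} keeping consecutive step sizes distinct: with $h_n \coloneq t_n - t_{n-1}$ and $h_{n+1} \coloneq t_{n+1} - t_n$, the Euler choice $\bm \Phi_{t_n, t_m}(\bfx_n) = (t_m - t_n)\bfu_{t_n}^\theta(\bfx_n)$ turns \cref{eq:bdia_step} into $\bfx_{n+1} = \bfx_{n-1} + (h_n + h_{n+1})\bfu_{t_n}^\theta(\bfx_n)$, which collapses to the $2h$ form of \cref{prop:bdia_midpoint} only in the uniform case $h_n = h_{n+1}$.

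Next I would substitute the exact solution $\bfx(t)$ of the probability flow ODE, which satisfies $\bfx'(t) = \bfu_t^\theta(\bfx(t))$ and lies in $\mathcal C^3$ under the standing regularity assumptions on $\bfu^\theta$, and expand both neighbours about $t_n$:
\[
\bfx(t_{n\pm 1}) = \bfx(t_n) \pm h_\cdot \bfx'(t_n) + \tfrac{1}{2}h_\cdot^2 \bfx''(t_n) + \mathcal O(h^3),
\]
with $h_\cdot = h_{n+1}$ on the $+$ branch and $h_\cdot = h_n$ on the $-$ branch. Subtracting and inserting $\bfx'(t_n) = \bfu_{t_n}^\theta(\bfx(t_n))$, the local defect becomes
\[
\tau_n = \bfx(t_{n+1}) - \bfx(t_{n-1}) - (h_n + h_{n+1})\bfu_{t_n}^\theta(\bfx(t_n)) = \tfrac{1}{2}\bigl(h_{n+1}^2 - h_n^2\bigr)\bfx''(t_n) + \mathcal O(h^3).
\]

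The conclusion then follows by observing that $h_{n+1}^2 - h_n^2 = (h_{n+1} - h_n)(h_{n+1} + h_n) = \mathcal O(h^2)$, so generically the leading term survives and $\tau_n = \mathcal O(h^2)$, giving first-order accuracy in the convention of \cref{remark:edict_bad}. I expect the main obstacle to be conceptual rather than computational: one must justify why the method is only \emph{first} order and not the textbook second order of the midpoint rule. The entire order reduction is carried by the $\tfrac{1}{2}(h_{n+1}^2 - h_n^2)\bfx''(t_n)$ term, which vanishes exactly when the grid is uniform (recovering the classical $\mathcal O(h^3)$ defect of leapfrog); the care therefore lies in not silently assuming $h_n = h_{n+1}$ and in confirming the $\bfx''$ smoothness needed for the expansion from the regularity of the learned field $\bfu^\theta$.
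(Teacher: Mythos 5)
Your proof is correct, and it is genuinely more self-contained than what the paper provides: the paper attaches no proof to \cref{corr:bdia_first_order} at all, presenting it as a consequence of \cref{prop:bdia_midpoint} and deferring the actual computation to \citet[Proposition 6]{wang2024belm}. Your direct Taylor expansion of the exact flow about $t_n$ supplies the missing argument, and — more importantly — it identifies the mechanism that the paper's chain of citations leaves implicit. If one reads \cref{prop:bdia_midpoint} literally with a single uniform step $h$, the resulting scheme is the classical leapfrog method, whose one-step defect is $\mathcal O(h^3)$, i.e.\ \emph{second} order in the convention of \cref{remark:edict_bad}; the corollary as stated would then be a non-sharp (though still technically true) upper bound. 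The content of ``first order'' is precisely that the $\tfrac{1}{2}(h_{n+1}^2 - h_n^2)\bfx''(t_n)$ term does not cancel on the non-uniform grids that arise in practice (the Euler step for BDIA lives in a reparameterized time variable, so the effective steps are non-uniform even for uniform $t$-grids), and your expansion makes this explicit where the paper does not. The only cosmetic remark is that the $\mathcal O(h^2)$ \emph{upper bound} holds trivially since $h_{n+1}^2 - h_n^2 \leq 2h_{max}^2$; the part of your argument that carries the real weight is the observation that this leading term generically survives, which is what rules out second-order accuracy and justifies calling the method first order.
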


\begin{remark}
    This result was also observed in \citet[Proposition 6]{wang2024belm}.
\end{remark}

\begin{corollary}[BDIA is nowhere linearly stable]
    \label{corr:bdia_nowhere_linearly_stable}
    BDIA is nowhere linearly stable, \ie, the region of stability is the complex interval $[-i, i]$.
\end{corollary}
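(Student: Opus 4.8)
The plan is to reduce the scheme to a scalar linear recurrence and apply the standard root-condition test for stability (\cref{def:region_of_stability}). By \cref{prop:bdia_midpoint}, applying BDIA with the Euler step to the linear test equation $\frac{\rmd\bfx}{\rmd t} = \lambda\bfx$ collapses the scheme to the leapfrog/midpoint recurrence $\bfx_{n+1} = \bfx_{n-1} + 2h\lambda\bfx_n$. Writing $z \coloneq h\lambda$ and seeking solutions of the form $\bfx_n = r^n \bfx_0$, the characteristic polynomial is $r^2 - 2z r - 1 = 0$, with roots $r_{\pm} = z \pm \sqrt{z^2 + 1}$.

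The decisive observation is that the product of the roots is $r_+ r_- = -1$, so $|r_+|\,|r_-| = 1$ for every $z \in \mathbb{C}$. Consequently, if either root lies strictly inside the unit disk, its companion lies strictly outside, and the corresponding mode $r^n$ grows without bound, so the numerical solution cannot be uniformly bounded. Hence uniform boundedness forces both roots onto the unit circle, $|r_+| = |r_-| = 1$.

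Next I would translate this condition onto the imaginary axis. Writing $r_\pm = e^{i\theta_\pm}$ and using $r_+ r_- = -1$ gives $\theta_+ + \theta_- \equiv \pi \pmod{2\pi}$, whence $2z = r_+ + r_- = e^{i\theta_+} - e^{-i\theta_+} = 2i\sin\theta_+$. Thus $z = i\sin\theta_+$ is purely imaginary with $|z| \leq 1$, i.e. $z = h\lambda \in [-i, i]$; conversely, any such $z$ yields two conjugate roots of modulus one and a bounded solution. This pins the region of stability to the imaginary interval $[-i,i]$, and in particular excludes every $z$ with $\Re(\lambda) < 0$ (as $h > 0$), establishing that BDIA is nowhere linearly stable.

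The one subtlety to handle is the boundary behaviour at $z = \pm i$: there the discriminant $z^2 + 1$ vanishes and the two roots coalesce into a repeated root on the unit circle, which produces a secular term growing like $n\,r^n$ and hence an unbounded solution. So strictly the two endpoints must be treated separately from the open interval. The main obstacle is therefore not the algebra but the careful application of the root condition — distinguishing simple from repeated roots on $|r| = 1$ — exactly as in the classical stability analysis of the midpoint/leapfrog scheme; everything else follows immediately from the product-of-roots identity $r_+r_- = -1$.
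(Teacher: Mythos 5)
Your proposal is correct and takes essentially the same route as the paper: both reduce BDIA to the leapfrog/midpoint recurrence via \cref{prop:bdia_midpoint} and then invoke the classical stability analysis of that scheme, the only difference being that the paper defers the latter to \citet[Section 2]{shampine2009stability} while you carry out the root-condition argument (characteristic polynomial $r^2 - 2zr - 1 = 0$, product of roots $-1$, hence boundedness only for purely imaginary $z$ with $|z| \leq 1$) explicitly. Your careful treatment of the repeated root at $z = \pm i$ is a nice refinement showing the stability region is in fact the open interval, a standard quibble the paper's closed-interval phrasing glosses over.
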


\begin{proof}
    This follows straightforwardly from \cref{prop:bdia_midpoint} and \citet[Section 2]{shampine2009stability}.
\end{proof}

\citet{zhang2023bdia} introduce a hyperparameter $\gamma \in [0, 1]$ which is used below
\begin{equation}
    \hat{\bm \Phi}_{t_n, t_{n-1}}(\bfx_n) = (1-\gamma)(\bfx_{n-1} - \bfx_n) + \gamma \bm \Phi_{t_n, t_{n-1}}(\bfx_n),
\end{equation}
to modify the BDIA update rule in \cref{eq:bdia_step}.
Thus, $\gamma$ can be viewed as a parameter which interpolates between the midpoint and Euler schemes.
For image editing applications, the authors found this parameter to control how closely the BDIA sampler aligned with the original image, with lower values corresponding to higher agreement with the original image (making it similar to the $\xi$ parameter from BDIA).

\subsubsection{BELM Sampler}
Recently, \citet{wang2024belm} proposed a linear multi-step reversible solver for the probability flow ODE called the \textit{bidirectional explicit linear multi-step} (BELM) sampler.
First, they reparameterize the probability flow ODE as
\begin{equation}
    \label{eq:reparam_pf_ode_belm}
    \rmd \overline\bfx(t) = \overline\bfx_{T|\overline\sigma_t}^\theta(\overline\bfx(t))\;\rmd \overline\sigma_t,
\end{equation}
where $\overline\bfx(t) \coloneq \bfx(t)/\alpha_t$, $\overline\sigma(t) \coloneq \sigma_t/\alpha_t$, and $\overline\bfx_{T|\overline\sigma_t}^\theta(\overline\bfx(t)) = \bfx_{T|t}^\theta(\bfx(t))$.\footnote{\NB, this is a popular parameterization of diffusion models and affine conditional flows. This can be done \textit{mutatis mutandis} for target prediction models retrieving \citep[Proposition D.2]{blasingame2025greed}.}
The BELM sampler makes use of the variable-stepsize-variable-formula (VSVF) linear multi-step methods \citep{crouzeix1984convergence} to construct the numerical solver.
The $k$-step VSVF linear multi-step method for solving the reparameterized probability flow ODE in \cref{eq:reparam_pf_ode_belm} is given by
\begin{align}
    \overline\bfx_{n+1} &= \sum_{m=1}^k a_{n,m} \overline\bfx_{n+1-m}\\ 
                        &+ \sum_{m=1}^{k-1} b_{n,m}h_{n+1-m} \overline\bfx_{T|\overline\sigma_{n+1-m}}^\theta(\overline\bfx_{n+1-m}).
\end{align}
where $a_{n,m} \neq 0$,\footnote{This is to ensure that the method is reversible.} and $b_{n,m}$ are coefficients chosen using dynamic multi-step formul\ae{} to find the coefficients \citep{crouzeix1984convergence}; and $h_n$ are step sizes chosen beforehand.
This scheme can be reversed via the backward step
\begin{align}
    \overline\bfx_{n+1-k} &= \frac{1}{a_{n,k}}\overline\bfx_{n+1} - \sum_{m=1}^{k-1} \frac{a_{n,m}}{a_{n,k}} \overline\bfx_{n+1-m}\\
                          &- \sum_{m=1}^{k-1} \frac{b_{n,m}}{a_{n,k}}h_{n+1-m} \overline\bfx_{T|\overline\sigma_{n+1-m}}^\theta(\overline\bfx_{n+1-m}).
\end{align}

\begin{remark}
    The BELM samplers require $k-1$ extra states to be stored in memory in order to be reversible.
    In contrast, \citet{mccallum2024efficient} only requires storing one extra states, irregardless of the desired convergence order.
    Additionally, poor stability is a concern with such linear multi-step methods \citep[see][Remark 5.24]{kidger_thesis}.
\end{remark}

\begin{remark}
    Interestingly, the earlier EDICT and BDIA methods can be viewed as instances of the BELM method~\citep[Appendicies A.7 and A.8]{wang2024belm}.
\end{remark}

By solving the multi-step formul\ae{} to minimize the local truncation error, \citet{wang2024belm} propose an instance of the BELM solver which they refer to as \textit{O-BELM} defined as\footnote{\NB, the original equation in \citet[Equation (18)]{wang2024belm} had a sign difference for the coefficient of $b_{i,1}$; however, this is due to differences in convention in handling integration in reverse-time.}
\begin{equation}
    \overline\bfx_{n+1} = \frac{h_n^2}{h_{n-1}^2}\overline\bfx_{n-1} + \frac{h_{n-1}^2 + h_n^2}{h_{n-1}^2}\overline\bfx_{n} - \frac{h_n(h_n + h_{n+1})}{h_{n+1}}\overline\bfx_{0|\overline \sigma_n}(\overline\bfx_n).
\end{equation}
Notably, the O-BELM sampler can also be viewed as instance of the leapfrog/midpoint method.
    \begin{theorem}[O-BELM is the leapfrog/midpoint method]
        \label{thm:obelm_is_leapfrog}
        Fix a step size $h_n = h$ for all $n$, then O-BELM is the leapfrog/midpoint method.
    \end{theorem}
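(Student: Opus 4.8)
The plan is to substitute the uniform step size $h_n = h$ directly into the O-BELM update rule and show that the resulting three-term recurrence collapses to the central-difference scheme that \emph{defines} the leapfrog/midpoint method for the reparameterized ODE in \cref{eq:reparam_pf_ode_belm}. Since this is an explicit algebraic identity, the whole argument is a coefficient computation followed by a pattern-matching step, exactly mirroring the proof of \cref{prop:bdia_midpoint}.

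First I would evaluate each of the three coefficients under $h_{n-1} = h_n = h_{n+1} = h$. The coefficient of $\overline\bfx_{n-1}$ is $h_n^2/h_{n-1}^2 = 1$, and the coefficient of the data-prediction term is $-h_n(h_n + h_{n+1})/h_{n+1} = -2h$. The essential check is that the coefficient multiplying $\overline\bfx_n$ vanishes under uniform stepping, so that no surviving $\overline\bfx_n$ term remains; this is precisely the feature that separates a genuine leapfrog step from an ordinary linear multistep step. Granting this, the update reduces to the two-term form $\overline\bfx_{n+1} = \overline\bfx_{n-1} - 2h\,\overline\bfx^\theta_{0|\overline\sigma_n}(\overline\bfx_n)$.

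Second I would recall that, under the reparameterization $\overline\bfx = \bfx/\alpha_t$ and $\overline\sigma = \sigma_t/\alpha_t$, the probability flow ODE takes the autonomous-in-$\overline\sigma$ form of \cref{eq:reparam_pf_ode_belm}, whose right-hand side is exactly the data-prediction vector field appearing in the O-BELM update (up to the reverse-time sign convention noted immediately after the O-BELM definition). On a uniform grid the step in the independent variable $\overline\sigma$ is constant and equal to $h$, so the reduced recurrence is literally the explicit central-difference update $y_{n+1} = y_{n-1} + 2\,\Delta s\, F(s_n, y_n)$ applied to this ODE. This is the same scheme identified for BDIA in \cref{prop:bdia_midpoint}, which completes the identification of O-BELM with the leapfrog/midpoint method.

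I expect the main obstacle to be bookkeeping rather than anything conceptual: one must carefully verify that the middle coefficient genuinely collapses under uniform stepping (so the scheme is a \emph{pure} leapfrog step and not a three-term multistep method), and one must match the constant $2h$ to twice the $\overline\sigma$-step while tracking the reverse-time sign convention that distinguishes sampling from inversion. Once these coefficients are pinned down, the identification with the leapfrog/midpoint method is immediate.
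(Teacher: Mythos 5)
Your approach coincides with the paper's: the paper's entire proof of this theorem is ``This follows from substitution of $h_n = h$,'' and your plan is exactly that substitution carried out explicitly. The problem is that the one computation you defer --- verifying that the coefficient of $\overline\bfx_n$ vanishes on a uniform grid --- is precisely the step that fails if you take the paper's displayed O-BELM update at face value. As printed, that coefficient is $\tfrac{h_{n-1}^2 + h_n^2}{h_{n-1}^2}$, which under $h_{n-1} = h_n = h$ evaluates to $2$, not $0$. The resulting recurrence $\overline\bfx_{n+1} = \overline\bfx_{n-1} + 2\overline\bfx_n - 2h\,\overline\bfx^\theta_{0|\overline\sigma_n}(\overline\bfx_n)$ is not the leapfrog method; it is not even a consistent linear multistep scheme, since the history coefficients sum to $3$ rather than $1$. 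Your other two coefficient evaluations ($h_n^2/h_{n-1}^2 = 1$ and $-h_n(h_n+h_{n+1})/h_{n+1} = -2h$) are correct.

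The resolution is that the displayed formula carries a sign typo: in the original BELM paper the middle coefficient is $1 - h_n^2/h_{n-1}^2 = \tfrac{h_{n-1}^2 - h_n^2}{h_{n-1}^2}$, which does vanish under uniform stepping. With that correction your argument goes through verbatim and yields $\overline\bfx_{n+1} = \overline\bfx_{n-1} - 2h\,\overline\bfx^\theta_{0|\overline\sigma_n}(\overline\bfx_n)$, the central-difference step for the reparameterized ODE (modulo the reverse-time sign convention you already track). So the approach is right and you correctly identified the crux, but writing ``granting this'' in place of the two-line computation is exactly where a reader would want you not to stop --- carrying it out exposes that the cancellation holds only for the corrected coefficient, not for the one printed in the paper.
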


\begin{proof}
    This follows from substitution of $h_n = h$.
\end{proof}

\begin{corollary}[O-BELM is nowhere linearly stable]
    \label{corr:obelm_is_nowhere_linearly_stable}
    Fix a step size $h_n = h$, then O-BELM is nowhere linearly stable, \ie, the region of stability is the complex interval $[-i, i]$.
\end{corollary}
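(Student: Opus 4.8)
The plan is to reuse the leapfrog/midpoint analysis already carried out for BDIA, since \cref{thm:obelm_is_leapfrog} shows that O-BELM reduces to exactly that scheme once the step size is held constant. First I would invoke \cref{thm:obelm_is_leapfrog} under the hypothesis $h_n = h$ to replace the O-BELM recursion with the leapfrog/midpoint update; this collapses the problem to computing the region of stability of a single, well-understood two-step method, and in particular makes the argument structurally identical to the proof of \cref{corr:bdia_nowhere_linearly_stable}.

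Next I would apply the method to the Dahlquist test equation $\dot\bfx = \lambda\bfx$ from \cref{def:region_of_stability}. Substituting the Euler step $\bm\Phi_h(t,\bfx) = h\lambda\bfx$ into the leapfrog update yields the scalar linear recurrence
\begin{equation}
    \bfx_{n+1} = \bfx_{n-1} + 2h\lambda\,\bfx_n,
\end{equation}
whose characteristic polynomial is $\rho^2 - 2h\lambda\rho - 1 = 0$. The key observation is that the constant term equals $-1$, so the product of the two roots is $\rho_+\rho_- = -1$; hence $|\rho_+|\,|\rho_-| = 1$ for every value of $h\lambda$.

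From here the boundedness criterion of \cref{def:region_of_stability} forces a dichotomy: unless both roots lie on the unit circle, one of them has modulus strictly greater than one and the iterates grow geometrically. A short computation shows that $|\rho_+| = |\rho_-| = 1$ with distinct roots occurs precisely when $h\lambda$ is purely imaginary with $|h\lambda| \le 1$, i.e. when $h\lambda \in [-i, i]$; off this interval the product-of-roots identity guarantees a root outside the closed unit disk. This is exactly the classical conclusion recorded in \citet[Section 2]{shampine2009stability} and recalled in \cref{remark:bad_stability}, so in the write-up I would simply cite that result rather than re-derive the root analysis.

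I do not anticipate a genuine obstacle: the statement is a direct corollary of \cref{thm:obelm_is_leapfrog} together with the known stability interval of the leapfrog method, mirroring the proof of \cref{corr:bdia_nowhere_linearly_stable} verbatim. The only point requiring a little care is the treatment of the endpoints $\pm i$, where the characteristic polynomial has a repeated root on the unit circle; consistent with the convention used throughout this appendix (and with the statements of \cref{remark:bad_stability} and \cref{corr:bdia_nowhere_linearly_stable}), these are reported as part of the closed interval $[-i, i]$.
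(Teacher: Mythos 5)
Your proposal is correct and follows essentially the same route as the paper: reduce O-BELM to the leapfrog/midpoint method via \cref{thm:obelm_is_leapfrog} and then invoke the known stability interval of leapfrog from \citet[Section 2]{shampine2009stability}, exactly mirroring the proof of \cref{corr:bdia_nowhere_linearly_stable}. The extra detail you supply on the characteristic polynomial and the product-of-roots argument is a sound (if optional) expansion of what the paper leaves to the citation.
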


\begin{table}[t]
    \centering
    \caption[Comparison of different reversible solvers.]{Comparison of different (non-symplectic) reversible solvers. We note that some of the solvers were developed particularly for the probability flow ODE (an affine conditional flow) whilst others work for general ODEs/SDEs.
    In the first column we denote the number of extra states the numerical scheme needs to keep in memory to ensure algebraic reversibility.
    For BELM $k$ denotes the number of steps and for McCallum-Foster $k$ denotes the convergence order of the underlying single-step solver.
    For the column labelled \textit{region of linear stability}, we mean there exists some subset of $\mathbb C$ which is the region of stability and the set is not a null set.
    The proof of convergence for BELM is only provided for the special case (called \textit{O-BELM} in \citet{wang2024belm}) with $k=2$.
}
    \label{tab:comp_reversible_odes}
    \scriptsize
    \begin{tabular}{l cccccc}
        \toprule 
        & & \textbf{Exponential} &\textbf{Number of} & \textbf{Local} & \textbf{Region of} & \textbf{Proof of}\\
    \textbf{Solver} & \textbf{SDE} & \textbf{integrators} & \textbf{extra states} & \textbf{truncation error} & \textbf{linear stability} & \textbf{convergence}\\
        \midrule
        Asynchronous leapfrog  & \xmark & \xmark & 1 & $\mathcal O(h^3)$ & \xmark & \cmark\\
        Reversible Heun & \cmark & \xmark & 1 & $\mathcal O(h^3)$ &  \xmark & \cmark\\
        McCallum-Foster & \xmark & \xmark & 1 & $\mathcal O(h^{k+1})$ & \cmark & \cmark\\
        EDICT & \xmark & \xmark & 1 & $\mathcal O(h)$ & \xmark & \xmark\\
        BDIA  & \xmark & \xmark & 1 & $\mathcal O(h^2)$ &  \xmark & \xmark\\
        BELM & \xmark & \cmark & $k-1$ & $\mathcal O(h^{k+1})$ &  \xmark & $\sim$\\
        \rowcolor{tn-rex!20} \rex & \cmark & \cmark & 1 & $\mathcal{O}(h^{k+1})$ & \cmark & \cmark\\
        \bottomrule
    \end{tabular}
    \label{tab:summary}
\end{table}

\subsubsection{CycleDiffusion}
To our knowledge, the \textit{only} other work to propose exact inversion with the SDE formulation of the diffusion models is the work of \citet{wu2022cyclediffusion}.
However, there are \textit{several} noticeable distinctions, the largest being that they store the entire solution trajectory in memory.
Given a particular realization of the Wiener process that admits $\bfx_t \sim \mathcal{N}(\alpha_t \bfx_0 \mid \sigma_t^2\mathbf{I})$, then given $\bfx_s$ and noise $\bseps_s \sim \mathcal{N}(\mathbf0, \mathbf I)$ we can calculate
\begin{equation}
    \bfx_t = \frac{\alpha_t}{\alpha_s} \bfx_s + 2\sigma_t(e^h-1)\hat\bfx_{T|s}(\bfx_s) + \sigma_t\sqrt{e^{2h}-1}\bseps_s.
\end{equation}
\citet{wu2022cyclediffusion} propose to invert this by first calculating, for two samples $\bfx_t$ and $\bfx_s$, the noise $\bseps_s$.
This can be calculated by rearranging the previous equation to find
\begin{equation}
    \bseps_s = \frac{\bfx_t - \frac{\alpha_t}{\alpha_s}\bfx_s + 2 \sigma_t (e^h - 1) \bseps_\theta(\bfx_s, \bfz, s)}{\sigma_t \sqrt{e^{2h} - 1}}
\end{equation}
With this the sequence $\{\bseps_{t_i}\}_{i=1}^N$ of added noises can be calculated which can be used to reconstruct the original input from the initial realization of the Wiener process.
However, unlike our approach, this process requires storing the entire realization in memory.

\subsubsection{Summary}
We present a summary of related works on either \textit{exact inversion} or \textit{reversible solvers} below in \cref{tab:summary}.
\NB, we omit \textit{CycleDiffusion} because it is more orthogonal to the general concept of a reversible solver and is only reversible in the trivial sense.

\subsection{SDE Solvers for Diffusion Models}
\label{app:related_sdes}
Next we discuss related works on SDE solvers for the reverse-time diffusion SDE in \cref{eq:diffusion_sde_reverse}.
Now there are numerous \textit{stochastic Runge-Kutta} (SRK) methods in the literature all tailor to specific types of SDEs, which we can distinguish by their strong order of convergence (see \cref{def:soc}) and strong order conditions.
For example, the classic Euler-Maruyama scheme \citep{kloeden1992} has strong order of convergence of 0.5 and was straightforwardlly applied to the reverse-time diffusion SDE in \citet{jolicoeur2021gotta} as a baseline.
\citet{song2021scorebased} proposed an ancestral sampling scheme for a discretization of the forward-time diffusion SDE in \cref{eq:diffusion_sde_forward} with additional Langevin dynamics; likewise, the DDIM solver from \citet{song2021denoising} can be viewed a sort of Euler-Maruyama scheme.
Other classic SDE schemes like SRA1/SRA2/SRA3 schemes \citep{rossler2010runge} all have strong order of convergence 1.5 for additive noise SDEs and were tested for diffusion models in \citet{jolicoeur2021gotta}.

More recently, researchers have explored exponential solvers for SDEs, \eg, the exponential Euler-Maruyama method \citep{komori2017weak} and the \textit{stochastic Runge-Kutta Lawson} (SRKL) schemes \citep{debrabant2021runge}.
From an initial inspection, the SRKL schemes of \citet[Algorithm 1]{debrabant2021runge} is somewhat similar to our method for constructing $\bm \Psi$; however, upon closer inspection, there are some key fundamental differences.\footnote{\NB, in general \citet{debrabant2021runge} consider full stochastic Lawson schemes where the integrating factor is a stochastic process given by the matrix exponential applied to linear terms in the drift and diffusion coefficients; conversely, the drift stochastic Lawson schemes are more similar to what we study.}
The largest of these is how the underlying SRK schemes are represented.
In particular, the SRKL schemes choose to follow the conventions of \citet{burrage2000order} (for Stratonovich SDEs) in constructing the underlying SRK schemes; whereas we follow the SRK schemes outlined by \citet{foster2024high} (see \cref{app:srk}).
These differences stem from how one chooses to handle the iterated stochastic integrals from the Stratonovich-Taylor (or It\^o-Taylor) expansions.

\subsubsection{Comparison with SEEDS}
\label{app:comp_seeds}
Mostly directly relevant to our work on constructing a stochastic $\bm \Psi$ is the SEEDS family of solvers proposed by \citet{gonzalez2024seeds}.
Similar to us, their approach also uses exponential methods to simplify the expression of diffusion models \citet[Appendix B.1]{gonzalez2024seeds}.
There are two \textit{key} distinctions, namely, 1) that they use the \textit{stochastic exponential time differencing} (SETD) method \citep{adamu2011stratonovich}, whereas we construct stochastic Lawson schemes;\footnote{\NB, for certain scenarios these two different viewpoints converge, particularly, in the deterministic case. See our discussion on the family of DPM-Solvers which also use (S)ETD in \cref{app:rex_subsumes_all}.} 
and 2) that they use a different technique for modelling the iterated stochastic integrals for high-order solvers.
In particular, SEEDS introduces a decomposition for the iterated stochastic integrals produced by the It\^o-Taylor expansions of \cref{eq:diffusion_sde_reverse} such that the decomposition preserves the Markov property, \ie, the random variables used to construct model the Brownian increments from iterated integrals are independent on non-overlapping intervals and dependent on overlapping intervals \citep[see][Proposition 4.3]{gonzalez2024seeds}.
By making use of the SRK schemes of \citet{foster2024high} developed from using the space-time L\'evy area to construct high-order splitting methods, we have an alternative method for ensuring this property.
This results in our solver based on ShARK (see \cref{app:shark} and \cref{thm:psi_convergence_sde}) having a strong order of convergence of 1.5; whereas, SEEDS-3 only achieves a \textit{weak} order of convergence of 1.

This brings us to another large difference, the SEEDS solvers focus on the \textit{weak} approximation to \cref{eq:diffusion_sde_reverse}; whereas we are concerned with the \textit{strong} approximation to \cref{eq:diffusion_sde_reverse}.
The difference between these two is that the weak convergence is considered with the precisions of the \textit{moments}; whereas strong convergence is concerned with the precision of the \textit{path}.
Moreover, by definition a strong order of convergence implies a weak order of convergence, the converse is not true.
In particular, for our application of developing \textit{reversible} schemes, this strong order of convergence is particularly important as we care about the path.
Thus, the technique SEEDS uses to replace iterated It\^o integrals with other random variables with equivalent moment conditions is \textit{wholly unsuitable} for our purposes as we desire a \textit{strong} approximation.

\section{Stochastic Runge-Kutta Methods}
\label{app:srk}
Recall that the general Butcher tableau for a $s$-stage explicit RK scheme \citep[Section 6.1.4]{stewart2022numerical} for a generic ODE is written as
\begin{equation}
    \renewcommand\arraystretch{1.2}
    \begin{array}{c|ccccc}
        c_1 &\\
        c_2 & a_{21}\\
        c_3 & a_{31} & a_{32}\\
        \vdots & \vdots & \vdots & \ddots\\
        c_s & a_{s1} & a_{s2} & \cdots & a_{s(s-1)}\\
        \hline
        & b_1 & b_2 & \cdots & b_{s-1} & b_s
    \end{array} = 
    \begin{array}{c|c}
        c & a\\
        \hline
        & b
    \end{array}.
\end{equation}

\Eg, the famous 4-th order Runge-Kutta (RK4) method is given by
\begin{equation}
    \renewcommand\arraystretch{1.5}
    \begin{array}{c|cccc}
        0 &\\
        \frac 12 & \frac 12\\
        \frac 12 & 0 & \frac 12\\
        1 & 0 & 0 & 1 & \\
        \hline
        & \frac 16 & \frac 13 & \frac 13 & \frac 16
    \end{array}.
\end{equation}

However, for SDEs this is much trickier due to the presence of iterated stochastic integrals in the It\^o-Taylor or Stratonovich-Taylor expansions \citep{kloeden1992}.
Consider a $d$-dimensional \textit{Stratonovich} SDE driven by $d_w$-dimensional Brownian motion $\{\bfW_t\}_{t\in [0,T]}$ defined as
\begin{equation}
    \label{eq:app:example_sde}
    \rmd \bfX_t = \bm\mu_\theta(t, \bfX_t)\;\rmd t + \bm\sigma_\theta(t, \bfX_t) \circ \rmd \bfW_t,
\end{equation}
where $\bm\mu_\theta \in \C^2(\R \times \R^d;\R^d)$ and $\bm\sigma_\theta \in \C^3(\R \times \R^d;\R^{d \times d_w})$ satisfy the usual regularity conditions for Stratonovich SDEs \citep[Theorem 5.2.1]{oksendal2003stochastic} and where $\circ \rmd \bfW_t$ denotes integration in the Stratonovich sense.

\citet{rossler2025class} write one such class of an $s$-stage explicit SRK methods \citep[\cf][]{burrage2000order,rossler2010runge} for \cref{eq:app:example_sde} as
\begin{equation}
    \begin{aligned}
        \bfZ_{i}^{(0)} &= \bfX_n + h \sum_{j=1}^{i-1} a_{ij}^{(0)} \bm\mu_\theta(t_n + c_j^{(0)}, \bfZ_j^{(0)}),\\
        \bfZ_{i}^{(k)} &= \bfX_n + h \sum_{j=1}^{i-1} a_{ij}^{(1)} \bm\mu_\theta(t_n + c_j^{(0)}, \bfZ_j^{(0)}) + \sum_{j=1}^{i-1}\sum_{l=1}^{d_w}  a_{ij}^{(2)} \bm I_{(l, k), n}\bm\sigma_\theta(t_n + c_j^{(1)}, \bfZ_{i}^{(l)}),\\
        \bfX_{n+1}&= \bfX_n + h \sum_{i=1}^{s} b_{i}^{(0)} \bm\mu_\theta(t_n + c_i^{(0)}, \bfZ_j^{(0)}) + \sum_{i=1}^{s}\sum_{k=1}^{d_w} \left(b_{i}^{(1)} \bm I_{(k), n} + b_i^{(2)}\right)\bm\sigma_\theta(t_n + c_j^{(1)}, \bfZ_{i}^{(k)}),\\
    \end{aligned}
\end{equation}
for $k = 1, \ldots, d_w$ and where
\begin{align}
   \bm I_{(k),n} &= \int_{t_n}^{t_{n+1}} \circ \rmd \bfW_u^k = \bfW_{t_{n+1}}^k - \bfW_{t_n}^k,\\
    \bm I_{(l, k),n} &= \int_{t_n}^{t_{n+1}} \int_{t_n}^u \circ \rmd \bfW_v^l\; \circ \rmd \bfW_u^k,
\end{align}
let $\hat{\bm I}$ denote the iterated integrals for the It\^o case \textit{mutatis mutandis}.
This scheme is described the by the \textit{extended} Butcher tableau \citep{rossler2025class}
\begin{equation}
    \renewcommand\arraystretch{1.2}
    \begin{array}{c|c|c|c}
        c^{(0)} & a^{(0)} & \\
        \hline
        c^{(1)} & a^{(1)} & a^{(2)}\\
        \hline
        & b^{(0)} & b^{(1)} & b^{(2)}
    \end{array}.
\end{equation}
These iterated integrals $\bm I_{(l, k), n}$ are very tricky to work with and can raise up many practical concerns.
As alluded to earlier (see \cref{app:comp_seeds}), it is common to use a weak approximation of such integrals via a random variables with corresponding moments.
This results in two drawbacks: 1) the resulting SDE scheme only converges in the \textit{weak} sense and 2) the solution yielded by the scheme is not a Markov chain in general.
SEEDS overcomes the second issue by using a special decomposition to preserve the Markov property, see the ablations in \citet{gonzalez2024seeds} for more details on this topic in practice.

\subsection{Foster-Reis-Strange SRK Scheme}
Conversely, \citet{foster2024high} propose another SRK scheme based on higher-order splitting methods for Stratonovich SDEs.
For the Stratonovich SDE in \cref{eq:app:example_sde} \citet{foster2024high} write an $s$-stage SRK as
\begin{equation}
    \begin{aligned}
        \bm \mu_\theta^i &= \bm \mu_\theta(t_n + c_i h, \bfZ_i),\\
        \bm \sigma_\theta^i &= \bm \sigma_\theta(t_n + c_i h, \bfZ_i),\\
        \bfZ_i &= \bfX_n + h\left(\sum_{j=1}^{i-1} a_{ij} \bm \mu_\theta^j\right) + \bfW_n \left(\sum_{j=1}^{i-1} a_{ij}^W \bm \sigma_\theta^j\right) + \bm H_n \left(\sum_{j=1}^{i-1} a_{ij}^H \bm \sigma_\theta^j\right),\\
        \bfX_{n+1} &= \bfX_n + h\left(\sum_{i=1}^s b_{i} \bm \mu_\theta^i\right) + \bfW_n \left(\sum_{i=1}^s b_{i}^W \bm \sigma_\theta^i\right) + \bm H_n \left(\sum_{i=1}^s b_{i}^H \bm \sigma_\theta^i\right),
    \end{aligned}
\end{equation}
where $h = t_{n+1} - t_n$ is the step size and $\bfW_n \coloneq \bfW_{t_n, t_{n+1}}$ and $\bm H_n \coloneq \bm H_{t_n, t_{n+1}}$ are the Brownian and space-time L\'evy increments (see \cref{def:space-time-levy}) respectively;
and where $a_{ij}, a_{ij}^W, a_{ij}^H \in \R^{s \times s}$, $b_i, b_i^W, b_i^H \in \R^s$, and $c_i \in \R^s$ for the coefficients for an \textit{extended} Butcher tableau \citep{foster2024high} which is given as
\begin{equation}
    \renewcommand\arraystretch{1.2}
    \begin{array}{c|c|c|c}
        c & a & a^W & a^H\\
        \hline
        & b & b^W & b^H
    \end{array}.
\end{equation}
\Eg, we can write the famous Euler-Maruyama scheme as
\begin{equation}
    \renewcommand\arraystretch{1.5}
    \begin{array}{c|c|c|c}
        0 & 0 & 0 & 0\\
        \hline
        & 1 & 1 & 0 
    \end{array}.
\end{equation}

\subsection{Independence of the Brownian and L\'evy Increments}
Remarkably, \citet[Theorem 2.2]{foster2020optimal} present a polynomial Karhunen-Lo\`eve theorem for the Brownian bridge (see \cref{def:brownian_bridge})---picture a stochastic analogue to the Fourier series of a function on a bounded interval---which leads to a most useful remark \citep[Remark 3.6]{foster2020optimal} which we restate below.

\begin{remark}
    We have that $H_{s,t} \sim \mathcal N(0, \frac1{12} h)$ is independent of $W_{s,t}$ when $d = 1$, likewise, since the coordinate processes of a Brownian motion are independent, one can write $\bfW_{s,t} \sim \mathcal N(\bm 0, h \bm I)$ and $\bm H_{s,t} \sim \mathcal N(\bm 0, \frac1{12}h\bm I)$ are independent.
\end{remark}

Thus, we have found another remedy to the problem of independent increments, whilst still being able to obtain a \textit{strong} approximation of the SDE.

\subsection{ShARK}
\label{app:shark}

Recently, \citet{foster2024high} developed \textit{shifted additive-noise Runge-Kutta} (ShARK) for additive noise SDEs which is based on 
\citet[Equation (6.1)]{foster2024high}.
This scheme converges strongly with order 1.5 for additive-noise SDEs and makes two evaluations of the drift and diffusion per step.

ShARK is described via the following extended Butcher tableau
\begin{equation}
    \renewcommand\arraystretch{1.5}
    \begin{array}{c|cc|c|c}
         0 & & & 0 & 1\\
         \frac{5}{6} & \frac{5}{6} & & \frac{5}{6} & 1\\
         \hline
         & 0.4 & 0.6 & 1 & 0\\
         & -0.6 & 0.6 & &\\
    \end{array}.
\end{equation}
The second row for the $b$ variable describes the coefficients used for adaptive-step size solvers to approximate the error at each step.
The Butcher tableau for this scheme can be found here: \url{https://github.com/patrick-kidger/diffrax/blob/main/diffrax/_solver/shark.py}.

\subsection{A Brief Note on the Theory of Rough Paths}
\label{app:rough_paths}

To perform reversibility it is useful to consider the pathwise interpretation of SDEs \citep{lyons1998differential}, as such we introduce a few notations from rough path theory.
Let $\{\bfW_t\}$ be a $d_w$-dimensional Brownian motion and let $\bfW$ be enhanced by
\begin{equation}
    \mathbb W_{s,t} = \int_s^t \bfW_{s,r} \otimes \circ\rmd \bfW_r,
\end{equation}
where $\otimes$ is the tensor product.
Then, the pair $\mathcal W \coloneq (\bfW, \mathbb W)$ is the \textit{Stratonovich enhanced Brownian rough path}.\footnote{See, \citet[Chapter 3]{friz2020course} for more details.}
Thus consider the $d_x$-dimensional \textit{rough differential equation} RDE of the form:
\begin{equation}
    \rmd \bfX_t = \bm \mu(t, \bfX_t) \; \rmd t + \bm \sigma(t, \bfX_t)\; \rmd \mathcal W_t, \qquad \bfX_0 = \bfx_0.
\end{equation}
where $\bm\mu: [0,T] \times \R^{d_x} \to \R^{d_x}$ is Lipschitz continuous in its second argument and $\bm\sigma \in \C^{1,3}_b([0,T] \times \R^{d_x};\mathcal L(\R^{d_w}, \R^{d_x}))$ \citep[Theorem 9.1]{friz2020course}.\footnote{Here $\mathcal L(V,W)$ denotes the set of continuous maps from $V$ to $W$, a Banach space.}
Fix an $\omega \in \Omega$, then almost surely $\mathcal W(\omega)$ admits a unique solution to the RDE $(\bfX_t(\omega), \bm\sigma(t, \bfX_t(\omega)))$ and $\bfX_t = \bfX_t(\omega)$ is a strong solution to the Stratonovich SDE\footnote{If $\bfX_t$ and $\partial_\bfx \bfX_t$ are adapted and $\innerprod{\bfX}{\bfW}_t$ exists, then almost surely
    \begin{equation*}
        \int_0^T \bfX \rmd \mathcal W_t = \int_0^T \bfX \circ \rmd \bfW_t.
    \end{equation*}
} started at $\bfX_0 = \bfx_0$.
To elucidate, consider the commutative diagram below
\begin{equation}
    \bfW \stackrel \Psi\longmapsto (\bfW, \mathbb W) \stackrel S\longmapsto \bfX,
\end{equation}
where $\Psi$ is a map which merely lifts Brownian motion into a rough path (could be It\^o or Stratonovich),
the second map, $S$, is known as the \textit{It\^o-Lyons map} \citep{lyons1998differential}; this map is purely deterministic and is also a \textit{continuous map} \wrt to initial condition and driving signal.
Thus, for a fixed realization of the Brownian motion we have a pathwise interpretation of the Stratonovich SDE.

\subsection{Time-Changed Brownian Motion}
\label{app:time_changed_brownian_motion}
In this subsection we develop the time-changed Brownian motion machinery used throughout the SDE derivations in \cref{app:rex_sde}.
We review some necessary preliminary results about continuous local martingales and Brownian motion, and show that we can simplify the stochastic integrals in \cref{eq:root} and the corresponding reparameterization with data/noise prediction models.

\paragraph{Dambis-Dubins-Schwarz Representation Theorem.}
We restate the Dambis-Dubins-Schwarz representation theorem~\citep{dubins1965continuous} which shows that continuous local martingales can be represented as time-changed Brownian motions.
\begin{theorem}[Dambis-Dubins-Schwarz representation theorem]
    \label{thm:dubin_schwarz}
    Let $M$ be a continuous local martingale adapted to a filtration $\{\F_t\}_{t \geq 0}$ beginning at 0 (\ie, $M_0 = 0$) such that $\langle M\rangle_\infty = \infty$ almost surely.
    Define the random variables $\{\tau_t\}_{t \geq 0}$ by
    \begin{equation}
        \tau_t = \inf\;\{s \geq 0 : \langle M \rangle_s > t\} = \sup\;\{s \geq 0 : \langle M \rangle_s = t\}.
    \end{equation}
    Then for any given $t$ the random variable $\tau_t$ is an almost surely finite stopping time, and the process\footnote{Defined up to a null set.} $B_t = M_{\tau_t}$ is a Brownian motion \wrt the filtration $\{\mathcal G_t\}_{t \geq 0} = \{\F_{\tau_t}\}_{t \geq 0}$.
    Moreover,
    \begin{equation}
        M_t = B_{\langle M \rangle_t}.
    \end{equation}
\end{theorem}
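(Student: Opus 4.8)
The plan is to follow the classical route, obtaining $B$ as a time-changed version of $M$ and then identifying it as a Brownian motion via L\'evy's characterization. First I would verify that each $\tau_t$ is a stopping time: since $\langle M\rangle$ is continuous, nondecreasing, adapted, with $\langle M\rangle_0 = 0$ and $\langle M\rangle_\infty = \infty$ almost surely, the event $\{\tau_t \leq s\} = \{\langle M\rangle_s \geq t\}$ lies in $\F_s$, and almost sure finiteness follows from $\langle M\rangle_\infty = \infty$. I would then set $B_t \coloneq M_{\tau_t}$ and $\mathcal G_t \coloneq \F_{\tau_t}$, noting that $(\tau_t)_{t \geq 0}$ is nondecreasing and right-continuous so that $(\mathcal G_t)_{t\geq 0}$ is genuinely a filtration. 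Continuity of $\langle M\rangle$ also gives the defining identity $\langle M\rangle_{\tau_t} = t$, which I will use repeatedly.

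The first substantive step is to show that $B$ is well-defined and continuous despite the possible flat stretches of $\langle M\rangle$. On any interval $[a,b]$ on which $\langle M\rangle$ is constant, $M$ is almost surely constant as well, a standard consequence of the fact that the quadratic variation of $M$ accumulated on $[a,b]$ equals $\langle M\rangle_b - \langle M\rangle_a = 0$. Hence $M_{\tau_t}$ does not depend on which endpoint of a flat stretch the infimum $\tau_t$ selects, and $t \mapsto B_t$ is pathwise well-defined and continuous.

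Next I would establish that $B$ is a continuous local martingale with $\langle B\rangle_t = t$. Applying the optional stopping theorem through a localizing sequence of stopping times (needed because $M$ is only a local martingale, not a true one) shows that $B_t = M_{\tau_t}$ is a $(\mathcal G_t)$-local martingale; applying the same argument to the local martingale $M_t^2 - \langle M\rangle_t$ shows that $B_t^2 - t = M_{\tau_t}^2 - \langle M\rangle_{\tau_t}$ is also a $(\mathcal G_t)$-local martingale, where I have substituted $\langle M\rangle_{\tau_t} = t$. By uniqueness of the compensator this forces $\langle B\rangle_t = t$, and L\'evy's characterization then identifies $B$ as a standard Brownian motion \wrt $(\mathcal G_t)$.

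Finally I would recover the representation $M_t = B_{\langle M\rangle_t}$ by composition. Where $\langle M\rangle$ is strictly increasing at $t$ we have $\tau_{\langle M\rangle_t} = t$, so $B_{\langle M\rangle_t} = M_{\tau_{\langle M\rangle_t}} = M_t$ directly; on a flat stretch of $\langle M\rangle$ both $M$ and $s \mapsto B_{\langle M\rangle_s}$ are constant by the second step, so the identity persists there by continuity. The main obstacle throughout is the careful bookkeeping around these flat stretches --- ensuring $B$ is genuinely well-defined and continuous and that the composition identity survives --- together with the localization required to push optional stopping through for an arbitrary continuous local martingale rather than a uniformly integrable one.
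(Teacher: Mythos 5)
The paper does not prove this statement: it is restated verbatim as a classical result, with the proof deferred to the cited literature \citep{dubins1965continuous}. Your sketch is the standard textbook argument (stopping-time verification for $\tau_t$, the flat-stretch lemma to make $B$ well-defined and continuous, localization plus optional stopping applied to $M$ and $M^2-\langle M\rangle$ to get $\langle B\rangle_t=t$, L\'evy's characterization, and the composition identity), and it is correct as a proof outline; there is nothing in the paper to compare it against.
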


\paragraph{A Multi-Dimensional Version of the Dambis-Dubins-Schwarz Representation Theorem.}
In our work we are interested in a $d$-dimensional local martingale $\bm M \coloneq (M^1, \ldots M^d)$.
As such we discuss a multi-dimensional extension of \cref{thm:dubin_schwarz} which requires that the $d$-dimensional continuous local martingale if the quadratic (covariation) matrix $\langle \bm M \rangle_t^{ij} = \innerprod{M^i}{M^j}_t$ is proportional to the identity matrix.
We adapt the following theorem from \citet[Theorem 2]{lowther2010brownian} and \citet[Theorem 4.13]{bourgade2020stochastic} \citep[\cf][]{revuz2013continuous}.

\begin{theorem}[Multi-dimensional Dambis-Dubins-Schwarz representation theorem]
    \label{thm:dubin_schwarz_multi}
    Let $\bm M = (M^1, \ldots, M^d)$ be a collection of continuous local martingales with $\bm M_0 = \bm 0$ such that for any $1 \leq 1 \leq d, \langle \bm M \rangle_\infty^{ii} = \infty$ almost surely.
    Furthermore, suppose that $\langle M^i, M^j\rangle_t = \delta_{ij} A_t$, where $\delta$ denotes the Kronecker delta, for some process $A$ and all $1\leq i, j \leq d$ and $t \geq 0$.
    Then there is a $d$-dimensional Brownian motion $\bm B$ \wrt a filtration $\{\mathcal G_t\}_{t \geq 0}$ such that for each $t \geq 0$, $\omega \mapsto A_t(\omega)$ is a $\mathcal G$-stopping time and
    \begin{equation}
        \bm M_t = \bm B_{A_t}.
    \end{equation}
\end{theorem}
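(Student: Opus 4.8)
The plan is to reduce to the scalar Dambis-Dubins-Schwarz theorem (\cref{thm:dubin_schwarz}) together with Lévy's characterization of Brownian motion, exploiting the crucial hypothesis that the covariation matrix is a scalar multiple of the identity. The single common clock $A_t$ serves as the quadratic variation of every coordinate simultaneously, so one time change rectifies all $d$ martingales at once; the only genuinely new content beyond the scalar case is establishing the \emph{joint independence} of the resulting coordinate processes, which is precisely where the off-diagonal condition $\langle M^i, M^j\rangle = 0$ for $i \neq j$ enters.

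Concretely, first I would introduce the right-continuous time change $\tau_t \coloneq \inf\{s \geq 0 : A_s > t\}$, noting that $A_t = \langle M^i\rangle_t$ for every $i$, so that $\tau$ is a \emph{single} family of $\{\F_s\}$-stopping times inverting the common clock. Setting $\mathcal G_t \coloneq \F_{\tau_t}$ and $B^i_t \coloneq M^i_{\tau_t}$, the scalar theorem applied to each $M^i$ — which is licit since $\langle M^i\rangle_\infty = A_\infty = \infty$ almost surely — yields that each $B^i$ is a one-dimensional $\{\mathcal G_t\}$-Brownian motion and that $M^i_t = B^i_{\langle M^i\rangle_t} = B^i_{A_t}$. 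This already gives the marginal Brownian property of every coordinate and the desired representation $\bm M_t = \bm B_{A_t}$; the fact that the filtration $\mathcal G_t$ is common to all $i$ (because the clock is) is exactly what lets us treat the $B^i$ jointly.

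The remaining step upgrades $(B^1, \ldots, B^d)$ to a genuine $d$-dimensional Brownian motion, i.e. proves the $B^i$ are mutually independent. Here I would compute the cross-variations of the time-changed processes using the stability of quadratic covariation under time change: $\langle B^i, B^j\rangle_t = \langle M^i, M^j\rangle_{\tau_t} = \delta_{ij} A_{\tau_t}$, and since $A$ is continuous one has $A_{\tau_t} = t$, whence $\langle B^i, B^j\rangle_t = \delta_{ij}\, t$. Each $B^i$ is a continuous $\{\mathcal G_t\}$-local martingale starting at $0$, so the multidimensional Lévy characterization forces $\bm B = (B^1,\ldots,B^d)$ to be a standard $d$-dimensional Brownian motion with respect to $\{\mathcal G_t\}$, completing the proof.

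The main obstacle is the measure-theoretic bookkeeping around the time change rather than any single hard estimate. Two points require care: (i) when $A$ has flat stretches, $\tau$ jumps, and one must check that each $M^i$ is almost surely constant on the corresponding intervals — this holds because the increment of $\langle M^i\rangle = A$ vanishes there — so that $B^i_t = M^i_{\tau_t}$ is well defined and continuous; and (ii) justifying that each $B^i$ is a $\{\mathcal G_t\}$-martingale and that covariation commutes with the time change requires an optional-stopping argument for the time-changed filtration $\mathcal G_t = \F_{\tau_t}$, exactly as in the scalar proof. Once these technical points are in place, the identity $A_{\tau_t} = t$ and Lévy's theorem deliver the result mechanically.
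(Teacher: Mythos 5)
Your proposal is correct, and it is essentially the standard argument. One thing worth noting: the paper does not actually prove \cref{thm:dubin_schwarz_multi} at all --- it is imported by citation from \citet[Theorem 2]{lowther2010brownian} and \citet[Theorem 4.13]{bourgade2020stochastic} --- so there is no in-paper proof to compare against; your argument is the one underlying those references. The structure is right: because the covariation matrix is $\delta_{ij}A_t$, every coordinate shares the single clock $A$, so one family of stopping times $\tau_t = \inf\{s : A_s > t\}$ and one filtration $\mathcal G_t = \F_{\tau_t}$ rectify all coordinates simultaneously via the scalar theorem (\cref{thm:dubin_schwarz}); the identity $\langle B^i, B^j\rangle_t = \langle M^i, M^j\rangle_{\tau_t} = \delta_{ij}A_{\tau_t} = \delta_{ij}t$ plus L\'evy's characterization then upgrades the vector of marginally Brownian coordinates to a genuine $d$-dimensional Brownian motion. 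You also correctly flag the two technical points (constancy of $M^i$ on flat stretches of $A$, and optional stopping for the time-changed filtration). The only piece of the statement you do not explicitly verify is that $\omega \mapsto A_t(\omega)$ is a $\mathcal G$-stopping time; this follows from the inverse relation $\{A_t \leq s\} = \{\tau_s \geq t\} \in \F_{\tau_s} = \mathcal G_s$, and is worth one line. Note also that the paper immediately follows the theorem with a discussion of enlarging the probability space when $\langle M\rangle_\infty$ may be finite, which your proof (like the theorem as stated) does not need to handle since $A_\infty = \infty$ is assumed.
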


\paragraph{Enlargement of the Probability Space.}
Recall that in \cref{thm:dubin_schwarz,thm:dubin_schwarz_multi} we stated that quadratic variation of the continuous local martingale needed to tend towards infinity as $t \to \infty$.
So when does $\langle M \rangle_\infty$ have a nonzero probability of being finite?
It can be shown that \cref{thm:dubin_schwarz,thm:dubin_schwarz_multi} holds under an enlargement of the probability space (not the filtration).
Consider both our original probability space $(\Omega, \mathcal F, P)$ and another probability space $(\Omega',\mathcal F', P')$ along with a measurable surjection $f: \Omega' \to \Omega$ preserving probabilities such that $P(A) = P'(f^{-1}(A))$ for all $A \in \mathcal F$, \ie, $f_*P'$ is a pushforward measure.
Thus any process on the original probability space can be \textit{lifted} to $(\Omega', \mathcal F', P')$ and likewise the filtration is also lifted to $\F_t' = \{f^{-1}(A) : A \in \F_t\}$.
Therefore, it is possible to enlarge the probability space so that Brownian motion is defined.
\Eg, if $(\Omega'', \F'', P'')$ is probability space on which there is a Brownian motion defined, we can take $\Omega' = \Omega \times \Omega''$, $\F' = \F \otimes \F''$, and $P' = P \otimes P''$ for the enlargement, and $f: '\Omega \to \Omega$ is just the projection onto $\Omega$.

\paragraph{Data Prediction.}
We now present a lemma for rewriting the stochastic differential in \cref{eq:diffusion_sde_reverse_data} using the Dambis-Dubins-Schwarz representation theorem.
Recall that in \cref{eq:diffusion_sde_reverse_data} we denote the reverse-time $d$-dimensional Brownian motion as $\overline \bfW_t$, \ie, by L\'evy's characterization we have $\overline \bfW_T = \bm 0$ and
\begin{equation}
    \overline \bfW_t - \overline \bfW_s \sim -\mathcal N(\bm 0, (t - s) \bm I) = \mathcal{N}(\bm 0, (t-s) \bm I),
\end{equation}
for $0 \leq t < s \leq T$.
With this in mind we present \cref{lemma:timechange_brownian} below.

\begin{lemma}
    \label{lemma:timechange_brownian} 
    The stochastic differential $\sqrt{-\frac{\rmd \varrho_t}{\rmd t}}\; \rmd \overline\bfW_t$ can be rewritten as a time-changed Brownian motion of the form
    \begin{equation}
        \sqrt{-\frac{\rmd \varrho_t}{\rmd t}}\; \rmd \overline\bfW_t = \rmd\bfW_{\varrho},
    \end{equation}
    where $\varrho_t = \gamma_t^2$.
\end{lemma}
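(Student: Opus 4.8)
The plan is to realize the left-hand differential as the increment of a $d$-dimensional continuous local martingale and then invoke the multi-dimensional Dambis-Dubins-Schwarz representation theorem (\cref{thm:dubin_schwarz_multi}) to identify it, after a deterministic time change, with a Brownian motion run on the $\varrho$-clock. First I would define the process $\bm M_t \coloneq \int_T^t \sqrt{-\frac{\rmd \varrho_u}{\rmd u}}\; \rmd \overline\bfW_u$, which is a continuous local martingale with $\bm M_T = \bm 0$ (recall $\overline\bfW_T = \bm 0$). Because $\gamma_t = \alpha_t/\sigma_t$ is strictly decreasing, $\varrho_t = \gamma_t^2$ is strictly decreasing as well, so $-\frac{\rmd \varrho_t}{\rmd t} > 0$ and the integrand is real and strictly positive; this is exactly the regularity needed for the stochastic integral to be well-defined.

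The second step is to compute the quadratic covariation. Since the coordinate processes of $\overline\bfW$ are independent standard (reverse-time) Brownian motions, the It\^o isometry gives $\langle M^i, M^j\rangle = \delta_{ij}\int \left(-\frac{\rmd \varrho_u}{\rmd u}\right)\rmd u$, measured along the direction of integration. Tracking the reverse-time orientation, where $\rmd \varrho_t = \frac{\rmd \varrho_t}{\rmd t}\,\rmd t > 0$ precisely because both factors are negative, the accumulated covariation is the strictly increasing process $A_t = \varrho_t - \varrho_T$, so that $\langle \bm M\rangle_t = A_t\,\bm I$ is proportional to the identity. This is exactly the hypothesis $\langle M^i, M^j\rangle_t = \delta_{ij} A_t$ required by \cref{thm:dubin_schwarz_multi}.

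Third, since $A_t$ stays finite on the bounded interval rather than tending to infinity, I would pass to the enlargement of the probability space described above so that a driving Brownian motion is available; applying the multi-dimensional Dambis-Dubins-Schwarz theorem then yields a $d$-dimensional Brownian motion $\bm B$ (with respect to the appropriate enlarged filtration) such that $\bm M_t = \bm B_{A_t}$. Defining $\bfW_\varrho \coloneq \bm B_{\varrho - \varrho_T}$, which is again a standard Brownian motion but now indexed by the clock $\varrho$, gives $\bm M_t = \bfW_{\varrho_t}$; differentiating in $\varrho$ recovers $\sqrt{-\frac{\rmd \varrho_t}{\rmd t}}\; \rmd \overline\bfW_t = \rmd\bfW_{\varrho}$, as claimed.

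The main obstacle I anticipate is the careful bookkeeping of the reverse-time orientation: one must verify that $A_t$ is genuinely increasing (so that it is a legitimate clock) and equals $\varrho_t$ up to the additive constant $\varrho_T$, which matters because the SDE is integrated from $T$ downward and the naive sign conventions are easy to get backwards. By contrast, checking that $\langle \bm M\rangle_t$ is a scalar multiple of the identity, and invoking the probability-space enlargement to handle the finite-quadratic-variation case, are comparatively routine given the preliminaries already established.
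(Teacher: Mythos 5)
Your proposal is correct and follows essentially the same route as the paper's proof: define the local martingale $\bm M_t = \int_T^t \sqrt{-\dot\varrho_u}\,\rmd\overline\bfW_u$, compute $\langle \bm M\rangle_t = (\varrho_t - \varrho_T)\bm I$, invoke the multi-dimensional Dambis--Dubins--Schwarz theorem (with the probability-space enlargement), and absorb the constant $\varrho_T$ via time-shift invariance. The only cosmetic difference is that the paper first rewrites $\rmd\overline\bfW_t = -\rmd\bfW_t$ via It\^o's lemma before computing the quadratic variation, whereas you handle the reverse-time orientation directly in the covariation bookkeeping; both are valid.
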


\begin{proof}
    To simplify the stochastic integral term, we first define a continuous local martingale $\bm M_t$  via the stochastic integral
    \begin{equation}
        \label{eq:proof:martingale_data_sde}
        \bm M_t \coloneq \int_T^t \sqrt{-\frac{\rmd \varrho}{\rmd t}}\; \rmd \overline\bfW_t.
    \end{equation}
    We choose time $T$ as our starting point for the martingale rather than 0 and then integrate in \textit{reverse-time}.
    However, due to the negative sign within the square root term, it is more convenient to work with $\bfW_t$, \ie, the standard $d$-dimensional Brownian motion defined in forward time.
    Recall that the standard $d$-dimensional Brownian motion in \textit{reverse-time} with starting point $T$ is defined as
    \begin{equation}
        \overline\bfW_t = \bfW_T - \bfW_t
    \end{equation}
    which is distributed like $\bfW_t$ in time $T - t$.
    Define the function $\bsf(t, \bfW_t) = \overline\bfW_t$.
    Then by It\^o's lemma we have
    \begin{equation}
        \rmd \bsf(t, \bfW_t) = \partial_t \bsf(t, \bfW_t) \;\rmd t + \sum_{i=1}^d \partial_{\bfx_i} \bsf(t, \bfW_t) \; \rmd \bfW_t^i + \frac{1}{2}\sum_{i,j=1}^d \partial_{\bfx_i,\bfx_j} \bsf(t, \bfW_t) \; \rmd \innerprod{\bfW^i}{\bfW^j}_t,
    \end{equation}
    which simplifies to
    \begin{equation}
        \rmd \bsf(t, \bfW_t) = \rmd \overline \bfW_t = - \rmd \bfW_t.
    \end{equation}
    Thus we can rewrite \cref{eq:proof:martingale_data_sde} as
    \begin{equation}
        \label{eq:proof:martingale_data_sde2}
        \bm M_t = - \int_T^t \sqrt{- \frac{\rmd \varrho}{\rmd t}} \; \rmd \bfW_t.
    \end{equation}
    
    Next we establish a few properties of this martingale.
    First, $\boldsymbol{M}_T = \bm 0$ by construction.
    Second, since the integral consists of scalar noise, we have that $\langle M^i, M^j \rangle_t = 0$ for all $i \neq j$.
    Thus, the quadratic variation of $\langle\boldsymbol{M}_t\rangle^{ii}$ for each $i$ is found to be
    \begin{align}
        \langle\boldsymbol{M}\rangle_t^{ii} = A_t &= -\int_T^t \bigg(\sqrt{-\frac{\rmd \varrho_\tau}{\rmd \tau}}\bigg)^2 \; \rmd \tau,\\
        &= \int_T^t \frac{\rmd \varrho_\tau}{\rmd \tau} \; \rmd \tau,\\
        &= \varrho_t - \varrho_T = \frac{\alpha_t^2}{\sigma_t^2} - \frac{\alpha_T^2}{\sigma_T^2}.
    \end{align}
    Now we have a deterministic mapping from the original time to our new time via $A_t$.
    Now in general for any valid choice of $(\alpha_t, \sigma_t)$ we do not necessarily have that $\langle \boldsymbol{M} \rangle_\infty^{ii} = \infty$ almost surely and as such we may need to enlarge the underlying probability space.
    Our constructed martingale can be expressed as time-changed Brownian motion, see~\cref{thm:dubin_schwarz_multi}, such that $\boldsymbol{M}_t = \bfW_{A_t}$ were $\bfW_\varrho$ is the standard $d$-dimensional Brownian motion with time variable $\varrho$.

    Now we can rewrite \cref{eq:proof:martingale_data_sde2} in differential form as
    \begin{equation}
        \rmd \bm M_t = \rmd \bfW_{A_t}.
    \end{equation}
    Because Brownian motion is time-shift invariant, we can then write
    \begin{equation}
        \rmd \bm M_t = \rmd \bfW_{\varrho_t}.
    \end{equation}
\end{proof}

\begin{remark}
    \cref{lemma:timechange_brownian} can similarly be found via \citet[Theorem 8.5.7]{oksendal2003stochastic} and \citet[Lemma 2.3]{kobayashi2011stochastic}; however, due to the oddness of the \textit{reverse-time} integration, we found it easier to tackle the problem via the Dambis-Dubins-Schwarz theorem.
\end{remark}

\begin{remark}
    Under the common scenario where $\sigma_0 = 0$, then we have that $\langle \bm M\rangle_\infty^{ii} = \infty$ almost surely.
\end{remark}

\paragraph{Noise Prediction.}
Next we discuss the stochastic integral used in the noise prediction formulation.

\begin{lemma}
    \label{lemma:timechange_brownian_noise} 
    Let $\alpha_T > 0$. Then the stochastic differential $\sqrt{\frac{\rmd }{\rmd t}\left(\chi_t^2\right)} \; \rmd \overline\bfW_t$ can be rewritten as a time-changed Brownian motion of the form
    \begin{equation}
        \sqrt{\frac{\rmd }{\rmd t}\left(\chi_t^2\right)} \; \rmd \overline\bfW_t = \rmd\overline\bfW_{\chi^2},
    \end{equation}
    where $\chi_t = \frac{\sigma_t}{\alpha_t}$.
\end{lemma}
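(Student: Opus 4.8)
The plan is to reproduce the argument of \cref{lemma:timechange_brownian} with the data variable $\varrho_t$ replaced by the noise variable $\chi_t^2$, and then to track the single sign flip that converts the forward-time Brownian motion of that lemma into the reverse-time Brownian motion claimed here. First I would introduce the continuous local martingale
\begin{equation}
    \bm M_t \coloneq \int_T^t \sqrt{\tfrac{\rmd}{\rmd\tau}\!\left(\chi_\tau^2\right)}\;\rmd\overline\bfW_\tau,
\end{equation}
started at $\bm M_T = \bm 0$ and integrated in reverse-time from $T$. The structural difference from \cref{lemma:timechange_brownian} is that $\chi_t = \sigma_t/\alpha_t$ is strictly \emph{increasing} (since $\sigma_t$ increases while $\alpha_t$ decreases), so $t \mapsto \chi_t^2$ is strictly increasing with $\tfrac{\rmd}{\rmd t}(\chi_t^2) > 0$; hence the integrand is real \emph{without} the sign correction $\sqrt{-\rmd\varrho_t/\rmd t}$ needed in the data case. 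The hypothesis $\alpha_T > 0$ enters precisely to keep $\chi_T = \sigma_T/\alpha_T$, and therefore the integrand on $[0,T]$, finite.

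Next I would check the hypotheses of \cref{thm:dubin_schwarz_multi}. Scalar noise forces the off-diagonal covariations to vanish, $\langle M^i, M^j\rangle_t = 0$ for $i \neq j$, and the common diagonal entry, following the same computation as in \cref{lemma:timechange_brownian} (the reverse-time direction supplying the leading sign), is
\begin{equation}
    A_t = \langle\bm M\rangle_t^{ii} = -\int_T^t \tfrac{\rmd}{\rmd\tau}\!\left(\chi_\tau^2\right)\rmd\tau = \chi_T^2 - \chi_t^2 \;\geq\; 0, \qquad t \in [0,T].
\end{equation}
Because $A_0 = \chi_T^2 - \chi_0^2$ is finite, $\langle\bm M\rangle_\infty^{ii}$ need not diverge, so I would invoke the enlargement of the probability space set up before \cref{lemma:timechange_brownian} in order to apply \cref{thm:dubin_schwarz_multi}, yielding a standard $d$-dimensional Brownian motion $\bm B$ with $\bm M_t = \bm B_{A_t} = \bm B_{\chi_T^2 - \chi_t^2}$.

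The main point to get right---and the place the noise case genuinely differs from the data case---is the final relabeling. In \cref{lemma:timechange_brownian} the clock $A_t = \varrho_t - \varrho_T$ moved in the \emph{same} direction as the index $\varrho_t$ (both increasing as $t$ decreases), producing the forward-time $\bfW_\varrho$. Here $A_t = \chi_T^2 - \chi_t^2$ is an affine \emph{decreasing} function of the index $\chi_t^2$, so $\bm B$ advances exactly as $\chi_t^2$ decreases. I would therefore define $\overline\bfW_u \coloneq \bm B_{\chi_T^2 - u}$, which by time-shift invariance is a Brownian motion read with a reversed index and satisfies $\overline\bfW_{\chi_T^2} = \bm 0$, i.e.\ a reverse-time Brownian motion in the variable $u$ in the sense used earlier in the excerpt. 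Then $\bm M_t = \overline\bfW_{\chi_t^2}$, and passing to differentials gives $\rmd\bm M_t = \rmd\overline\bfW_{\chi^2}$, as claimed. I expect the only delicate step to be justifying that this reversed-index process is again a (reverse-time) Brownian motion rather than the forward one; everything else transfers essentially verbatim from the data-prediction proof.
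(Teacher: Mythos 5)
Your proposal is correct and follows essentially the same route as the paper's proof: define the local martingale started at $T$, compute its quadratic variation, invoke the multi-dimensional Dambis--Dubins--Schwarz theorem (with the probability-space enlargement), and relabel the resulting process as a reverse-time Brownian motion in the $\chi^2$ clock. If anything you are more careful than the paper, which records the quadratic variation as $\chi_t^2 - \chi_T^2$ (negative for $t < T$) where your $A_t = \chi_T^2 - \chi_t^2 \geq 0$ is the correct sign, and which leaves the final reverse-time relabeling implicit where you construct $\overline\bfW_u = \bm B_{\chi_T^2 - u}$ explicitly and verify it matches the paper's convention $\overline\bfW_{\chi_t^2} = \bfW_{\chi_T^2} - \bfW_{\chi_t^2}$.
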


\begin{proof}
    To simplify the stochastic integral term, we first define a continuous local martingale $\bm M_t$  via the stochastic integral
    \begin{equation}
        \bm M_t \coloneq \int_T^t \sqrt{\frac{\rmd}{\rmd t}\left(\chi_t^2\right)} \; \rmd \overline\bfW_t.
    \end{equation}
    We choose time $T$ as our starting point for the martingale rather than 0 and then integrate in \textit{reverse-time}, hence the negative sign.
    Next we establish a few properties of this martingale.
    First, $\boldsymbol{M}_T = \bm 0$ by construction.
    Second, since the integral consists of scalar noise, we have that $\langle M^i, M^j \rangle_t = 0$ for all $i \neq j$.
    Thus, the quadratic variation of $\langle\boldsymbol{M}_t\rangle^{ii}$ for each $i$ is found to be
    \begin{align}
        \langle\boldsymbol{M}\rangle_t^{ii} = A_t &= \int_T^t \bigg(\sqrt{\frac{\rmd}{\rmd \tau}\left(\chi_\tau^2\right)}\bigg)^2 \; \rmd \tau,\\
        &= \int_T^t \frac{\rmd}{\rmd \tau} \left(\chi_t^2\right)\; \rmd \tau,\\
        &= \chi_t^2 - \chi_T^2 = \frac{\sigma_t^2}{\alpha_t^2} - \frac{\sigma_T^2}{\alpha_T^2}.
    \end{align}
    Now we have a deterministic mapping from the original time to our new time via $A_t$.
    Now in general for any valid choice of $(\alpha_t, \sigma_t)$ we do not necessarily have that $\langle \boldsymbol{M} \rangle_\infty^{ii} = \infty$ almost surely and as such we may need to enlarge the underlying probability space.
    Our constructed martingale can be expressed as time-changed Brownian motion, see~\cref{thm:dubin_schwarz_multi}, such that $\boldsymbol{M}_t = \overline\bfW_{A_t}$ were $\overline\bfW_{\chi^2}$ is the standard $d$-dimensional Brownian motion with time variable $\chi^2$ in reverse-time.

    Now we can rewrite \cref{eq:proof:martingale_data_sde} in differential form as
    \begin{equation}
        \rmd \bm M_t = \rmd \overline\bfW_{A_t}.
    \end{equation}
    Because Brownian motion is time-shift invariant, we can then write
    \begin{equation}
        \rmd \bm M_t = \rmd \overline\bfW_{\chi_t^2}.
    \end{equation}
\end{proof}

\begin{remark}
    The constraint of $\alpha_T > 0$ is important to ensure that $\chi_T$ is finite which is necessary due to
    \begin{equation}
        \overline \bfW_{\chi_t^2} = \bfW_{\chi_T^2} - \bfW_{\chi_t^2}.
    \end{equation}
    In practice this is satisfied with a number of noise schedules of diffusion models (see \cref{app:close_form_schedule}).
\end{remark}

\section{Derivation of Rex}
\label{app:deriv_rex}
We derive the \rex scheme presented in \cref{prop:rex} in the main paper.

\paragraph{Nomenclature.}
As alluded to earlier, there exist two popular reparameterizations of the score function which are used widely in practice, namely the noise prediction \citep{ho2020denoising} and data prediction \citep{kingma2021variational} formulations.
Following the conventions of \citet{lipman2024flow-guide}, we write noise prediction model as $\bfx_{T|t}(\bfx) = \ex[\bfX_T | \bfX_t = \bfx]$ and write data prediction model as $\bfx_{0|t}(\bfx) = \ex[\bfX_0 | \bfX_t = \bfx]$.
Throughout this paper we will assume the existence of \textit{sufficiently trained} diffusion models denoted $\bfx_{T|t}^\theta(\bfx) \approx \bfx_{T|t}(\bfx)$ and $\bfx_{0|t}^\theta(\bfx) \approx \bfx_{0|t}(\bfx)$.

\subsection{Rex (ODE)}
\label{app:rex_ode}

In this section we derive the \rex scheme for the probability flow ODE.
We present derivations for both the data prediction and noise prediction formulations.
It is well known
(\citeauthor{blasingame2025greed}, \citeyear{blasingame2025greed}, Equation (19); \cf \citeauthor{domingo-enrich2025adjoint}, \citeyear{domingo-enrich2025adjoint}, Equation (8))
that the ODE in \cref{eq:pf_ode} can be rewritten as
\begin{equation}
    \label{eq:pf_ode_general}
    \frac{\rmd \bfx_t}{\rmd t} = \frac{\dot\beta_t}{\beta_t} \bfx_t + \frac{\sigma_t\dot\alpha_t - \dot\sigma_t\alpha_t}{\beta_t} \bsf_\theta(t, \bfx_t),
\end{equation}
where $\beta_t = -\alpha_t$ for noise prediction and $\beta_t = \sigma_t$ for target prediction. This choice of $\beta$ and $\bsf_\theta$ thus depends on the particulars of the noise or data reparameterization.

\begin{remark}
    Without loss of generality any of the results for the probability flow ODE apply to any arbitrary flow model which models an \textit{affine probability path} \citep{lipman2024flow-guide} with the correct conversions to the flow matching conventions.\footnote{\Ie, sampling in forward-time such that $\bfX_1 \sim q(\bfX)$ and $\bfX_0 \sim p(\bfX)$.}
\end{remark}

It is well observed that the structure of the ODE in \cref{eq:pf_ode_general} can be greatly simplified via \textit{exponential integrators} \citep{lu2022dpmsolver,zhangfast,blasingame2024adjointdeis}.
We make use of this insight to rewrite the ODE in a form which eliminates the discretization error in the $f(t) \bfx_t$ linear term along with a time reparameterization which will simplify the construction of the reversible solver.
To accomplish this, we use exponential integrators in the form of a change-of-variables with $\bfy_t = \exp(-\int_T^t \frac{\dot\beta_u}{\beta_u}\;\rmd u) \bfx_t = \frac{\beta_T}{\beta_t}\bfx_t$.
\NB, we integrate from time $T$ to $t$ because the ODE in \cref{eq:pf_ode_general} is defined in \textit{reverse-time}.
To achieve the time reparametrization, we introduce a new variable $\varsigma_t$ defined as the \textit{signal-to-noise ratio} (SNR) $\alpha_t/\sigma_t$ for the data prediction formulation and defined as the inverse SNR $\sigma_t/\alpha_t$ for the noise prediction formulation.
Using this time change we find \cref{prop:reparam_general_ode}, in \cref{proof:reparam_general_ode} we provide the full derivation of this result.

\subsubsection{Proof of \texorpdfstring{\cref{prop:reparam_general_ode}}{Reparameterized ODE}}
\label{proof:reparam_general_ode}

We state \cref{prop:reparam_general_ode} below.

\begin{restatable}{proposition}{reparamode}
    \label{prop:reparam_general_ode} 
    The probability flow ODE in \cref{eq:pf_ode_general} can be rewritten in $\varsigma_t$ as
    \begin{equation}
        \label{eq:reparam_general_ode}
        \frac{\rmd \bfy_\varsigma}{\rmd \varsigma} = |\beta_T| \bsf_\theta\left(\varsigma, \frac{\beta_\varsigma}{\beta_T} \bfy_\varsigma\right),
    \end{equation}
    where $\bfy_t = \frac{\beta_T}{\beta_t} \bfx_t$ and $\varsigma_t = \frac{\alpha_t\sigma_t}{\beta_t^2}$.
\end{restatable}

\begin{proof}
    Recall that from \cref{eq:pf_ode_general} we have that the ODE is given by
    \begin{equation}
        \frac{\rmd \bfx_t}{\rmd t} = \frac{\dot\beta_t}{\beta_t} \bfx_t + \frac{\sigma_t\dot\alpha_t - \dot\sigma_t\alpha_t}{\beta_t} \bsf_\theta(t, \bfx_t).
    \end{equation}
    We can use the technique of exponential integrators to rewrite the ODE as
    \begin{equation}
        \frac{\rmd}{\rmd t}\left[e^{\int_T^t - \frac{\dot \beta_u}{\beta_u}\;\rmd u} \bfx_t\right] = e^{\int_T^t - \frac{\dot \beta_u}{\beta_u}\;\rmd u} \frac{\sigma_t\dot\alpha_t - \dot\sigma_t\alpha_t}{\beta_t} \bsf_\theta(t, \bfx_t),
    \end{equation}
    recalling that we integrate from initial time $T$ in reverse-time.
    Then the exponential terms simplify to
    \begin{equation}
        e^{\int_T^t - \frac{\dot \beta_u}{\beta_u}\;\rmd u} = \frac{\beta_T}{\beta_t}.
    \end{equation}
    We introduce a \textit{change-of-variables} $\bfy_t = \frac{\beta_T}{\beta_t} \bfx_t$ to rewrite the ODE as
    \begin{equation}
        \frac{\rmd \bfy_t}{\rmd t} = \underbrace{\frac{\beta_T}{\beta_t} \frac{\sigma_t\dot\alpha_t - \dot\sigma_t\alpha_t}{\beta_t}}_{=\upsilon_t}\bsf_\theta\left(t, \frac{\beta_t}{\beta_T}\bfy_t\right).
    \end{equation}
    Next we define
    \begin{equation}
        \dot\varsigma_t = \mathrm{sgn}(\beta_T)\frac{\sigma_t\dot\alpha_t - \dot\sigma_t\alpha_t}{\beta_t^2},
    \end{equation}
    which we will now justify.
    Now recall that $\beta_t$ is either $-\alpha_t$ or $\sigma_t$ depending on whether $\bsf_\theta$ denotes the data or noise prediction model.
    Moreover we know that $\alpha_t$ is a strictly monotonically decreasing in $t$ and that $\sigma_t$ is strictly monotonically increasing in $t$.
    We will now prove that there exists an inverse function for $\varsigma_t$ such that $t_\varsigma(\varsigma_t) = t$ for both cases.
    
    \textbf{Case $\beta_t = -\alpha_t$.} 
    We can write $\upsilon_t$ as
    \begin{align}
        \upsilon_t &= \alpha_T\frac{\dot\sigma_t\alpha_t - \sigma_t\dot\alpha_t}{\alpha_t^2},\\
        &\stackrel{(i)}= \alpha_T \frac{\rmd}{\rmd t}\left(\frac{\sigma_t}{\alpha_t}\right),
    \end{align}
    where (i) holds by the quotient rule. 
    Clearly, we have that
    \begin{align}
        \dot\varsigma_t &= \frac{\rmd}{\rmd t}\left(\frac{\sigma_t}{\alpha_t}\right),\\
        \varsigma_t &= \frac{\sigma_t}{\alpha_t},
    \end{align}
    It follows from $(\alpha_t, \sigma_t)$ that $\varsigma_t$ is strictly monotonically increasing in $t$ and thus we can construct its inverse.
    
    \textbf{Case $\beta_t = \sigma_t$.} 
    We can write $\upsilon_t$ as
    \begin{align}
        \upsilon_t &= \sigma_T\frac{\sigma_t\dot\alpha_t - \dot\sigma_t\alpha_t}{\sigma_t^2},\\
        &\stackrel{(i)}= \sigma_T \frac{\rmd}{\rmd t}\left(\frac{\alpha_t}{\sigma_t}\right),
    \end{align}
    where (i) holds by the quotient rule. 
    Clearly, we have that
    \begin{align}
        \dot\varsigma_t &= \frac{\rmd}{\rmd t}\left(\frac{\alpha_t}{\sigma_t}\right),\\
        \varsigma_t &= \frac{\alpha_t}{\sigma_t},
    \end{align}
    It follows from $(\alpha_t, \sigma_t)$ that $\varsigma_t$ is strictly monotonically decreasing in $t$ and thus we can construct its inverse.

    Thus, we can rewrite the ODE via a time-change to find
    \begin{equation}
        \frac{\rmd \bfy_\varsigma}{\rmd \varsigma} = |\beta_T| \bsf_\theta\left(\varsigma, \frac{\beta_\varsigma}{\beta_T} \bfy_\varsigma \right),
    \end{equation}
    with the usual \textit{abuse-of-notation} $\bfy_\varsigma \coloneq \bfy_{t_\varsigma(\varsigma)}$, $\beta_\varsigma \coloneq \beta_{t_\varsigma(\varsigma)}$, \etc.
\end{proof}

\begin{remark}
    When in the noise prediction formulation with \cref{prop:reparam_general_ode} we recover the following reparameterization of \cref{eq:pf_ode_general}
    \begin{equation}
        \label{eq:reparam_noise_ode}
        \frac{\rmd \bfz_\chi}{\rmd \chi} = \alpha_T \bfx_{T|\chi}^\theta\left(\frac{\alpha_\chi}{\alpha_T}\bfz_\chi\right),
    \end{equation}
            where $\alpha_T > 0$, $\bfz_t = \frac{\alpha_T}{\alpha_t}\bfx_t$ and $\chi_t = \frac{\sigma_t}{\alpha_t}$, which has been observed by numerous prior works (see \citeauthor{song2021denoising}, \citeyear{song2021denoising}, Equation (14); \citeauthor{pan2023adjointsymplectic}, \citeyear{pan2023adjointsymplectic}, Equation (11); \citeauthor{wang2024belm}, \citeyear{wang2024belm}, Equation (6)).
\end{remark}

\begin{remark}
     When in the data prediction formulation, \cref{prop:reparam_general_ode} recovers \citet[Proposition D.2]{blasingame2025greed} which states that \cref{eq:pf_ode_general} can be written as
     \begin{equation}
        \label{eq:reparam_data_ode}
        \frac{\rmd \bfy_\gamma}{\rmd \gamma} = \sigma_T \bfx_{0|\gamma}^\theta\left(\frac{\sigma_\gamma}{\sigma_T} \bfy_\gamma\right),
    \end{equation}
    where $\bfy_t = \frac{\sigma_T}{\sigma_t}\bfx_t$ and $\gamma_t = \frac{\alpha_t}{\sigma_t}$.
\end{remark}

\subsubsection{Data Prediction}
We present this derivation in the form of \cref{lemma:rex_ode} below.

\begin{lemma}[\rex (ODE) for data prediction models]
    \label{lemma:rex_ode} 
    Let $\bm \Phi$ be an explicit Runge-Kutta solver for the ODE in \cref{eq:reparam_data_ode} with Butcher tableau $a_{ij}$, $b_i$, $c_i$.
    The reversible solver for $\bm \Phi$ in terms of the original state $\bfx_t$ is given by the forward step
    \begin{equation}
        \begin{aligned}
            \bfx_{n+1} &= \frac{\sigma_{n+1}}{\sigma_n}\left(\zeta \bfx_n + (1-\zeta)\hat\bfx_{n}\right) + \sigma_{n+1}\bm \Psi_h(\gamma_n, \hat\bfx_n),\\
            \hat\bfx_{n+1} &= \frac{\sigma_{n+1}}{\sigma_n}\hat\bfx_n - \sigma_{n+1}\bm \Psi_{-h}(\gamma_{n+1}, \bfx_{n+1}),
        \end{aligned}
    \end{equation}
    and backward step
    \begin{equation}
        \begin{aligned}
            \hat\bfx_n &= \frac{\sigma_n}{\sigma_{n+1}}\hat\bfx_{n+1} + \sigma_{n}\bm \Psi_{-h}(\gamma_{n+1}, \bfx_{n+1}),\\
            \bfx_{n} &= \frac{\sigma_n}{\sigma_{n+1}}\zeta^{-1}\bfx_{n+1} + (1-\zeta^{-1}) \hat\bfx_n - \sigma_n\zeta^{-1}  \bm \Psi_h(\gamma_n, \hat\bfx_n),
        \end{aligned}
    \end{equation}
    with step size $h \coloneq \gamma_{n+1} - \gamma_n$ and where $\bm \Psi$ denotes the following scheme
    \begin{equation}
        \begin{aligned}
            \hat\bfz_i &= \frac{1}{\sigma_n} \bfx_n + h \sum_{j=1}^{i-1} a_{ij} \bfx_{0|\gamma_n + c_jh}^\theta(\sigma_{\gamma_n + c_j h} \hat\bfz_j),\\
            \bm \Psi_h(\gamma_n, \bfx_n) &= h \sum_{i=1}^s b_i \bfx_{0|\gamma_n + c_ih}^\theta(\sigma_{\gamma_n + c_i h} \hat\bfz_i),
        \end{aligned} 
    \end{equation}
\end{lemma}

\begin{proof}
Recall that the forward step of the McCallum-Foster method for \cref{eq:reparam_data_ode} given $\bm \Phi$ is given as
\begin{equation}
    \begin{aligned}
        \bfy_{n+1} &= \zeta \bfy_n + (1-\zeta)\hat\bfy_{n} + \bm \Phi_h(\gamma_n, \hat\bfy_n),\\
        \hat\bfy_{n+1} &= \hat\bfy_n - \bm \Phi_{-h}(\gamma_{n+1}, \bfy_{n+1}),
    \end{aligned}
\end{equation}
with step size $h = \gamma_{n+1} - \gamma_n$.
We use the definition of $\bfy_t = \frac{\sigma_T}{\sigma_t}\bfx_t$ to rewrite the forward pass as
\begin{equation}
    \begin{aligned}
        \bfx_{n+1} &= \frac{\sigma_{n+1}}{\sigma_n}\left(\zeta \bfx_n + (1-\zeta)\hat\bfx_{n}\right) + \frac{\sigma_{n+1}}{\sigma_T}\bm \Phi_h\left(\gamma_n, \frac{\sigma_T}{\sigma_{n}}\hat\bfx_n\right),\\
        \hat\bfx_{n+1} &= \frac{\sigma_{n+1}}{\sigma_n}\hat\bfx_n - \frac{\sigma_{n+1}}{\sigma_T}\bm \Phi_{-h}\left(\gamma_{n+1}, \frac{\sigma_{T}}{\sigma_{n+1}}\bfx_{n+1}\right).
    \end{aligned}
\end{equation}
\textit{Mutatis mutandis} we find the backward step in $\bfx_t$ to be given as
\begin{equation}
    \begin{aligned}
        \hat\bfx_n &= \frac{\sigma_n}{\sigma_{n+1}}\hat\bfx_{n+1} + \frac{\sigma_{n}}{\sigma_T}\bm \Phi_{-h}\left(\gamma_{n+1}, \frac{\sigma_T}{\sigma_{n+1}}\bfx_{n+1}\right),\\
        \bfx_{n} &= \frac{\sigma_n}{\sigma_{n+1}}\zeta^{-1}\bfx_{n+1} + (1-\zeta^{-1}) \hat\bfx_n - \frac{\sigma_n}{\sigma_T}\zeta^{-1}  \bm \Phi_h\left(\gamma_n, \frac{\sigma_T}{\sigma_n}\hat\bfx_n\right),
    \end{aligned}
\end{equation}

Next we simplify the explicit RK scheme $\bm \Phi(\gamma_n, \bfy_n)$ for the time-changed probability flow ODE in \cref{eq:reparam_data_ode}.
Recall that the RK scheme can be written as
\begin{equation}
    \label{eq:proofs:rk_method_add_ys}
    \begin{aligned}
        \bfz_{i} &= \bfy_n + h \sum_{j=1}^{i-1} a_{ij} \sigma_T \bfx_{0|\gamma_n + c_jh} \left(\frac{\sigma_{\gamma_n + c_jh}}{\sigma_T}\bfz_j\right),\\
        \bm\Phi_h(\gamma_n, \bfy_n) &= h \sum_{i=1}^s b_{i} \sigma_T \bfx_{0|\gamma_n + c_ih} \left(\frac{\sigma_{\gamma_n + c_ih}}{\sigma_T}\bfz_i\right).
    \end{aligned}
\end{equation}
Next, we replace $\bfy_t$ back with $\bfx_t$ which yields
\begin{equation}
    \begin{aligned}
        \bfz_i &= \sigma_T\left(\frac{1}{\sigma_n}\bfx_n + h \sum_{j=1}^{i-1} a_{ij} \bfx_{0|\gamma_n + c_j h}\left(\frac{\sigma_{\gamma_n + c_j h}}{\sigma_T} \bfz_j \right)\right),\\
        \bm\Phi_h\left(\gamma_n, \frac{\sigma_T}{\sigma_n}\bfx_n\right) &= \sigma_T h \sum_{i=1}^s b_{i} \bfx_{0|\gamma_n + c_ih} \left(\frac{\sigma_{\gamma_n + c_ih}}{\sigma_T}\bfz_i\right).
    \end{aligned}
\end{equation}
To further simplify, let $\sigma_T\hat{\bfz}_i = \bfz_i$ and define $\bm\Psi_h(\gamma_n, \bfx_n) \coloneq \sigma_T\bm \Phi(\gamma_n, \frac{\sigma_T}{\sigma_n}\bfx_n)$.

Thus we can write the following reversible scheme with forward step
\begin{equation}
    \begin{aligned}
        \bfx_{n+1} &= \frac{\sigma_{n+1}}{\sigma_n}\left(\zeta \bfx_n + (1-\zeta)\hat\bfx_{n}\right) + \sigma_{n+1}\bm \Psi_h(\gamma_n, \hat\bfx_n),\\
        \hat\bfx_{n+1} &= \frac{\sigma_{n+1}}{\sigma_n}\hat\bfx_n - \sigma_{n+1}\bm \Psi_{-h}(\gamma_{n+1}, \bfx_{n+1}),
    \end{aligned}
\end{equation}
and the backward step 
\begin{equation}
    \begin{aligned}
        \hat\bfx_n &= \frac{\sigma_n}{\sigma_{n+1}}\hat\bfx_{n+1} + \sigma_{n}\bm \Psi_{-h}(\gamma_{n+1}, \bfx_{n+1}),\\
        \bfx_{n} &= \frac{\sigma_n}{\sigma_{n+1}}\zeta^{-1}\bfx_{n+1} + (1-\zeta^{-1}) \hat\bfx_n - \sigma_n\zeta^{-1}  \bm \Psi_h(\gamma_n, \hat\bfx_n),
    \end{aligned}
\end{equation}
with the numerical scheme
\begin{equation}
    \begin{aligned}
        \hat\bfz_i &= \frac{1}{\sigma_n} \bfx_n + h \sum_{j=1}^{i-1} a_{ij} \bfx_{0|\gamma_n + c_jh}^\theta(\sigma_{\gamma_n + c_j h} \hat\bfz_j),\\
        \bm \Psi_h(\gamma_n, \bfx_n) &= h \sum_{i=1}^s b_i \bfx_{0|\gamma_n + c_ih}^\theta(\sigma_{\gamma_n + c_i h} \hat\bfz_i).
    \end{aligned} 
\end{equation}
\end{proof}

\subsubsection{Noise Prediction}
We present this derivation in the form of \cref{lemma:rex_ode_noise} below.
\begin{lemma}[\rex (ODE) for noise prediction models]
    \label{lemma:rex_ode_noise} 
    Let $\bm \Phi$ be an explicit Runge-Kutta solver for the ODE in \cref{eq:reparam_noise_ode} with Butcher tableau $a_{ij}$, $b_i$, $c_i$.
    The reversible solver for $\bm \Phi$ in terms of the original state $\bfx_t$ is given by the forward step
    \begin{equation}
        \begin{aligned}
            \bfx_{n+1} &= \frac{\alpha_{n+1}}{\alpha_n}\left(\zeta \bfx_n + (1-\zeta)\hat\bfx_{n}\right) + \alpha_{n+1}\bm \Psi_h(\chi_n, \hat\bfx_n),\\
            \hat\bfx_{n+1} &= \frac{\alpha_{n+1}}{\alpha_n}\hat\bfx_n - \alpha_{n+1}\bm \Psi_{-h}(\chi_{n+1}, \bfx_{n+1}),
        \end{aligned}
    \end{equation}
    and backward step
    \begin{equation}
        \begin{aligned}
            \hat\bfx_n &= \frac{\alpha_n}{\alpha_{n+1}}\hat\bfx_{n+1} + \alpha_{n}\bm \Psi_{-h}(\chi_{n+1}, \bfx_{n+1}),\\
            \bfx_{n} &= \frac{\alpha_n}{\alpha_{n+1}}\zeta^{-1}\bfx_{n+1} + (1-\zeta^{-1}) \hat\bfx_n - \alpha_n\zeta^{-1}  \bm \Psi_h(\chi_n, \hat\bfx_n),
        \end{aligned}
    \end{equation}
    with step size $h \coloneq \chi_{n+1} - \chi_n$ and where $\bm \Psi$ denotes the following scheme
    \begin{equation}
        \begin{aligned}
            \hat\bfz_i &= \frac{1}{\alpha_n} \bfx_n + h \sum_{j=1}^{i-1} a_{ij} \bfx_{T|\chi_n + c_jh}^\theta(\alpha_{\chi_n + c_j h} \hat\bfz_j),\\
            \bm \Psi_h(\chi_n, \bfx_n) &= h \sum_{i=1}^s b_i \bfx_{T|\chi_n + c_ih}^\theta(\alpha_{\chi_n + c_i h} \hat\bfz_i),
        \end{aligned} 
    \end{equation}
\end{lemma}

\begin{proof}
Recall that the forward step of the McCallum-Foster method for \cref{eq:reparam_noise_ode} given $\bm \Phi$ is given as
\begin{equation}
    \begin{aligned}
        \bfz_{n+1} &= \zeta \bfz_n + (1-\zeta)\hat\bfz_{n} + \bm \Phi_h(\chi_n, \hat\bfz_n),\\
        \hat\bfz_{n+1} &= \hat\bfz_n - \bm \Phi_{-h}(\chi_{n+1}, \bfz_{n+1}),
    \end{aligned}
\end{equation}
with step size $h = \chi_{n+1} - \chi_n$.
We use the definition of $\bfz_t = \frac{\alpha_T}{\alpha_t}\bfx_t$ to rewrite the forward pass as
\begin{equation}
    \begin{aligned}
        \bfx_{n+1} &= \frac{\alpha_{n+1}}{\alpha_n}\left(\zeta \bfx_n + (1-\zeta)\hat\bfx_{n}\right) + \frac{\alpha_{n+1}}{\alpha_T}\bm \Phi_h\left(\chi_n, \frac{\alpha_T}{\alpha_{n}}\hat\bfx_n\right),\\
        \hat\bfx_{n+1} &= \frac{\alpha_{n+1}}{\alpha_n}\hat\bfx_n - \frac{\alpha_{n+1}}{\alpha_T}\bm \Phi_{-h}\left(\chi_{n+1}, \frac{\alpha_{T}}{\alpha_{n+1}}\bfx_{n+1}\right).
    \end{aligned}
\end{equation}
\textit{Mutatis mutandis} we find the backward step in $\bfx_t$ to be given as
\begin{equation}
    \begin{aligned}
        \hat\bfx_n &= \frac{\alpha_n}{\alpha_{n+1}}\hat\bfx_{n+1} + \frac{\alpha_{n}}{\alpha_T}\bm \Phi_{-h}\left(\chi_{n+1}, \frac{\alpha_T}{\alpha_{n+1}}\bfx_{n+1}\right),\\
        \bfx_{n} &= \frac{\alpha_n}{\alpha_{n+1}}\zeta^{-1}\bfx_{n+1} + (1-\zeta^{-1}) \hat\bfx_n - \frac{\alpha_n}{\alpha_T}\zeta^{-1}  \bm \Phi_h\left(\chi_n, \frac{\alpha_T}{\alpha_n}\hat\bfx_n\right),
    \end{aligned}
\end{equation}

Next we simplify the explicit RK scheme $\bm \Phi(\chi_n, \bfz_n)$ for the time-changed probability flow ODE in \cref{eq:reparam_data_ode}.
Recall that the RK scheme can be written as
\begin{equation}
    \begin{aligned}
        \bfz_{i} &= \bfz_n + h \sum_{j=1}^{i-1} a_{ij} \alpha_T \bfx_{0|\chi_n + c_jh} \left(\frac{\alpha_{\chi_n + c_jh}}{\alpha_T}\bfz_j\right),\\
        \bm\Phi_h(\chi_n, \bfz_n) &= h \sum_{i=1}^s b_{i} \alpha_T \bfx_{0|\chi_n + c_ih} \left(\frac{\alpha_{\chi_n + c_ih}}{\alpha_T}\bfz_i\right).
    \end{aligned}
\end{equation}
Next, we replace $\bfz_t$ back with $\bfx_t$ which yields
\begin{equation}
    \begin{aligned}
        \bfz_i &= \alpha_T\left(\frac{1}{\alpha_n}\bfx_n + h \sum_{j=1}^{i-1} a_{ij} \bfx_{0|\chi_n + c_j h}\left(\frac{\alpha_{\chi_n + c_j h}}{\alpha_T} \bfz_j \right)\right),\\
        \bm\Phi_h\left(\chi_n, \frac{\alpha_T}{\alpha_n}\bfx_n\right) &= \alpha_T h \sum_{i=1}^s b_{i} \bfx_{0|\chi_n + c_ih} \left(\frac{\alpha_{\chi_n + c_ih}}{\alpha_T}\bfz_i\right).
    \end{aligned}
\end{equation}
To further simplify, let $\alpha_T\hat{\bfz}_i = \bfz_i$ and define $\bm\Psi_h(\chi_n, \bfx_n) \coloneq \alpha_T\bm \Phi(\chi_n, \frac{\alpha_T}{\alpha_n}\bfx_n)$.

Thus we can write the following reversible scheme with forward step
\begin{equation}
    \begin{aligned}
        \bfx_{n+1} &= \frac{\alpha_{n+1}}{\alpha_n}\left(\zeta \bfx_n + (1-\zeta)\hat\bfx_{n}\right) + \alpha_{n+1}\bm \Psi_h(\chi_n, \hat\bfx_n),\\
        \hat\bfx_{n+1} &= \frac{\alpha_{n+1}}{\alpha_n}\hat\bfx_n - \alpha_{n+1}\bm \Psi_{-h}(\chi_{n+1}, \bfx_{n+1}),
    \end{aligned}
\end{equation}
and the backward step 
\begin{equation}
    \begin{aligned}
        \hat\bfx_n &= \frac{\alpha_n}{\alpha_{n+1}}\hat\bfx_{n+1} + \alpha_{n}\bm \Psi_{-h}(\chi_{n+1}, \bfx_{n+1}),\\
        \bfx_{n} &= \frac{\alpha_n}{\alpha_{n+1}}\zeta^{-1}\bfx_{n+1} + (1-\zeta^{-1}) \hat\bfx_n - \alpha_n\zeta^{-1}  \bm \Psi_h(\chi_n, \hat\bfx_n),
    \end{aligned}
\end{equation}
with the numerical scheme
\begin{equation}
    \begin{aligned}
        \hat\bfz_i &= \frac{1}{\alpha_n} \bfx_n + h \sum_{j=1}^{i-1} a_{ij} \bfx_{T|\chi_n + c_jh}^\theta(\alpha_{\chi_n + c_j h} \hat\bfz_j),\\
        \bm \Psi_h(\chi_n, \bfx_n) &= h \sum_{i=1}^s b_i \bfx_{T|\chi_n + c_ih}^\theta(\alpha_{\chi_n + c_i h} \hat\bfz_i).
    \end{aligned} 
\end{equation}
\end{proof}

\subsection{Rex (SDE)}
\label{app:rex_sde}
In this section we derive the \rex scheme for the reverse-time diffusion SDE along with several helper derivations.
We rely on the time-changed Brownian motion machinery developed in \cref{app:time_changed_brownian_motion}, and proceed by reparameterizing \cref{eq:root} (\cref{proof:reparam_root_sde}), specializing to the data prediction scenario (\cref{proof:reparam_data_sde}), and then performing an analogous derivation for the noise prediction scenario (\cref{proof:reparam_noise_sde}).

\subsubsection{Proof of Reparametrized Semi-Linear SDE with Additive Noise}
\label{proof:reparam_root_sde}

Recall our general semi-linear SDE with additive noise from \cref{eq:root} which represents an abstraction (\textit{mutatis mutandis}) of diffusion SDEs with the form,
\begin{equation}
    \rmd \bfX_t = \left[a(t)\bfX_t + b(t)\bsf_\theta(t, \bfX_t)\right]\;\rmd t + g(t)\; \rmd \bfW_t.
\end{equation}

We now restate \cref{prop:rootchange} below.

\begin{theorembox}
    \rootchange*
\end{theorembox}

\begin{proof}
    Let $\Xi(t)$ be defined as the reciprocal of the integrating factor of \cref{eq:root}, \ie,
    \begin{equation}
        \Xi(t) = \exp\left(\int_0^t a(s)\;\rmd s\right).
    \end{equation}
    We can now write \cref{eq:root} as
    \begin{align}
        \rmd \left[\Xi^{-1}(t) \bfX_t\right] &= \frac{b(t)}{\Xi(t)} \bsf_\theta(t, \bfX_t)\; \rmd t + \frac{g(t)}{\Xi(t)}\;\rmd \bfW_t,\\
        \rmd \bfY_t &\stackrel{(i)}= \frac{b(t)}{\Xi(t)} \bsf_\theta(t, \Xi(t)\bfY_t)\; \rmd t + \frac{g(t)}{\Xi(t)}\;\rmd \bfW_t,
    \end{align}
    where (i) holds via $\bfY_t \coloneq \Xi^{-1}(t)\bfX_t$.
    Now we wish to introduce a time-changed variable $\varsigma_t$, to simplify the calculation into the form
    \begin{equation}
        \rmd \bfY_t = \frac{\rmd\varsigma_t}{\rmd t}\bsf_\theta(t, \Xi(t)\bfY_t)\; \rmd t + \sqrt{\frac{\rmd \varsigma_t}{\rmd t}}\;\rmd \bfW_t,
    \end{equation}
    thus we have the equalities
    \begin{align}
        \frac{\rmd \varsigma_t}{\rmd t} &= \frac{b(t)}{\Xi(t)},\\
        \label{eq:proof:square_deriv}
        \sqrt{\frac{\rmd \varsigma_t}{\rmd t}} &= \frac{g(t)}{\Xi(t)},
    \end{align}
    which means that since $\Xi, b, g$ are all positive functions, we have
    \begin{align}
        \frac{b(t)}{\Xi(t)} &\stackrel{(i)}= \frac{g^2(t)}{\Xi^2(t)},\\
        \Xi(t) &= \frac{g^2(t)}{b(t)},
    \end{align}
    where (i) holds by the squaring \cref{eq:proof:square_deriv} and setting equal to $\dot\varsigma_t$.
    Then,
    \begin{align}
        \int \frac{\rmd \varsigma_t}{\rmd t}\; \rmd t &= \int \frac{b(t)}{\Xi(t)}\; \rmd t,\\
        \varsigma_t &= \int \frac{b^2(t)}{g^2(t)}\; \rmd t.
    \end{align}
    Now we need to perform the time-change of the Brownian motion.
    For this we can just follow \cref{lemma:timechange_brownian} \textit{mutatis mutandis} to find
    \begin{equation}
        \sqrt{\frac{\rmd \varsigma_t}{\rmd t}} \; \rmd \bfW_t = \rmd \bfW_{\varsigma},
    \end{equation}
    Since $\dot\varsigma_t = b(t)/\Xi(t) > 0$, the time change is strictly increasing.
    Thus the SDE in \cref{eq:root} becomes
    \begin{equation}
        \rmd \bfY_\varsigma = \bsf_\theta(\varsigma, \Xi(\varsigma)\bfY_\varsigma)\;\rmd\varsigma + \rmd \bfW_\varsigma,
    \end{equation}
    with abuse of notation for functions of $\varsigma^{-1}(\varsigma_t) = t$ being denoted by $\varsigma.$
    \NB, this inverse exists since $b(t) \neq 0$ for all $t \geq 0$.
\end{proof}

\subsubsection{Proof of Reparametrized SDE for Data Prediction Models}
\label{proof:reparam_data_sde}

It is well known \citep{lu2022dpm++} that the reverse-time diffusion SDE in \cref{eq:diffusion_sde_reverse} can be rewritten in terms of the data prediction model as
\begin{equation}
    \begin{aligned}
    \label{eq:diffusion_sde_reverse_data}
    \rmd \bfX_t &= \left[\left(f(t) + \frac{g^2(t)}{\sigma_t^2}\right)\bfX_t - \frac{\alpha_t g^2(t)}{\sigma_t^2}\bfx_{0|t}^\theta(\bfX_t)\right]\;\rmd t + g(t) \; \rmd \overline{\bfW}_t.
    \end{aligned}
\end{equation}
Remarkably, following a similar derivation to the one above for the probability flow ODE yields a time-changed SDE with a very similar form to the one above, sans the Brownian motion term and different weighting terms.

\begin{lemma}[Time reparametrization of the reverse-time diffusion SDE]
    \label{prop:reparam_data_sde}
    The reverse-time SDE in \cref{eq:diffusion_sde_reverse_data} can be rewritten in terms of the data prediction model as
    \begin{equation}
        \label{eq:reparam_data_sde}
        \rmd\bfY_\varrho = \frac{\sigma_T}{\gamma_T} \bfx_{0|\varrho}^\theta\left(\frac{\gamma_T\sigma_\varrho}{\sigma_T\gamma_\varrho}\bfY_\varrho\right)\; \rmd \varrho + \frac{\sigma_T}{\gamma_T} \; \rmd \bfW_\varrho,
    \end{equation}
    where $\bfY_t = \frac{\sigma_T^2\alpha_t}{\sigma_t^2\alpha_T}\bfX_t$ and 
    $\varrho_t \coloneq \frac{\alpha_t^2}{\sigma_t^2}$.
\end{lemma}

\begin{proof}
    We rewrite \cref{eq:diffusion_sde_reverse} in terms of the data prediction model, using the identity
    \begin{equation}
        \nabla_\bfx \log p_t(\bfx) = -\frac{1}{\sigma_t^2} \bfx + \frac{\alpha_t}{\sigma_t^2} \bfx_{0|t}(\bfx),
    \end{equation}
    to find
    \begin{equation}
        \label{eq:proofs:sde_data_pred_x}
        \rmd \bfX_t = \left[\underbrace{\left(f(t) + \frac{g^2(t)}{\sigma_t^2}\right)}_{=a(t)}\bfX_t + \underbrace{\left(-\frac{\alpha_t g^2(t)}{\sigma_t^2}\right)}_{=b(t)} \bfx_{0|t}(\bfX_t)\right]\;\rmd t + g(t) \; \rmd \overline\bfW_t,
    \end{equation}
    where
    \begin{equation}
        f(t) = \frac{\dot\alpha_t}{\alpha_t}, \quad g^2(t) = \dot\sigma_t^2 - 2 \frac{\dot\alpha_t}{\alpha_t} \sigma_t^2 = -2\sigma_t^2 \frac{\rmd \log\gamma_t}{\rmd t}.
    \end{equation}
    Next we find the integrating factor $\Xi_t = \exp\left(-\int_T^t a(u)\; \rmd u\right)$,
    \begin{align}
        \Xi_t &=  \exp \left(\int_t^T \frac{\rmd \log \alpha_u}{\rmd u} + \frac{g^2(u)}{\sigma_u^2}\; \rmd u\right),\\
        &=  \exp \left(\int_t^T \frac{\rmd \log \alpha_u}{\rmd u} -2 \frac{\rmd \log \gamma_u}{\rmd u}\; \rmd u\right),\\
        &=  \exp \left(\int_t^T \frac{\rmd \log \alpha_u}{\rmd u} -2 \left[\frac{\rmd \log \alpha_u}{\rmd u} - \frac{\rmd \log \sigma_u}{\rmd u}\right]\; \rmd u\right),\\
        &=  \exp \left(\int_t^T \frac{\rmd \log \sigma_u^2}{\rmd u} - \frac{\rmd \log \alpha_u}{\rmd u}\; \rmd u\right),\\
        &=  \exp \left(\log \sigma_T^2 - \log \sigma_t^2 - (\log \alpha_T - \log \alpha_t)\right),\\
        &= \frac{\sigma_T^2\alpha_t}{\sigma_t^2\alpha_T}.
    \end{align}
    We can write the integrating factor in terms of $\gamma_t$ as
    \begin{equation}
        \Xi_t = \frac{\sigma_T \gamma_t}{\sigma_t \gamma_T}.
    \end{equation}
    Moreover we can further simplify $b(t)$ as
    \begin{align}
        b(t) &= \frac{- \alpha_t g^2(t)}{\sigma_t^2},\\
        &= 2\alpha_t\frac{\rmd \log \gamma_t}{\rmd t}.
    \end{align}
    Thus we can rewrite the SDE in \cref{eq:proofs:sde_data_pred_x} as
    \begin{align}
        \rmd\left[\frac{\sigma_T}{\gamma_T} \frac{\gamma_t}{\sigma_t} \bfX_t\right] &= 2 \frac{\sigma_T}{\gamma_T} \frac{\alpha_t}{\sigma_t} \gamma_t \frac{\rmd \log \gamma_t}{\rmd t} \bfx_{0|t}(\bfX_t)\; \rmd t + \frac{\sigma_T}{\gamma_T} \frac{\gamma_t}{\sigma_t} \sqrt{-2\sigma_t^2 \frac{\rmd \log \gamma_t}{\rmd t}} \; \rmd \overline\bfW_t,\\
        \rmd\bfY_t &\stackrel{(i)}= 2 \frac{\sigma_T}{\gamma_T} \frac{\alpha_t}{\sigma_t} \gamma_t \frac{\rmd \log \gamma_t}{\rmd t} \bfx_{0|t}\left(\frac{\gamma_T\sigma_t}{\sigma_T\gamma_t}\bfY_t\right)\; \rmd t + \frac{\sigma_T}{\gamma_T} \frac{\gamma_t}{\sigma_t} \sqrt{-2\sigma_t^2 \frac{\rmd \log \gamma_t}{\rmd t}} \; \rmd \overline\bfW_t,\\
        \rmd\bfY_t &= \frac{\sigma_T}{\gamma_T} \frac{\rmd \gamma_t^2}{\rmd t} \bfx_{0|t}\left(\frac{\gamma_T\sigma_t}{\sigma_T\gamma_t}\bfY_t\right)\; \rmd t + \frac{\sigma_T}{\gamma_T} \sqrt{- \gamma_t^2 \frac{\rmd \log \gamma_t^2}{\rmd t}} \; \rmd \overline\bfW_t,\\
        \rmd\bfY_t &= \frac{\sigma_T}{\gamma_T} \frac{\rmd \gamma_t^2}{\rmd t} \bfx_{0|t}\left(\frac{\gamma_T\sigma_t}{\sigma_T\gamma_t}\bfY_t\right)\; \rmd t + \frac{\sigma_T}{\gamma_T} \sqrt{- \frac{\rmd \gamma_t^2}{\rmd t}} \; \rmd \overline\bfW_t,\\
        \rmd\bfY_\varrho &\stackrel{(ii)}= \frac{\sigma_T}{\gamma_T} \bfx_{0|\varrho}\left(\frac{\gamma_T\sigma_\varrho}{\sigma_T\gamma_\varrho}\bfY_\varrho\right)\; \rmd \varrho + \frac{\sigma_T}{\gamma_T} \; \rmd \bfW_\varrho,
    \end{align}
    where (i) holds by the change-of-variables $\bfY_t = \frac{\sigma_T\gamma_t}{\gamma_T\sigma_t} \bfX_t$ and (ii) holds by \cref{lemma:timechange_brownian}.
\end{proof}

\subsubsection{Proof of Reparametrized SDE for Noise Prediction Models}
\label{proof:reparam_noise_sde}

\begin{lemma}[Time reparametrization of the reverse-time diffusion SDE for noise prediction models]
    \label{prop:reparam_noise_sde}
    The reverse-time SDE in \cref{eq:diffusion_sde_reverse} can be rewritten in terms of the noise prediction model as
    \begin{equation}
        \label{eq:reparam_noise_sde}
        \rmd \bfY_\chi = 2\alpha_T \bfx_{T|\chi}^\theta\left(\frac{\alpha_\chi}{\alpha_T}\bfY_\chi\right)\; \rmd \chi + \alpha_T \; \rmd \overline\bfW_{\chi^2},
    \end{equation}
    where $\bfY_t = \frac{\alpha_t}{\alpha_T}\bfX_t$ and 
    $\chi_t \coloneq \frac{\sigma_t}{\alpha_t}$.
\end{lemma}

\begin{proof}
    We rewrite \cref{eq:diffusion_sde_reverse} in terms of the noise prediction model to find
    \begin{equation}
        \label{eq:proofs:sde_noise_pred_x}
        \rmd \bfX_t = \left[f(t)\bfX_t + \frac{g^2(t)}{\sigma_t} \bfx_{T|t}^\theta(\bfX_t)\right]\;\rmd t + g(t) \; \rmd \overline\bfW_t,
    \end{equation}
    where
    \begin{equation}
        f(t) = \frac{\dot\alpha_t}{\alpha_t}, \quad g^2(t) = \dot\sigma_t^2 - 2 \frac{\dot\alpha_t}{\alpha_t} \sigma_t^2 = -2\sigma_t^2 \frac{\rmd \log\gamma_t}{\rmd t}.
    \end{equation}
    Next we find the integrating factor to be $\exp\left(-\int_T^t f(u)\; \rmd u\right) = \frac{\alpha_T}{\alpha_t}$.
    Moreover, we can further simplify $\frac{g^2(t)}{\sigma_t}$ as
    \begin{align}
        \frac{g^2(t)}{\sigma_t} &= -2 \sigma_t \frac{\rmd \log \gamma_t}{\rmd t},\\
        &= -2 \sigma_t \frac{\dot \gamma_t}{\gamma_t},\\
        &= -2 \frac{\sigma_t}{\gamma_t} \frac{\dot\alpha_t\sigma_t - \alpha_t\dot\sigma_t}{\sigma_t^2},\\
        &= -2 \frac{\sigma_t^2}{\alpha_t} \frac{\dot\alpha_t\sigma_t - \alpha_t\dot\sigma_t}{\sigma_t^2},\\
        &= 2 \frac{\sigma_t^2}{\alpha_t} \frac{\alpha_t\dot\sigma_t - \dot\alpha_t\sigma_t}{\sigma_t^2},\\
        &= 2 \frac{\alpha_t\dot\sigma_t - \dot\alpha_t\sigma_t}{\alpha_t},\\
    \end{align}
    Let $\chi_t \coloneq \frac{\sigma_t}{\alpha_t} = \frac{1}{\gamma_t}$.
    Thus we can rewrite the SDE in \cref{eq:proofs:sde_noise_pred_x} as
    \begin{align}
        \rmd\left[\frac{\alpha_T}{\alpha_t} \bfX_t\right] &= \frac{\alpha_T}{\alpha_t}  \frac{g^2(t)}{\sigma_t^2} \bfx_{T|t}^\theta(\bfX_t)\; \rmd t + \frac{\alpha_T}{\alpha_t}\sqrt{-2\sigma_t^2 \frac{\rmd \log \gamma_t}{\rmd t}} \; \rmd \overline\bfW_t,\\
        \rmd \bfY_t &\stackrel{(i)}= \frac{\alpha_T}{\alpha_t} \frac{g^2(t)}{\sigma_t^2} \bfx_{T|t}^\theta\left(\frac{\alpha_t}{\alpha_T}\bfY_t\right)\; \rmd t + \frac{\alpha_T}{\alpha_t}\sqrt{-2\sigma_t^2 \frac{\rmd \log \gamma_t}{\rmd t}} \; \rmd \overline\bfW_t,\\
        \rmd \bfY_t &= 2\alpha_T \frac{\alpha_t\dot\sigma_t - \dot\alpha_t\sigma_t}{\alpha_t^2} \bfx_{T|t}^\theta\left(\frac{\alpha_t}{\alpha_T}\bfY_t\right)\; \rmd t + \frac{\alpha_T}{\alpha_t}\sqrt{-2\sigma_t^2 \frac{\rmd \log \gamma_t}{\rmd t}} \; \rmd \overline\bfW_t,\\
        \rmd \bfY_t &\stackrel{(ii)}= 2\alpha_T \dot\chi_t \bfx_{T|t}^\theta\left(\frac{\alpha_t}{\alpha_T}\bfY_t\right)\; \rmd t + \alpha_T\sqrt{-2\frac{\sigma_t^2}{\alpha_t^2} \frac{\rmd \log \gamma_t}{\rmd t}} \; \rmd \overline\bfW_t,\\
        \rmd \bfY_t &= 2\alpha_T \dot\chi_t \bfx_{T|t}^\theta\left(\frac{\alpha_t}{\alpha_T}\bfY_t\right)\; \rmd t + \alpha_T\sqrt{\dot\chi_t^2} \; \rmd \overline\bfW_t,\\
         \rmd \bfY_\chi &\stackrel{(iii)}= 2\alpha_T \bfx_{T|\chi}^\theta\left(\frac{\alpha_\chi}{\alpha_T}\bfY_\chi\right)\; \rmd \chi + \alpha_T \; \rmd \overline\bfW_{\chi^2},\\
    \end{align}
    where (i) holds by the change-of-variables $\bfY_t = \frac{\alpha_T}{\alpha_t} \bfX_t$, (ii) holds by
    \begin{align}
        -2\frac{\sigma_t^2}{\alpha_t^2} \frac{\rmd \log \gamma_t}{\rmd t} &= \frac{\sigma_t^2}{\alpha_t^2} \frac{\rmd (-2\log \gamma_t)}{\rmd t},\\
        &= \frac{\sigma_t^2}{\alpha_t^2} \frac{\rmd \log \chi_t^2}{\rmd t},\\
        &= \frac{\sigma_t^2}{\alpha_t^2} \frac{\dot\chi_t^2}{\chi_t^2},\\
        &= \dot\chi_t^2,
    \end{align}
    and (iii) holds by \cref{lemma:timechange_brownian} \textit{mutatis mutandis} for $\chi_t$.
\end{proof}

\subsubsection{Derivation of Rex (SDE)}
We present derivations for both the data prediction and noise prediction formulations.

\paragraph{Data Prediction.}
We present this derivation in the form of \cref{lemma:rex_sde} below.
    \begin{restatable}[\rex (SDE) for data prediction models]{lemma}{rexsde}
        \label{lemma:rex_sde} 
        Let $\bm \Phi$ be an explicit stochastic Runge-Kutta solver for the additive noise SDE in \cref{eq:reparam_data_sde}, we construct the following reversible scheme for diffusion models
        \begin{equation}
            \begin{aligned}
                \bfX_{n+1} &= \frac{\sigma_{n+1}\gamma_n}{\gamma_{n+1}\sigma_n} (\zeta \bfX_n + (1-\zeta) \hat \bfX_n) + \frac{\sigma_{n+1}}{\gamma_{n+1}} \bm \Psi_h(\varrho_n, \hat\bfX_n, \bfW_\varrho(\omega)),\\
                \hat\bfX_{n+1} &= \frac{\sigma_{n+1}\gamma_n}{\gamma_{n+1}\sigma_n} \hat\bfX_n - \frac{\sigma_{n+1}}{\gamma_{n+1}} \bm \Psi_{-h}(\varrho_{n+1}, \bfX_{n+1}, \bfW_\varrho(\omega)),
            \end{aligned} 
        \end{equation}
        and the backward step is given as
        \begin{equation}
            \begin{aligned}
                \hat\bfX_n &= \frac{\sigma_{n}\gamma_{n+1}}{\gamma_{n}\sigma_{n+1}} \hat\bfX_{n+1} + \frac{\sigma_n}{\gamma_n} \bm \Psi_{-h}(\varrho_{n+1}, \bfX_{n+1}, \bfW_\varrho(\omega)),\\
                \bfX_{n} &= \frac{\sigma_{n}\gamma_{n+1}}{\gamma_{n}\sigma_{n+1}} \zeta^{-1} \bfX_{n+1} + (1 - \zeta^{-1}) \hat\bfX_n - \frac{\sigma_n}{\gamma_n} \zeta^{-1} \bm \Psi_{h}(\varrho_n, \hat\bfX_n, \bfW_\varrho(\omega)),
            \end{aligned}
        \end{equation}
        with step size $h \coloneq \varrho_{n+1} - \varrho_n$ and where $\bm\Psi$ denotes the following scheme
        \begin{equation}
            \begin{aligned}
                &\hat{\bfZ}_i = \frac{\gamma_n}{\sigma_n}\bfX_n + h \sum_{j=1}^{i-1} \left[a_{ij} \bfx_{0|\varrho_n + c_j h}^\theta\left(\frac{\sigma_{\varrho_n + c_jh}}{\gamma_{\varrho_n + c_jh}} \hat{\bfZ_j} \right)\right] + a_i^W\bfW_n+ a_i^H \bm H_n,\\
                &\bm \Psi_h(\varrho_n, \bfX_n, \bfW_\varrho(\omega)) = h \sum_{j=1}^s \left[b_i  \bfx_{0|\varrho_n + c_i h}^\theta\left(\frac{\sigma_{\varrho_n + c_ih}}{\gamma_{\varrho_n + c_ih}} \hat\bfZ_i \right)\right] + b^W\bfW_n + b^H \bm H_n.
            \end{aligned}
        \end{equation}
    \end{restatable}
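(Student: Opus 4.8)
The plan is to mirror the derivation of the deterministic analogue in \cref{lemma:rex_ode}, replacing the explicit Runge--Kutta scheme with the stochastic Runge--Kutta scheme of \citet{foster2024high} and the probability flow ODE with the reparameterized additive-noise SDE of \cref{eq:reparam_data_sde}. The starting point is to fix a realization $\omega \in \Omega$ of the driving noise and treat \cref{eq:reparam_data_sde} as a (random) ODE in the time variable $\varrho$, which is legitimate by the rough-path / It\^o--Lyons viewpoint discussed in \cref{sec:construct_rex}: once $\omega$ is fixed, the Brownian and space-time L\'evy increments $\bfW_n(\omega), \bm H_n(\omega)$ are deterministic data, so the McCallum--Foster method of \cref{def:mccallum_foster} applies verbatim to $\bm \Phi$. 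This yields the forward step $\bfY_{n+1} = \zeta\bfY_n + (1-\zeta)\hat\bfY_n + \bm\Phi_h(\varrho_n,\hat\bfY_n,\bfW_n(\omega))$ together with $\hat\bfY_{n+1} = \hat\bfY_n - \bm\Phi_{-h}(\varrho_{n+1},\bfY_{n+1},\bfW_n(\omega))$, where $h = \varrho_{n+1}-\varrho_n$, and the corresponding backward step.

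Next I would push the linear change of variables $\bfY_t = \frac{\sigma_T\gamma_t}{\sigma_t\gamma_T}\bfX_t$ (equivalently $\bfY_t = \frac{w_T}{w_t}\bfX_t$ with $w_t = \sigma_t/\gamma_t$) through both steps. Substituting $\bfY_n = \frac{\sigma_T\gamma_n}{\sigma_n\gamma_T}\bfX_n$ and multiplying the forward update by $\frac{\sigma_{n+1}\gamma_T}{\sigma_T\gamma_{n+1}}$ produces the prefactor $\frac{\sigma_{n+1}\gamma_n}{\gamma_{n+1}\sigma_n}$ on the coupling term $\zeta\bfX_n + (1-\zeta)\hat\bfX_n$, exactly as claimed, while the increment term becomes $\frac{\sigma_{n+1}\gamma_T}{\sigma_T\gamma_{n+1}}\bm\Phi_h(\varrho_n,\frac{\sigma_T\gamma_n}{\sigma_n\gamma_T}\hat\bfX_n,\bfW_n(\omega))$. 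Defining $\bm\Psi_h(\varrho_n,\cdot,\cdot) \coloneq \frac{\gamma_T}{\sigma_T}\bm\Phi_h(\varrho_n,\frac{\sigma_T\gamma_n}{\sigma_n\gamma_T}\cdot,\cdot)$ collapses this to $\frac{\sigma_{n+1}}{\gamma_{n+1}}\bm\Psi_h$, and the backward step follows \emph{mutatis mutandis} after multiplying by the reciprocal factor.

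The remaining and most delicate step is to verify that $\bm\Psi$ takes the stated stage form, and this is where I expect the real work to lie. Here I would expand the SRK stages of $\bm\Phi$ for \cref{eq:reparam_data_sde} and use the fact --- inherited from \cref{prop:reparam_data_sde} --- that the reparameterized SDE has \emph{additive} noise, so its diffusion coefficient is the \emph{constant} $\frac{\sigma_T}{\gamma_T}\bm I$. This constancy is exactly what makes the argument close: each stage increment $\sum_j a_{ij}^W \bm\sigma_\theta^j$ and $\sum_j a_{ij}^H\bm\sigma_\theta^j$ collapses to $\frac{\sigma_T}{\gamma_T} a_i^W$ and $\frac{\sigma_T}{\gamma_T}a_i^H$, so that the \emph{same} scalar $\frac{\sigma_T}{\gamma_T}$ factors out of both the drift and the noise terms and passes cleanly through the linear change of variables. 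Writing $\bfZ_i = \frac{\sigma_T}{\gamma_T}\hat\bfZ_i$ then turns the nested argument $\frac{\gamma_T\sigma_{\varrho_n+c_jh}}{\sigma_T\gamma_{\varrho_n+c_jh}}\bfZ_j$ into $\frac{\sigma_{\varrho_n+c_jh}}{\gamma_{\varrho_n+c_jh}}\hat\bfZ_j$ and recovers the claimed expressions for $\hat\bfZ_i$ and $\bm\Psi_h$, with weighting $w_n = \sigma_n/\gamma_n$. The main obstacle to flag is precisely this additive-noise requirement: for a multiplicative diffusion the coefficient would itself depend on the rescaled state and time, the scalar factor would fail to survive the change of variables, and the reversible scheme would not close in the clean form above. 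The time-change of \cref{lemma:timechange_brownian} is what guarantees we are in the additive regime to begin with, and it also fixes the precise time variable $\varrho$ over which $\bfW_n(\omega)$ and $\bm H_n(\omega)$ are accumulated.
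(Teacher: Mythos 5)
Your proposal is correct and follows essentially the same route as the paper's own proof: apply the McCallum--Foster step to the SRK scheme $\bm\Phi$ for the reparameterized additive-noise SDE in the $\varrho$ variable, push the linear change of variables $\bfY_t = \tfrac{\sigma_T\gamma_t}{\sigma_t\gamma_T}\bfX_t$ through both the reversible updates and the stages, and rescale $\bfZ_i = \tfrac{\sigma_T}{\gamma_T}\hat\bfZ_i$ so that the constant diffusion coefficient $\tfrac{\sigma_T}{\gamma_T}$ factors out of the drift and noise terms alike. Your added emphasis on why additivity of the noise (guaranteed by \cref{prop:reparam_data_sde} and \cref{lemma:timechange_brownian}) is what lets the scheme close is a correct and useful observation that the paper leaves implicit.
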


\begin{proof}
    We write the SRK scheme for the time-changed reverse-time SDE in \cref{eq:reparam_data_sde} to construct the following SRK scheme
    \begin{equation}
        \begin{aligned}
            \bfZ_{i} &= \bfY_n + h \sum_{j=1}^{i-1} \left[a_{ij} \frac{\sigma_T}{\gamma_T} \bfx_{0|\varrho_n + c_jh} \left(\frac{\gamma_T\sigma_{\varrho_n + c_jh}}{\sigma_T\gamma_{\varrho_n + c_jh}}\bfZ_j\right)\right] + \frac{\sigma_T}{\gamma_T} (a_i^W \bfW_n + a_i^H \bm H_n),\\
            \bfY_{n+1} &= \bfY_{n} + h \sum_{i=1}^s \left[b_{i} \frac{\sigma_T}{\gamma_T} \bfx_{0|\varrho_n + c_ih} \left(\frac{\gamma_T\sigma_{\varrho_n + c_ih}}{\sigma_T\gamma_{\varrho_n + c_ih}}\bfZ_i\right)\right] + \frac{\sigma_T}{\gamma_T} (b^W \bfW_n + b^H \bm H_n),
        \end{aligned}
    \end{equation}
    with step size $h = \varrho_{n+1} - \varrho_n$.
    Next, we replace $\bfY_t$ back with $\bfX_t$ which yields
    \begin{equation}
        \begin{aligned}
            \bfZ_i &= \frac{\sigma_T}{\gamma_T}\left(\frac{\gamma_n}{\sigma_n}\bfX_n + h \sum_{j=1}^{i-1} \left[a_{ij} \bfx_{0|\varrho_n + c_j h}\left(\frac{\gamma_T\sigma_{\varrho_n + c_jh}}{\sigma_T\gamma_{\varrho_n + c_jh}} \bfZ_j \right)\right]\right)\\
            &+ \frac{\sigma_T}{\gamma_T} (a_i^W \bfW_n + a_i^H \bm H_n),\\
            \frac{\sigma_T\gamma_{n+1}}{\gamma_T\sigma_{n+1}}\bfX_{n+1} &= \frac{\sigma_T\gamma_{n}}{\gamma_T\sigma_{n}}\bfX_n\\
            &+ \frac{\sigma_T}{\gamma_T} \underbrace{h \sum_{i=1}^s \left[b_{i} \frac{\sigma_T}{\gamma_T} \bfx_{0|\varrho_n + c_ih} \left(\frac{\gamma_T\sigma_{\varrho_n + c_ih}}{\sigma_T\gamma_{\varrho_n + c_ih}}\bfZ_i\right)\right] + \frac{\sigma_T}{\gamma_T} (b^W \bfW_n + b^H \bm H_n)}_{= \bm \Psi_h(\varrho_n, \bfX_n, \bfW_\varrho)}.
        \end{aligned}
    \end{equation}
    To further simplify, let $\frac{\sigma_T}{\gamma_T}\hat{\bm Z}_i = \bm Z_i$, then we construct the reversible scheme with forward pass:
    \begin{equation}
        \begin{aligned}
            \bfX_{n+1} &= \frac{\sigma_{n+1}\gamma_n}{\gamma_{n+1}\sigma_n} (\zeta \bfX_n + (1-\zeta) \hat \bfX_n) + \frac{\sigma_{n+1}}{\gamma_{n+1}} \bm \Psi_h(\varrho_n, \hat\bfX_n, \bfW_\varrho(\omega)),\\
            \hat\bfX_{n+1} &= \frac{\sigma_{n+1}\gamma_n}{\gamma_{n+1}\sigma_n} \hat\bfX_n - \frac{\sigma_{n+1}}{\gamma_{n+1}} \bm \Psi_{-h}(\varrho_{n+1}, \bfX_{n+1}, \bfW_\varrho(\omega)),
        \end{aligned} 
    \end{equation}
    and backward pass
    \begin{equation}
        \begin{aligned}
            \hat\bfX_n &= \frac{\sigma_{n}\gamma_{n+1}}{\gamma_{n}\sigma_{n+1}} \hat\bfX_{n+1} + \frac{\sigma_n}{\gamma_n} \bm \Psi_{-h}(\varrho_{n+1}, \bfX_{n+1}, \bfW_\varrho(\omega)),\\
            \bfX_{n} &= \frac{\sigma_{n}\gamma_{n+1}}{\gamma_{n}\sigma_{n+1}} \zeta^{-1} \bfX_{n+1} + (1 - \zeta^{-1}) \hat\bfX_n - \frac{\sigma_n}{\gamma_n} \zeta^{-1} \bm \Psi_{h}(\varrho_n, \hat\bfX_n, \bfW_\varrho(\omega)),
        \end{aligned}
    \end{equation}
    with step size $h \coloneq \varrho_{n+1} - \varrho_n$ and where
    \begin{equation}
        \begin{aligned}
            &\hat{\bfZ}_i = \frac{\gamma_n}{\sigma_n}\bfX_n + h \sum_{j=1}^{i-1} \left[a_{ij} \bfx_{0|\varrho_n + c_j h}^\theta\left(\frac{\sigma_{\varrho_n + c_jh}}{\gamma_{\varrho_n + c_jh}} \hat{\bfZ_j} \right)\right] + a_i^W\bfW_n+ a_i^H \bm H_n,\\
            &\bm \Psi_h(\varrho_n, \bfX_n, \bfW_\varrho(\omega)) = h \sum_{j=1}^s \left[b_i  \bfx_{0|\varrho_n + c_i h}^\theta\left(\frac{\sigma_{\varrho_n + c_ih}}{\gamma_{\varrho_n + c_ih}} \hat\bfZ_i \right)\right] + b^W\bfW_n + b^H \bm H_n.
        \end{aligned}
    \end{equation}
\end{proof}

\paragraph{Noise Prediction.}
We present this derivation in the form of \cref{lemma:rex_sde_noise} below.

    \begin{restatable}[\rex (SDE) for noise prediction models]{lemma}{rexsdenoise}
        \label{lemma:rex_sde_noise} 
        Let $\bm \Phi$ be an explicit stochastic Runge-Kutta solver for the additive noise SDE in \cref{eq:reparam_noise_sde}, we construct the following reversible scheme for diffusion models
        \begin{equation}
            \begin{aligned}
                \bfX_{n+1} &= \frac{\alpha_{n+1}}{\alpha_{n}} (\zeta \bfX_n + (1-\zeta) \hat \bfX_n) + \alpha_{n+1} \bm \Psi_h(\chi_n, \hat\bfX_n, \bfW_{\chi^2}(\omega)),\\
                \hat\bfX_{n+1} &= \frac{\alpha_{n+1}}{\alpha_{n}} \hat\bfX_n - \alpha_{n+1} \bm \Psi_{-h}(\chi_{n+1}, \bfX_{n+1}, \bfW_{\chi^2}(\omega)),
            \end{aligned} 
        \end{equation}
        and the backward step is given as
        \begin{equation}
            \begin{aligned}
                \hat\bfX_n &= \frac{\alpha_n}{\alpha_{n+1}} \hat\bfX_{n+1} + \alpha_n \bm \Psi_{-h}(\chi_{n+1}, \bfX_{n+1}, \bfW_{\chi^2}(\omega)),\\
                \bfX_{n} &= \frac{\alpha_n}{\alpha_{n+1}} \zeta^{-1} \bfX_{n+1} + (1 - \zeta^{-1}) \hat\bfX_n - \alpha_n \zeta^{-1} \bm \Psi_{h}(\chi_n, \hat\bfX_n, \bfW_{\chi^2}(\omega)),
            \end{aligned}
        \end{equation}
        with step size $h \coloneq \chi_{n+1} - \chi_n$ and where $\bm\Psi$ denotes the following scheme
        \begin{equation}
            \begin{aligned}
                &\hat{\bfZ}_i = \frac{1}{\alpha_n}\bfX_n + h \sum_{j=1}^{i-1} \left[2a_{ij} \bfx_{T|\chi_n + c_j h}^\theta\left(\alpha_{\chi_n + c_jh}\hat{\bfZ_j} \right)\right] + a_i^W\bfW_n+ a_i^H \bm H_n,\\
                &\bm \Psi_h(\chi_n, \bfX_n, \bfW_\chi(\omega)) = h \sum_{i=1}^s \left[2b_i  \bfx_{T|\chi_n + c_i h}^\theta\left(\alpha_{\chi_n + c_ih} \hat\bfZ_i \right)\right] + b^W\bfW_n + b^H \bm H_n.
            \end{aligned}
        \end{equation}
    \end{restatable}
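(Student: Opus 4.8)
The plan is to mirror the derivation of the data-prediction SDE scheme (\cref{lemma:rex_sde}) \emph{mutatis mutandis}, replacing the data-prediction reparameterization of \cref{prop:reparam_data_sde} with the noise-prediction reparameterization of \cref{prop:reparam_noise_sde}. Concretely, I would begin by writing the McCallum-Foster forward and backward steps (\cref{def:mccallum_foster}) for the reparameterized noise-prediction SDE in \cref{eq:reparam_noise_sde}, expressed in the transformed state $\bfY$ with an underlying explicit SRK scheme $\bm\Phi$ and step size $h = \chi_{n+1} - \chi_n$. As in the data case, both steps reuse the same fixed realization $\omega$ of the Brownian motion, which is what will make the resulting scheme algebraically (not merely analytically) reversible.

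Next I would invert the change of variables. Following the proof of \cref{prop:reparam_noise_sde} the transform is $\bfY_t = \frac{\alpha_T}{\alpha_t}\bfX_t$ (exactly as in the noise-prediction ODE derivation of \cref{lemma:rex_ode_noise}), so substituting $\bfX$ for $\bfY$ in the McCallum-Foster recursion introduces the factors $\frac{\alpha_{n+1}}{\alpha_n}$ on the $\zeta$-coupled state and $\frac{\alpha_{n+1}}{\alpha_T}$ on the solver increment, paralleling the $\frac{\sigma_{n+1}}{\sigma_n}$ and $\frac{\sigma_{n+1}}{\sigma_T}$ factors of the data case. I would then expand $\bm\Phi$ in the Foster-Reis-Strange extended-tableau form (\cref{app:srk}): the drift evaluations carry the coefficient $2\alpha_T$ and the argument rescaling $\frac{\alpha_\chi}{\alpha_T}$ read off from \cref{eq:reparam_noise_sde}, while the diffusion contributes the increments $a_i^W\bfW_n + a_i^H\bm H_n$ and $b^W\bfW_n + b^H\bm H_n$ scaled by the constant coefficient $\alpha_T$.

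The final step is the rescaling that produces $\bm\Psi$. I would set $\bm\Psi_h(\chi_n, \bfX_n, \cdot) \coloneq \alpha_T\,\bm\Phi_h(\chi_n, \frac{\alpha_T}{\alpha_n}\bfX_n, \cdot)$ and introduce stage variables $\hat\bfZ_i$ via $\alpha_T\hat\bfZ_i = \bfZ_i$. Two simplifications then occur exactly as in \cref{lemma:rex_sde}: the constant diffusion coefficient $\alpha_T$ cancels against the $\alpha_T$ in the rescaling, leaving the bare increments $\bfW_n, \bm H_n$ inside $\bm\Psi$; and the factor of $2$ in the drift of \cref{eq:reparam_noise_sde} is absorbed into the tableau, yielding the coefficients $2a_{ij}$ and $2b_i$ in the stated stage and output equations. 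Collecting terms recovers the forward step, the backward step, and the $\bm\Psi$ scheme of the lemma, with weighting $w_n = \alpha_n$ in accordance with \cref{prop:rex}.

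I expect the only genuine obstacle to be bookkeeping: correctly propagating the $\alpha_t$ factors simultaneously through the McCallum-Foster recursion and the nested SRK stages so that the $w_n = \alpha_n$ weighting emerges cleanly, and verifying that the time-changed Brownian motion $\overline\bfW_{\chi^2}$ supplied by \cref{lemma:timechange_brownian_noise} permits the fixed realization $\omega$ to be reproduced identically on the backward pass. The hypothesis $\alpha_T > 0$ must be carried throughout, since (as noted after \cref{lemma:timechange_brownian_noise}) it guarantees finiteness of $\chi_T$ and hence well-posedness of the time-changed Brownian increments that the reversibility argument relies on.
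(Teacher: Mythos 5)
Your proposal is correct and follows essentially the same route as the paper's own derivation: write the McCallum--Foster step with an explicit SRK $\bm\Phi$ for the reparameterized SDE \cref{eq:reparam_noise_sde}, undo the change of variables $\bfY_t = \frac{\alpha_T}{\alpha_t}\bfX_t$ to pick up the $\frac{\alpha_{n+1}}{\alpha_n}$ and $\alpha_{n+1}$ weightings, and rescale the stages via $\alpha_T\hat\bfZ_i = \bfZ_i$ so that the constant $\alpha_T$ cancels out of the noise increments and the drift factor $2$ lands on the tableau coefficients. The bookkeeping points you flag (fixed realization $\omega$, the role of $\alpha_T>0$ via \cref{lemma:timechange_brownian_noise}) are exactly the ones the paper relies on.
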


\begin{proof}
    We write the SRK scheme for the time-changed reverse-time SDE in \cref{eq:reparam_noise_sde} to construct the following SRK scheme
    \begin{equation}
        \begin{aligned}
            \bfZ_{i} &= \bfY_n + h \sum_{j=1}^{i-1} \left[2a_{ij} \alpha_T \bfx_{T|\chi_n + c_jh} \left(\frac{\alpha_{\chi_n + c_jh}}{\alpha_T}\bfZ_j\right)\right] + \alpha_T  (a_i^W \bfW_n + a_i^H \bm H_n),\\
            \bfY_{n+1} &= \bfY_{n} + h \sum_{i=1}^s \left[2b_{i} \alpha_T \bfx_{T|\chi_n + c_ih} \left(\frac{\alpha_{\chi_n + c_ih}}{\alpha_T}\bfZ_i\right)\right] + \alpha_T (b^W \bfW_n + b^H \bm H_n),
        \end{aligned}
    \end{equation}
    with step size $h = \chi_{n+1} - \chi_n$.
    Next, we replace $\bfY_t$ back with $\bfX_t$ which yields
    \begin{equation}
        \begin{aligned}
            \bfZ_i &= \alpha_T\left(\frac{1}{\alpha_n}\bfX_n + h \sum_{j=1}^{i-1} \left[2a_{ij} \bfx_{T|\chi_n + c_j h}\left(\frac{\alpha_{\chi_n + c_jh}}{\alpha_T} \bfZ_j \right)\right]\right)\\
            &+ \alpha_T (a_i^W \bfW_n + a_i^H \bm H_n),\\
            \frac{\alpha_{n+1}}{\alpha_T}\bfX_{n+1} &= \frac{\alpha_T}{\alpha_n}\bfX_n\\
            &+ \alpha_T \underbrace{h \sum_{i=1}^s \left[2b_{i} \alpha_T \bfx_{T|\chi_n + c_ih} \left(\frac{\alpha_{\chi_n + c_ih}}{\alpha_T}\bfZ_i\right)\right] + \alpha_T (b^W \bfW_n + b^H \bm H_n)}_{= \bm \Psi_h(\chi_n, \bfX_n, \bfW_\chi)}.
        \end{aligned}
    \end{equation}
    To further simplify, let $\alpha_T\hat{\bm Z}_i = \bm Z_i$, then we construct the reversible scheme with forward pass:
    \begin{equation}
        \begin{aligned}
            \bfX_{n+1} &= \frac{\alpha_{n+1}}{\alpha_{n}} (\zeta \bfX_n + (1-\zeta) \hat \bfX_n) + \alpha_{n+1} \bm \Psi_h(\chi_n, \hat\bfX_n, \bfW_\chi(\omega)),\\
            \hat\bfX_{n+1} &= \frac{\alpha_{n+1}}{\alpha_{n}} \hat\bfX_n - \alpha_{n+1} \bm \Psi_{-h}(\chi_{n+1}, \bfX_{n+1}, \bfW_\chi(\omega)),
        \end{aligned} 
    \end{equation}
    and backward pass
    \begin{equation}
        \begin{aligned}
            \hat\bfX_n &= \frac{\alpha_n}{\alpha_{n+1}} \hat\bfX_{n+1} + \alpha_n \bm \Psi_{-h}(\chi_{n+1}, \bfX_{n+1}, \bfW_\chi(\omega)),\\
            \bfX_{n} &= \frac{\alpha_{n}}{\alpha_{n+1}} \zeta^{-1} \bfX_{n+1} + (1 - \zeta^{-1}) \hat\bfX_n - \alpha_n \zeta^{-1} \bm \Psi_{h}(\chi_n, \hat\bfX_n, \bfW_\chi(\omega)),
        \end{aligned}
    \end{equation}
    with step size $h \coloneq \chi_{n+1} - \chi_n$ and where
    \begin{equation}
        \begin{aligned}
            &\hat{\bfZ}_i = \frac{1}{\alpha_n}\bfX_n + h \sum_{j=1}^{i-1} \left[2a_{ij} \bfx_{T|\chi_n + c_j h}^\theta\left(\alpha_{\chi_n + c_jh}\hat{\bfZ_j} \right)\right] + a_i^W\bfW_n+ a_i^H \bm H_n,\\
            &\bm \Psi_h(\chi_n, \bfX_n, \bfW_\chi(\omega)) = h \sum_{i=1}^s \left[2b_i  \bfx_{T|\chi_n + c_i h}^\theta\left(\alpha_{\chi_n + c_ih}\hat\bfZ_i \right)\right] + b^W\bfW_n + b^H \bm H_n.
        \end{aligned}
    \end{equation}
    \NB, $\bfW_n = \overline{\bfW}_{\chi_{n+1}^2} - \overline\bfW_{\chi_n^2}$.
\end{proof}

\subsection{Proof of \texorpdfstring{\cref{prop:rex}}{Rex}}
\label{proof:rex}

We now can construct \rex.

\begin{proposition}[\rex]
    \label{prop:rex_big} 
    Without loss of generality, let $\bm \Phi$ denote an explicit SRK scheme for the SDE in \cref{eq:reparam_data_sde} with extended Butcher tableau $a_{ij}, b_i, c_i, a_i^W, a_i^H, b^W, b^H$.
    Fix an $\omega \in \Omega$ and let $\bfW$ be the Brownian motion over time variable $\varsigma$.
    Then the reversible solver constructed from $\bm \Phi$ in terms of the underlying state variable $\bfX_t$ is given by the forward step
    \begin{equation}
        \begin{aligned}
            \bfX_{n+1} &= \frac{w_{n+1}}{w_n}\left(\zeta \bfX_n + (1-\zeta)\hat\bfX_{n}\right) + w_{n+1}\bm \Psi_h(\varsigma_n, \hat\bfX_n, \bfW_n(\omega)),\\
            \hat\bfX_{n+1} &= \frac{w_{n+1}}{w_n}\hat\bfX_n - w_{n+1}\bm \Psi_{-h}(\varsigma_{n+1}, \bfX_{n+1}, \bfW_n(\omega)),
        \end{aligned}
    \end{equation}
    and backward step
    \begin{equation}
        \begin{aligned}
            \hat\bfX_n &= \frac{w_n}{w_{n+1}}\hat\bfX_{n+1} + w_{n}\bm \Psi_{-h}(\varsigma_{n+1}, \bfX_{n+1}, \bfW_n(\omega)),\\
            \bfX_{n} &= \frac{w_n}{w_{n+1}}\zeta^{-1}\bfX_{n+1} + (1-\zeta^{-1}) \hat\bfX_n - w_n\zeta^{-1}  \bm \Psi_h(\varsigma_n, \hat\bfX_n, \bfW_n(\omega)),
        \end{aligned}
    \end{equation}
    with step size $h \coloneq \varsigma_{n+1} - \varsigma_n$ and where $\bm \Psi$ denotes the following scheme
    \begin{equation}
        \begin{aligned}
            &\hat{\bfZ}_i = \frac{1}{w_n}\bfX_n + h \sum_{j=1}^{i-1} \left[a_{ij} \bsf_\theta\left(\varsigma_n + c_j h, w_{\varsigma_n + c_jh} \hat{\bfZ}_j\right)\right] + a_i^W\bfW_n(\omega)+ a_i^H \bm H_n(\omega),\\
            &\bm \Psi_h(\varsigma_n, \bfX_n, \bfW_n(\omega)) = h \sum_{i=1}^s \left[b_i  \bsf_\theta\left(\varsigma_n + c_i h, w_{\varsigma_n + c_ih}\hat\bfZ_i \right)\right] + b^W\bfW_n(\omega) + b^H \bm H_n(\omega),
        \end{aligned} 
    \end{equation}
    where $\bsf_\theta$ denotes the data prediction model, $w_n = \frac{\sigma_n}{\gamma_n}$ and $\varsigma_t = \varrho_t$.
    The ODE case is recovered for an explicit RK scheme $\bm \Phi$ for the ODE in \cref{eq:reparam_data_ode} with $w_n = \sigma_n$ and $\varsigma_t = \gamma_t$.
    For noise prediction models we have $\bsf_\theta$ denoting the noise prediction model with $w_n = \alpha_n$ and $\varsigma_t = \frac{\sigma_n}{\alpha_n}$.
\end{proposition}

\begin{proof}
    This follows by \cref{lemma:rex_ode,lemma:rex_ode_noise,lemma:rex_sde,lemma:rex_sde_noise} \textit{mutatis mutandis}.
\end{proof}

\section{Convergence Order Proofs}
\label{app:cov_proofs}

\subsection{Assumptions}
Beyond the general regularity conditions imposed on the learned diffusion model itself \citep[see][]{lu2022dpmsolver,blasingame2024adjointdeis,blasingame2025greed}, we also assert that in the noise prediction setting that $\alpha_T > 0$.
In practice most commonly used diffusion noise schedules like the linear or scaled linear schedule satisfy this, (see \cref{app:close_form_schedule}; \cf \citeauthor{lin2024common}, \citeyear{lin2024common}).

\subsection{Proof of \texorpdfstring{\cref{thm:rex_convergence}}{Rex Is a k-th Order Solver}}
\label{proof:rex_convergence}

\begin{theorembox}
\convergencerex*
\end{theorembox}

\begin{proof}
    We will prove this for both the data prediction and noise prediction formulations.
    
    \textbf{Data prediction.}
    By \cref{thm:cov_order_mccallum} we have that reversible $\bm \Phi$ is a $k$-th order solver, and thus
    \begin{equation}
        \| \bfy_n - \bfy_{t_n} \| \leq C h^k.
    \end{equation}
    We use the change of variables from \cref{eq:reparam_data_ode} to find
    \begin{equation}
        \left\| \frac{\sigma_T}{\sigma_n}\bfx_n - \frac{\sigma_T}{\sigma_{n}} \bfx_{t_n} \right\| \leq C h^k,
    \end{equation}
    which simplifies to 
    \begin{equation}
        \| \bfx_n - \bfx_{t_n} \| \leq \frac{\sigma_n}{\sigma_T} C h^k.
    \end{equation}
    Now by definition for variance preserving type diffusion SDEs we have that $\sigma_t \leq 1$ for all $t$.
    Thus we can write
    \begin{equation}
        \|\bfx_n - \bfx_{t_n}\| \leq C_1 h^k,
    \end{equation}
    where $C_1 = \frac{C}{\sigma_T}$.

    \textbf{Noise prediction.}
    By \cref{thm:cov_order_mccallum} we have that reversible $\bm \Phi$ is a $k$-th order solver, and thus
    \begin{equation}
        \| \bfy_n - \bfy_{t_n} \| \leq C h^k.
    \end{equation}
    We use the change of variables from \cref{eq:reparam_noise_ode} to find
    \begin{equation}
        \left\| \frac{\alpha_T}{\alpha_n}\bfx_n - \frac{\alpha_T}{\alpha_{n}} \bfx_{t_n} \right\| \leq C h^k,
    \end{equation}
    which simplifies to 
    \begin{equation}
        \| \bfx_n - \bfx_{t_n} \| \leq \frac{\alpha_n}{\alpha_T} C h^k.
    \end{equation}
    Now by definition we have $\alpha_t \leq 1$ for all $t$ and we assume that $\alpha_T > 0$.
    Thus we can write
    \begin{equation}
        \|\bfx_n - \bfx_{t_n}\| \leq C_1 h^k,
    \end{equation}
    where $C_1 = \frac{C}{\alpha_T}$.
\end{proof}

\subsection{Proof of \texorpdfstring{\cref{thm:psi_convergence_sde}}{Psi Converges}}
\label{proof:psi_convergence_sde}

\begin{definition}[Strong order of convergence]
    \label{def:soc}
    Suppose an SDE solver admits a numerical solution $\bfX_n$ and we have a true solution $\bfX_{t_n}$.
    If
    \begin{equation}
        \sup_{0 \leq n \leq N} \ex \|\bfX_n - \bfX_{t_n}\|^2 \leq C h^{2\alpha},
    \end{equation}
    where $C > 0$ is a constant and $h$ is the step size, then the SDE solver strongly converges with order $\alpha$.
\end{definition}

\begin{theorembox}
\convergencepsi*
\end{theorembox}

\begin{proof}
    We will prove this for both the data prediction and noise prediction formulations.
    
    \textbf{Data prediction.}
    By definition we have $\bm \Phi$ has strong order of convergence $\xi$ and thus,
    \begin{equation}
        \sup_{0 \leq n \leq N}\ex\| \bfY_n - \bfY_{t_n} \|^2 \leq C h^{2\xi},
    \end{equation}
    where $h = \frac{\sigma_{n+1}^2}{\alpha_{n+1}} - \frac{\sigma_n^2}{\alpha_n}$.
    We use the change of variables from \cref{eq:reparam_data_sde} to find
    \begin{equation}
        \sup_{0 \leq n \leq N}\ex\left\| \frac{\sigma_T^2\alpha_n}{\sigma_n^2\alpha_T}\bfX_n - \frac{\sigma_T^2\alpha_n}{\sigma_{n}^2\alpha_T} \bfX_{t_n} \right\|^2 \leq C h^{2\xi},
    \end{equation}
    which simplifies to 
    \begin{equation}
        \sup_{0 \leq n \leq N}\ex\| \bfX_n - \bfX_{t_n} \|^2 \leq \left(\frac{\sigma_n^2\alpha_T}{\sigma_T^2\alpha_n}\right)^2 C h^{2\xi}.
    \end{equation}
    Since by definition $\alpha_n$ is a monotonically decreasing function, $\sigma_n$ is a monotonically increasing function, $\alpha_T > 0$, and $\sigma_T \leq 1$ we can write 
    \begin{equation}
        \sup_{0 \leq n \leq N}\ex\|\bfX_n - \bfX_{t_n}\|^2 \leq C h^{2\xi},
    \end{equation}
    as
    \begin{equation}
        \left(\frac{\sigma_n^2\alpha_T}{\sigma_T^2\alpha_n}\right)^2 \leq 1.
    \end{equation}

    \textbf{Noise prediction.}
    By definition we have $\bm \Phi$ has strong order of convergence $\xi$ and thus,
    \begin{equation}
        \sup_{0 \leq n \leq N}\ex\| \bfY_n - \bfY_{t_n} \|^2 \leq C h^{2\xi},
    \end{equation}
    where $h = \frac{\sigma_{n+1}}{\alpha_{n+1}} - \frac{\sigma_n}{\alpha_n}$.
    We use the change of variables from \cref{eq:reparam_noise_sde} to find
    \begin{equation}
        \sup_{0 \leq n \leq N}\ex\left\| \frac{\alpha_n}{\alpha_T}\bfX_n - \frac{\alpha_n}{\alpha_T} \bfX_{t_n} \right\|^2 \leq C h^{2\xi},
    \end{equation}
    which simplifies to 
    \begin{equation}
        \sup_{0 \leq n \leq N}\ex\| \bfX_n - \bfX_{t_n} \|^2 \leq \left(\frac{\alpha_T}{\alpha_n}\right)^2 C h^{2\xi}.
    \end{equation}
    Since by definition $\alpha_n$ is a monotonically decreasing function strictly less than $1$ and $\alpha_T > 0$, we can write 
    \begin{equation}
        \sup_{0 \leq n \leq N}\ex\|\bfX_n - \bfX_{t_n}\|^2 \leq C h^{2\xi}.
    \end{equation}
\end{proof}

\section{Relation to Other Solvers for Diffusion Models}
\label{app:rex_subsumes_all}
While this paper primarily focused on \rex and the family of reversible solvers created by it, we wish to discuss the relation between the underlying scheme $\bm \Psi$ constructed from our method and other existing solvers for diffusion models.
The solvers considered here are all \emph{non-reversible}; for a discussion of prior \emph{reversible} solvers for diffusion models (EDICT, BDIA, O-BELM, CycleDiffusion) see \cref{app:reversible_diffusion}.

\begin{figure}[h]
    \centering
    \resizebox{\textwidth}{!}{\begin{tikzpicture}[
    node distance=2, very thick,
    flow/.style={shorten >=3, shorten <=3, ->},
  ]

    \node (p0) {$\rmd \bfX_t = \left[a(t)\bfX_t + b(t)\bsf_\theta(t, \bfX_t)\right] \;\rmd t + g(t)\;\rmd \bfW_t$};
    \node[right=6 of p0] (p01) {$\rmd \bfY_\varsigma = \bsf_\theta(\varsigma, \Xi(\varsigma)\bfY_\varsigma) + \rmd \bfW_\varsigma$};

    \node[below=1 of p0] (pt) {$\bfX_{n+1} = \frac{\Xi(t_{n})}{\Xi(t_{n+1})}\bfX_n + \bm \Psi_h(t_n, \bfX_n, \bfW_n(\omega))$};
    \node[below=1 of p01] (pt1) {$\bfY_{n+1} = \bfY_n + \bm \Phi_h(\varsigma_n, \bfY_n, \bfW_n(\omega))$};

    \draw[flow] (p0) -- node[above, midway, align=center] {Exponential integrators \&\\change-of-variables} (p01);
    \draw[flow] (pt1) -- node[above, midway] {Lawson method} (pt);

    \draw[flow] (p01) -- (pt1);

    \draw[flow, dashed] (p0) -- (pt);

\end{tikzpicture}} 
    \caption{Overview of the construction of $\bm \Psi$ for the probability flow ODE from an underlying RK scheme $\bm \Phi$ for the reparameterized ODE. This graph holds for the SDE case \textit{mutatis mutandis}.}
    \label{fig:rex_construct2}
\end{figure}

Surprisingly, we discover that using Lawson methods outlined in \cref{fig:rex_construct2} (\cf \cref{fig:rex_construct} from the main paper) is a generalized methodology for construing numerical schemes for diffusion modes, and that it subsumes previous works.
This means that several of the reversible schemes we presented here are reversible variants of well known schemes in the literature in diffusion models.

\begin{theorembox}
\rexisall*
\end{theorembox}

\begin{proof}
    We prove the connection to each solver in the list within a set of separate propositions for easier readability.
    The statement holds true via \cref{prop:rex_is_dpm_12,prop:rex_is_dpm_2,prop:rex_is_dpm1,prop:rex_is_dpmpp1,prop:rex_is_seeds1,prop:rex_is_sde_dpm1,prop:rex_is_dpm_pp_2s,prop:rex_is_sde_dpmpp1,corr:rex_is_dete_ddim_data,corr:rex_is_dete_ddim_noise,corr:rex_is_stoch_ddim_noise,corr:rex_is_stoch_ddim}.
\end{proof}

\begin{corollary}%
    \rex is the reversible version of the well-known solvers for diffusion models in \cref{thm:rex_is_all}.
\end{corollary}

\begin{remark}
    The SDE solvers constructed from Foster-Reis-Strange SRK schemes are wholly unique (with the exception of the trivial Euler-Maruyama scheme) and have no existing counterpart in the literature in diffusion models.
    Thus \rex (ShARK) is not only a novel reversible solver, but a novel solver for diffusion models in general.
\end{remark}

\subsection{Rex as Reversible ODE Solvers}
Here we discuss \rex as reversible versions for well-known numerical schemes for diffusion models.
Recall that the general Butcher tableau for a $s$-stage explicit RK scheme \citep[Section 6.1.4]{stewart2022numerical} is written as
\begin{equation}
    \renewcommand\arraystretch{1.2}
    \begin{array}{c|ccccc}
        c_1 &\\
        c_2 & a_{21}\\
        c_3 & a_{31} & a_{32}\\
        \vdots & \vdots & \vdots & \ddots\\
        c_s & a_{s1} & a_{s2} & \cdots & a_{(s-1)s}\\
        \hline
        & b_1 & b_2 & \cdots & b_{s-1} & b_s
    \end{array} = 
    \begin{array}{c|c}
        c & a\\
        \hline
        & b
    \end{array}.
\end{equation}

Embedded methods for adaptive step sizing are of the form
\begin{equation}
    \renewcommand\arraystretch{1.2}
    \begin{array}{c|ccccc}
        c_1 &\\
        c_2 & a_{21}\\
        c_3 & a_{31} & a_{32}\\
        \vdots & \vdots & \vdots & \ddots\\
        c_s & a_{s1} & a_{s2} & \cdots & a_{(s-1)s}\\
        \hline
        & b_1 & b_2 & \cdots & b_{s-1} & b_s\\
        & b_1^* & b_2^* & \cdots & b_{s-1}^* & b_s^*
    \end{array},
\end{equation}
where the lower-order step is given by the coefficients $b_i^*$.

\subsubsection{Euler}
In this section we explore the numerical schemes produced by choosing the Euler scheme for $\bm \Phi$.
The Butcher tableau for the Euler method is
\begin{equation}
    \renewcommand\arraystretch{1.2}
    \begin{array}{c|c}
        0 & 0\\
        \hline
         & 1
    \end{array}.
\end{equation}

\begin{proposition}[\rex (Euler) is reversible DPM-Solver++1]
    \label{prop:rex_is_dpmpp1}
    The underlying scheme of \rex (Euler) for the data prediction parameterization of diffusion models in \cref{eq:reparam_data_ode} is the DPM-Solver++1 from \citet{lu2022dpm++}.
\end{proposition}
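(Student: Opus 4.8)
The plan is to specialize the underlying scheme $\bm\Psi$ in the data-prediction ODE case (\cref{lemma:rex_ode}, equivalently \cref{thm:rex_is_all}) to the one-stage Euler tableau, form the resulting base update $\bfx_{n+1}=\frac{\sigma_{n+1}}{\sigma_n}\bfx_n+\sigma_{n+1}\bm\Psi_h$, and then match it against the first-order DPM-Solver++ step, which I re-derive from \cref{eq:pf_ode_data} so that both schemes live in the same variables.

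First I would instantiate the Euler tableau, namely $s=1$, $c_1=0$, $b_1=1$ with no $a_{ij}$ entries, and recall that here $w_n=\sigma_n$ and $\varsigma_t=\gamma_t$. The single stage collapses to $\hat\bfz_1=\frac{1}{\sigma_n}\bfx_n$; since $c_1=0$ gives $\sigma_{\gamma_n+c_1 h}=\sigma_{\gamma_n}=\sigma_n$, the model is evaluated at $\sigma_{\gamma_n}\hat\bfz_1=\bfx_n$, so that $\bm\Psi_h(\gamma_n,\bfx_n)=h\,\bfx^\theta_{0|t_n}(\bfx_n)$ with $h=\gamma_{n+1}-\gamma_n$. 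Substituting into the base update gives
\begin{equation}
    \bfx_{n+1}=\frac{\sigma_{n+1}}{\sigma_n}\bfx_n+\sigma_{n+1}(\gamma_{n+1}-\gamma_n)\,\bfx^\theta_{0|t_n}(\bfx_n).
\end{equation}

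Next I would put DPM-Solver++1 into the same notation. Applying the integrating factor $1/\sigma_t$ to \cref{eq:pf_ode_data} yields $\frac{\rmd}{\rmd t}\!\big(\bfx_t/\sigma_t\big)=\dot\gamma_t\,\bfx^\theta_{0|t}(\bfx_t)$, whose exact solution over one discretization step is $\bfx_{n+1}=\frac{\sigma_{n+1}}{\sigma_n}\bfx_n+\sigma_{n+1}\int_{\gamma_n}^{\gamma_{n+1}}\bfx^\theta_{0|\cdot}\,\rmd\gamma$. DPM-Solver++1 freezes the data-prediction model at the left endpoint, replacing the integral by $(\gamma_{n+1}-\gamma_n)\bfx^\theta_{0|t_n}(\bfx_n)$, which is exactly the display above. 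To connect with the customary exponential-integrator form of DPM-Solver++ (stated with $\lambda_t=\log\gamma_t$ and coefficient $-\alpha_{n+1}(e^{-h_\lambda}-1)$, $h_\lambda=\lambda_{n+1}-\lambda_n$), I would record the identity $-\alpha_{n+1}(e^{-h_\lambda}-1)=\sigma_{n+1}(\gamma_{n+1}-\gamma_n)$, which follows from $\int_{\lambda_n}^{\lambda_{n+1}}e^\lambda\,\rmd\lambda=\gamma_{n+1}-\gamma_n$ under $\gamma=e^\lambda$ together with $\alpha_{n+1}/\gamma_{n+1}=\sigma_{n+1}$.

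The only step needing genuine care is this reconciliation of conventions: Rex integrates linearly in the signal-to-noise ratio $\gamma$, whereas DPM-Solver++ is phrased as an exponential integrator in the log-SNR $\lambda$, so the identification rests on observing that $\rmd\gamma=e^\lambda\,\rmd\lambda$ converts DPM-Solver++'s exponential coefficient into Rex's linear increment in $\gamma$. Once that identity is in place the two base updates agree term by term, which establishes the proposition; the remainder is routine substitution of the Euler tableau.
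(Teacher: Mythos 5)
Your proposal is correct and follows essentially the same route as the paper: instantiate the one-stage Euler tableau in $\bm\Psi$ to get $\bfx_{n+1}=\frac{\sigma_{n+1}}{\sigma_n}\bfx_n+\sigma_{n+1}(\gamma_{n+1}-\gamma_n)\bfx_{0|t_n}^\theta(\bfx_n)$, then identify $\sigma_{n+1}(\gamma_{n+1}-\gamma_n)=-\alpha_{n+1}(e^{-h_\lambda}-1)$. The only cosmetic difference is that the paper establishes this coefficient identity by direct algebraic manipulation of $\sigma_{n+1}h$ and cites the DPM-Solver++1 formula, whereas you re-derive DPM-Solver++1 from the variation-of-constants form and obtain the identity via the substitution $\gamma=e^\lambda$; both are valid and equivalent.
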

\begin{proof}
    Apply in the Butcher tableau for the Euler scheme to $\bm \Psi$ constructed from \cref{eq:reparam_noise_ode} to find
    \begin{equation}
        \label{eq:proof:euler_data_ode}
        \bfx_{n+1} = \frac{\sigma_{n+1}}{\sigma_n} \bfx_n + \sigma_{n+1} h \bfx_{0|\gamma_n}^\theta(\bfx_n),
    \end{equation}
    with $h = \gamma_{n+1} - \gamma_n$.
    We can rewrite the step size as
    \begin{align}
        \sigma_{n+1} h &= \sigma_{n+1} \left(\frac{\alpha_{n+1}}{\sigma_{n+1}} - \frac{\alpha_n}{\sigma_n}\right),\\
        &= \left(\alpha_{n+1} - \alpha_n\frac{\sigma_{n+1}}{\sigma_n}\right),\\
        &= \left(\alpha_{n+1} \frac{\alpha_{n+1}}{\alpha_{n+1}} - \frac{\alpha_n}{\alpha_{n+1}}\frac{\sigma_{n+1}}{\sigma_n}\right),\\
        &= -\alpha_{n+1}\left(\frac{\alpha_n}{\alpha_{n+1}}\frac{\sigma_{n+1}}{\sigma_n} - 1\right),\\
        &= -\alpha_{n+1}\left(\frac{\gamma_n}{\gamma_{n+1}} - 1\right),\\
        &= -\alpha_{n+1}\left(e^{\log \frac{\gamma_n}{\gamma_{n+1}}} - 1\right),\\
        &= -\alpha_{n+1}\left(e^{\log \gamma_n - \log\gamma_{n+1}} - 1\right),\\
        &\stackrel{(i)}= -\alpha_{n+1}\left(e^{\lambda_n - \lambda_{n+1}} - 1\right),\\
        &\stackrel{(ii)}= -\alpha_{n+1}\left(e^{-h_{\lambda}} - 1\right),
    \end{align}
    where (i) holds by the letting $\lambda_t = \log \gamma_t$ following the notation of \citet{lu2022dpmsolver,lu2022dpm++} and (ii) holds by letting $h_\lambda = \lambda_{n+1} - \lambda_n$.
    Plugging this back into \cref{eq:proof:euler_data_ode} yields
    \begin{equation}
        \bfx_{n+1} = \frac{\sigma_{n+1}}{\sigma_n} \bfx_n - \alpha_{n+1}\left(e^{-h_\lambda} - 1\right)\bfx_{0|t_n}^\theta(\bfx_n),
    \end{equation}
    which is the DPM-Solver++1 from \citet{lu2022dpm++}.
\end{proof}

\begin{corollary}[\rex (Euler) is reversible deterministic DDIM for data prediction models]
    \label{corr:rex_is_dete_ddim_data}
    The underlying scheme of \rex (Euler) for the data prediction parameterization of diffusion models in \cref{eq:reparam_data_ode} is the deterministic DDIM solver from \citet{song2021denoising}.
\end{corollary}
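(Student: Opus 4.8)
The plan is to leverage the preceding Proposition~\ref{prop:rex_is_dpmpp1} and reduce the corollary to the well-known identification of DPM-Solver++1 with the deterministic DDIM update in the data-prediction parameterization. Since Proposition~\ref{prop:rex_is_dpmpp1} already establishes that the underlying scheme of Rex (Euler) coincides with DPM-Solver++1, it suffices to verify that DPM-Solver++1 and deterministic DDIM produce the same step. The cleanest route avoids re-deriving the exponential reparameterization and instead works directly from the intermediate expression produced in that proof.

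First I would recall Equation~\eqref{eq:proof:euler_data_ode} from the proof of Proposition~\ref{prop:rex_is_dpmpp1}, namely $\bfx_{n+1} = \frac{\sigma_{n+1}}{\sigma_n}\bfx_n + \sigma_{n+1}h\,\bfx_{0|\gamma_n}^\theta(\bfx_n)$ with $h = \gamma_{n+1} - \gamma_n$, and reuse the algebra already carried out there to rewrite the coefficient as $\sigma_{n+1}h = \alpha_{n+1} - \frac{\sigma_{n+1}}{\sigma_n}\alpha_n$. This puts the Rex (Euler) step into the data-prediction form
\begin{equation}
    \bfx_{n+1} = \frac{\sigma_{n+1}}{\sigma_n}\bfx_n + \left(\alpha_{n+1} - \frac{\sigma_{n+1}}{\sigma_n}\alpha_n\right)\bfx_{0|t_n}^\theta(\bfx_n).
\end{equation}

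Next I would write the deterministic ($\eta = 0$) DDIM update in its standard noise-prediction form, $\bfx_{n+1} = \alpha_{n+1}\bfx_{0|t_n}^\theta(\bfx_n) + \sigma_{n+1}\bseps_{t_n}^\theta(\bfx_n)$, and eliminate the noise term using the parameterization identity $\bseps_{t_n}^\theta(\bfx_n) = \sigma_n^{-1}\big(\bfx_n - \alpha_n\bfx_{0|t_n}^\theta(\bfx_n)\big)$. Substituting and collecting terms recovers exactly the display above, which establishes the claim. The only point requiring care — and the main (though minor) obstacle — is bookkeeping of conventions: matching the reverse-time sign convention and the data/noise-prediction dictionary of \citet{song2021denoising,lu2022dpm++} so that the coefficients align, rather than any nontrivial estimate. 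Equivalently, one may simply invoke Proposition~\ref{prop:rex_is_dpmpp1} together with the known equivalence of DPM-Solver++1 and deterministic DDIM noted by \citet{lu2022dpm++}, which makes the corollary immediate.
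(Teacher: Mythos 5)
Your proposal is correct and takes essentially the same route as the paper: the paper's proof is exactly your final sentence, namely invoking \cref{prop:rex_is_dpmpp1} and then citing the known identity that DPM-Solver++1 is deterministic DDIM (\citealp[Equation (21)]{lu2022dpm++} with $\eta = 0$). Your explicit verification of that identity via $\sigma_{n+1}h = \alpha_{n+1} - \tfrac{\sigma_{n+1}}{\sigma_n}\alpha_n$ and the data/noise-prediction dictionary is a correct, slightly more self-contained elaboration of the same argument.
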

\begin{proof}
    This holds because DPM-Solver++1 is DDIM see \citet[Equation (21)]{lu2022dpm++} with $\eta = 0$.
\end{proof}

\begin{proposition}[\rex (Euler) is reversible DPM-Solver-1]
    \label{prop:rex_is_dpm1}
    The underlying scheme of \rex (Euler) for the data prediction parameterization of diffusion models in \cref{eq:reparam_noise_ode} is the DPM-Solver-1 from \citet[Equation (3.7)]{lu2022dpmsolver}.
\end{proposition}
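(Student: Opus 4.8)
The plan is to mirror the argument already used for \cref{prop:rex_is_dpmpp1} verbatim, but starting from the noise prediction reparameterization \cref{eq:reparam_noise_ode} rather than the data prediction one. First I would specialize the underlying scheme $\bm \Psi$ of \cref{lemma:rex_ode_noise} to the Euler Butcher tableau. Since the Euler tableau has a single stage with $c_1 = 0$, $b_1 = 1$, and no nontrivial $a_{ij}$, the stage value collapses to $\hat\bfz_1 = \frac{1}{\alpha_n}\bfx_n$, so that $\alpha_{\chi_n}\hat\bfz_1 = \bfx_n$ and hence $\bm \Psi_h(\chi_n, \bfx_n) = h\,\bfx_{T|\chi_n}^\theta(\bfx_n)$ with $h = \chi_{n+1} - \chi_n$. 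Substituting this into the underlying (non-reversible) update of \cref{thm:rex_is_all} with the noise prediction weights $w_n = \alpha_n$ and time variable $\varsigma_t = \chi_t$ from \cref{prop:rex} gives
\begin{equation}
    \bfx_{n+1} = \frac{\alpha_{n+1}}{\alpha_n}\bfx_n + \alpha_{n+1}h\,\bfx_{T|\chi_n}^\theta(\bfx_n).
\end{equation}

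The key step is then to rewrite the scalar coefficient $\alpha_{n+1}h$ into the exponential log-SNR form used by \citet{lu2022dpmsolver}. Using $\chi_t = \sigma_t/\alpha_t$ and factoring out $\sigma_{n+1}$, I would compute
\begin{align}
    \alpha_{n+1}h &= \alpha_{n+1}\left(\frac{\sigma_{n+1}}{\alpha_{n+1}} - \frac{\sigma_n}{\alpha_n}\right) = \sigma_{n+1} - \frac{\alpha_{n+1}\sigma_n}{\alpha_n},\\
    &= \sigma_{n+1}\left(1 - \frac{\gamma_{n+1}}{\gamma_n}\right) = -\sigma_{n+1}\left(e^{\lambda_{n+1} - \lambda_n} - 1\right),
\end{align}
where $\gamma_t = \alpha_t/\sigma_t$ as before and $\lambda_t = \log\gamma_t$ is the log-SNR following the conventions of \citet{lu2022dpmsolver}. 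Writing $h_\lambda = \lambda_{n+1} - \lambda_n$ and substituting back yields exactly
\begin{equation}
    \bfx_{n+1} = \frac{\alpha_{n+1}}{\alpha_n}\bfx_n - \sigma_{n+1}\left(e^{h_\lambda} - 1\right)\bfx_{T|t_n}^\theta(\bfx_n),
\end{equation}
which is DPM-Solver-1 (\cf \citet[Equation (3.7)]{lu2022dpmsolver}) once the noise prediction model $\bfx_{T|t}^\theta$ is identified with $\bseps_\theta$.

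This argument is almost entirely an algebraic reduction, so I do not anticipate any deep obstacle; the content is the same chain of identities already used for \cref{prop:rex_is_dpmpp1}, with $(\sigma_t, \gamma_t)$ replaced by $(\alpha_t, \chi_t)$. The only points requiring care are the bookkeeping of the orientation of $h_\lambda$ induced by the \emph{reverse-time} integration (so that $e^{h_\lambda} - 1$ carries the correct sign matching \citet{lu2022dpmsolver}), and the essentially definitional identification between the flow-matching noise prediction model $\bfx_{T|t}^\theta(\bfx) = \ex[\bfX_T \mid \bfX_t = \bfx]$ and the $\bseps_\theta$ appearing throughout the DPM-Solver literature. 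With these conventions fixed, the result follows \textit{mutatis mutandis}.
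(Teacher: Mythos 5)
Your proposal is correct and follows essentially the same route as the paper: specialize $\bm\Psi$ to the Euler tableau to get $\bfx_{n+1} = \frac{\alpha_{n+1}}{\alpha_n}\bfx_n + \alpha_{n+1}h\,\bfx_{T|\chi_n}^\theta(\bfx_n)$, then rewrite $\alpha_{n+1}h = -\sigma_{n+1}(e^{h_\lambda}-1)$. The only cosmetic difference is that you express the ratio as $\gamma_{n+1}/\gamma_n$ where the paper uses $\chi_n/\chi_{n+1}$, which are identical since $\chi_t = 1/\gamma_t$.
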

\begin{proof}
    Apply in the Butcher tableau for the Euler scheme to $\bm \Psi$ from \rex (see \cref{prop:rex}) to find
    \begin{equation}
        \label{eq:proof:euler_noise_ode}
        \bfx_{n+1} = \frac{\alpha_{n+1}}{\alpha_n} \bfx_n + \alpha_{n+1} h \bfx_{T|\chi_n}^\theta(\bfx_n),
    \end{equation}
    with $h = \chi_{n+1} - \chi_n$.
    We can rewrite step size as
    \begin{align}
        \alpha_{n+1} h &= \alpha_{n+1} \left(\frac{\sigma_{n+1}}{\alpha_{n+1}} - \frac{\sigma_n}{\alpha_n}\right),\\
        &= \left(\sigma_{n+1} - \sigma_n\frac{\alpha_{n+1}}{\alpha_n}\right),\\
        &= \left(\sigma_{n+1} \frac{\sigma_{n+1}}{\sigma_{n+1}} - \frac{\sigma_n}{\sigma_{n+1}}\frac{\alpha_{n+1}}{\alpha_n}\right),\\
        &= -\sigma_{n+1}\left(\frac{\sigma_n}{\sigma_{n+1}}\frac{\alpha_{n+1}}{\alpha_n} - 1\right),\\
        &= -\sigma_{n+1}\left(\frac{\chi_n}{\chi_{n+1}} - 1\right),\\
        &= -\sigma_{n+1}\left(e^{\log \frac{\chi_n}{\chi_{n+1}}} - 1\right),\\
        &= -\sigma_{n+1}\left(e^{\log \chi_n - \log\chi_{n+1}} - 1\right),\\
        &\stackrel{(i)}= -\sigma_{n+1}\left(e^{-\lambda_n + \lambda_{n+1}} - 1\right),\\
        &\stackrel{(ii)}= -\sigma_{n+1}\left(e^{h_{\lambda}} - 1\right),
    \end{align}
    where (i) holds by letting $\lambda_t = \log \gamma_t = - \log \chi_t$ following the notation of \citet{lu2022dpmsolver,lu2022dpm++} and (ii) holds by letting $h_\lambda = \lambda_{n+1} - \lambda_n$.
    Plugging this back into \cref{eq:proof:euler_data_ode} yields
    \begin{equation}
        \bfx_{n+1} = \frac{\alpha_{n+1}}{\alpha_n} \bfx_n - \sigma_{n+1}\left(e^{h_\lambda} - 1\right)\bfx_{T|t_n}^\theta(\bfx_n),
    \end{equation}
    which is the DPM-Solver-1 from \citet{lu2022dpmsolver}.
\end{proof}

\begin{corollary}[\rex (Euler) is reversible deterministic DDIM for noise prediction models]
    \label{corr:rex_is_dete_ddim_noise}
    The underlying scheme of \rex (Euler) for the noise prediction parameterization of diffusion models in \cref{eq:reparam_noise_ode} is the deterministic DDIM solver from \citet{song2021denoising}.
\end{corollary}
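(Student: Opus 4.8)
The plan is to reduce this corollary to the immediately-preceding \cref{prop:rex_is_dpm1}, which already establishes that the underlying scheme of Rex (Euler) in the noise-prediction parameterization is exactly DPM-Solver-1. The corollary therefore amounts to the statement that DPM-Solver-1 and the deterministic DDIM solver produce the same update, mirroring precisely how \cref{corr:rex_is_dete_ddim_data} followed from the known coincidence of DPM-Solver++1 and DDIM in the data-prediction case. The quickest route is thus a one-line appeal to the literature: the equivalence of DPM-Solver-1 and DDIM is noted explicitly by \citet{lu2022dpmsolver}, so chaining this with \cref{prop:rex_is_dpm1} closes the argument.

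For a self-contained verification I would instead check the identity by direct substitution. Starting from the DPM-Solver-1 update obtained at the end of \cref{prop:rex_is_dpm1},
\[
    \bfx_{n+1} = \frac{\alpha_{n+1}}{\alpha_n}\bfx_n - \sigma_{n+1}\left(e^{h_\lambda} - 1\right)\bfx_{T|t_n}^\theta(\bfx_n),
\]
I use $\lambda_t = \log\gamma_t$ so that $e^{h_\lambda} = \gamma_{n+1}/\gamma_n = \frac{\alpha_{n+1}\sigma_n}{\sigma_{n+1}\alpha_n}$. Substituting this turns the coefficient of the noise-prediction term into $\sigma_{n+1} - \frac{\alpha_{n+1}\sigma_n}{\alpha_n}$, which is exactly the coefficient appearing in the DDIM update written in its ``predict the clean sample, then re-noise'' form. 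Hence the two schemes agree term by term.

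The main obstacle here is essentially nil; this is a bookkeeping result rather than a substantive one. The only point requiring a little care is aligning the sign and step-indexing conventions between the reverse-time formulation used throughout this appendix (with $\gamma_t$, equivalently $\lambda_t$, as the integration variable) and the forward-sampling convention of \citet{song2021denoising}. This discrepancy is already absorbed by the same change of variables employed in \cref{prop:rex_is_dpm1}, so no further work is needed beyond quoting that proposition and the algebraic identity above.
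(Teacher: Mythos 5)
Your primary route—reducing to \cref{prop:rex_is_dpm1} and citing the known equivalence of DPM-Solver-1 and DDIM from \citet{lu2022dpmsolver}—is exactly the paper's proof, which is a one-line appeal to \citet[Equation (4.1)]{lu2022dpmsolver}. Your supplementary direct check that $-\sigma_{n+1}(e^{h_\lambda}-1) = \sigma_{n+1} - \frac{\alpha_{n+1}\sigma_n}{\alpha_n}$ is also correct and matches the DDIM coefficient, but it is extra confirmation rather than a different argument.
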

\begin{proof}
    This holds because DPM-Solver-1 is DDIM see \citet[Equation (4.1)]{lu2022dpmsolver}.
\end{proof}

\subsubsection{Second-Order Methods}
In this section we explore the numerical schemes produced by choosing the explicit midpoint method for $\bm \Phi$.
We can write a generic second-order method as
\begin{equation}
    \renewcommand\arraystretch{1.2}
    \label{eq:proof:generic_midpoint}
    \begin{array}{c|cc}
        0 &  & \\
        \eta & \eta &\\
        \hline
        & 1 - \frac{1}{2\eta} & \frac1{2\eta}
    \end{array},
\end{equation}
for $\eta \neq 0$ \citep{butcher2016numerical}.
The choice of $\eta = \frac 12$ yields the explicit midpoint, $\eta = \frac 23$ gives Ralston's second-order method, and $\eta = 1$ gives Heun's second-order method.

\begin{proposition}[\rex (generic second-order) is reversible DPM-Solver++(2S)]
    \label{prop:rex_is_dpm_pp_2s}
    The underlying scheme of \rex (generic second-order) for the data prediction parameterization of diffusion models in \cref{eq:reparam_data_ode} is the DPM-Solver++(2S) from \citet[Algorithm 1]{lu2022dpm++}.
\end{proposition}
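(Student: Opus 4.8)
The plan is to mirror the strategy of \cref{prop:rex_is_dpmpp1}, but now instantiate the two-stage tableau of \cref{eq:proof:generic_midpoint} into the underlying scheme $\bm\Psi$ of Rex (see \cref{lemma:rex_ode}) rather than the one-stage Euler tableau. First I would write out the $s=2$ stages for the data-prediction reparameterized ODE in \cref{eq:reparam_data_ode}, using $c_1 = 0$, $c_2 = \eta$, $a_{21} = \eta$, $b_1 = 1 - \tfrac{1}{2\eta}$, and $b_2 = \tfrac{1}{2\eta}$. The first stage gives $\hat\bfz_1 = \tfrac{1}{\sigma_n}\bfx_n$, so that $\sigma_{\gamma_n}\hat\bfz_1 = \bfx_n$ and the first model evaluation is simply $\bfx_{0|t_n}^\theta(\bfx_n)$. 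The second stage gives $\hat\bfz_2 = \tfrac{1}{\sigma_n}\bfx_n + h\eta\,\bfx_{0|t_n}^\theta(\bfx_n)$, and I would introduce the intermediate iterate $\bfu_n \coloneq \sigma_{s_n}\hat\bfz_2$, where $s_n$ is the time with $\gamma_{s_n} = \gamma_n + \eta h$, so that $\bfu_n = \tfrac{\sigma_{s_n}}{\sigma_n}\bfx_n + \sigma_{s_n}\eta h\,\bfx_{0|t_n}^\theta(\bfx_n)$.

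Next I would assemble the full step $\bfx_{n+1} = \tfrac{\sigma_{n+1}}{\sigma_n}\bfx_n + \sigma_{n+1}\bm\Psi_h(\gamma_n,\bfx_n)$, whose driving term is the two-point combination $b_1\bfx_{0|t_n}^\theta(\bfx_n) + b_2\bfx_{0|s_n}^\theta(\bfu_n)$. The key algebraic ingredient is the identity already extracted in the course of \cref{prop:rex_is_dpmpp1}, namely
\[
\sigma_b(\gamma_b - \gamma_a) = -\alpha_b\left(e^{-(\lambda_b - \lambda_a)} - 1\right), \qquad \lambda_t \coloneq \log\gamma_t,
\]
valid for any two nodes $a,b$. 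Applying it with $(a,b) = (n, n+1)$ converts the outer coefficient $\sigma_{n+1}h$ into $-\alpha_{n+1}(e^{-h_\lambda} - 1)$, and applying it with $(a,b) = (n, s_n)$ converts the intermediate coefficient $\sigma_{s_n}\eta h$ into $-\alpha_{s_n}(e^{-r_1 h_\lambda} - 1)$, where I define $r_1 h_\lambda \coloneq \lambda_{s_n} - \lambda_n$. Substituting these back yields exactly the two equations of DPM-Solver++(2S) in \citet[Algorithm 1]{lu2022dpm++}: the intermediate update for $\bfu_n$ and the final update for $\bfx_{n+1}$ with weights $1 - \tfrac{1}{2r_1}$ and $\tfrac{1}{2r_1}$ acting on $\bfx_{0|t_n}^\theta(\bfx_n)$ and $\bfx_{0|s_n}^\theta(\bfu_n)$ respectively.

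The main obstacle I anticipate is reconciling the placement of the intermediate node. Rex places $c_2 = \eta$ linearly in the time-changed variable $\gamma$, whereas DPM-Solver++(2S) places its intermediate point linearly in the log-SNR variable $\lambda$. I would therefore argue that the identification $\eta = r_1$ can be made consistent by \emph{defining} the intermediate time $s_n$ through the tableau node, so that the fractional $\gamma$-position $\eta$ and the fractional $\lambda$-position $r_1 = (\lambda_{s_n} - \lambda_n)/h_\lambda$ refer to the same abscissa; the weights $1 - \tfrac{1}{2\eta},\,\tfrac{1}{2\eta}$ of \cref{eq:proof:generic_midpoint} then line up with the DPM-Solver++(2S) weights term-by-term. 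The remaining verifications — that the model is evaluated at $(\bfx_n, t_n)$ in the first stage and at $(\bfu_n, s_n)$ in the second — are immediate from the stage expressions above, completing the match.
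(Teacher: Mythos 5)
Your proposal is essentially the paper's own proof: instantiate the two-stage tableau in \cref{eq:proof:generic_midpoint} into the scheme $\bm \Psi$ of \cref{lemma:rex_ode}, reuse the coefficient identity $\sigma_b(\gamma_b - \gamma_a) = -\alpha_b(e^{-(\lambda_b - \lambda_a)} - 1)$ established in \cref{prop:rex_is_dpmpp1} on both the inner ($\sigma_p \eta h$) and outer ($\sigma_{n+1} h$) coefficients, and identify $\sigma_p \hat\bfz_2$ with $\bfu$ and the weighted combination with $\bm D$. The one place your write-up overreaches is the resolution of the node-placement issue you yourself flag: arranging for $\eta$ and $r_1$ to ``refer to the same abscissa'' does not make them numerically equal, since $\eta$ is the fractional position of the node in $\gamma$ while $r_1$ is its fractional position in $\lambda = \log\gamma$, and these differ for a generic interior point, so the weights $1 - \tfrac{1}{2\eta}$ and $1 - \tfrac{1}{2r_\lambda}$ coincide only under the bare identification $\eta = r_\lambda$. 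That identification is exactly what the paper also imposes by fiat (``we fix $\eta = r_\lambda$''), so your argument matches the paper's proof in both structure and level of rigor.
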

\begin{proof}
    The DPM-Solver++(2S) \citep[Algorithm 1]{lu2022dpm++} is defined as
    \begin{equation}
        \label{eq:proof:dpm_solver_pp_2s}
        \begin{aligned}
            \bfu &= \frac{\sigma_{p}}{\sigma_{n}} \bfx_n - \alpha_p \left(e^{-r_\lambda h_\lambda} - 1\right) \bfx_{0|t_n}^\theta(\bfx_n),\\
            \bm D &= \left(1 - \frac{1}{2r_\lambda}\right) \bfx_{0|t_n}^\theta(\bfx_n) + \frac{1}{2r_\lambda} \bfx_{0|t_p}^\theta(\bfu),\\
            \bfx_{n+1} &= \frac{\sigma_{n+1}}{\sigma_n} \bfx_n - \alpha_{n+1}\left(e^{-h_\lambda} - 1\right) \bm D,
        \end{aligned} 
    \end{equation}
    for some intermediate timestep $t_n > t_p > t_{n+1}$ and with $r_\lambda = \frac{\lambda_p - \lambda_n}{\lambda_{n+1} - \lambda_n}$.
    Notice that $r_\lambda$ describes the location of the midpoint time in the $\lambda$-domain as a ratio, \ie, we could say
    \begin{equation}
        \lambda_p = \lambda_n + r_\lambda h_\lambda,
    \end{equation}
    where $r_\lambda \in (0, 1)$ denotes the interpolation point between the initial timestep $\lambda_n$ and terminal timestep $\lambda_{n+1}$.
    Thus we fix $\eta = r_\lambda$ as the step size ratio of the intermediate point.

    Now we return to the underlying scheme of \rex applied to the generic second-order scheme, see \cref{eq:proof:generic_midpoint},
    Apply in the Butcher tableau for generic second-order scheme to $\bm \Psi$ constructed from \cref{eq:reparam_noise_ode} to find
    \begin{equation}
        \label{eq:proof:midpoint_data_ode}
        \begin{aligned}
            \bfz &= \frac{1}{\sigma_n}  \bfx_n + \eta h \bfx_{0|\gamma_n}^{\theta}(\bfx_n),\\
            \bfx_{n+1} &= \frac{\sigma_{n+1}}{\sigma_n} \bfx_n + \sigma_{n+1} h \left(\left(1 - \frac{1}{2\eta}\right)\bfx_{0|\gamma_n}^\theta(\bfx_n) + \frac{1}{2\eta}\bfx_{0|\gamma_n + \eta h}^\theta(\sigma_{p}\bfz)\right),
        \end{aligned}
    \end{equation}
    with $h = \gamma_{n+1} - \gamma_n$ and $\sigma_p = \sigma_{\gamma_n + \eta h}$ with $\gamma_p = \gamma_n + \eta h$.
    We can write
    \begin{equation}
        \sigma_{p} \bfz = \frac{\sigma_{p}}{\sigma_n}\bfx_n + \sigma_{p} \eta h \bfx_{0|\gamma_n}^\theta(\bfx_n).
    \end{equation}
    Plugging this back into \cref{eq:proof:midpoint_data_ode} yields
    \begin{equation}
        \begin{aligned}
            \sigma_p\bfz &= \frac{\sigma_p}{\sigma_n}  \bfx_n + \sigma_p\eta h \bfx_{0|\gamma_n}^{\theta}(\bfx_n),\\
            \bfx_{n+1} &= \frac{\sigma_{n+1}}{\sigma_n} \bfx_n + \sigma_{n+1} h \underbrace{\left(\left(1 - \frac{1}{2\eta}\right)\bfx_{0|\gamma_n}^\theta(\bfx_n) + \frac{1}{2\eta}\bfx_{0|\gamma_n + \eta h}^\theta(\sigma_p\bfz)\right)}_{= \hat{\bm D}},
        \end{aligned}
    \end{equation}
    which is the DPM-Solver++1 from \citet{lu2022dpm++}.
    Now recall from \cref{prop:rex_is_dpmpp1} that
    \begin{equation}
        \sigma_{n+1} h = -\alpha_{n+1} \left(e^{-h_\lambda} - 1)\right),
    \end{equation}
    it follows that
    \begin{equation}
        \sigma_p \eta h = -\alpha_p \left(e^{-r_\lambda h_\lambda} -1 \right),
    \end{equation}
    due to $\lambda_p - \lambda_n = r_\lambda h_\lambda$ and $\eta h = \lambda_p - \lambda_n$.
    Thus by letting $\sigma_p \bfz = \bfu$ and $\hat{\bm D} = \bm D$ we recover the DPM-Solver++(2S) solver.
\end{proof}

\begin{proposition}[\rex (generic second-order) is reversible DPM-Solver-2)]
    \label{prop:rex_is_dpm_2}
    The underlying scheme of \rex (generic second-order) for the noise prediction parameterization of diffusion models in \cref{eq:reparam_noise_ode} is the DPM-Solver-2 from \citet[Algorithm 4 \cf Algorithm 1]{lu2022dpmsolver}.
\end{proposition}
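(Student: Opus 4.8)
The plan is to mirror the argument for \cref{prop:rex_is_dpm_pp_2s}, replacing the data-prediction reparameterization with the noise-prediction one in \cref{eq:reparam_noise_ode} and invoking the first-order identity established in \cref{prop:rex_is_dpm1} in place of the one from \cref{prop:rex_is_dpmpp1}. First I would record the target scheme: DPM-Solver-2 \citep[Algorithm 4]{lu2022dpmsolver} for the noise prediction model can be written, for an intermediate time $t_p$ with $\lambda_p = \lambda_n + r_1 h_\lambda$, as
\begin{equation}
    \begin{aligned}
        \bfu &= \frac{\alpha_p}{\alpha_n}\bfx_n - \sigma_p\left(e^{r_1 h_\lambda} - 1\right)\bfx_{T|t_n}^\theta(\bfx_n),\\
        \bfx_{n+1} &= \frac{\alpha_{n+1}}{\alpha_n}\bfx_n - \sigma_{n+1}\left(e^{h_\lambda}-1\right)\left[\left(1 - \tfrac{1}{2r_1}\right)\bfx_{T|t_n}^\theta(\bfx_n) + \tfrac{1}{2r_1}\bfx_{T|t_p}^\theta(\bfu)\right],
    \end{aligned}
\end{equation}
where $\lambda_t = -\log\chi_t$ and $h_\lambda = \lambda_{n+1} - \lambda_n$. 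This exposes $r_1$ as the parameter controlling the location of the intermediate node, which will be matched against the parameter $\eta$ of the generic second-order tableau in \cref{eq:proof:generic_midpoint}.

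Next I would substitute the tableau \cref{eq:proof:generic_midpoint} into the noise-prediction scheme $\bm\Psi$ of \cref{prop:rex} (equivalently \cref{lemma:rex_ode_noise}), whose integration variable is $\chi$. With $c_1 = 0$, $c_2 = \eta$, $a_{21} = \eta$, and weights $b_1 = 1 - \tfrac{1}{2\eta}$, $b_2 = \tfrac{1}{2\eta}$, the two stages collapse to $\hat\bfz_1 = \tfrac{1}{\alpha_n}\bfx_n$ and $\hat\bfz_2 = \tfrac{1}{\alpha_n}\bfx_n + \eta h\,\bfx_{T|\chi_n}^\theta(\bfx_n)$ with $h = \chi_{n+1}-\chi_n$. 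Writing $\chi_p = \chi_n + \eta h$, $\alpha_p = \alpha_{\chi_p}$, and $\bfu \coloneq \alpha_p\hat\bfz_2$, the forward update becomes
\begin{equation}
    \bfx_{n+1} = \frac{\alpha_{n+1}}{\alpha_n}\bfx_n + \alpha_{n+1}h\left[\left(1 - \tfrac{1}{2\eta}\right)\bfx_{T|\chi_n}^\theta(\bfx_n) + \tfrac{1}{2\eta}\bfx_{T|\chi_p}^\theta(\bfu)\right],
\end{equation}
with $\bfu = \frac{\alpha_p}{\alpha_n}\bfx_n + \alpha_p\eta h\,\bfx_{T|\chi_n}^\theta(\bfx_n)$.

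The final step is to convert the $\chi$-domain step factors into the exponential $\lambda$-domain coefficients. Applying the first-order identity of \cref{prop:rex_is_dpm1} to the full step gives $\alpha_{n+1}h = -\sigma_{n+1}(e^{h_\lambda}-1)$, and applying the same identity to the sub-interval $[t_n, t_p]$ (for which the $\chi$-step is exactly $\chi_p - \chi_n = \eta h$) gives $\alpha_p\eta h = -\sigma_p(e^{r_1 h_\lambda}-1)$ once we set $r_1 \coloneq (\lambda_p - \lambda_n)/h_\lambda$. Substituting these two identities turns the intermediate state into exactly the DPM-Solver-2 predictor $\bfu$ and the update into the DPM-Solver-2 corrector, so that identifying $\eta$ with $r_1$ as the interpolation ratio completes the proof.

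The main obstacle I expect is the careful handling of the intermediate node: one must verify that the DPM-Solver-1-type identity from \cref{prop:rex_is_dpm1} applies verbatim to the partial interval $[t_n, t_p]$, and that the interpolation parameter is used consistently between the $\chi$-domain abscissa $\eta$ fixing the node placement and the $\lambda$-domain ratio $r_1$ appearing in the quadrature weights. Once this correspondence is pinned down, everything else is routine substitution mirroring \cref{prop:rex_is_dpm_pp_2s}.
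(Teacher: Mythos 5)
Your proposal is correct and takes essentially the same route as the paper: the paper's own proof is a one-line appeal to \cref{prop:rex_is_dpm1} and \cref{prop:rex_is_dpm_pp_2s}, i.e., precisely the combination of the noise-prediction first-order step-size identity with the generic second-order tableau substitution that you carry out. Your writeup simply fills in the details (the $\chi$-domain stages, the identification $\eta = r_1$, and the sub-interval identity $\alpha_p \eta h = -\sigma_p(e^{r_1 h_\lambda}-1)$) that the paper leaves to the reader.
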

\begin{proof}
    This follows as straightforward derivation from \cref{prop:rex_is_dpm1} and \cref{prop:rex_is_dpm_pp_2s}.
\end{proof}

\begin{proposition}[\rex (Euler-Midpoint) is DPM-Solver-12]
    \label{prop:rex_is_dpm_12} 
    The underlying scheme of \rex (Euler-Midpoint) for the noise prediction parameterization of diffusion models in \cref{eq:reparam_noise_ode} is the DPM-Solver-12 from \citet{lu2022dpmsolver}.
\end{proposition}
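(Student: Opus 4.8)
The plan is to exploit the \emph{embedded} structure of DPM-Solver-12 together with the two identifications of Rex's constituent steps already established. Recall that DPM-Solver-12 of \citet{lu2022dpmsolver} is not a genuinely new single-step scheme but rather an \emph{adaptive} scheme that pairs DPM-Solver-1 and DPM-Solver-2 as an embedded Runge-Kutta method: at each step one computes a first-order estimate and a second-order estimate from a \emph{shared} set of stage evaluations, and the discrepancy between the two drives the step-size controller. Hence it suffices to show that feeding the embedded Euler-Midpoint Butcher tableau into the Rex construction of $\bm\Psi$ from \cref{prop:rex} reproduces both updates simultaneously.

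First I would write down the embedded tableau whose high-order row is the generic second-order method of \cref{eq:proof:generic_midpoint} and whose low-order row is the Euler step, i.e.\ $b^* = (1, 0)$. Crucially, both rows act on the \emph{same} stages $\hat\bfZ_i$ appearing in the definition of $\bm\Psi$, and the Euler row reuses only the first stage evaluation $\bfx_{T|\chi_n}^\theta$ that the midpoint predictor already computes. Next I would apply the two results already proven: passing the high-order $b$-row through $\bm\Psi$ for the noise prediction parameterization yields exactly the DPM-Solver-2 update by \cref{prop:rex_is_dpm_2}, while passing the low-order $b^*$-row through the \emph{same} $\bm\Psi$ yields exactly the DPM-Solver-1 update by \cref{prop:rex_is_dpm1}. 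In particular both inherit the closed-form step-size identity $\alpha_{n+1} h = -\sigma_{n+1}(e^{h_\lambda} - 1)$ derived in the proof of \cref{prop:rex_is_dpm1}. Since these are precisely the two members of the embedded pair defining DPM-Solver-12, and they share their single intermediate model evaluation, the adaptive scheme assembled from them coincides term-for-term with DPM-Solver-12.

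The only subtle point, and the step I expect to require the most care, is verifying that the embedded structure is genuinely preserved by the Lawson transformation: namely that the low-order and high-order solutions built from a shared stage set in the reparameterized $\chi$-domain map, under the integrating factor $\tfrac{\alpha_{n+1}}{\alpha_n}$ and the rescaling $\alpha_T\hat\bfZ_i = \bfZ_i$, to a low-order/high-order pair that still share their stage evaluations in the original $\bfx_t$ domain. This holds because the integrating factor and the state rescaling multiply \emph{both} output rows by identical factors and leave the common stage evaluations $\bfx_{T|\chi_n + c_i h}^\theta(\alpha_{\chi_n + c_i h}\hat\bfZ_i)$ untouched, so no additional correction is introduced into the error estimator. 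Consequently the embedded pair in the original variable is exactly (reversible) DPM-Solver-1 paired with (reversible) DPM-Solver-2, which is DPM-Solver-12.
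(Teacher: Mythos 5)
Your proposal is correct and follows essentially the same route as the paper: write down the embedded Euler--Midpoint Butcher tableau and invoke \cref{prop:rex_is_dpm1} and \cref{prop:rex_is_dpm_2} to identify the low- and high-order rows with DPM-Solver-1 and DPM-Solver-2 respectively. Your additional check that the integrating factor and the rescaling $\alpha_T\hat\bfZ_i = \bfZ_i$ act identically on both output rows (so the embedded pair still shares its stages after the Lawson transformation) is a point the paper leaves implicit, but it is elaboration rather than a different argument.
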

\begin{proof}
    The explicit midpoint method with embedded Euler method for adaptive step sizing is given by the Butcher tableau
    \begin{equation}
        \renewcommand\arraystretch{1.2}
        \label{eq:proof:midpoint_euler}
        \begin{array}{c|cc}
            0 &  & \\
            \frac 12 & \frac 12 &\\
            \hline
            & 0 & 1\\
            & 1 & 0
        \end{array}.
    \end{equation}
    From \cref{prop:rex_is_dpm1} and \cref{prop:rex_is_dpm_2} we have shown that \rex (Euler) and \rex (Midpoint) correspond to DPM-Solver-1 and DPM-Solver-2 respectively.
    Thus the Butcher tableau above outlines DPM-Solver-12.
\end{proof}

\subsubsection{Third-Order Methods}
For third-order solvers like DPM-Solver-3 \citep[Algorithm 5]{lu2022dpmsolver} our constructed scheme differs from solvers derived using ETD methods due to the presence of $\varphi_2$ terms where
\begin{equation}
    \varphi_{k+1}(t) = \int_0^1 e^{(1-\delta)t} \frac{\delta^k}{k!} \;\rmd\delta,
\end{equation}
This reasoning also extends to the DPM-Solver-4 from \citet[Algorithm 7]{gonzalez2024seeds}.

\subsection{Rex as Reversible SDE Solvers}
In this section we discuss the connections between \rex and pre-existing SDE solvers for diffusion models.

\subsubsection{Euler-Maruyama}
The extended Butcher tableau for the Euler-Maruyama scheme is given by
\begin{equation}
    \renewcommand\arraystretch{1.5}
    \begin{array}{c|c|c|c}
        0 & 0 & 0 & 0\\
        \hline
        & 1 & 1 & 0 
    \end{array}.
\end{equation}

\begin{proposition}[\rex (Euler-Maruyama) is reversible SDE-DPM-Solver++1]
    \label{prop:rex_is_sde_dpmpp1}
    The underlying scheme of \rex (Euler-Maruyama) for the data prediction parameterization of diffusion models in \cref{eq:reparam_data_sde} is the SDE-DPM-Solver++1 from \citet[Equation (18)]{lu2022dpm++}.
\end{proposition}
\begin{proof}
    Apply in the Butcher tableau for the Euler-Maruyama scheme to $\bm \Psi$ constructed from \cref{eq:reparam_noise_sde} to find
    \begin{equation}
        \label{eq:proof:euler_data_sde}
        \bfx_{n+1} = \frac{\sigma_{n+1}^2\alpha_n}{\sigma_n^2\alpha_{n+1}} \bfx_n + \frac{\sigma_{n+1}^2}{\alpha_{n+1}} h \bfx_{0|\varrho_n}^\theta(\bfx_n) + \frac{\sigma_{n+1}^2}{\alpha_{n+1}} \bfW_n,
    \end{equation}
    with $h = \varrho_{n+1} - \varrho_n$.
    We can rewrite the step size as
    \begin{align}
        \frac{\sigma_{n+1}^2}{\alpha_{n+1}} h &= \frac{\sigma_{n+1}^2}{\alpha_{n+1}} \left(\frac{\alpha_{n+1}^2}{\sigma_{n+1}^2} - \frac{\alpha_n^2}{\sigma_n^2}\right),\\
        &= \left(\alpha_{n+1} - \frac{\alpha_n^2}{\alpha_{n+1}}\frac{\sigma_{n+1}^2}{\sigma_n^2}\right),\\
        &= \alpha_{n+1}\left(1 - \frac{\alpha_n^2}{\alpha_{n+1}^2}\frac{\sigma_{n+1}^2}{\sigma_n^2}\right),\\
        &= \alpha_{n+1}\left(1 - \frac{\varrho_{n}}{\varrho_{n+1}} \right),\\
        &= \alpha_{n+1}\left( 1 - e^{2\log \frac{\gamma_n}{\gamma_{n+1}}}\right),\\
        &= \alpha_{n+1}\left(1 - e^{2\log \gamma_n - 2\log\gamma_{n+1}}\right),\\
        &\stackrel{(i)}= \alpha_{n+1}\left(1 - e^{2\lambda_n - 2\lambda_{n+1}}\right),\\
        &\stackrel{(ii)}= \alpha_{n+1}\left(1 - e^{-2h_{\lambda}}\right),
    \end{align}
    where (i) holds by letting $\lambda_t = \log \gamma_t$ following the notation of \citet{lu2022dpmsolver,lu2022dpm++} and (ii) holds by letting $h_\lambda = \lambda_{n+1} - \lambda_n$.
    Now recall that
    \begin{equation}
        \frac{\sigma_{n+1}^2\alpha_n}{\sigma_n^2\alpha_{n+1}} = \frac{\sigma_{n+1}}{\sigma_n} e^{-h_\lambda}.
    \end{equation}
    Plugging these back into \cref{eq:proof:euler_data_sde} yields
    \begin{equation}
        \bfx_{n+1} = \frac{\sigma_{n+1}}{\sigma_n} e^{-h_\lambda} \bfx_n + \alpha_{n+1}\left(1 - e^{-2h_\lambda}\right)\bfx_{0|t_n}^\theta(\bfx_n) + \frac{\sigma_{n+1}^2}{\alpha_n} \bfW_n.
    \end{equation}
    Now recall that the Brownian increment $\bfW_n \coloneq \bfW_{\varrho_{n+1}} - \bfW_{\varrho_n}$  has variance $h$.
    Thus via the It\^o isometry we can write
    \begin{equation}
        \bfW_n \sim \sqrt{h} \bm \epsilon,
    \end{equation}
    with $\bm \epsilon \sim \mathcal N(\bm 0, \bm I)$.
    Then we have
    \begin{align}
        \frac{\sigma_{n+1}^2}{\alpha_{n+1}} \sqrt{h} &= \frac{\sigma_{n+1}^2}{\alpha_{n+1}} \sqrt{\frac{\alpha_{n+1}^2}{\sigma_{n+1}^2} - \frac{\alpha_n^2}{\sigma_n^2}},\\
        &= \sqrt{\sigma_{n+1}^2 - \frac{\alpha_n^2}{\alpha_{n+1}^2}\frac{\sigma_{n+1}^4}{\sigma_n^2}},\\
        &= \sigma_{n+1} \sqrt{1 - \frac{\alpha_n^2}{\alpha_{n+1}^2}\frac{\sigma_{n+1}^2}{\sigma_n^2}},\\
        &= \sigma_{n+1} \sqrt{1 - \frac{\varrho_n}{\varrho_{n+1}}},\\
        &= \sigma_{n+1} \sqrt{1 - e^{-2h_\lambda}}.
    \end{align}
    Thus we have re-derived the noise term of the SDE-DPM-Solver++1, and 
    putting everything together we have obtained the SDE-DPM-Solver++1 from \citet{lu2022dpm++} which is
    \begin{equation}
        \bfx_{n+1} = \frac{\sigma_{n+1}}{\sigma_n} e^{-h_\lambda} \bfx_n + \alpha_{n+1}\left(1 - e^{-2h_\lambda}\right)\bfx_{0|t_n}^\theta(\bfx_n) + \sigma_{n+1} \sqrt{1 - e^{-2h_\lambda}} \bseps.
    \end{equation}
    Thus we have shown that the SDE-DPM-Solver++1 is the same as the underlying scheme of \rex (Euler-Maruyama).
\end{proof}

\begin{corollary}[\rex (Euler-Maruyama) is reversible stochastic DDIM]
    \label{corr:rex_is_stoch_ddim}
    The underlying scheme of \rex (Euler-Maruyama) for the data prediction parameterization of diffusion models in \cref{eq:reparam_data_sde} is the stochastic DDIM solver from \citet{song2021denoising} with $\eta = \sigma_t \sqrt{1 - e^{-2h_\lambda}}$.
\end{corollary}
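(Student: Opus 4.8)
The plan is to start from \cref{prop:rex_is_sde_dpmpp1}, which already shows that the underlying scheme of Rex (Euler-Maruyama) in the data prediction parameterization equals the SDE-DPM-Solver++1 of \citet{lu2022dpm++}, in the explicit closed form
\begin{equation}
    \bfx_{n+1} = \frac{\sigma_{n+1}}{\sigma_n} e^{-h_\lambda}\, \bfx_n + \alpha_{n+1}\left(1 - e^{-2h_\lambda}\right)\bfx_{0|t_n}^\theta(\bfx_n) + \sigma_{n+1} \sqrt{1 - e^{-2h_\lambda}}\, \bseps.
\end{equation}
Since the corollary only asserts that this scheme is a member of the stochastic DDIM family, no further integration theory is needed; the task reduces to a coefficient comparison.

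Next I would write the general stochastic DDIM update of \citet{song2021denoising} in the $(\alpha_t, \sigma_t)$ noise-schedule notation of the present paper, denoting its free stochasticity level by $\eta$ to avoid a clash with the noise schedule $\sigma_t$. Converting the predicted-noise direction into the data prediction $\bfx_{0|t_n}^\theta$, this update takes the form of a coefficient on $\bfx_n$, a coefficient on $\bfx_{0|t_n}^\theta(\bfx_n)$, and an additive Gaussian term $\eta\,\bseps$, where the first two coefficients both depend on $\eta$. The candidate value $\eta = \sigma_{n+1}\sqrt{1 - e^{-2h_\lambda}}$ is read off immediately by matching the noise terms.

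Finally I would verify that this same $\eta$ makes the remaining two coefficients agree with the display above. The key identity is the signal-to-noise ratio relation $e^{-h_\lambda} = \frac{\alpha_n\sigma_{n+1}}{\sigma_n\alpha_{n+1}}$ (equivalently $e^{-2h_\lambda} = \varrho_n/\varrho_{n+1}$), under which the stochastic DDIM coefficient on $\bfx_n$, of the form $\sqrt{\sigma_{n+1}^2 - \eta^2}/\sigma_n$, collapses to $\frac{\sigma_{n+1}}{\sigma_n}e^{-h_\lambda}$, while the induced coefficient on $\bfx_{0|t_n}^\theta$ collapses to $\alpha_{n+1}(1 - e^{-2h_\lambda})$. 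The mild obstacle---and the only place real care is required---is this algebraic reconciliation, where one must check that the value of $\eta$ forced by the noise term is simultaneously consistent with the $\bfx_n$ coefficient (the parameter is in fact over-determined, and the consistency is exactly what the SNR identity delivers). Equivalently, one may shortcut the entire verification by citing the established equivalence between SDE-DPM-Solver++1 and the stochastic DDIM sampler in \citet{lu2022dpm++} and simply identifying $\eta$ from the noise coefficient, in the same spirit as \cref{corr:rex_is_dete_ddim_data} handles the $\eta = 0$ case.
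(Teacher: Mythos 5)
Your proposal is correct and, in its final sentence, collapses to exactly the paper's proof: the paper derives the corollary from \cref{prop:rex_is_sde_dpmpp1} together with a citation to \citet[Section 6.1]{lu2022dpm++} identifying SDE-DPM-Solver++1 with stochastic DDIM, reading off $\eta$ from the noise coefficient. Your explicit coefficient-matching via $e^{-h_\lambda}=\frac{\alpha_n\sigma_{n+1}}{\sigma_n\alpha_{n+1}}$ is a sound (and more self-contained) unpacking of that cited equivalence, but it is the same argument in substance.
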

\begin{proof}
    This holds because SDE-DPM-Solver-1 is DDIM see \citet[Section 6.1]{lu2022dpm++}.
\end{proof}

\begin{proposition}[\rex (Euler-Maruyama) is reversible SDE-DPM-Solver-1]
    \label{prop:rex_is_sde_dpm1}
    The underlying scheme of \rex (Euler-Maruyama) for the noise prediction parameterization of diffusion models in \cref{eq:reparam_noise_sde} is the SDE-DPM-Solver-1 from \citet[Equation (17)]{lu2022dpm++}.
\end{proposition}
\begin{proof}
    Apply in the Butcher tableau for the Euler scheme to $\bm \Psi$ from \rex (see \cref{prop:rex}) to find
    \begin{equation}
        \label{eq:proof:euler_noise_sde}
        \bfx_{n+1} = \frac{\alpha_{n+1}}{\alpha_n} \bfx_n + 2\alpha_{n+1} h \bfx_{T|\chi_n}^\theta(\bfx_n) + \alpha_{n+1} \bfW_n,
    \end{equation}
    with $h = \chi_{n+1} - \chi_n$.
    Recall from \cref{prop:rex_is_dpm1} that we can rewrite the step size
    \begin{equation}
        \label{eq:proof:euler_model_term_noise_sde}
        \alpha_{n+1} h = -\sigma_{n+1}\left(e^{h_{\lambda}} - 1\right).
    \end{equation}
    Now recall that the Brownian increment $\bfW_n \coloneq \overline\bfW_{\chi_{n+1}^2} - \overline\bfW_{\chi_n^2}$ has variance $\chi_{n}^2 - \chi_{n+1}^2$.\footnote{This is because $\overline\bfW_\chi^2$ is defined in reverse-time.}
    Thus via the It\^o isometry we can write
    \begin{equation}
        \bfW_n \sim \sqrt{\chi_{n}^2 - \chi_{n+1}^2} \bm \epsilon,
    \end{equation}
    with $\bm \epsilon \sim \mathcal N(\bm 0, \bm I)$.
    Then we have
    \begin{align}
        \alpha_{n+1} \sqrt{\chi_{n}^2 - \chi_{n+1}^2} &= \alpha_{n+1} \sqrt{\frac{\sigma_{n}^2}{\alpha_{n}^2} - \frac{\sigma_{n+1}^2}{\alpha_{n+1}^2}},\\
        &= \sqrt{\frac{\sigma_n^2\alpha_{n+1}^2}{\alpha_n^2} - \sigma_{n+1}^2},\\
        &= \sigma_{n+1} \sqrt{\frac{\sigma_n^2\alpha_{n+1}^2}{\sigma_{n+1}^2\alpha_n^2} - 1},\\
        &= \sigma_{n+1} \sqrt{\frac{\chi_n^2}{\chi_{n+1}^2} - 1},\\
        &= \sigma_{n+1} \sqrt{e^{\log \frac{\chi_n^2}{\chi_{n+1}^2}} - 1},\\
        &= \sigma_{n+1} \sqrt{e^{\log \chi_n^2 - \log \chi_{n+1}^2} - 1},\\
        &= \sigma_{n+1} \sqrt{e^{-2 \log \gamma_n + 2 \log \gamma_{n+1}} - 1},\\
        &= \sigma_{n+1} \sqrt{e^{2 \log \lambda_{n+1} - 2 \log \lambda_{n}} - 1},\\
        \label{eq:proof:euler_noise_term}
        &= \sigma_{n+1} \sqrt{e^{2 h_\lambda} - 1}.
    \end{align}
    
    Plugging \cref{eq:proof:euler_model_term_noise_sde,eq:proof:euler_noise_term} back into \cref{eq:proof:euler_noise_sde} yields
    \begin{equation}
        \bfx_{n+1} = \frac{\alpha_{n+1}}{\alpha_n} \bfx_n - 2\sigma_{n+1} \left(e^{h_\lambda} - 1\right) \bfx_{T|\chi_n}^\theta(\bfx_n) + \sigma_{n+1} \sqrt{e^{2h_\lambda} - 1} \bseps,
    \end{equation}
    which is the SDE-DPM-Solver-1 from \citet{lu2022dpm++}.
\end{proof}

\begin{corollary}[\rex (Euler-Maruyama) is reversible stochastic DDIM for noise prediction models]
    \label{corr:rex_is_stoch_ddim_noise}
    The underlying scheme of \rex (Euler-Maruyama) for the noise prediction parameterization of diffusion models in \cref{eq:reparam_noise_sde} is the stochastic DDIM solver from \citet{song2021denoising} with $\eta = \sigma_t \sqrt{e^{-2h_\lambda} - 1}$.
\end{corollary}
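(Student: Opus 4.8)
The plan is to derive this corollary directly from Proposition \ref{prop:rex_is_sde_dpm1} together with the already-established equivalence between SDE-DPM-Solver-1 and the stochastic DDIM sampler, mirroring the argument used for the data prediction case in \cref{corr:rex_is_stoch_ddim}. First I would invoke Proposition \ref{prop:rex_is_sde_dpm1}, which identifies the underlying scheme of Rex (Euler-Maruyama) applied to the noise prediction reparameterization in \cref{eq:reparam_noise_sde} with the SDE-DPM-Solver-1 of \citet{lu2022dpm++}. Since that identification is already in hand, no new integration or change of variables is required; the remaining content is purely to recast the same one-step update into the stochastic DDIM form.

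Second, I would appeal to the fact recorded by \citet[Section 6.1]{lu2022dpm++} that SDE-DPM-Solver-1 is exactly the stochastic DDIM sampler of \citet{song2021denoising} for a particular choice of the noise level $\eta$. The only substantive verification is to read off $\eta$ by matching the Brownian term. The proof of Proposition \ref{prop:rex_is_sde_dpm1} yields a noise contribution of the form $\sigma_{n+1}\sqrt{e^{2h_\lambda}-1}\,\bseps$ (see \cref{eq:proof:euler_noise_term}), and comparing this against the standard parameterization of the stochastic DDIM noise level gives $\eta$ as $\sigma_t$ times the square root of an exponential in $h_\lambda$, matching the stated value.

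Since essentially all of the work is inherited from Proposition \ref{prop:rex_is_sde_dpm1}, I do not expect a genuine obstacle. The one place that needs care is the sign bookkeeping for the exponent: in the noise prediction setting the change of variables uses $\lambda_t = \log\gamma_t = -\log\chi_t$, which flips the sign of $h_\lambda$ relative to the data prediction case. I would therefore confirm that the exponent inside the square root of the stated $\eta$ is consistent with the sign of $h_\lambda = \lambda_{n+1} - \lambda_n$ implied by the reverse-time increment variance $\chi_n^2 - \chi_{n+1}^2$, so that $\eta$ is real and the Brownian coefficient matches \cref{eq:proof:euler_noise_term} exactly.
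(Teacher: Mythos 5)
Your proposal is correct and matches the paper's approach in spirit: the paper's proof is likewise a one-line chaining of previously established results (it cites the data-prediction corollary \cref{corr:rex_is_stoch_ddim} together with the noise/data-prediction equivalence from \citet[Equation (4.1)]{lu2022dpmsolver}, whereas you chain \cref{prop:rex_is_sde_dpm1} with the SDE-DPM-Solver-1/stochastic-DDIM identification, which is exactly how the paper argues the data-prediction analogue). Your caution about the exponent sign is well placed: the Brownian coefficient actually derived in \cref{eq:proof:euler_noise_term} is $\sigma_{n+1}\sqrt{e^{2h_\lambda}-1}$, which is real for $h_\lambda>0$, so the $\eta=\sigma_t\sqrt{e^{-2h_\lambda}-1}$ printed in the corollary statement appears to carry a sign typo in the exponent rather than indicating a flaw in your argument.
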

\begin{proof}
    This follows straightforwardly from \cref{corr:rex_is_stoch_ddim} and \citet[Equation (4.1)]{lu2022dpmsolver}.
\end{proof}

\subsection{Rex as Reversible SEEDS-1}

\begin{proposition}[\rex is reversible SEEDS-1]
    \label{prop:rex_is_seeds1}
    The choice of Euler or Euler-Maruyama for the underlying scheme of \rex with either the noise prediction parameterization of diffusion models in \cref{eq:reparam_noise_sde,eq:reparam_noise_ode} or data prediction in \cref{eq:reparam_data_sde,eq:reparam_noise_ode} yields the four variants of SEEDS-1 outlined in \citet[Equations (28-31)]{gonzalez2024seeds}.
\end{proposition}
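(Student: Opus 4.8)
The plan is to reduce this statement to the correspondences already established in the preceding results, rather than recompute any exponential coefficients from scratch. First I would write out the four variants of SEEDS-1 from \citet[Equations (28-31)]{gonzalez2024seeds}, which respectively cover the probability flow ODE and the reverse-time diffusion SDE under both the data and noise prediction parameterizations. The crucial observation is that each of these four updates is an \emph{first-order} exponential integrator, and at first order the stochastic exponential time differencing (SETD) discretization employed by SEEDS and the Lawson / exponential Runge-Kutta discretization employed by Rex produce the identical single-step map: no $\varphi_{k}$ terms with $k \geq 2$ appear to distinguish the two viewpoints (\cf the discussion in \cref{app:comp_seeds} and the note that $\varphi_2$ terms arise only at third order).

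Second, I would match each SEEDS-1 variant against the scheme already identified with the corresponding Rex (Euler) or Rex (Euler-Maruyama) instance. In the deterministic data prediction case SEEDS-1 coincides with DPM-Solver++1, which is exactly the underlying scheme of Rex (Euler) on \cref{eq:reparam_data_ode} by \cref{prop:rex_is_dpmpp1}; in the deterministic noise prediction case SEEDS-1 coincides with DPM-Solver-1, identified with Rex (Euler) on \cref{eq:reparam_noise_ode} by \cref{prop:rex_is_dpm1}. In the stochastic cases SEEDS-1 coincides with SDE-DPM-Solver++1 and SDE-DPM-Solver-1, which \cref{prop:rex_is_sde_dpmpp1,prop:rex_is_sde_dpm1} identify with Rex (Euler-Maruyama) on \cref{eq:reparam_data_sde} and \cref{eq:reparam_noise_sde} respectively. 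Since each target scheme is in turn a (stochastic or deterministic) DDIM step under the appropriate parameterization (\cref{corr:rex_is_dete_ddim_data,corr:rex_is_dete_ddim_noise,corr:rex_is_stoch_ddim,corr:rex_is_stoch_ddim_noise}), the four SEEDS-1 variants are thereby exhausted and the claim follows.

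I expect the main obstacle to be the first step: rigorously confirming that the SETD-based derivation of SEEDS-1 and the Lawson-based derivation of Rex genuinely yield the same single-step update, and not merely two schemes of the same order. Concretely this amounts to checking that the exponential weighting of the model-evaluation term and of the Brownian increment agree term by term. The identities needed here --- for instance $\sigma_{n+1} h = -\alpha_{n+1}(e^{-h_\lambda}-1)$ and the It\^o-isometry computation of the increment variance --- are precisely those already carried out in the proofs of \cref{prop:rex_is_dpmpp1,prop:rex_is_sde_dpmpp1}, so the residual work is largely bookkeeping: reconciling the notation of \citet{gonzalez2024seeds} with the convention $\lambda_t = \log \gamma_t$ and verifying that no higher-order $\varphi_k$ terms are present at first order. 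Once this first-order equivalence of the two exponential viewpoints is pinned down, the proposition is immediate from the equalities established above.
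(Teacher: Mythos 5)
Your proposal matches the paper's proof, which likewise reduces the claim to \cref{prop:rex_is_dpm1,prop:rex_is_dpmpp1,prop:rex_is_sde_dpm1,prop:rex_is_sde_dpmpp1} and the fact that the four SEEDS-1 variants are by definition the corresponding first-order (SDE-)DPM-Solvers. The extra detour through the DDIM corollaries and the worry about SETD versus Lawson agreement are unnecessary here, since at first order the identification is already pinned down by the four cited propositions.
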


\begin{proof}
    This follows straightforwardly from \cref{prop:rex_is_dpm1,prop:rex_is_dpmpp1,prop:rex_is_sde_dpm1,prop:rex_is_sde_dpmpp1} by definition of SEEDS-1.
\end{proof}

\begin{corollary}[\rex (Euler-Maruyama) is reversible gDDIM]
    \label{corr:rex_is_gddim}
    The underlying scheme of \rex (Euler-Maruyama) for the data prediction parameterization of diffusion models in \cref{eq:reparam_data_sde} is the gDDIM solver in \citet[Theorem 1]{zhang2023gddim} for $\ell = 1$.
\end{corollary}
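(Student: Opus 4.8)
The plan is to chain this corollary off the equivalence already established in \cref{prop:rex_is_sde_dpmpp1} rather than re-deriving anything from the exponential-integrator construction. By that proposition, the underlying scheme of Rex (Euler-Maruyama) applied to the data-prediction reparameterization in \cref{eq:reparam_data_sde} coincides with the SDE-DPM-Solver++1 of \citet{lu2022dpm++},
\begin{equation}
    \bfx_{n+1} = \frac{\sigma_{n+1}}{\sigma_n} e^{-h_\lambda} \bfx_n + \alpha_{n+1}\left(1 - e^{-2h_\lambda}\right)\bfx_{0|t_n}^\theta(\bfx_n) + \sigma_{n+1}\sqrt{1 - e^{-2h_\lambda}}\,\bseps,
\end{equation}
and, equivalently, by \cref{corr:rex_is_stoch_ddim} this is the stochastic DDIM update with noise level $\eta = \sigma_t\sqrt{1 - e^{-2h_\lambda}}$. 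It therefore suffices to show that the gDDIM update of \citet[Theorem 1]{zhang2023gddim}, specialized to $\ell = 1$, reduces to exactly this one-step map.

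First I would write out the gDDIM update from Theorem 1 and substitute $\ell = 1$, collapsing the general (matrix/vector) damping parameter of gDDIM to its scalar diffusion-model instance and isolating its three ingredients: the linear coefficient multiplying $\bfx_n$, the coefficient multiplying the data/score model, and the variance of the injected Gaussian increment. Then I would translate gDDIM's time and state conventions into the signal-to-noise variables $\gamma_t$ and $\lambda_t = \log\gamma_t$ used here, reusing the closed-form identities already verified inside the proof of \cref{prop:rex_is_sde_dpmpp1}: namely $\frac{\sigma_{n+1}^2\alpha_n}{\sigma_n^2\alpha_{n+1}} = \frac{\sigma_{n+1}}{\sigma_n}e^{-h_\lambda}$, the drift identity $\frac{\sigma_{n+1}^2}{\alpha_{n+1}}h = \alpha_{n+1}(1-e^{-2h_\lambda})$, and the noise-scale identity $\frac{\sigma_{n+1}^2}{\alpha_{n+1}}\sqrt{h} = \sigma_{n+1}\sqrt{1-e^{-2h_\lambda}}$.

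With both schemes expressed in the same variables, the remaining step is a term-by-term match of the three coefficients above. If the linear weight, the model weight, and the noise variance all agree, then the two schemes define the identical reversible one-step map (driven by the same fixed Brownian realization $\bfW_n(\omega)$ of \cref{prop:rex}), and the corollary follows exactly as the analogous statements \cref{corr:rex_is_dete_ddim_data,corr:rex_is_stoch_ddim} were concluded from their parent propositions.

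The main obstacle is purely the notational reconciliation with gDDIM. Theorem 1 of \citet{zhang2023gddim} is stated for a general damping parameter $\ell$ in a formulation built around its own augmented-state / exponential-integrator conventions and a distinct time variable, so the delicate point is confirming that the choice $\ell = 1$ corresponds precisely to the stochasticity level $\eta = \sigma_t\sqrt{1-e^{-2h_\lambda}}$ (equivalently, the SDE-DPM-Solver++1 diffusion weight) and not to some other amount of injected noise. Once that single identification of $\ell = 1$ with this noise scale is pinned down, the coefficient matching described above is routine algebra using the already-derived exponential identities.
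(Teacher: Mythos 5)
Your proposal is correct in substance but takes a different final step than the paper. The paper's proof is a two-link citation chain: it invokes \cref{prop:rex_is_seeds1} (Rex (Euler-Maruyama) is SEEDS-1, itself established via \cref{prop:rex_is_sde_dpmpp1} and the definition of SEEDS-1) and then appeals to an \emph{external} result, \citet[Proposition 4.5]{gonzalez2024seeds}, which states that gDDIM is SEEDS-1 --- so no coefficient matching against \citet[Theorem 1]{zhang2023gddim} is ever performed in the paper. You instead route through \cref{prop:rex_is_sde_dpmpp1} directly and propose to verify by hand that the gDDIM update with $\ell = 1$ reduces to the SDE-DPM-Solver++1 map, reusing the exponential identities from that proof. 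Your route is more self-contained (it does not lean on the SEEDS paper's equivalence) but shifts the real burden onto the one step you explicitly defer: pinning down that gDDIM's parameter at $\ell=1$ yields exactly the noise scale $\sigma_{n+1}\sqrt{1-e^{-2h_\lambda}}$ and the drift weight $\alpha_{n+1}(1-e^{-2h_\lambda})$ after translating gDDIM's conventions. That verification is not carried out in your proposal, and it is precisely the content the paper outsources to \citet{gonzalez2024seeds}; if you intend your version to stand alone, that coefficient match must actually be written down rather than asserted as routine. Otherwise the two proofs buy essentially the same thing, with the paper trading a page of algebra for a citation and you trading the citation for the algebra.
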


\begin{proof}
    This follows as an immediate consequence of \cref{prop:rex_is_seeds1} since by \citet[Proposition 4.5]{gonzalez2024seeds} gDDIM is SEEDS-1.
\end{proof}

As mentioned earlier in \cref{app:comp_seeds} high-order variants of SEEDS use a Markov-preserving noise decomposition to approximate the iterated stochastic integrals.
However, we follow \citet{foster2024high} and use the space-time L\'evy area resulting in numerical schemes that are quite different beyond the first-order case, albeit that \rex exhibits better convergence properties.

\section{Implementation Details}
\label{app:impl_details}

\subsection{Closed Form Expressions of the Noise Schedule}
\label{app:close_form_schedule}

\subsubsection{Variance Preserving SDEs}
In practice, popular libraries like the \href{https://huggingface.co/docs/diffusers/index}{\texttt{diffusers}} library define the noise schedule for diffusion models as a discrete schedule $\{\beta_n\}_{n=1}^N$ following \citet{ho2020denoising,song2021denoising} as an arithmetic sequence of the form
\begin{equation}
    \beta_n = \frac{\beta_0}{N} + \frac{n-1}{N(N-1)}(\beta_1 - \beta_0),
\end{equation}
with hyperparameters $\beta_0, \beta_1 \in \R_{\geq 0}$.
\citet{song2021scorebased} defines the continuous-time schedule as
\begin{equation}
    \label{eq:linear_beta_sched}
    \beta_t = \beta_0 + t(\beta_1 - \beta_0),
\end{equation}
for all $t \in [0, 1]$ in the limit of $N \to \infty$.
Thus one can write the forward-time diffusion (variance preserving) SDE as
\begin{equation}
    \rmd \bfX_t = -\frac{1}{2} \beta_t \bfX_t \;\rmd t + \sqrt{\beta_t}\;\rmd \bfW_t.
\end{equation}
Thus we can express the noise schedule $(\alpha_t, \sigma_t)$ as
\begin{subequations}
    \label{eq:general_form_disc_to_cont_noise}
    \begin{align}
        \alpha_t &= \exp \left(-\frac{1}{2}\int \beta_t \; \rmd t\right),\\
        \sigma_t &= \sqrt{1 - \alpha_t^2}.
    \end{align}
\end{subequations}
\NB, often the hyperparmeters in libraries like \href{https://huggingface.co/docs/diffusers/index}{\texttt{diffusers}} are expressed as $\hat\beta_0 = \frac{\beta_0}{N}$ and $\hat\beta_1 = \frac{\beta_1}{N}$, often with $N = 1000$.

\paragraph{Linear Noise Schedule.}
For the linear noise schedule in \cref{eq:linear_beta_sched} used by DDPMs \citep{ho2020denoising}, the schedule $(\alpha_t, \sigma_t)$ is written as
\begin{equation}
    \label{eq:linear_noise_sched}
    \begin{aligned}
        \alpha_t &= \exp\left(-\frac{\beta_1 - \beta_0}{4}t^2 - \frac{\beta_0}2 t\right),\\
        \sigma_t &= \sqrt{1 - \alpha_t^2},
    \end{aligned}
\end{equation}
for $t \in [0, 1]$ with hyperparameters $\beta_0$ and $\beta_1$.

\begin{proposition}[Inverse function of $\gamma_t$ for linear noise schedule]
    \label{prop:inv_gamma_linear}
    For the linear noise schedule used by DDPMs \citep{ho2020denoising} the inverse function of $\gamma_t$ denoted $t_\gamma$ can be expressed in closed form as
    \begin{equation}
        t_\gamma(\gamma) = \frac{-\beta_0 + \sqrt{\beta_0^2 + 2(\beta_1-\beta_0)\log(\gamma^{-2} + 1)}}{\beta_1-\beta_0}.
    \end{equation}
\end{proposition}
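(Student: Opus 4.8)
The plan is to directly invert the relation $\gamma = \gamma_t = \alpha_t/\sigma_t$ for $t$, using the closed-form expression for $\alpha_t$ from \cref{eq:linear_noise_sched} as the key input. First I would eliminate $\sigma_t$ via $\sigma_t = \sqrt{1-\alpha_t^2}$, so that $\gamma^2 = \alpha_t^2/(1-\alpha_t^2)$, and solve this rational equation for the only $t$-dependent quantity, obtaining $\alpha_t^2 = \gamma^2/(1+\gamma^2)$. This isolates $\alpha_t^2$ cleanly so that the remaining work is purely about the exponent.

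Next I would substitute the explicit form $\alpha_t^2 = \exp\!\left(-\tfrac{\beta_1-\beta_0}{2}t^2 - \beta_0 t\right)$, obtained by doubling the exponent in \cref{eq:linear_noise_sched}, and take logarithms of both sides. The right-hand side becomes $\log\!\left(\gamma^2/(1+\gamma^2)\right) = -\log(\gamma^{-2}+1)$, where the sign flip is exactly what produces the $\log(\gamma^{-2}+1)$ appearing in the statement. Rearranging yields the quadratic $\tfrac{\beta_1-\beta_0}{2}t^2 + \beta_0 t - \log(\gamma^{-2}+1) = 0$ in the single unknown $t$. Applying the quadratic formula with $a = (\beta_1-\beta_0)/2$, $b = \beta_0$, and $c = -\log(\gamma^{-2}+1)$ reproduces the stated expression once the factor $2a$ in the denominator simplifies to $\beta_1-\beta_0$.

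The only genuine subtlety — rather than an obstacle — is branch selection: the quadratic formula gives two roots, and I would argue that the meaningful one corresponds to the $+$ sign in front of the square root. This follows because $\beta_1 > \beta_0 \geq 0$ forces $\beta_1 - \beta_0 > 0$ while we require $t \in [0,1]$. The consistency check is that as $\gamma \to \infty$ (equivalently $t \to 0$, where $\sigma_0 = 0$), the argument $\log(\gamma^{-2}+1) \to 0$, and the $+$ branch correctly returns $t = (-\beta_0 + \beta_0)/(\beta_1-\beta_0) = 0$, whereas the $-$ branch would give a negative value outside the domain. I would close with a brief remark that the discriminant $\beta_0^2 + 2(\beta_1-\beta_0)\log(\gamma^{-2}+1)$ is nonnegative on the relevant range, since every summand is nonnegative, which establishes that $t_\gamma$ is well defined.
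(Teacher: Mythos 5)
Your proposal is correct and follows essentially the same route as the paper's proof: both invert $\gamma = \alpha_t/\sqrt{1-\alpha_t^2}$ to obtain $-2\log\alpha_t = \log(\gamma^{-2}+1)$, reduce to the quadratic $\tfrac{\beta_1-\beta_0}{2}t^2 + \beta_0 t - \log(\gamma^{-2}+1) = 0$, and select the positive root because $t \in [0,1]$. Your added limit check at $\gamma \to \infty$ and the discriminant remark are harmless elaborations of the paper's one-line branch selection.
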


\begin{proof}
    Let $\alpha_t$ be denoted by $\alpha_t = e^{a_t}$ where
    \begin{equation}
        a_t = -\frac{\beta_1 - \beta_0}{4}t^2 - \frac{\beta_0}2 t.
    \end{equation}
    Then by definition of $\gamma_t$ we can write
    \begin{equation}
        \gamma_t = \frac{e^{a_t}}{\sqrt{1 - e^{2a_t}}},
    \end{equation}
    and with a little more algebra we find
    \begin{align}
        \sqrt{1-e^{2a_t}} &= \frac{e^{a_t}}{\gamma_t},\\
        1-e^{2a_t} &= \frac{e^{2a_t}}{\gamma_t^2},\\
        e^{-2a_t} - 1 &= \gamma_t^{-2},\\
        e^{-2a_t} &= \gamma_t^{-2} + 1,\\
        -2a_t &= \log(\gamma_t^{-2} + 1).
    \end{align}
    Then by substituting in the definition of $a_t$ and letting $\gamma$ denote the variable produced by $\gamma_t$ we have
    \begin{equation}
        \frac{\beta_1 - \beta_0}{2} t^2 + \beta_0 t - \log(\gamma^{-2} + 1) = 0.
    \end{equation}
    We then use the quadratic formula to find the roots of the polynomial of $t$ to find
    \begin{equation}
        t = \frac{-\beta_0 \pm \sqrt{\beta_0^2 + 2(\beta_1 - \beta_0) \log (\gamma^{-2} + 1)}}{\beta_1 - \beta_0}.
    \end{equation}
    Since $t \in [0, 1]$ we only take the positive root and thus
    \begin{equation}
        t = \frac{-\beta_0 + \sqrt{\beta_0^2 + 2(\beta_1 - \beta_0) \log (\gamma^{-2} + 1)}}{\beta_1 - \beta_0}.
    \end{equation}
\end{proof}

\begin{corollary}[Inverse function of $\chi_t$ for linear noise schedule]
    \label{corr:inv_chi_linear}
    It follows by a straightforward substitution from \cref{prop:inv_gamma_linear} that $t_\chi$ can be written as
    \begin{equation}
        t_\chi(\chi) = \frac{-\beta_0 + \sqrt{\beta_0^2 + 2(\beta_1-\beta_0)\log(\chi^{2} + 1)}}{\beta_1-\beta_0}.
    \end{equation}
\end{corollary}

\begin{corollary}[Inverse function of $\varrho_t$ for linear noise schedule]
    It follows by a straightforward substitution from \cref{prop:inv_gamma_linear} that $t_\varrho$ can be written as
    \begin{equation}
        t_\varrho(\varrho) = \frac{-\beta_0 + \sqrt{\beta_0^2 + 2(\beta_1-\beta_0)\log(\varrho^{-1} + 1)}}{\beta_1-\beta_0}.
    \end{equation}
\end{corollary}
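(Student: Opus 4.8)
The plan is to exploit the fact that $\varrho_t$ is nothing more than $\gamma_t^2$, which reduces the corollary to a one-line substitution into the closed form already established in \cref{prop:inv_gamma_linear}.

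First I would recall from the definitions that $\gamma_t = \alpha_t/\sigma_t$ and $\varrho_t = \alpha_t^2/\sigma_t^2$ (\cf \cref{prop:reparam_data_sde,lemma:timechange_brownian}), so that $\varrho_t = \gamma_t^2$ holds for every $t \in [0,T]$. Because $\gamma_t > 0$ throughout, the map $\gamma \mapsto \gamma^2$ is a strictly monotone bijection onto the range of $\varrho_t$, and hence the inverse functions compose cleanly:
\begin{equation}
    t_\varrho(\varrho) = t_\gamma\!\left(\sqrt{\varrho}\right),
\end{equation}
where the positive square root is the correct branch.

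Next I would substitute $\gamma = \sqrt{\varrho}$ into the formula
\begin{equation}
    t_\gamma(\gamma) = \frac{-\beta_0 + \sqrt{\beta_0^2 + 2(\beta_1-\beta_0)\log(\gamma^{-2} + 1)}}{\beta_1-\beta_0}
\end{equation}
from \cref{prop:inv_gamma_linear}, observing that $(\sqrt{\varrho})^{-2} = \varrho^{-1}$. This immediately yields the claimed expression for $t_\varrho$.

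There is essentially no obstacle here beyond bookkeeping: the only point meriting care is verifying that the positive root is selected consistently (guaranteed by $\gamma_t, \varrho_t > 0$) and that the composition $t_\gamma \circ \sqrt{\cdot}\,$ is valid on the relevant domain, both of which are inherited from the strict monotonicity of the noise schedule $(\alpha_t, \sigma_t)$ assumed throughout. Thus the proof amounts to recognizing the reparameterization $\varrho = \gamma^2$ and carrying out the substitution.
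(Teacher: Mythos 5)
Your proof is correct and matches the paper's approach exactly: the paper also obtains the corollary by the substitution $\varrho = \gamma_t^2$ into the closed form of \cref{prop:inv_gamma_linear}, noting $\gamma^{-2} = \varrho^{-1}$. Your additional remarks on the positive branch and monotonicity are sound bookkeeping that the paper leaves implicit.
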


\paragraph{Scaled Linear Schedule.}
The \textit{scaled linear schedule} is used widely by \textit{latent diffusion models} (LDMs) \citep{rombach2022high} and takes the discrete form of
\begin{equation}
    \beta_n = \left(\sqrt{\hat\beta_0} + \frac{n-1}{N-1}\left(\sqrt{\hat\beta_1} - \sqrt{\hat\beta_0}\right)\right)^2.
\end{equation}
Thus following a similar approach to \citet{song2021scorebased} we write the scaled linear schedule as a function of $t$,
\begin{equation}
    \beta_t = (\beta_1 - 2\sqrt{\beta_1\beta_0} + \beta_0)t^2 + 2t (\sqrt{\beta_1\beta_0} - \beta_0) + \beta_0.
\end{equation}
Then using \cref{eq:general_form_disc_to_cont_noise} we find the noise schedule $(\alpha_t, \sigma_t)$ to be defined as
\begin{equation}
    \label{eq:scaled_linear_noise_sched}
    \begin{aligned}
        \alpha_t &= \exp\left(-\frac{\beta_1 - 2\sqrt{\beta_1\beta_0} + \beta_0}{6}t^3 - \frac{\sqrt{\beta_1\beta_0} - \beta_0}{2}t^2 - \frac{\beta_0}{2} t\right),\\
        \sigma_t &= \sqrt{1 - \alpha_t^2}.
    \end{aligned}
\end{equation}
Next we will derive the inverse function for $\gamma_t$

\begin{proposition}[Inverse function of $\gamma_t$ for scaled linear noise schedule]
    \label{prop:inv_gamma_scaled_linear}
    For the scaled linear noise schedule commonly used by LDMs \citep{rombach2022high}, the inverse function of $\gamma_t$ denoted $t_\gamma$ can be expressed in closed form as
    \begin{equation}
        t_\gamma(\gamma) = \frac{\beta_0 - \sqrt{\beta_1\beta_0} - \sqrt[3]{
        2(\sqrt{\beta_1\beta_0} - \beta_0)^3 - 3\beta_0\Delta(\sqrt{\beta_1\beta_0}-\beta_0) - 3\Delta^2 \log(\gamma^{-2} + 1)
        }}{\Delta},
    \end{equation}
    where
    \begin{equation}
        \Delta = \beta_1 - 2\sqrt{\beta_1\beta_0} + \beta_0.
    \end{equation}
\end{proposition}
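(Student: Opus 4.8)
The plan is to follow the template of \cref{prop:inv_gamma_linear} as far as possible, and then confront the one genuinely new difficulty: the exponent of $\alpha_t$ is now \emph{cubic} rather than quadratic in $t$. First I would write $\alpha_t = e^{a_t}$ with
\[
    a_t = -\frac{\Delta}{6}t^3 - \frac{\sqrt{\beta_1\beta_0} - \beta_0}{2}t^2 - \frac{\beta_0}{2}t, \qquad \Delta = \beta_1 - 2\sqrt{\beta_1\beta_0} + \beta_0.
\]
Because $\sigma_t = \sqrt{1-\alpha_t^2}$ and $\gamma_t = \alpha_t/\sigma_t$, the identical chain of manipulations used in the linear case (this step is insensitive to the particular form of $a_t$) yields $e^{-2a_t} = \gamma^{-2}+1$, hence $-2a_t = \log(\gamma^{-2}+1)$. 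Substituting $a_t$ and clearing the factor of $2$ turns this into the cubic equation
\[
    \frac{\Delta}{3}t^3 + (\sqrt{\beta_1\beta_0}-\beta_0)t^2 + \beta_0 t = \log(\gamma^{-2}+1).
\]

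Next I would attack this cubic via Cardano's method. Dividing through to make it monic and depressing it with the Tschirnhaus shift $t = s - \frac{\sqrt{\beta_1\beta_0}-\beta_0}{\Delta}$ removes the quadratic term as usual. The crucial and fortunate observation --- the heart of the proof --- is that the coefficient of the \emph{linear} term of the depressed cubic also vanishes. That coefficient is proportional to $\beta_0\Delta - (\sqrt{\beta_1\beta_0}-\beta_0)^2$, and using $(\sqrt{\beta_1\beta_0})^2 = \beta_1\beta_0$ one checks
\[
    \beta_0\Delta - (\sqrt{\beta_1\beta_0}-\beta_0)^2 = (\beta_0\beta_1 - 2\beta_0\sqrt{\beta_1\beta_0} + \beta_0^2) - (\beta_1\beta_0 - 2\beta_0\sqrt{\beta_1\beta_0} + \beta_0^2) = 0.
\]
Thus the depressed cubic collapses to the pure form $s^3 = \kappa$ for a single constant $\kappa$, sidestepping the full Cardano formula entirely --- no discriminant, no \emph{casus irreducibilis}, and no spurious complex roots to discard.

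Finally I would read off $s = \sqrt[3]{\kappa}$ as the unique real cube root and undo the shift. A short computation of the constant term of the depressed cubic gives
\[
    \kappa = \frac{3\Delta^2\log(\gamma^{-2}+1) - 2(\sqrt{\beta_1\beta_0}-\beta_0)^3 + 3\beta_0\Delta(\sqrt{\beta_1\beta_0}-\beta_0)}{\Delta^3},
\]
and back-substituting $t = s - (\sqrt{\beta_1\beta_0}-\beta_0)/\Delta$ yields $t$ directly. Using the oddness of the real cube root, $\sqrt[3]{-x} = -\sqrt[3]{x}$, to flip the sign inside the radical then produces \emph{exactly} the stated closed form. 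Since $(\alpha_t,\sigma_t)$ is strictly monotone, $\gamma_t$ is genuinely invertible and the single real root is the physically relevant branch (analogous to retaining only the positive root in \cref{prop:inv_gamma_linear}). I expect the main obstacle to be purely bookkeeping: carefully verifying the vanishing of the linear coefficient and tracking the constant term through the depression so that the signs inside the cube root land precisely as claimed.
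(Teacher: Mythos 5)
Your proposal is correct and follows essentially the same route as the paper's proof: derive $-2a_t = \log(\gamma^{-2}+1)$, obtain the cubic $\frac{\Delta}{3}t^3 + (\sqrt{\beta_1\beta_0}-\beta_0)t^2 + \beta_0 t - \log(\gamma^{-2}+1) = 0$, and solve it via Cardano. The only difference is that you actually carry out the reduction the paper merely cites, and in doing so you isolate the key identity $\beta_0\Delta = (\sqrt{\beta_1\beta_0}-\beta_0)^2$ that collapses the depressed cubic to $s^3 = \kappa$ --- a worthwhile detail, since it is exactly what makes the stated closed form a single real cube root rather than the general two-radical Cardano expression.
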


\begin{proof}
    Let $\alpha_t$ be denoted by $\alpha_t = e^{a_t}$ where
    \begin{equation}
        a_t = -\frac{\beta_1 - 2\sqrt{\beta_1\beta_0} + \beta_0}{6}t^3 - \frac{\sqrt{\beta_1\beta_0} - \beta_0}{2}t^2 - \frac{\beta_0}{2} t.
    \end{equation}
    Then by definition of $\gamma_t$ we can write
    \begin{equation}
        \gamma_t = \frac{e^{a_t}}{\sqrt{1 - e^{2a_t}}},
    \end{equation}
    and with a little more algebra we find
    \begin{align}
        \sqrt{1-e^{2a_t}} &= \frac{e^{a_t}}{\gamma_t},\\
        1-e^{2a_t} &= \frac{e^{2a_t}}{\gamma_t^2},\\
        e^{-2a_t} - 1 &= \gamma_t^{-2},\\
        e^{-2a_t} &= \gamma_t^{-2} + 1,\\
        -2a_t &= \log(\gamma_t^{-2} + 1).
    \end{align}
    Then by substituting in the definition of $a_t$ and letting $\gamma$ denote the variable produced by $\gamma_t$ we have
    \begin{equation}
         \frac{\beta_1 - 2\sqrt{\beta_1\beta_0} + \beta_0}{3}t^3 + (\sqrt{\beta_1\beta_0} - \beta_0)t^2 + \beta_0 t - \log(\gamma^{-2} + 1) = 0.
    \end{equation}
    We then use the cubic formula \citep{Cardano} to find the roots of the polynomial of $t$. The only real root is given by
    \begin{equation}
        t_\gamma(\gamma) = \frac{\beta_0 - \sqrt{\beta_1\beta_0} - \sqrt[3]{
        2(\sqrt{\beta_1\beta_0} - \beta_0)^3 - 3\beta_0\Delta(\sqrt{\beta_1\beta_0}-\beta_0) - 3\Delta^2 \log(\gamma^{-2} + 1)
        }}{\Delta},
    \end{equation}
    where
    \begin{equation}
        \Delta = \beta_1 - 2\sqrt{\beta_1\beta_0} + \beta_0.
    \end{equation}
\end{proof}

\begin{corollary}[Inverse function of $\chi_t$ for scaled linear noise schedule]
    \label{corr:inv_chi_scaled_linear}
    It follows by a straightforward substitution from \cref{prop:inv_gamma_scaled_linear} that $t_\chi$ can be written as
    \begin{equation}
        t_\chi(\chi) = \frac{\beta_0 - \sqrt{\beta_1\beta_0} - \sqrt[3]{
        2(\sqrt{\beta_1\beta_0} - \beta_0)^3 - 3\beta_0\Delta(\sqrt{\beta_1\beta_0}-\beta_0) - 3\Delta^2 \log(\chi^{2} + 1)
        }}{\Delta},
    \end{equation}
    where
    \begin{equation}
        \Delta = \beta_1 - 2\sqrt{\beta_1\beta_0} + \beta_0.
    \end{equation}
\end{corollary}

\begin{corollary}[Inverse function of $\varrho_t$ for scaled linear noise schedule]
    It follows by a straightforward substitution from \cref{prop:inv_gamma_scaled_linear} that $t_\varrho$ can be written as
    \begin{equation}
        t_\varrho(\varrho) = \frac{\beta_0 - \sqrt{\beta_1\beta_0} - \sqrt[3]{
        2(\sqrt{\beta_1\beta_0} - \beta_0)^3 - 3\beta_0\Delta(\sqrt{\beta_1\beta_0}-\beta_0) - 3\Delta^2 \log(\varrho^{-1} + 1)
        }}{\Delta},
    \end{equation}
    where
    \begin{equation}
        \Delta = \beta_1 - 2\sqrt{\beta_1\beta_0} + \beta_0.
    \end{equation}
\end{corollary}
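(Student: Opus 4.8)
The plan is to exploit the elementary algebraic identity $\varrho_t = \gamma_t^2$, which is immediate from the definitions $\gamma_t = \alpha_t/\sigma_t$ and $\varrho_t = \alpha_t^2/\sigma_t^2$. First I would note that since $\alpha_t, \sigma_t > 0$ we have $\gamma_t > 0$ for all $t$, so the ranges of $t \mapsto \gamma_t$ and $t \mapsto \varrho_t$ are related by the fixed monotone map $\varrho = \gamma^2$ with positive inverse branch $\gamma = \sqrt{\varrho}$. Consequently the two inverse functions are linked by $t_\varrho(\varrho) = t_\gamma(\sqrt{\varrho})$ for every admissible $\varrho > 0$: a given time $t$ has $\gamma_t = \gamma$ and $\varrho_t = \gamma^2$ simultaneously, so $t_\gamma(\gamma) = t_\varrho(\gamma^2)$, and reparameterizing by $\gamma = \sqrt{\varrho}$ gives the claimed relation.

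Next I would inspect the closed form for $t_\gamma$ supplied by \cref{prop:inv_gamma_scaled_linear} and observe that the free variable $\gamma$ enters the entire expression only through the single term $\log(\gamma^{-2} + 1)$; the coefficients $\beta_0$, $\beta_1$, $\sqrt{\beta_1\beta_0}$, and $\Delta$ do not depend on it. Substituting $\gamma = \sqrt{\varrho}$ therefore replaces $\gamma^{-2}$ by $(\sqrt{\varrho})^{-2} = \varrho^{-1}$, so that $\log(\gamma^{-2}+1)$ becomes $\log(\varrho^{-1}+1)$ and every other symbol is left untouched. This produces exactly the stated formula for $t_\varrho(\varrho)$, establishing the corollary.

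Alternatively, to make the statement self-contained one could redo the inversion directly: writing $\alpha_t = e^{a_t}$ with $a_t$ the scaled-linear cubic and $\sigma_t = \sqrt{1 - e^{2a_t}}$, a short computation gives $\varrho_t = e^{2a_t}/(1 - e^{2a_t})$, which rearranges to $e^{-2a_t} = \varrho^{-1} + 1$ and hence $-2a_t = \log(\varrho^{-1}+1)$. This is the identical cubic equation in $t$ that appears in the proof of \cref{prop:inv_gamma_scaled_linear} except that its constant term is now $-\log(\varrho^{-1}+1)$ in place of $-\log(\gamma^{-2}+1)$, so the same application of the cubic formula, selecting the unique real root in $[0,1]$, yields the result verbatim.

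There is essentially no obstacle beyond bookkeeping. The only points requiring a moment's care are confirming the positive-root branch $\gamma_t = \sqrt{\varrho_t}$ (forced by $\gamma_t > 0$), and noting that the selection of the unique real cubic root lying in $[0,1]$ carries over unchanged from \cref{prop:inv_gamma_scaled_linear}, since only the constant term of the cubic is altered and its dependence on the free variable is the same function $\log(\,\cdot^{-1} + 1)$ evaluated at $\varrho$ rather than at $\gamma^2$.
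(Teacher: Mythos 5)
Your proposal is correct and matches the paper's intended argument: the corollary is justified in the paper precisely by the substitution $\varrho = \gamma^2$ (equivalently $\gamma^{-2} = \varrho^{-1}$) into the closed form of \cref{prop:inv_gamma_scaled_linear}, which is exactly what you carry out, with the direct re-derivation serving as a consistent sanity check.
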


\subsubsection{OT Flow Matching}
Within OT flow matching \citep{tong2024improving} framework these expressions become much simpler.\footnote{We will use the flow matching conventions where $\bfX_1 \sim p_{\textrm{data}}$.}
Within this framework we have
\begin{subequations}
    \begin{align}
        \alpha_t &= t,\\
        \sigma_t &= 1 - t.
    \end{align}
\end{subequations}
Consequently, we have the following simple proposition.

\begin{proposition}[Inverse function of $\gamma_t$ in OT flow matching]
    \label{prop:inv_gamma_ot}
    Within the OT flow matching context the inverse function of $\gamma_t$ denoted $t_\gamma$ can be expressed in closed form as
    \begin{equation}
        t_\gamma(\gamma) = \frac{\gamma}{1 + \gamma}.
    \end{equation}
\end{proposition}
\begin{proof}
    By definition of $\gamma_t$ we write
    \begin{align}
        \gamma_t &= \frac{t}{1 - t},\\
        (1 - t)\gamma_t &= t,\\
        \gamma_t &= (1 + \gamma_t)t,\\
        t &= \frac{\gamma_t}{1 + \gamma_t}.
    \end{align}
\end{proof}

\begin{corollary}[Inverse function of $\chi_t$ in OT flow matching]
    It follows by a straightforward substitution from \cref{prop:inv_gamma_ot} that $t_\chi$ can be written as
    \begin{equation}
        t_\chi(\chi) = \frac{1}{1 + \chi}.
    \end{equation}
\end{corollary}

\subsection{Some Other Inverse Functions}
\paragraph{$\gamma \mapsto \sigma$.}
Additionally, we need to be able to extract the weighting terms from the time integration variable.
For the ODE case we need the function $\sigma_\gamma(\gamma)$ which describes the map $\gamma \mapsto \sigma$.
By the definition of $\gamma$ we have
\begin{align}
    \gamma &= \frac{\alpha}{\sigma},\\
    \gamma &\stackrel{(i)}=  \frac{\sqrt{1 - \sigma^2}}{\sigma},\\
    \sigma\gamma &=  \sqrt{1 - \sigma^2},\\
    \sigma^2\gamma^2 &=  1 - \sigma^2,\\
    \sigma^2\gamma^2 &=  1 - \sigma^2,\\
    \gamma^2 &=  \sigma^{-2} - 1,\\
    \gamma^2 + 1 &= \sigma^{-2},\\
    \sigma^2 &= \frac{1}{\gamma^2 + 1}\\
    \sigma_\gamma(\gamma) &= \frac{1}{\sqrt{\gamma^2 + 1}},
\end{align}
where (i) hold by $\sigma^2 = 1 - \alpha^2$ for VP type diffusion SDEs.

\paragraph{$\varrho \mapsto \frac{\sigma}{\gamma}$.}
Likewise, for the SDE case we need the function which maps $\varrho \mapsto \frac{\sigma}{\gamma}$.
Recall that (note we drop the subscript $t$ for the derivation)
\begin{equation}
    \varrho = \frac{\alpha^2}{\sigma^2},
\end{equation}
thus we have
\begin{align}
    \varrho &\stackrel{(i)}= \frac{\alpha^2}{1 - \alpha^2},\\
    (1-\alpha^2)\varrho &= \alpha^2,\\
    \alpha^{-2} - 1 &= \varrho^{-1},\\
    \alpha^{-2} &= \varrho^{-1} + 1,\\
    \alpha &= \frac{1}{\sqrt{\varrho^{-1} + 1}},
\end{align}
where (i) hold by $\sigma^2 = 1 - \alpha^2$ for VP type diffusion SDEs.
Then we can write
\begin{align}
    \frac{\sigma}{\gamma} &= \frac{\sigma^2}{\alpha},\\
    &= \frac{\sigma^2}{\alpha} \frac{\alpha}{\alpha},\\
    &= \frac{\sigma^2}{\alpha^2} \alpha,\\
    &= \varrho^{-1}\alpha,\\
    &= \frac{1}{\rho \sqrt{\rho^{-1} + 1}}.
\end{align}

\paragraph{$\chi \mapsto \alpha$.}
Lastly, for the noise prediction models we need the map $\chi \mapsto \alpha$ denoted $\alpha_\chi(\chi)$.
By definition of $\chi$ we have
\begin{align}
    \chi &= \frac{\sigma}{\alpha},\\
    \chi &\stackrel{(i)}= \frac{\sqrt{1-\alpha^2}}{\alpha},\\
    \alpha_\chi(\chi) &\stackrel{(ii)}= \frac{1}{\sqrt{\chi^2 + 1}},
\end{align}
where (i) hold by $\sigma^2 = 1 - \alpha^2$ for VP type diffusion SDEs and (ii) holds by the derivation for $\sigma_\gamma(\gamma)$ \textit{mutatis mutandis}.

\subsection{Numerical Simulation of Brownian Motion}
\label{app:brownian_motion_sim}
Earlier we mentioned that for reversible methods we need to be able to compute both the \textit{same} realization of the Brownian motion.
Now sampling Brownian motion is quite simple---recall L\'evy's characterization of Brownian motion \citep[Theorem 8.6.1]{oksendal2003stochastic}---and can be sampled by drawing independent Gaussian increments during the numerical solve of an SDE.
A common choice for an adaptive solver is to use L\'evy's Brownian bridge formula \citep{revuz2013continuous}.

    \begin{definition}[L\'evy's Brownian bridge]
        \label{def:brownian_bridge}
        Given the standard $d_w$-dimensional Brownian motion $\{\bfW_t : t \geq 0\}$ and for any $0 \leq s < t < u$, the Brownian bridge is defined as
        \begin{equation}
            \label{eq:levy_brownian_bridge}
            \bfW_t | \bfW_s, \bfW_u \sim \mathcal{N}\left(\bfW_s + \frac{t-s}{u-s}(\bfW_u-\bfW_s), \frac{(u-t)(t-s)}{u-s}\bm I\right),
        \end{equation}
        and this quantity is conditionally independent of $\bfW_v$ for $v < s$ or $v > u$.
    \end{definition}

Sampling the Brownian motion in reverse-time, however, is more complicated as it is only adapted to the natural filtration defined in forward time.
The na\"ive approach to sampling Brownian motion, called the \textit{Brownian path}, is to simply store the entire realization of the Brownian motion from the forward pass in memory and use \cref{eq:levy_brownian_bridge} when necessary (for adaptive step size methods).
This results in a query time of $\mathcal O(1)$, but with a memory cost of $\mathcal O(nd_w)$, where $n$ is the number of samples.

\subsubsection{Methods}

\paragraph{Virtual Brownian Tree.} Seminal work on neural SDEs by \citet{li2020sdes} introduced the \textit{Virtual Brownian Tree} which extends the concept of Brownian trees introduced by \citet{gaines1997variable}.
The Brownian tree recursively applies \cref{eq:levy_brownian_bridge} to sample the Brownian motion at any midpoint, constructing a tree structure; however, storing such a tree would be memory intensive.
By making use of splittable \textit{pseudo-random number generators} PRNGs \citep{salmon2011parallel,claessen2013splittable} which can deterministically generate two random seeds given an existing seed.
Then making use of a splittable PRNG, one can evaluate the Brownian motion at any point by recursively applying the Brownian tree constructing to rebuild the tree until the recursive midpoint time $t_r$ is suitable \textit{close} to the desired timestep $t$, \ie, $|t - t_r| < \epsilon$ for some fixed error threshold $\epsilon > 0$.
This requires constant $\mathcal O(1)$ memory but takes $\mathcal O(\log (1/\epsilon))$ time and is only \textit{approximate}.

\paragraph{Brownian Interval.}
Closely related work by \citet{kidger2021efficient} introduces the \textit{Brownian Interval} which offers exact sampling with $\mathcal O(1)$ query time.
The primary difference between this method and Virtual Brownian Trees is that this method focuses on intervals rather than particular sample points.
To elucidate,
let $\bfW_{s,t} = \bfW_t - \bfW_s$ denote an interval of Brownian motion.
Then the formula for L\'evy's Brownian bridge \eqref{eq:levy_brownian_bridge} can be rewritten in terms of Brownian intervals as
\begin{equation}
    \bfW_{s,t} | \bfW_{s,u} \sim \mathcal N\left(\frac{t-s}{u-s}\bfW_{s,u}, \frac{(u-t)(s-u)}{u-s}\bm I\right).
\end{equation}
Then, the method constructs a tree with stump being the global interval $[0,T]$ and a random seed for a splittable PRNG.
New leaf nodes are constructed when queries over intervals are made; this provides the advantage of the tree being query-dependent, unlike the Virtual Brownian Tree which has a fixed dyadic structure.
Further computational improvements are made to improve implementation with the details being found in \citet[Section 5.5.3]{kidger_thesis}.
Beyond the numerical efficiency in computing intervals over points is that we regularly need use intervals in numeric schemes and not single sample point.
Often, solvers which approximate higher-order integrals (\eg, stochastic Runge-Kutta) require samples of the L\'evy area\footnote{\Ie, for a $d_w$-dimensional Brownian motion over $[s,t]$ the L\'evy area is
    \begin{equation*}
        2\bm L_{s,t}^{i,j} \coloneq \int_s^t \bfW_{s,u}^i\rmd \bfW_u^j - \int_s^t \bfW_{s,u}^j\rmd \bfW_{u}^i.
    \end{equation*}
}
which would require the Brownian interval to construct.\footnote{The interested reader can find more details in James Foster's thesis \citep{foster2020sdes}.}

\paragraph{Updated Virtual Brownian Tree.}
Recent work by \citet{jelinvcivc2024single} improves upon the Virtual Brownian Tree \citep{li2020sdes} by using an interpolation strategy between query points.\footnote{This algorithm is a part of the popular \href{https://github.com/patrick-kidger/diffrax}{\texttt{Diffrax}} library.}
This enables the updated algorithm to exactly match the distribution of Brownian motion and L\'evy areas at all query times as long as each query time is at least $\epsilon$ apart.

\subsubsection{Implementation}
\label{app:brownian_motion_impl}
We used the Brownian interval \citep{kidger2021efficient} provided by the \href{https://github.com/google-research/torchsde}{\texttt{torchsde}} library.
In general we would recommend the virtual Brownian tree from \citet{jelinvcivc2024single} over the Brownian interval, an implementation of this can be found in the \href{https://docs.kidger.site/diffrax/api/brownian/}{\texttt{diffrax}} library.
However, as our code base made extensive used of prior projects developed in pytorch and \texttt{diffrax} is a jax library, it made more sense to use \texttt{torchsde} for this project.

\section{Experimental Details}
\label{app:exp_details}

We provide additional details for the empirical studies conducted in \cref{sec:experiments}.
\NB, for all experiments we used fixed random seeds between the different software components to ensure a fair comparison.
For all image experiments we built our experimental code from the \href{https://github.com/zituitui/BELM}{\texttt{zituitui/BELM}} repository which contains the implementation of BELM \citep{wang2024belm} and reimplementations of EDICT \citep{wallace2023edict} and BDIA \citep{zhang2023bdia}.
The rest of this section is devoted to particular experimental details.

\subsection{Unconditional Image Generation}
\label{app:uncond_gen_details}

\subsubsection{Diffusion Model}
We make use of a pre-trained DDPM \citep{ho2020denoising} model trained on the CelebA-HQ $256 \times 256$ dataset \citep{karras2018progressive}.
The linear noise schedule from \citep{ho2020denoising} is given as
\begin{equation}
    \beta_i = \frac{\hat\beta_0}{T} + \frac{i-1}{T(T-1)} (\hat\beta_1 - \hat\beta_0).
\end{equation}
We convert this into a continuous time representation via the details in \cref{app:close_form_schedule} following \citet{song2021scorebased}.
For this experiment we used $\hat\beta_0 = 0.0001$ and $\hat\beta_1 = 0.2$.
To ensure numerical stability due to $\frac{1}{\sigma_t}$ terms we solve the probability flow ODE in reverse-time on the time interval $[\epsilon, 1]$ with $\epsilon = 0.0002$.
This is a common choice to make in practice see \citet{song2023consistency}.

\subsubsection{Metrics}
\label{app:uncond:metrics}

We use several metrics to assess the performance in unconditional image generation following \citet{stein2023exposing} by using a DINOv2 feature extractor \citep{oquab2023dinov2}, all of which are calculated using the 10k generated samples and 30k real samples from the CelebA-HQ dataset.
Throughout this section we will let $\{\bfx_i\}_{i=1}^n$ denote an empirical distribution drawn from our generated distribution $\pr_\theta$ and let $\{\hat\bfx_i\}_{i=1}^m$ denote an empirical distribution drawn from the data distribution $\pr_{data}$.

\paragraph{FD.}
The \textit{Fr\'echet distance} (FD) \citep{dowson1982frechet} is measured using the sample mean and covariance of the real $\pr_{data}$ and generated $\pr_\theta$ distributions denoted
\begin{equation}
    \mathrm{FD}(\pr_{data} \| \pr_\theta) = \|\mu_{data} - \mu_\theta\|_2^2 + \mathrm{Tr} \left( \Sigma_{data} + \Sigma_\theta - 2(\Sigma_{data}\Sigma_{\theta})^{\frac 12}\right),
\end{equation}
where $(\mu_\cdot, \Sigma_\cdot)$ denote the sample mean and covariances.
This metric corresponds to the 2-Wasserstein distance between two multivariate Gaussians and is thus a valid metric between the first two moments.
\citet{heusel2017gans} popularized the use of this metric within the feature layer of an Inception-V3 network \citep{szegedy2016rethinking} to assess the fidelity of unconditional image generation, this metric is referred to as the \textit{Fr\'echet inception distance} or FID.
Recent works have challenged the use of the Inception-V3 network as the feature extractor \citep{stein2023exposing,jayasumana2024rethinking,kynkaanniemi2023the} showing that the Inception-V3 network is poorly suited for capturing a semantic view of images which correlates well to human judgment.
In particular, \citet{stein2023exposing} shows that using DINOv2 \citep{oquab2023dinov2} for the feature extractor results in a metric which is significantly more aligned with human judgment.

\paragraph{FD$_\infty$.} FD$_\infty$ proposed by \citet{chong2020effectively} is a modification of FD which aims to remove the inherent bias induced by using a finite number of empirical samples.
The sample is determined by evaluating FD over 15 regular intervals over the number of total samples and fitting a linear trend to the 15 data points to infer a trend for FD as the number of empirical samples, $N \to \infty$.

\paragraph{Precision, Recall, Density and Coverage.}
The density metric \citep{naeem2020reliable} is used as a proxy to measure sample fidelity and improves upon the earlier precision metric \citep{kynkaanniemi2019improved,sajjadi2018assessing}.
The metric is based upon nearest neighbours distance computed in a representation space and counts how many real-sample neighbourhood balls contain the generated sample.
Likewise, to quantify sample diversity we use the coverage metric \citep{naeem2020reliable} which improves upon the earlier recall metric \citep{kynkaanniemi2019improved,sajjadi2018assessing}.
The density metric is given by
\begin{equation}
    \mathrm{density}(\pr_{data}, \pr_\theta) = \frac{1}{kn} \sum_{i=1}^n\sum_{j=1}^m 1_{B(\hat\bfx_j, \delta^k(\hat\bfx_j))}(\bfx_i),
\end{equation}
where $1_A(\cdot)$ denotes the indicator function for set $A$, $B(\bfx, r)$ constructs a Euclidean ball centered at $\bfx$ with radius $r$, and $\delta^k(\hat\bfx_j)$ is the distance to the $k$-th nearest neighbour in $\{\hat\bfx_i\}_{i=1}^m$, excluding itself.
The precision metric is given by
\begin{equation}
    \mathrm{precision}(\pr_{data}, \pr_\theta) = \frac{1}{n} \sum_{i=1}^n 1_{\bigcup_{j=1}^m B(\hat\bfx_j, \delta^k(\hat\bfx_j))}(\bfx_i).
\end{equation}
Similarly, coverage is given by
\begin{equation}
    \mathrm{coverage}(\pr_{data}, \pr_\theta) = \frac{1}{m} \sum_{j=1}^m \max_{i = 1, \dots, n} 1_{B(\hat\bfx_j, \delta^k(\hat\bfx_j))}(\bfx_i).
\end{equation}
Likewise, the recall metric is given by
\begin{equation}
    \mathrm{recall}(\pr_{data}, \pr_\theta) = \frac{1}{m} \sum_{j=1}^m 1_{\bigcup_{i=1}^n B(\bfx_i, \delta^k(\bfx_i))}(\hat\bfx_j).
\end{equation}
We used $k = 5$ and 10k samples throughout, as standard.

\paragraph{On Reporting.}
When reporting on these metrics like in \cref{tab:uncond_fid} we use \textbf{bold font} to denote the best performance with a 1\% error range.
More formally, suppose we have a series of $n$ data points $\{x_i\}_{i=1}^n$ that is totally ordered by some relation $R$
 We will denote a query point $x_i$ with \textbf{bold font} if the \textit{range-normalized absolute percentage error} is less than $\epsilon > 0$, \ie,
\begin{equation}
    \frac{|\max_j x_j - x_i|}{\max_j x_j - \min_k x_k}  < \epsilon.
\end{equation}
In our experiments we report $\epsilon = 0.01$.

\subsubsection{Hyperparameters}
We follow the suggestion of \citet{wallace2023edict} and report results with EDICT using the hyperparameter $p = 0.93$.
For BDIA, the original paper recommends $\gamma = 1.0$ for unconditional image generation \citep[Section 6.1]{zhang2023bdia}.
However, we found $\gamma = 0.5$ to yield better performance. This corroborates with the findings of \citet{wang2024belm}.

\subsection{Conditional Image Generation}
\label{app:cond_gen_details}

\subsubsection{Diffusion Model}
We make use of Stable Diffusion v1.5 \citep{rombach2022high}, a pre-trained \textit{latent diffusion model} (LDM) model.
We also use the scaled linear noise schedule given as
\begin{equation}
    \beta_i = \left(\sqrt{\frac{\hat\beta_0}{T}} + \frac{i-1}{\sqrt{T}(T-1)} \left(\sqrt{\hat\beta_1} - \sqrt{\hat\beta_0}\right)\right)^2.
\end{equation}
We convert this into a continuous time representation via the details in \cref{app:close_form_schedule} following \citet{song2021scorebased}.
For this experiment we used $\hat\beta_0 = 0.00085$ and $\hat\beta_1 = 0.012$.
To ensure numerical stability due to $\frac{1}{\sigma_t}$ terms, we solve the probability flow ODE in reverse-time on the time interval $[\epsilon, 1]$ with $\epsilon = 0.0002$.
This is a common choice in practice, see \citet{song2023consistency}.

\paragraph{Numerical Schemes.}
We set the last two steps of \rex schemes to be either Euler or Euler-Maruyama for better stability near time $0$.

\subsubsection{Metrics}
As mentioned in the main paper, we use the CLIP Score \citep{hessel2021clipscore} PickScore \citep{kirstain2023pickapic}, and Image Reward metrics \citep{xu2023imagereward} to asses the ability of the text-to-image conditional generation task.
We calculate each by comparing the sampled image and the given text prompt used to produce the image. We then report the average over the 1000 samples.

\paragraph{CLIP Score.}
The CLIP score measures the cosine similarity between the text and visual embeddings with pretrained CLIP model \citep{radford2021clip} denoted as
\begin{equation}
    \mathrm{CLIPScore}(\bfx, \bm c) = \max \left\{\frac{\innerprod{\mathcal E_I(\bfx)}{\mathcal E_C(\bm c)}}{\|\mathcal E_I(\bfx)\|\|\mathcal E_C(\bm c)\|}, 0\right\},
\end{equation}
where $\mathcal E_I: \R^d \to V$ is the image embedder and $\mathcal E_C: \R^{d'} \to V$ is the caption embedder; and where $\bfx$ is the query image and $\bm c$ is the query caption.
Thus this metric aims to measure how well our generated images align with their prompt.
In particular, we use the \texttt{ViT-L/14} backbone trained by OpenAI.

\paragraph{PickScore.}
Similar to CLIP score, PickScore finetunes a CLIP-H model on their proposed Pick-a-Pic dataset which purportedly aligns better with human preference over CLIP score.

\paragraph{Image Reward.}
Image Reward \citep{xu2023imagereward} is the newest of the three metrics and uses BLIP \citep{li2022blip} over CLIP as the backbone and finetunes the model using reward model training.
The resulting metrics achieves state-of-the-art alignment with human preferences.

\paragraph{On Reporting.}
When reporting on these metrics like in \cref{tab:cond_gen}, we use \textbf{bold font} to denote the best performance with a 1\% error range. In our experiments we report $\epsilon = 0.01$.

\subsubsection{Hyperparameters}
We follow the suggestion of \citet{wallace2023edict} and report results with EDICT using the hyperparameter $p = 0.93$.
For BDIA, the original paper recommends $\gamma = 0.5$ for text-to-image generation \citep[Section 6.1]{zhang2023bdia}.
We also ran BDIA with $\gamma = 0.96$ as suggested by \citet{wang2024belm}.

\subsection{Image Editing}
\label{app:image_edit_details}
For our experiments we drew $100$ text-image-instruction triples from the \href{https://huggingface.co/datasets/timbrooks/instructpix2pix-clip-filtered}{\texttt{InstructPix2Pix}} dataset \citep{brooks2022instructpix2pix} and report the mean of each metric over these.
We use Stable Diffusion v1.5 with the same scaled-linear noise schedule and continuous-time conversion as in \cref{app:cond_gen_details}.
All fixed-step solvers use $50$ inversion steps and $50$ generation steps with CFG scale $3.0$; \rex (Dopri5) uses the adaptive step controller of \citet{dormand1980family} with $\mathrm{atol} = \mathrm{rtol} = 10^{-5}$.
The inversion is performed under the source caption $\bm c_{\text{src}}$ to time $t = 0.6$, after which generation back to $t = 0$ is performed under the edit caption $\bm c_{\text{edit}}$.
We use the same EDICT and BDIA hyperparameters as in \cref{app:cond_gen_details}.

\subsection{Interpolation}
\label{app:interp_details}

\paragraph{Diffusion Model.}
We make use of a pre-trained DDPM \citep{ho2020denoising} model trained on the CelebA-HQ $256 \times 256$ dataset \citep{karras2018progressive}.
We used linear noise schedule from \citep{ho2020denoising}.
We convert this into a continuous time representation via the details in \cref{app:close_form_schedule} following \citet{song2021scorebased}.
For this experiment we used $\hat\beta_0 = 0.0001$ and $\hat\beta_1 = 0.2$.
For the face pairings we followed \citet{blasingame2024adjointdeis,blasingame2024greedydim} and used the FRLL \citep{frll} dataset.

Notably, we used the noise prediction parameterization rather than data prediction as we found that it performed better for editing.
This is likely due to the singularity of the $\frac1{\sigma_t}$ terms as $t \to 0$.
Within this parameterization we could use the time interval $[0, 1]$ instead of $[\epsilon, 1]$ like in previous experiments with data prediction models.

\subsection{Boltzmann Sampling}
\label{app:boltzmann}
Our codebase is built upon the \href{https://github.com/transferable-samplers/transferable-samplers}{\texttt{transferable-samplers/transferable-samplers}} repository \citep{tan2025amortized} wherein we add the \rex code.

\subsubsection{Datasets}
\label{app:bg:datasets}

We follow the same training, validation, and test split used by \citet{tan2025scalable} to evaluate \rex on equilibrium conformation sampling tasks, with a focus on tri-alanine.
These datasets are obtained from implicit solvent molecular dynamics (MD) simulations.
In particular, a single MCMC chain is decomposed into $10^5$, $2 \times 10^4$, and $10^4$ samples for training, validation, and testing.
The training and validation data are each taken from contiguous regions of the chain to simulate the realistic scenario wherein a pre-existing MCMC trajectory exists and one would like to use a Boltzmann generator to continue generating samples.
Earlier parts of the trajectory under-sampled specific modes enabling a biased training set which we aim to debias through access to the energy function and SNIS.
All MD simulations were run for 1 \textmu s with a time step of 1 fs at temperatures of 300K and 310K for alanine dipeptide and tri-alanine mirroring those done by \citet{klein2025transferableboltzmanngenerators}.

\paragraph{Tri-alanine.}
For the tri-alanine dataset, we follow the splitting procedure of \citet{tan2025scalable}.
The first 100,000 datapoints after burn-in are used as the training set, the next 10,000 points in the (subsampled) chain are used for validation, and a random selection of the rest of the chain is used as test samples.
This creates a biased training set relative to the test set, and is realistic in the setting where we would like to draw samples more efficiently from an existing MD chain.

\subsubsection{Training Details}

\paragraph{Architecture.}
We adopt a DiT backbone \citep{peebles2023scalable} with the details shown below in \cref{tab:arch_configs}.

\begin{table}[h]
    \centering
    \small
    \caption{Overview of architecture configurations.}
    \begin{tabular}{l c}
        \toprule
        Parameter & DiT\\
        \midrule
        Hidden size & 192 \\
        Blocks & 6 \\
        Heads & 6 \\
        Conditional dimension & 64 \\
        \# of Parameters (M) & 3.1 M \\
        \bottomrule
    \end{tabular}
    \label{tab:arch_configs}
\end{table}

\paragraph{Training Configuration.}
All models were trained use an exponential moving average on the weights with a decay rate of 0.999.
For evaluation we generated $10^4$ proposal samples and we used the same number of re-sampling and computing all metrics.

\paragraph{Hyperparameters.}
We used AdamW algorithm \citep{loshchilov2017decoupled} to perform gradient descent with a learning rate of $5 \times 10^{-4}$, $\beta = (0.9, 0.999)$, $\epsilon = 10^{-8}$, and weight decay $10^{-4}$.
A cosine annealing schedule was applied to the learning rate with a warm-up phase covering 5\% of the training iterations.
We trained for 3000 epochs.

\subsubsection{Metrics}
\label{app:bg:metrics}

We evaluate model performance using both sample-based metrics and metrics that assess energy distributions. We enumerate these in greater detail below.

\paragraph{Effective Sample Size.}
We compute the effective sample size (ESS) using Kish's formula \citep{kish1957confidence}, \ie, given $N \in \N$ generated particles with unnormalized importance weights $\{w_i\}_{i=1}^N \subsetneq \R_{\geq 0}$ we have
\begin{equation}
    ESS(\{w_i\}_i=1^N) \coloneq \frac{1}{N} \frac{1}{\sum_{i=1}^N w_i^2} \left(\sum_{i=1}^N w_i\right)^2.
\end{equation}
The ESS is a measure of how many independent and equally-weighted samples would provide equivalent statistical power to the weighted sample.

\paragraph{2-Wasserstein Energy Distance ($\mathcal{E}\text-\mathcal W_2$).}
To compare energy distributions we measure the 2-Wasserstein distance between them which for two probability measures $\mu, \nu$ on $\R$ over energy values is given as
\begin{equation}
    \mathcal{E}\text-\mathcal W_2(\mu, \nu)^2 = \inf_{\gamma \in \Pi(\mu, \nu)} \int_{\R \times \R} |x - y|^2 \; \rmd \gamma(x, y),
\end{equation}
where $\Pi(\mu, \nu)$ is the set of all couplings whose marginals are $\mu$ and $\nu$.
The 2-Wasserstein distance is an integral probability metric which captures both the difference in \textit{location} and \textit{shape} of two distributions.

\paragraph{Torus 2-Wasserstein Distance ($\mathbb T\text-\mathcal{W}_2$).}
To measure structural similarity in torsional space, we compute the 2-Wasserstein distance over dihedral angles.
For a molecule with $L \in \N$ residues, we define the dihedral vector as
\begin{equation}
    \textrm{Dihedrals}(\bfx) = (\phi_1, \psi_1, \ldots, \phi_{L-1}, \psi_{L-1}) \in [0, 2\pi)^{2(L-1)}.
\end{equation}
Thus given the torus geometry, a natural cost function arises as the minimal signed angle difference, \ie,
\begin{equation}
    c_{\mathcal T}(\bfx, \bfy)^2 = \sum_{i=1}^2 \left[\left(\textrm{Dihedrals}(\bfx)_i - \textrm{Dihedrals}(\bfy)_i + \pi\right) \mod 2\pi - \pi \right]^2.
\end{equation}
Thus the torus 2-Wasserstein between two distributions $\mu, \nu$ on $[0, 2\pi)^{2(L-1)}$ is then
\begin{equation}
    \mathbb T\text-\mathcal{W}_2(\mu, \nu)^2 = \inf_{\gamma \in \Pi(\mu, \nu)} \int_{\R^d \times \R^d} c_{\mathcal T}(\bfx, \bfy)^2 \; \rmd \gamma(\bfx, \bfy).
\end{equation}

\subsection{Hardware}
All experiments were run using a single NVIDIA H100 80 GB GPU.

\subsection{Repositories}
\label{app:repos}

In our empirical studies we made use of the following resources and repositories:
\begin{enumerate}
    \item \href{https://huggingface.co/google/ddpm-celebahq-256}{\texttt{google/ddpm-celebahq-256}} (DDPM Model)
    \item \href{https://huggingface.co/stable-diffusion-v1-5/stable-diffusion-v1-5}{\texttt{stable-diffusion-v1-5/stable-diffusion-v1-5}} (Stable Diffusion v1.5)
    \item \href{https://github.com/zituitui/BELM}{\texttt{zituitui/BELM}} (Implementation of BELM, EDICT, and BDIA)
    \item \href{https://github.com/google-research/torchsde}{\texttt{google-research/torchsde}} (Brownian Interval)
    \item \href{https://github.com/layer6ai-labs/dgm-eval}{\texttt{layer6ai-labs/dgm-eval}} (FD, FD$_\infty$, KD, Density, and Coverage metrics)
    \item \href{https://lightning.ai/docs/torchmetrics/stable/multimodal/clip_score.html}{\texttt{torchmetrics}} (CLIP score)
    \item \href{https://github.com/zai-org/ImageReward}{\texttt{zai-org/ImageReward}} (Image Reward)
    \item \href{https://github.com/yuvalkirstain/pickscore}{\texttt{yuvalkirstain/pickscore}} (PickScore)
    \item \href{https://huggingface.co/datasets/timbrooks/instructpix2pix-clip-filtered}{\texttt{timbrooks/instructpix2pix-clip-filtered}} (InstructPix2Pix dataset)
    \item \href{https://github.com/transferable-samplers/transferable-samplers}{\texttt{transferable-samplers/transferable-samplers}} (Boltzman sampling)
    \item \href{https://huggingface.co/datasets/transferable-samplers/many-peptides-md}{\texttt{transferable-samplers/many-peptides-md}} (Tri-alanine data)
\end{enumerate}

\begin{table*}[t]
  \centering
  \caption{Reconstruction error in pixel space and latent space (MSE) for Stable Diffusion v1.5 ($512 \times 512$) averaged over $100$ real images, at varying number of steps and floating-point precision. A CFG scale of $1.0$ was used. \textsuperscript{\textdagger} denotes runs with non-deterministic GPU operations enabled. Best results in \textbf{bold}, second best \underline{underlined}.}
  \label{tab:precision_study}
  \scriptsize
  \begin{tabular}{ll rr rr rr}
    \toprule
    \multirow{2}{*}{\textbf{Precision}} & \multirow{2}{*}{\textbf{Solver}}
      & \multicolumn{2}{c}{\textbf{Steps = 10}}
      & \multicolumn{2}{c}{\textbf{Steps = 20}}
      & \multicolumn{2}{c}{\textbf{Steps = 50}} \\
    \cmidrule(lr){3-4} \cmidrule(lr){5-6} \cmidrule(lr){7-8}
      & & \textbf{Pixel MSE} & \textbf{Latent MSE}
        & \textbf{Pixel MSE} & \textbf{Latent MSE}
        & \textbf{Pixel MSE} & \textbf{Latent MSE} \\
    \midrule
    \multirow{6}{*}{fp32}
      & DDIM         & $\underline{2.81\times10^{-2}}$ & $3.57\times10^{-1}$ & $\underline{1.05\times10^{-2}}$ & $9.89\times10^{-2}$ & $\underline{3.36\times10^{-3}}$ & $1.66\times10^{-2}$ \\
      & EDICT        & $\mathbf{1.86\times10^{-3}}$ & $1.59\times10^{-6}$ & $\mathbf{1.86\times10^{-3}}$ & $1.98\times10^{-7}$ & $\mathbf{1.86\times10^{-3}}$ & $\underline{2.23\times10^{-9}}$ \\
      & BDIA         & $\mathbf{1.86\times10^{-3}}$ & $3.26\times10^{-7}$ & $\mathbf{1.86\times10^{-3}}$ & $\underline{1.29\times10^{-7}}$ & $\mathbf{1.86\times10^{-3}}$ & $2.34\times10^{-8}$ \\
      & O-BELM       & $\mathbf{1.86\times10^{-3}}$ & $\underline{1.05\times10^{-7}}$ & $\mathbf{1.86\times10^{-3}}$ & $2.24\times10^{-7}$ & $\mathbf{1.86\times10^{-3}}$ & $3.35\times10^{-7}$ \\
      & \rex (Euler) & $\mathbf{1.86\times10^{-3}}$ & $\mathbf{3.77\times10^{-9}}$ & $\mathbf{1.86\times10^{-3}}$ & $\mathbf{1.98\times10^{-9}}$ & $\mathbf{1.86\times10^{-3}}$ & $\mathbf{8.85\times10^{-10}}$ \\
      \cmidrule(lr){2-8}
      & \rex\textsuperscript{\textdagger} (Euler) & $1.86\times10^{-3}$ & $3.77\times10^{-9}$ & $1.86\times10^{-3}$ & $1.98\times10^{-9}$ & $1.86\times10^{-3}$ & $8.85\times10^{-10}$ \\
    \midrule
    \multirow{5}{*}{fp16}
      & DDIM         & $2.82\times10^{-2}$ & $3.57\times10^{-1}$ & $1.05\times10^{-2}$ & $9.90\times10^{-2}$ & $3.36\times10^{-3}$ & $1.66\times10^{-2}$ \\
      & EDICT        & $1.96\times10^{-3}$ & $9.84\times10^{-4}$ & $\underline{1.90\times10^{-3}}$ & $3.94\times10^{-4}$ & $\mathbf{1.86\times10^{-3}}$ & $\mathbf{7.87\times10^{-6}}$ \\
      & BDIA         & $\underline{1.87\times10^{-3}}$ & $6.62\times10^{-5}$ & $\mathbf{1.87\times10^{-3}}$ & $\underline{6.80\times10^{-5}}$ & $\underline{1.87\times10^{-3}}$ & $6.80\times10^{-5}$ \\
      & O-BELM       & $\mathbf{1.86\times10^{-3}}$ & $\underline{9.03\times10^{-6}}$ & $\mathbf{1.87\times10^{-3}}$ & $8.16\times10^{-5}$ & $1.93\times10^{-3}$ & $9.17\times10^{-4}$ \\
      & \rex (Euler) & $\mathbf{1.86\times10^{-3}}$ & $\mathbf{1.63\times10^{-6}}$ & $\mathbf{1.87\times10^{-3}}$ & $\mathbf{2.44\times10^{-6}}$ & $\underline{1.87\times10^{-3}}$ & $\underline{1.25\times10^{-5}}$ \\
    \bottomrule
  \end{tabular}
\end{table*}

\begin{table}[h]
    \centering
    \caption{Quantitative comparison of different reversible solvers for unconditional image generation with a pre-trained DDPM model on CelebA-HQ ($256 \times 256$) with the non-reversible DDIM as a baseline.
    Our reversible ODE solvers are in \textcolor{tn-pink}{pink} and reversible SDE solvers in \textcolor{tn-rex}{mauve}.}
    \footnotesize
    \begin{tabular}{ll rr rrrr}
        \toprule
        \textbf{Steps} & \textbf{Solver} & \textbf{FD} ($\downarrow$) & \textbf{FD$_\infty$} ($\downarrow$)
        & \textbf{Precision} ($\uparrow$) & \textbf{Recall} ($\uparrow$)
        & \textbf{Density} ($\uparrow$) & \textbf{Coverage} ($\uparrow$)\\
        \midrule
        \multirow{9}{*}{10}
        & EDICT         & 1042.89         & 1034.82         & 0.49            & 0.10            & 0.19            & 0.11           \\
        & BDIA                     & 900.95          & 894.23          & 0.61            & 0.10            & 0.28            & 0.14           \\
        & O-BELM                    & \textbf{605.52} & \textbf{596.47} & 0.78            & 0.18            & 0.56            & 0.34           \\
        &\mycc \rex (Midpoint)            &\mycc \textbf{607.20} &\mycc \textbf{597.04} &\mycc 0.78            &\mycc 0.21            &\mycc 0.60            &\mycc \textbf{0.37}  \\
        &\mycc \rex (RK4)                 &\mycc 633.90          &\mycc 617.11          &\mycc \textbf{0.81}   &\mycc \textbf{0.22}   &\mycc \textbf{0.64}   &\mycc 0.36           \\
        &\myccsde \rex (Euler-Maruyama)      &\myccsde \textbf{610.16} &\myccsde \textbf{598.56} &\myccsde 0.79            &\myccsde 0.10            &\myccsde 0.61            &\myccsde \textbf{0.37}  \\
        \cmidrule(lr){2-8}
        & DDIM                      & 727.75          & 716.41          & 0.75            & 0.14            & 0.49            & 0.27           \\
        \midrule
        \multirow{9}{*}{20}
        & EDICT         & 752.68          & 743.89          & 0.68            & 0.15            & 0.36            & 0.21           \\
        & BDIA                     & 611.47          & 601.37          & 0.76            & 0.19            & 0.50            & 0.30           \\
        & O-BELM                    & 489.94          & 477.82          & 0.82            & 0.23            & 0.71            & 0.43           \\
        &\mycc \rex (Midpoint)            &\mycc 539.96          &\mycc 527.85          &\mycc 0.81            &\mycc 0.26            &\mycc 0.66            &\mycc 0.41           \\
        &\mycc \rex (RK4)                 &\mycc 547.24          &\mycc 533.30          &\mycc 0.82            &\mycc \textbf{0.27}   &\mycc 0.71            &\mycc 0.43           \\
        &\myccsde \rex (Euler-Maruyama)      &\myccsde \textbf{460.42} &\myccsde \textbf{447.01} &\myccsde \textbf{0.86}   &\myccsde 0.21            &\myccsde \textbf{0.91}   &\myccsde \textbf{0.51}  \\
        \cmidrule(lr){2-8}
        & DDIM                      & 570.11          & 555.26          & 0.79            & 0.20            & 0.62            & 0.38           \\
        \midrule
        \multirow{9}{*}{50}
        &
        EDICT         & 551.13          & 534.73          & 0.78            & 0.24            & 0.60            & 0.37           \\
        & BDIA                    & 500.79          & 489.24          & 0.82            & 0.27            & 0.70            & 0.44           \\
        & O-BELM                    & 476.29          & 463.07          & 0.84            & \textbf{0.29}   & 0.77            & 0.45           \\
        &\mycc \rex (Midpoint)            &\mycc 505.67          &\mycc 494.94          &\mycc 0.81            &\mycc \textbf{0.29}   &\mycc 0.70            &\mycc 0.44           \\
        &\mycc \rex (RK4)                 &\mycc 511.17          &\mycc 498.94          &\mycc 0.80            &\mycc 0.27            &\mycc 0.69            &\mycc 0.44           \\
        &\myccsde \rex (Euler-Maruyama)      &\myccsde \textbf{391.93} &\myccsde \textbf{381.01} &\myccsde \textbf{0.87}   &\myccsde 0.28            &\myccsde \textbf{0.98}   &\myccsde \textbf{0.56}  \\
        \cmidrule(lr){2-8}
        & DDIM                      & 490.88          & 479.87          & 0.80            & 0.26            & 0.67            & 0.45           \\
        \bottomrule
    \end{tabular}
    \label{tab:uncond_fid}
\end{table}

\section{Additional Results}
\label{app:add_results}
This appendix collects results that could not fit into the main paper, including full quantitative tables for the radar charts and additional qualitative samples.
We first report the full reconstruction-error study referenced in \cref{sec:precision_study} (\cref{app:precision_full}), followed by the complete tables for unconditional generation (\cref{app:add_results:uncond}), conditional generation (\cref{app:add_results:cond}), image editing (\cref{app:add_results:edit}), and Boltzmann sampling (\cref{app:add_results:bg}).
We then turn to two further analyses of \rex: a profiling of the Brownian Interval overhead incurred by our SDE schemes (\cref{sec:brownian_overhead}) and an ablation of the three core ingredients of \rex---the reversible coupling, exponential transformation, and time reparameterization (\cref{app:ablation}).
Finally, we present qualitative material: a visualization of the inversion trajectory and the diffusion latent space (\cref{app:vis_inv}), an interpolation study between real-image pairs (\cref{app:interp_results}), and a gallery of uncurated conditional generation samples (\cref{app:uncurated_samples}).

\subsection{Reconstruction Error}
\label{app:precision_full}
This section reports the full reconstruction-error results referenced from \cref{sec:precision_study} in the main paper.
We measure the round-trip reconstruction error (forward solve followed by reverse solve) over $100$ real images encoded with Stable Diffusion v1.5 at $512 \times 512$ resolution, evaluating at $10, 20, 50$ steps in both FP32 and FP16 with a CFG scale of $1.0$.
We report the mean-squared error (MSE) in both pixel space and latent space; the latter isolates errors attributable to the numerical scheme by removing the VAE reconstruction error.
For fair comparison we disabled non-deterministic GPU operations across all seed-matched runs; enabling them does not meaningfully change the inversion error of \rex (\cf the \rex\textsuperscript{\textdagger} row in \cref{tab:precision_study}).

\Cref{tab:precision_study} reports the full results.
The FP32 latent-space MSEs are visualized in \cref{fig:precision_study} of the main paper.
\rex (Euler) outperforms every baseline w.r.t.\ inversion error in nearly every tested setting, and often by a wide margin.
In FP32, \rex achieves a latent-space MSE that is on average roughly three orders of magnitude smaller than O-BELM and eight orders of magnitude smaller than DDIM.
The advantage is particularly pronounced in the few-step regime where EDICT fails (note EDICT is competitive with \rex at $50$ FP32 steps, but degrades sharply at fewer steps).
DDIM is not algebraically reversible, so its error simply reflects the truncation error of the underlying solver.
EDICT and BDIA both improve with the number of steps but plateau in the $10^{-7}$--$10^{-9}$ range due to numerical instability of the coupling.
O-BELM exhibits a more striking failure mode: the error \textit{grows} with the number of steps, since the iteration is nowhere linearly stable and round-off error accumulates (see \cref{app:stability}).
\rex sits closest to the floating-point round-off floor and improves monotonically with step count.

In FP16, \rex remains the best or close-second across all step budgets, with the advantage particularly pronounced in the few-step regime where EDICT fails.
Pixel-space MSEs are dominated by the VAE reconstruction error and are essentially identical across reversible solvers; we report them for completeness.

\subsection{Unconditional Image Generation}
\label{app:add_results:uncond}

In \cref{tab:uncond_fid} we report the raw metric values corresponding to the radar visualization in \cref{fig:uncond_radar}.
Across all three step budgets we observe that \rex variants consistently match or exceed the non-reversible DDIM baseline as well as the reversible baselines (EDICT, BDIA, O-BELM).
The stochastic \rex (Euler-Maruyama) scheme is particularly strong at the 20 and 50 step budgets, attaining the best FD, FD$_\infty$, Precision, Density, and Coverage at 50 steps.

\paragraph{A Note on Higher-Order Schemes.}
\label{app:higher_order_note}
Comparing the entries of \cref{tab:uncond_fid} one might at first be surprised to find that the higher-order \rex (RK4) performs worse than the lower-order \rex (Euler) and that the SDE variant \rex (Euler--Maruyama) outperforms both.
This is consistent with prior observations within the diffusion model community: for unconditional sampling with diffusion models, higher-order ODE schemes have often been observed to be \textit{no better}---and sometimes worse---than first-order schemes in low-step regimes \citep{lu2022dpmsolver,lu2022dpm++}.
In contrast, in flow matching and AI4Science settings, adaptive higher-order schemes such as Dormand--Prince do consistently outperform Euler \citep{tong2024improving,rehman2026falcon,tan2025scalable}; we see this same effect for the Boltzmann sampling experiment in \cref{tab:bg_results}.
The comparison between \rex (RK4) and \rex (Euler--Maruyama) is also less surprising than it may at first appear: the latter is an \textit{SDE} scheme and benefits from a stochastic regularization effect which pulls trajectories back toward the data manifold; this can dominate any gains from higher-order discretization.
We therefore emphasize that our central claim is not that higher-order schemes are universally superior, but that \rex affords arbitrary order of convergence, giving practitioners the freedom to select the order appropriate for their setting.

\begin{table}[t]
    \centering
    \caption{Quantitative comparison of different reversible solvers in terms of average CLIP score, Image Reward, and PickScore for conditional text-to-image generation with Stable Diffusion v1.5 ($512 \times 512$) with the non-reversible DDIM as a baseline.
    Our reversible ODE solvers are in \textcolor{tn-pink}{pink} and reversible SDE solvers in \textcolor{tn-rex}{mauve}.}
    \footnotesize
    \begin{tabular}{l rrr rrr rrr}
        \toprule
        & \multicolumn{3}{c}{\textbf{CLIP score} ($\uparrow$)}
        & \multicolumn{3}{c}{\textbf{Image Reward} ($\uparrow$)}
        & \multicolumn{3}{c}{\textbf{PickScore} ($\uparrow$)}\\
        \cmidrule(lr){2-4}
        \cmidrule(lr){5-7}
        \cmidrule(lr){8-10}
        \textbf{Solver} / \textbf{Steps}
        & 10 & 20 & 50
        & 10 & 20 & 50
        & 10 & 20 & 50\\
        \midrule
        DDIM & 31.78 & 31.76 & 31.24 & 0.033 & 0.136 & 0.247 & 21.06 & 21.29 & 21.04\\
        EDICT & 27.97 & 31.04 & 31.17 & -1.219 & -0.134 & -0.055 & 19.52 & 20.84 & 21.05\\
        BDIA $\gamma = 0.96$ & 31.11 & 31.52 & 31.54 & -0.111 & 0.067 & 0.087 & 20.52 & 21.01 & 21.19\\
        BDIA $\gamma = 0.5$ & 31.57 & 31.48 & 31.48 & -0.006 & 0.055 & 0.066 & 20.98 & 21.16 & 21.21\\
        O-BELM & 31.47 & 31.43 & 31.51 & 0.051 & 0.105 & 0.160 & 20.88 & 21.00 & 21.16\\
        \rowcolor{tn-pink!20} \rex (Midpoint) & 31.62 & \textbf{31.64} & \textbf{31.60} & 0.119 & 0.179 & 0.198 & 21.28 & 21.38 & 21.41\\
        \rowcolor{tn-pink!20} \rex (RK4) & \textbf{31.69} & 31.60 & 31.57 & 0.156 & 0.187 & 0.195 & 21.35 & 21.40 & 21.41\\
        \rowcolor{tn-rex!20} \rex (Euler-Maruyama) & \textbf{31.68} & 31.56 & 31.33 & 0.222 & 0.239 & \textbf{0.264} & \textbf{21.50} & \textbf{21.66} & 21.70\\
        \rowcolor{tn-rex!20} \rex (ShARK) & 31.55 & 31.56 & 31.39 & \textbf{0.239} & \textbf{0.249} & \textbf{0.263} & \textbf{21.51} & \textbf{21.66} & \textbf{21.72}\\
        \bottomrule
    \end{tabular}
    \label{tab:cond_gen}
\end{table}

\subsection{Conditional Image Generation}
\label{app:cond_gen}
\label{app:add_results:cond}

In \cref{tab:cond_gen} we report the raw CLIP, Image Reward, and PickScore values corresponding to the nine-axis radar in \cref{fig:cond_radar}.
The stochastic \rex variants (Euler-Maruyama and ShARK) consistently lead the Image Reward and PickScore axes at all three step budgets. Across the \rex variants, CLIP scores remain within $0.23$ points of the non-reversible DDIM baseline.

\begin{table}[h]
    \centering
    \caption{Quantitative comparison of different reversible solvers for image editing with the non-reversible DDIM as a baseline.
    Our reversible ODE solvers are in \textcolor{tn-pink}{pink}
    and adaptive ODE solvers are in \textcolor{tn-teal}{teal}.
    Best results in \textbf{bold}, second best \underline{under-lined}.}
    \footnotesize
    \begin{tabular}{l rrrr}
        \toprule
        \textbf{Solver} & \textbf{Image Reward} ($\uparrow$) & \textbf{CLIP score} ($\uparrow$) & \textbf{PickScore} ($\uparrow$) & \textbf{LPIPS} ($\downarrow$)\\
        \midrule
        DDIM & -0.564 & \textbf{19.17} & 18.367 & 0.214\\
        BDIA & -2.205 & 18.57 & 16.956 & 0.885\\
        O-BELM & -0.639 & 19.16 & 18.416 & 0.140\\
        \rowcolor{tn-pink!20} \rex (Euler) & \underline{-0.551} & \textbf{19.17} & \textbf{18.721} & \underline{0.109}\\
        \rowcolor{tn-teal!20} \rex (Dopri5) & \textbf{-0.547} & \underline{19.16} & \underline{18.698} & \textbf{0.107}\\
        \bottomrule
    \end{tabular}
    \label{tab:image_edit}
\end{table}

\subsection{Image Editing}
\label{app:add_results:edit}

In \cref{tab:image_edit} we report the raw Image Reward, CLIP score, PickScore, and LPIPS values for the round-trip image-editing experiments visualised in \cref{fig:edit_radar}.
Both fixed-step \rex (Euler) and adaptive-step \rex (Dopri5) match or beat the non-reversible DDIM baseline on every metric, with LPIPS roughly halved relative to DDIM.
We omit EDICT from this table because the model failed entirely on this benchmark, collapsing to the identity map.

\subsection{Boltzmann Sampling}
\label{app:add_results:bg}

In \cref{fig:rex-boltzmann} we provide a qualitative comparison of the energy distributions corresponding to the quantitative results reported in \cref{tab:bg_results}.
We plot the true MD energy distribution alongside the unweighted and reweighted proposal distributions obtained from the DiT model when integrated with the non-reversible Dopri5 scheme and with the reversible \rex(Dopri5) scheme.
The reweighted proposals from \rex(Dopri5) more closely match the true MD energy distribution, consistent with the substantial improvement in $\mathcal E\text-\mathcal W_2$ reported in \cref{tab:bg_results}.

\begin{figure}[h]
    \centering
    \begin{subfigure}{0.45\textwidth}
        \includegraphics[width=\textwidth]{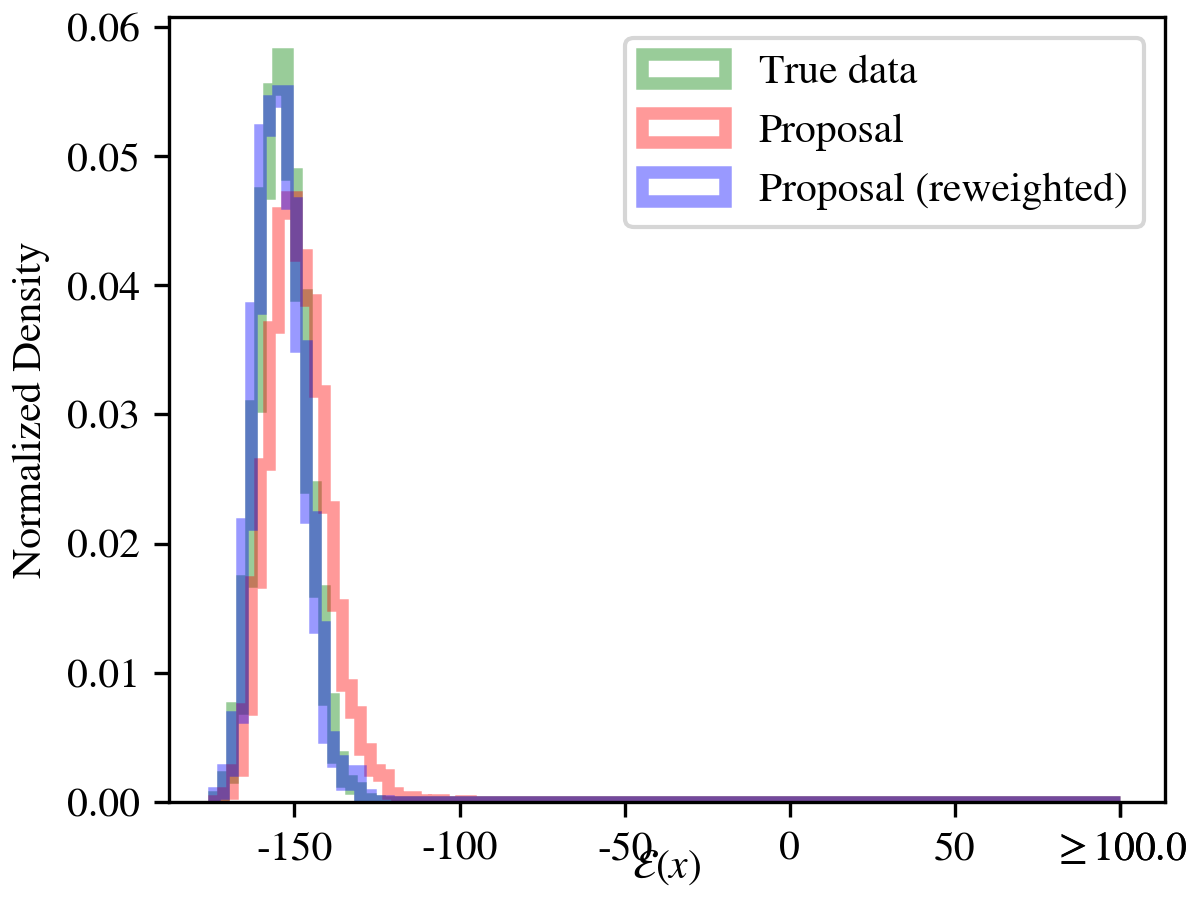}
        \caption{Dopri5}
    \end{subfigure}%
    \begin{subfigure}{0.45\textwidth}
        \includegraphics[width=\textwidth]{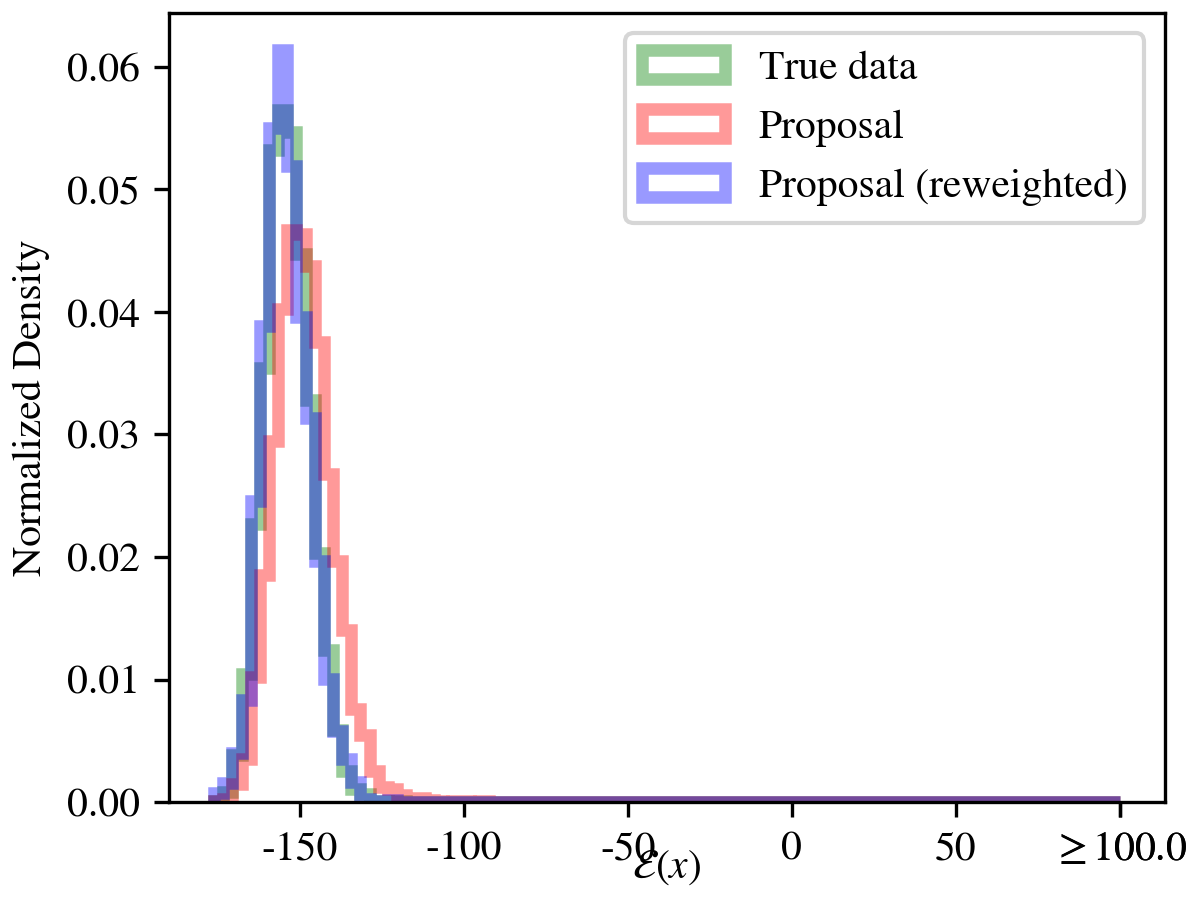}
        \caption{\rex (Dopri5)}
    \end{subfigure}
    \caption{True MD energy distribution with unweighted and reweighted proposals created using \rex (Dopri5).}
    \label{fig:rex-boltzmann}
\end{figure}

\subsection{Brownian Interval Overhead}
\label{sec:brownian_overhead}
Our stochastic \rex schemes rely on Kidger's \textit{Brownian Interval} \citep{kidger2021efficient} to reconstruct the realizations of the Brownian motion and the associated space-time L\'evy increments without storing the entire trajectory (see \cref{app:brownian_motion_sim}).
Since the Foster-Reis-Strange schemes only require the space-time L\'evy area---which admits an exact closed-form joint distribution with the Brownian increment \citep[Remark 3.6]{foster2020optimal}---we avoid the notoriously difficult space-space L\'evy area entirely; for further details see \citet[Section 3]{jelinvcivc2024single}.
The Brownian Interval has $\mathcal{O}(\log N)$ query times and $\mathcal{O}(N)$ memory in the number of queries.
For longer horizons and higher-order schemes, the queries remain exact and the convergence behavior is governed by the underlying numerical scheme rather than the Brownian sampler.
We emphasize that our work is agnostic to the choice of underlying mechanism for sampling the Brownian and space-time L\'evy increments; we adopt the Brownian Interval because it is lightweight and well suited to our needs.

In \cref{tab:profiler} we profile \rex (ShARK) with 50 solver steps.
The Brownian Interval contributes essentially \textit{no} meaningful overhead, accounting for a negligible fraction of the total wall-clock time and adding no measurable VRAM usage compared with the cost of the network evaluations themselves.

\begin{table*}[t]
  \centering
  \caption{Profiler report of a \rex (ShARK) sampling run with 50 steps.}
  \label{tab:profiler}
  \scriptsize
  \begin{tabular}{lrrrrrrrr}
    \toprule
    \textbf{Section} & \textbf{Calls} & \textbf{Wall (s)} & \textbf{CPU (s)} & \textbf{Avg Wall} & \textbf{Peak CPU MB} & \textbf{VRAM In} & \textbf{VRAM Out} & \textbf{$\Delta$ VRAM} \\
    \midrule
    \texttt{model\_loading}         &  1 &   2.754 &   2.470 &  2.754 & 27.53 &    0.0 &  4096.8 & $+$4096.8 \\
    \texttt{data\_loading}          &  1 &   0.002 &   0.002 &  0.002 &  1.13 & 4096.8 &  4096.8 &    $+$0.0 \\
    \texttt{rex/encode\_prompt}     & 10 &   0.367 &   0.362 &  0.037 &  0.08 & 4125.6 &  4238.7 &  $+$113.1 \\
    \texttt{rex/init\_noise}        & 10 &   0.050 &   0.050 &  0.005 &  0.06 & 4238.7 &  4238.9 &    $+$0.1 \\
    \texttt{rex/brownian\_interval} & 10 &   0.007 &   0.007 &  0.001 &  0.06 & 4238.9 &  4238.9 &    $+$0.0 \\
    \texttt{rex/inference}          & 10 & 104.914 & 104.697 & 10.491 &  0.35 & 4238.9 &  4239.0 &    $+$0.1 \\
    \texttt{sampling/rex}           & 10 & 105.405 & 105.170 & 10.540 &  0.37 & 4125.6 &  4128.8 &    $+$3.2 \\
    \texttt{decode\_and\_save}      & 10 &   1.438 &   1.365 &  0.144 &  3.79 & 4128.8 &  4128.8 &    $+$0.0 \\
    \midrule
    \textbf{\texttt{\_\_TOTAL\_\_}} & \textbf{1} & \textbf{109.665} & \textbf{109.071} & \textbf{109.665} & \textbf{3.77} & \textbf{0.0} & \textbf{4128.8} & $\mathbf{+}$\textbf{4128.8} \\
    \bottomrule
  \end{tabular}
\end{table*}

\subsection{Ablation Study}
\label{sec:ablations}
\label{app:ablation}
We provide ablations of the three core ingredients of \rex: the \textit{reversible coupling}, the \textit{exponential transformation}, and the \textit{time reparameterization}.
We run \rex (Euler) on the image editing task with Stable Diffusion v1.5 at $512 \times 512$ resolution and CFG scale of $3.0$, and report inversion error (latent MSE), LPIPS, CLIP score, ImageReward, and PickScore.
Results are reported in \cref{tab:ablations}.

\paragraph{Reversible Coupling.}
Removing the reversible coupling reduces \rex (Euler) to (the non-reversible) DDIM (\cf \cref{corr:rex_is_dete_ddim_noise}).
The inversion error increases by at least four orders of magnitude and the downstream editing quality (LPIPS, ImageReward) degrades noticeably.

\paragraph{Exponential Transformation.}
Removing the exponential transform leads to a different (non-exponential) reversible scheme.
Interestingly, this variant produces slightly higher LPIPS-preservation at a small cost to edit quality (ImageReward).
This is consistent with the role of the exponential treatment of the linear drift, which biases the solver towards higher-fidelity edits.

\paragraph{Time Reparameterization.}
Removing the time reparameterization yields evenly spaced steps in $t$ rather than uniform steps in $\varsigma$.
This is known to influence diffusion sampling behaviour and we observe improved inversion error at slightly worse editing performance.

\begin{table*}[t]
  \centering
  \caption{Ablations on image editing with Stable Diffusion v1.5 ($512 \times 512$) at varying number of steps; CFG scale of $3.0$. Inversion error is reported as MSE in the latent space. Best results in \textbf{bold}, second best \underline{underlined}.}
  \label{tab:ablations}
  \small
  \setlength{\tabcolsep}{5pt}
  \begin{tabular}{c l rrrrr}
    \toprule
    \textbf{Steps} & \textbf{Variant}
    & \textbf{Inv.\ error} ($\downarrow$)
    & \textbf{LPIPS} ($\downarrow$)
    & \textbf{CLIP score} ($\uparrow$)
    & \textbf{ImageReward} ($\uparrow$)
    & \textbf{PickScore} ($\uparrow$)
    \\
    \midrule
    \multirow{4}{*}{10} & \rex (Euler) \textit{baseline} & $\underline{1.08\times10^{-6}}$ & $0.0505$ & $\mathbf{19.1511}$ & $\mathbf{-0.4351}$ & $\underline{19.0827}$ \\
    \cmidrule(lr){2-7}
     & No rev.\ coupling (DDIM) & $4.54\times10^{-2}$ & $0.1552$ & $19.1126$ & $-0.6081$ & $18.9360$ \\
     & No exp.\ transform       & $2.52\times10^{-6}$ & $\underline{0.0309}$ & $\underline{19.1479}$ & $\underline{-0.4582}$ & $\mathbf{19.1999}$ \\
     & No time reparam          & $\mathbf{1.13\times10^{-8}}$ & $\mathbf{0.0275}$ & $\underline{19.1477}$ & $-0.4746$ & $\mathbf{19.2000}$ \\
    \midrule
    \multirow{4}{*}{20} & \rex (Euler) \textit{baseline} & $\underline{8.66\times10^{-7}}$ & $0.0532$ & $\mathbf{19.1552}$ & $\mathbf{-0.4142}$ & $19.1135$ \\
    \cmidrule(lr){2-7}
     & No rev.\ coupling (DDIM) & $1.68\times10^{-2}$ & $0.0950$ & $19.1289$ & $\underline{-0.4555}$ & $19.1270$ \\
     & No exp.\ transform       & $2.17\times10^{-6}$ & $\underline{0.0309}$ & $\underline{19.1469}$ & $-0.4600$ & $\underline{19.1813}$ \\
     & No time reparam          & $\mathbf{6.38\times10^{-9}}$ & $\mathbf{0.0275}$ & $\underline{19.1481}$ & $-0.4755$ & $\mathbf{19.2010}$ \\
    \midrule
    \multirow{4}{*}{50} & \rex (Euler) \textit{baseline} & $\underline{8.65\times10^{-7}}$ & $0.0577$ & $\mathbf{19.1487}$ & $\mathbf{-0.3838}$ & $19.1596$ \\
    \cmidrule(lr){2-7}
     & No rev.\ coupling (DDIM) & $4.21\times10^{-3}$ & $0.0664$ & $\underline{19.1461}$ & $\underline{-0.3906}$ & $19.1761$ \\
     & No exp.\ transform       & $1.36\times10^{-6}$ & $\underline{0.0311}$ & $\mathbf{19.1476}$ & $-0.4571$ & $\underline{19.1831}$ \\
     & No time reparam          & $\mathbf{2.69\times10^{-9}}$ & $\mathbf{0.0275}$ & $\mathbf{19.1482}$ & $-0.4755$ & $\mathbf{19.2015}$ \\
    \bottomrule
  \end{tabular}
\end{table*}

\subsection{Visualization of Inversion and the Latent Space}
\label{app:vis_inv}

We conduct a further qualitative study of the latent space produced by inversion and the impact various design parameters play.
First in \cref{fig:vis_latent_euler_data} we show the process of inverting and then reconstructing a real sample.
Notice that while the data prediction formulation worked great in sampling and still possesses the correct reconstruction, \ie, it is still reversible, the latent space is all messed up.
The variance of $(\bfx_n, \hat\bfx_n)$ tends to about $10^{7}$, many orders of magnitude too large!
We did observe that raising $\zeta = 1 - 10^{-9}$ did help reduce this, but it was still relatively unstable.
\NB, these trends hold in a large number of discretization steps (we tested up to 250); however, for visualization purposes we chose fewer steps.

Conversely, the noise prediction formulation is much more stable, see \cref{fig:vis_latent_euler_noise}. The variance of  $(\bfx_n, \hat\bfx_n)$ is on the right order of magnitude this time, however, there are strange artefacting and it is clear the latent variables are not normally distributed.

\begin{figure}[h]
    \centering
    \includegraphics[width=\textwidth]{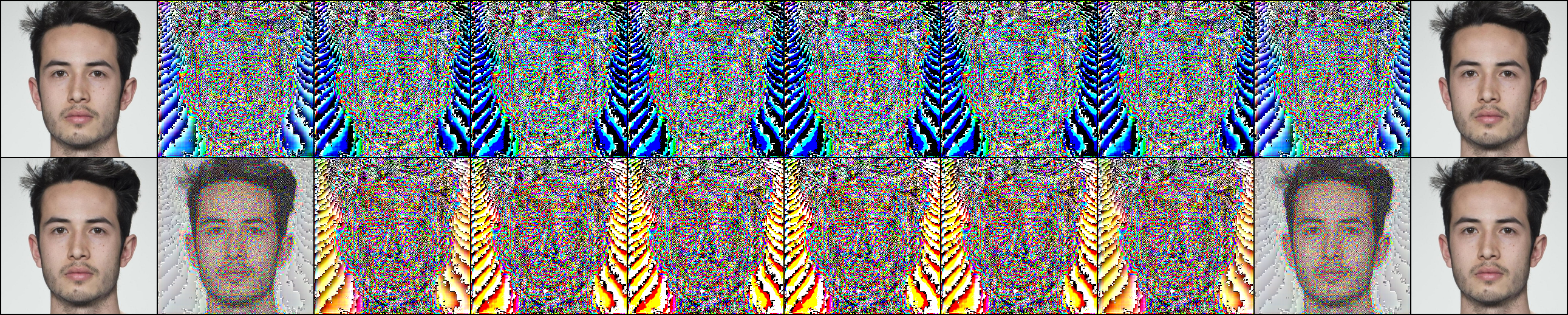}
    \caption{Inversion followed by sampling with \rex (Euler) 5 steps, $\zeta = 0.999$. Data prediction. Top row tracks $\bfx_n$, bottom row $\hat\bfx_n$.}
    \label{fig:vis_latent_euler_data}
\end{figure}

\begin{figure}[h]
    \centering
    \includegraphics[width=\textwidth]{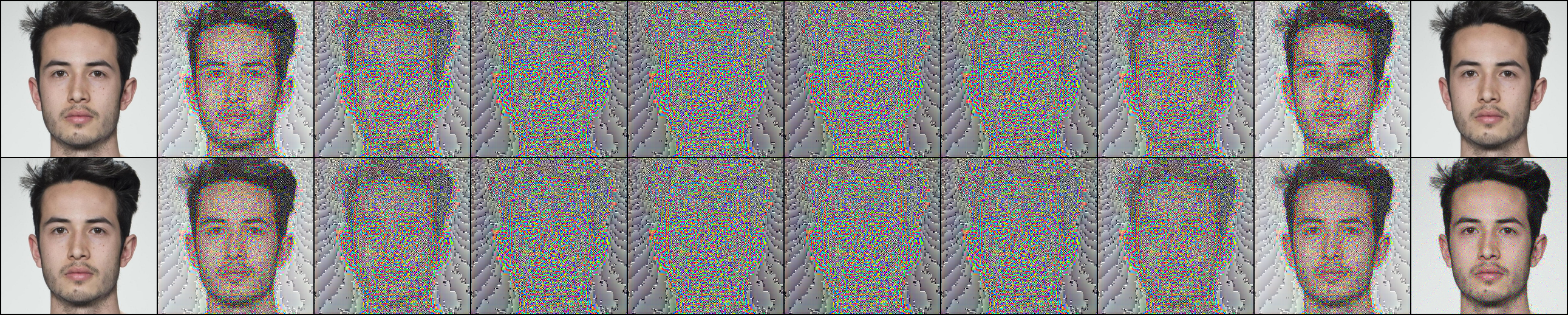}
    \caption{Inversion followed by sampling with \rex (Euler) 5 steps, $\zeta = 0.999$. Noise prediction. Top row tracks $\bfx_n$, bottom row $\hat\bfx_n$.}
    \label{fig:vis_latent_euler_noise}
\end{figure}

\begin{figure}[h]
    \centering
    \includegraphics[width=\textwidth]{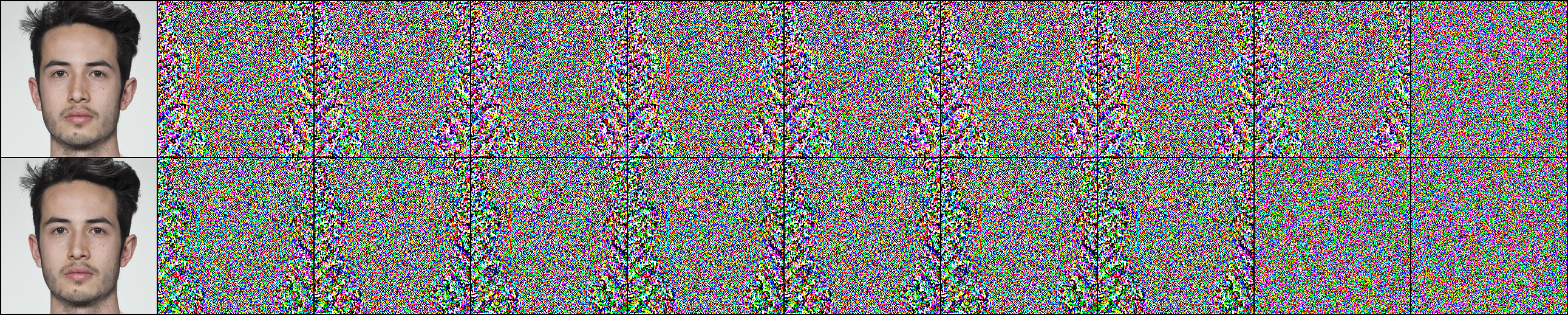}
    \caption{FAILURE CASE! Inversion followed by sampling with \rex (ShARK) 5 steps, $\zeta = 0.999$. Data prediction. Top row tracks $\bfx_n$, bottom row $\hat\bfx_n$.}
    \label{fig:vis_latent_shark_data}
\end{figure}

\begin{figure}[h]
    \centering
    \includegraphics[width=\textwidth]{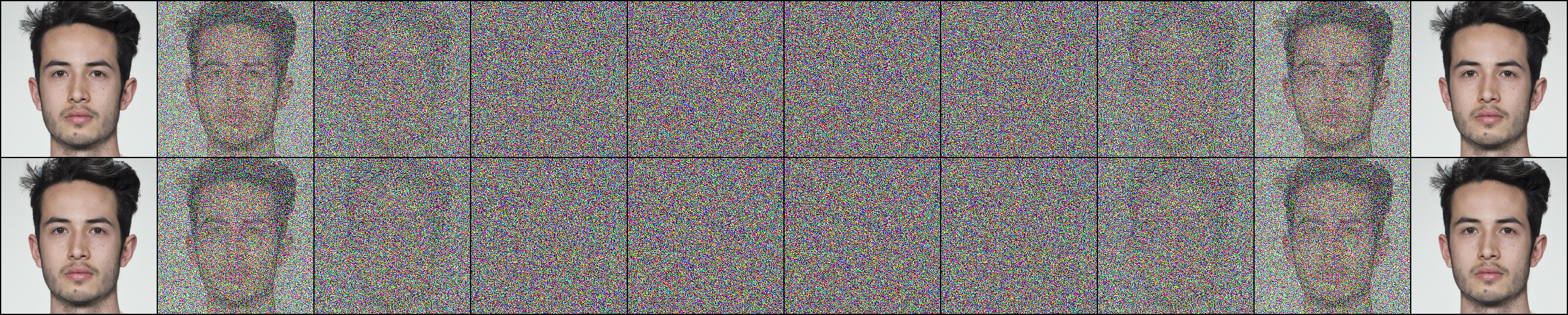}
    \caption{Inversion followed by sampling with \rex (ShARK) 5 steps, $\zeta = 0.999$. Noise prediction. Top row tracks $\bfx_n$, bottom row $\hat\bfx_n$.}
    \label{fig:vis_latent_shark_noise}
\end{figure}

Moving to the SDE case with ShARK in \cref{fig:vis_latent_shark_data}, we see that the data prediction formulation is so unstable in forward-time that we ran into overflow errors and can no longer achieve algebraic reversibility.
However, the noise parameterization with ShARK, see \cref{fig:vis_latent_shark_noise}, works very well with the latent variables, appearing to be close to normally distributed.

\subsection{Interpolation}
\label{app:interp_results}

\begin{figure}
    \centering
    \includegraphics[width=0.8\textwidth]{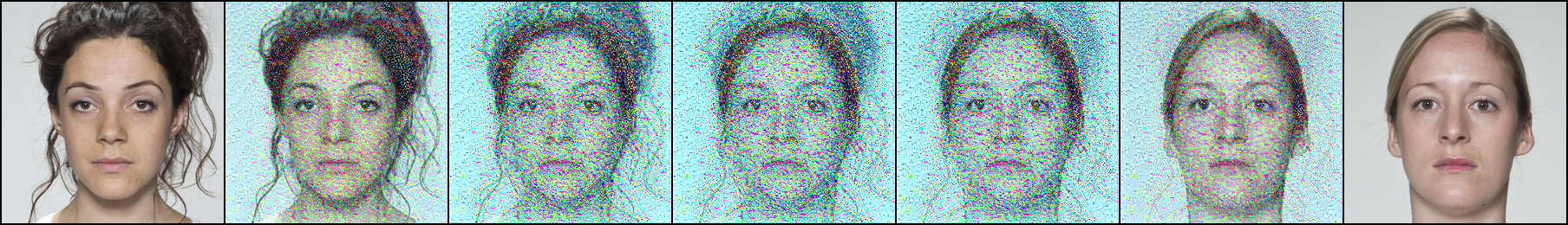}
    \includegraphics[width=0.8\textwidth]{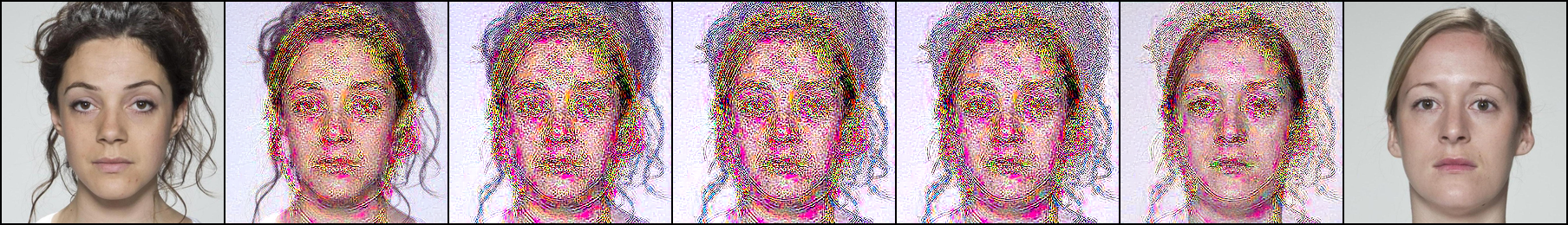}
    \includegraphics[width=0.8\textwidth]{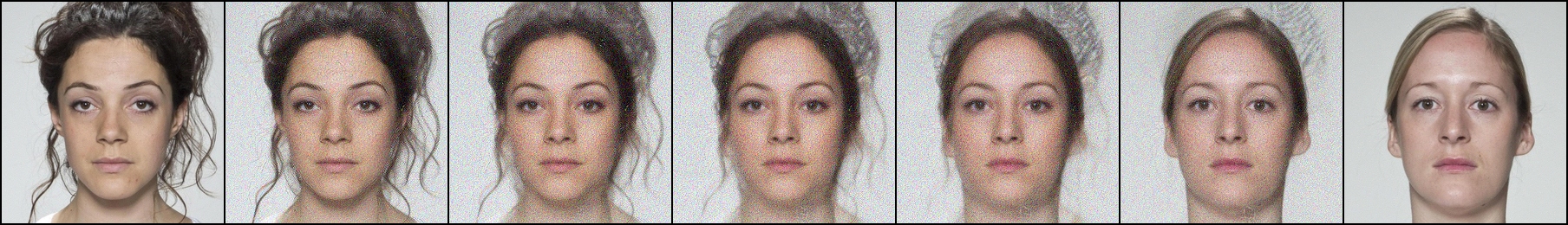}
    \caption{Unconditional interpolation between two real images from FRLL \citep{frll} with a DDPM model trained on CelebA-HQ. Top row is BELM, middle is \rex (Euler), and bottom is \rex (ShARK). 50 steps used for each method.}
    \label{fig:interp}
\end{figure}

We explore interpolating between the inversions of two images, a difficult problem as the inverted space is often non-Gaussian \citep{blasingame2024fast}.
We illustrate an example of this in \cref{fig:interp} exploring interpolation with an unconditional DDPM model.
We observe that stochastic \rex has much better interpolation properties than both ODE inversions, corroborating with \citet{nie2024blessing}.
Both ODE variants seem to fail quite noticeably, unable to smoothly interpolate between the two samples.
\NB, we noticed that the inverted samples with ShARK had variance much closer to one, whereas the other inverted samples had much larger variance, likely contributing to the distortions.

\subsection{Uncurated Image Generation Samples}
\label{app:uncurated_samples}

We present uncurated text-to-image samples produced by \rex with Stable Diffusion v1.5 ($512 \times 512$) across several underlying solvers and discretization-step budgets.
\Cref{fig:uncurated_rk4_10,fig:uncurated_rk4_50} show samples generated by \rex (RK4) at 10 and 50 steps, respectively, and \cref{fig:uncurated_shark_10,fig:uncurated_shark_50} show samples generated by \rex (ShARK) at the same step counts.

\begin{figure}[t]
    \centering
    \begin{subfigure}{0.25\textwidth}
        \includegraphics[width=\textwidth]{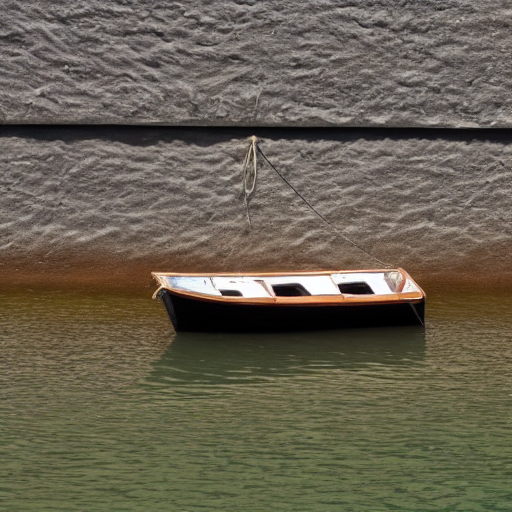}
    \end{subfigure}%
    \begin{subfigure}{0.25\textwidth}
        \includegraphics[width=\textwidth]{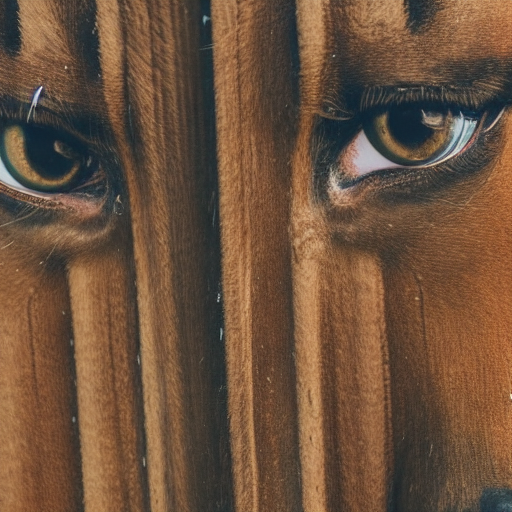}
    \end{subfigure}%
    \begin{subfigure}{0.25\textwidth}
        \includegraphics[width=\textwidth]{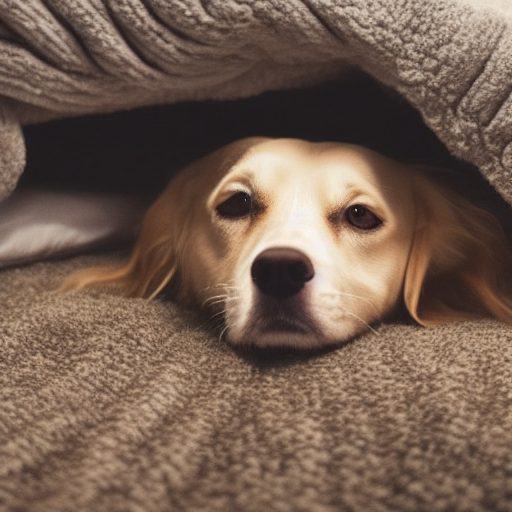}
    \end{subfigure}%
    \begin{subfigure}{0.25\textwidth}
        \includegraphics[width=\textwidth]{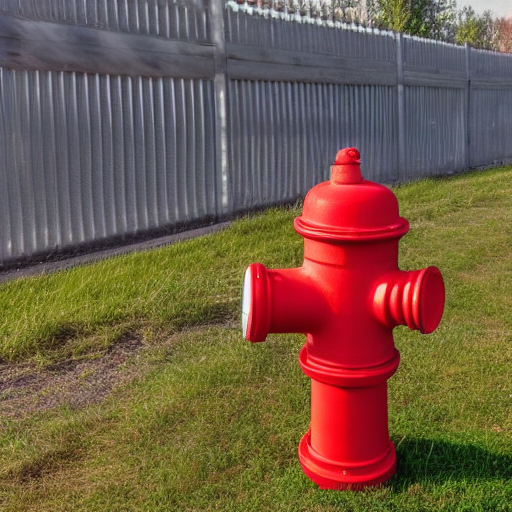}
    \end{subfigure}

    \begin{subfigure}{0.25\textwidth}
        \includegraphics[width=\textwidth]{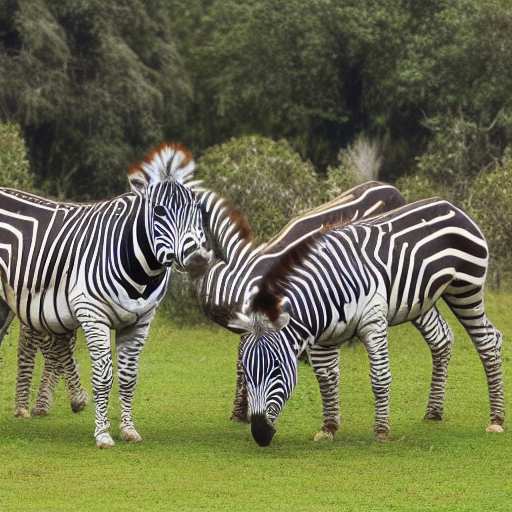}
    \end{subfigure}%
    \begin{subfigure}{0.25\textwidth}
        \includegraphics[width=\textwidth]{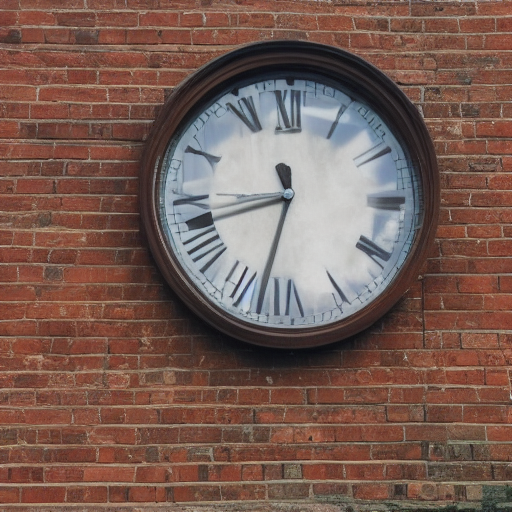}
    \end{subfigure}%
    \begin{subfigure}{0.25\textwidth}
        \includegraphics[width=\textwidth]{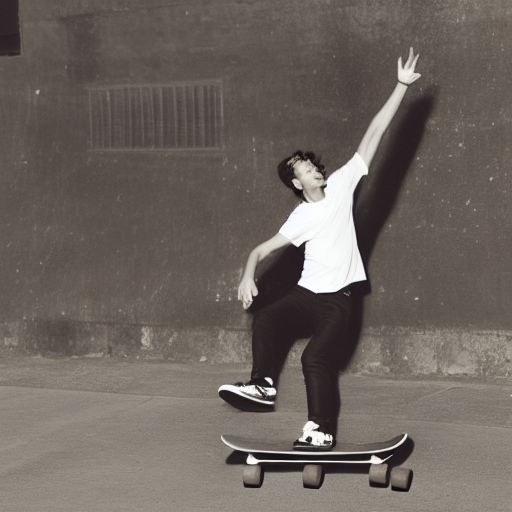}
    \end{subfigure}%
    \begin{subfigure}{0.25\textwidth}
        \includegraphics[width=\textwidth]{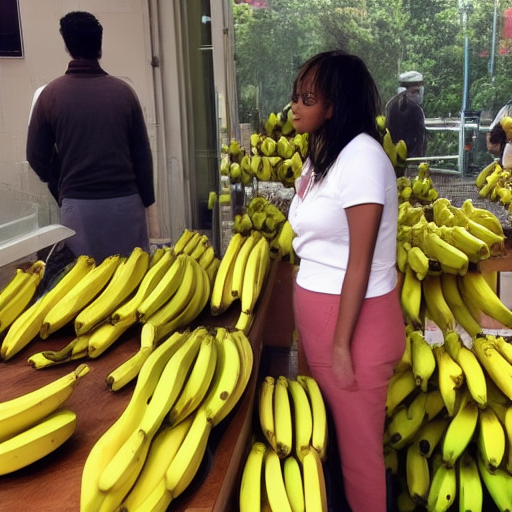}
    \end{subfigure}

    \begin{subfigure}{0.25\textwidth}
        \includegraphics[width=\textwidth]{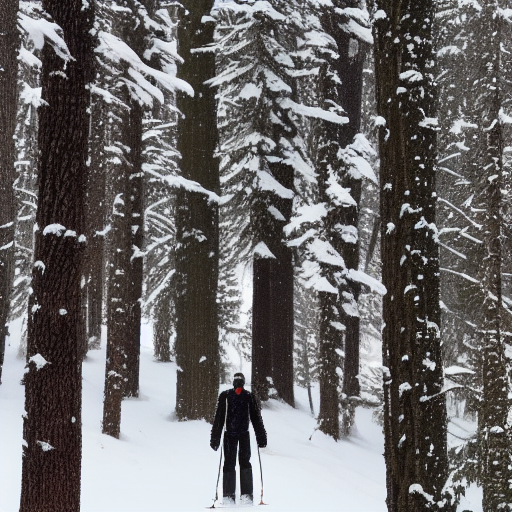}
    \end{subfigure}%
    \begin{subfigure}{0.25\textwidth}
        \includegraphics[width=\textwidth]{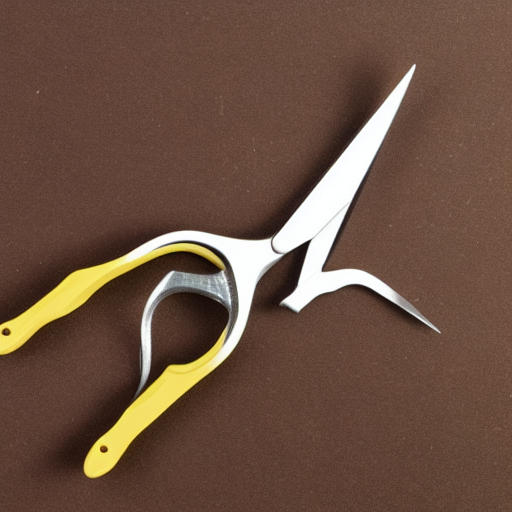}
    \end{subfigure}%
    \begin{subfigure}{0.25\textwidth}
        \includegraphics[width=\textwidth]{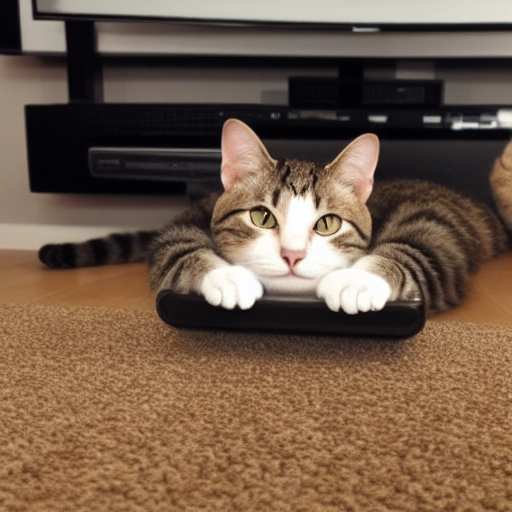}
    \end{subfigure}%
    \begin{subfigure}{0.25\textwidth}
        \includegraphics[width=\textwidth]{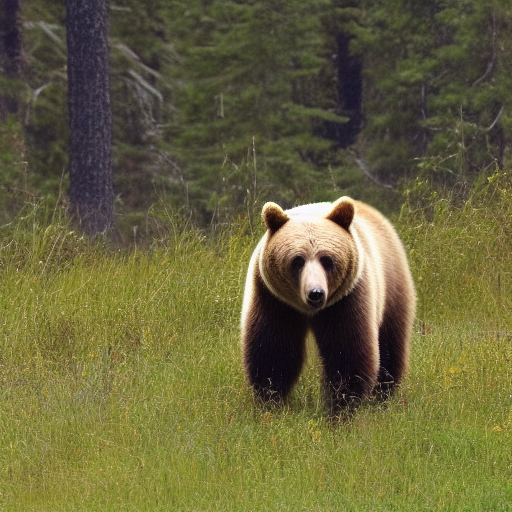}
    \end{subfigure}

    \begin{subfigure}{0.25\textwidth}
        \includegraphics[width=\textwidth]{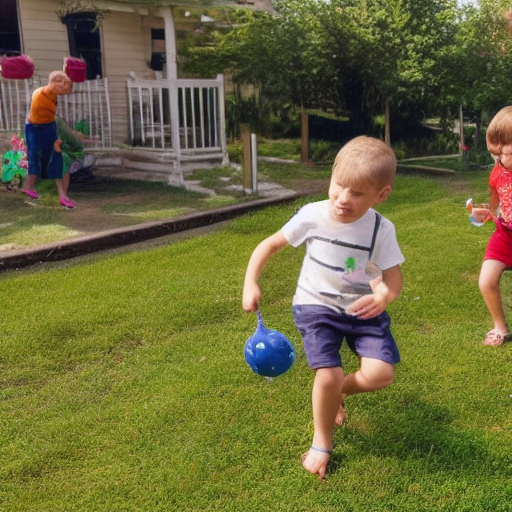}
    \end{subfigure}%
    \begin{subfigure}{0.25\textwidth}
        \includegraphics[width=\textwidth]{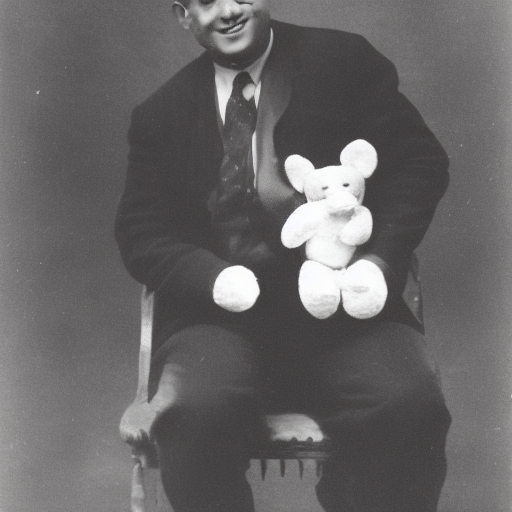}
    \end{subfigure}%
    \begin{subfigure}{0.25\textwidth}
        \includegraphics[width=\textwidth]{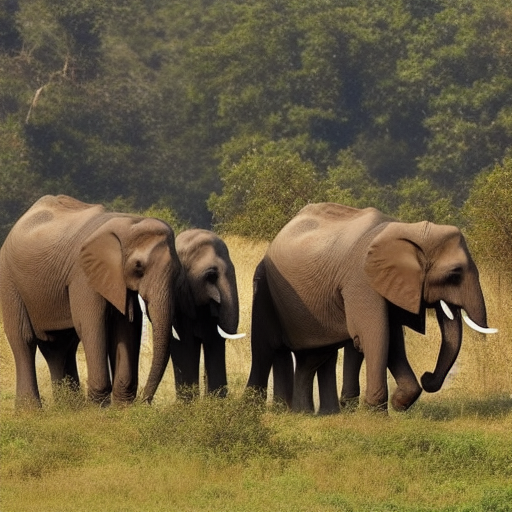}
    \end{subfigure}%
    \begin{subfigure}{0.25\textwidth}
        \includegraphics[width=\textwidth]{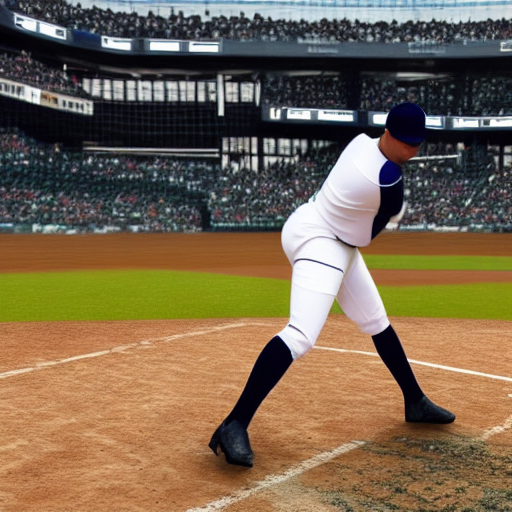}
    \end{subfigure}
    
    \caption{Uncurated samples created using \rex (RK4) and Stable Diffusion v1.5 ($512 \times 512$) and 10 discretization steps.}
    \label{fig:uncurated_rk4_10}
\end{figure}

\begin{figure}[t]
    \centering
    \begin{subfigure}{0.25\textwidth}
        \includegraphics[width=\textwidth]{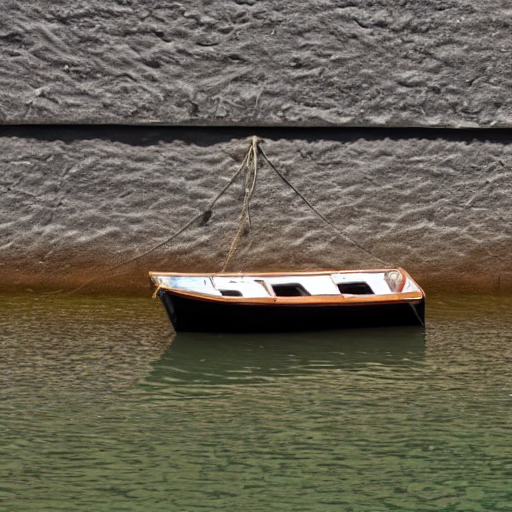}
    \end{subfigure}%
    \begin{subfigure}{0.25\textwidth}
        \includegraphics[width=\textwidth]{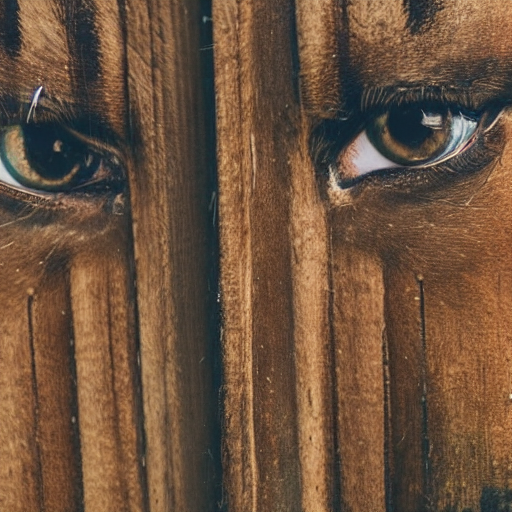}
    \end{subfigure}%
    \begin{subfigure}{0.25\textwidth}
        \includegraphics[width=\textwidth]{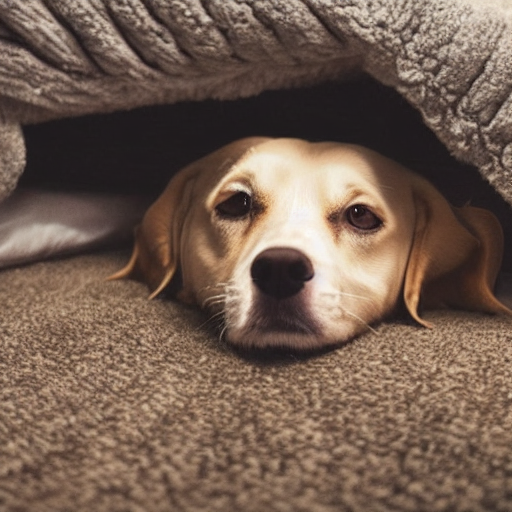}
    \end{subfigure}%
    \begin{subfigure}{0.25\textwidth}
        \includegraphics[width=\textwidth]{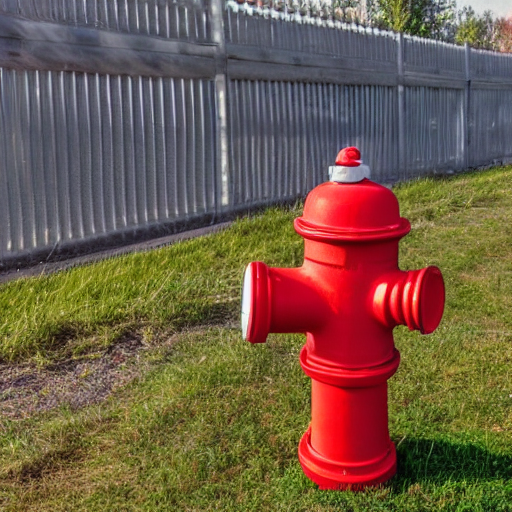}
    \end{subfigure}

    \begin{subfigure}{0.25\textwidth}
        \includegraphics[width=\textwidth]{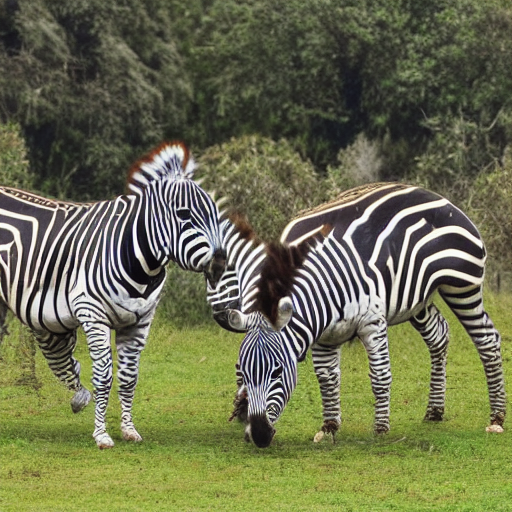}
    \end{subfigure}%
    \begin{subfigure}{0.25\textwidth}
        \includegraphics[width=\textwidth]{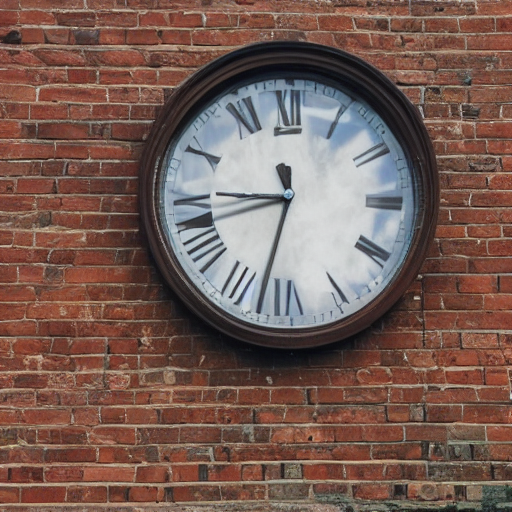}
    \end{subfigure}%
    \begin{subfigure}{0.25\textwidth}
        \includegraphics[width=\textwidth]{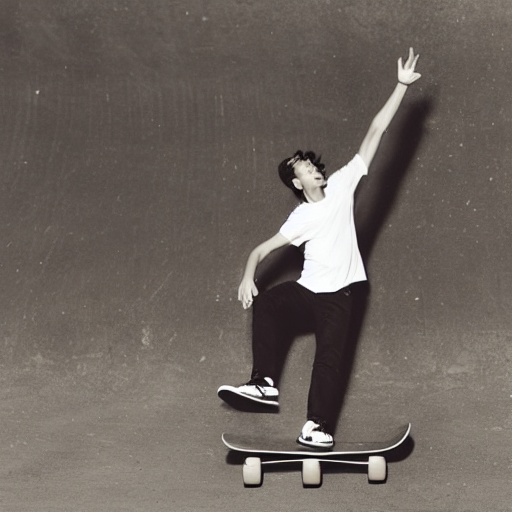}
    \end{subfigure}%
    \begin{subfigure}{0.25\textwidth}
        \includegraphics[width=\textwidth]{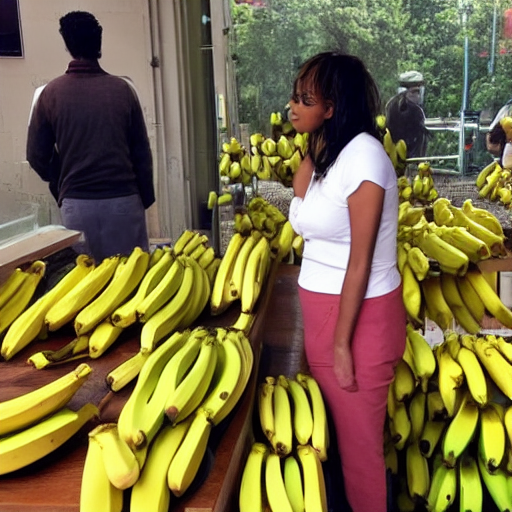}
    \end{subfigure}

    \begin{subfigure}{0.25\textwidth}
        \includegraphics[width=\textwidth]{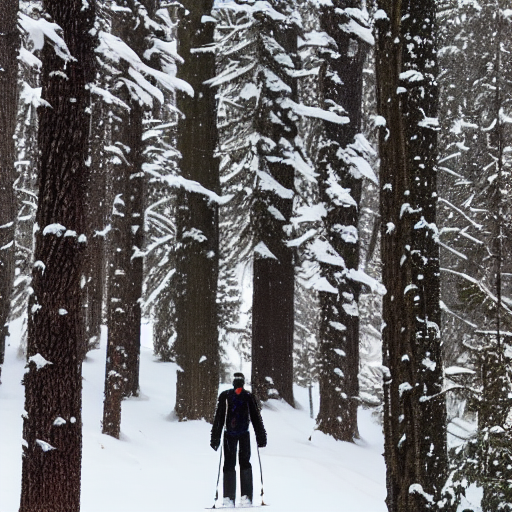}
    \end{subfigure}%
    \begin{subfigure}{0.25\textwidth}
        \includegraphics[width=\textwidth]{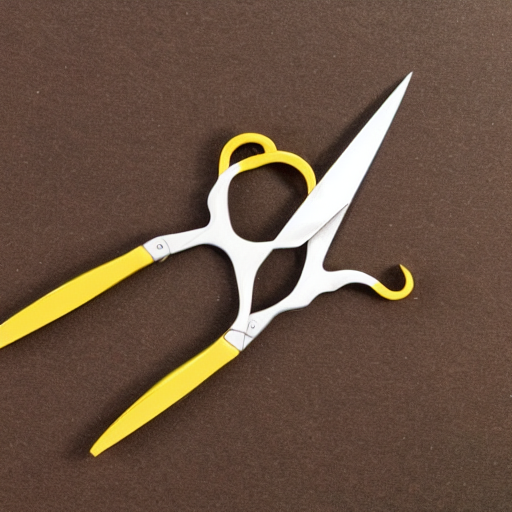}
    \end{subfigure}%
    \begin{subfigure}{0.25\textwidth}
        \includegraphics[width=\textwidth]{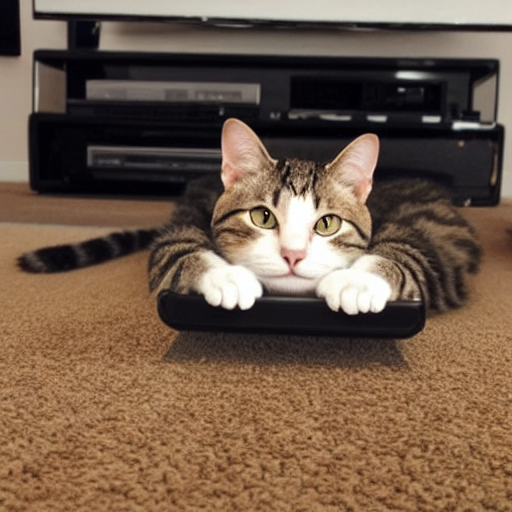}
    \end{subfigure}%
    \begin{subfigure}{0.25\textwidth}
        \includegraphics[width=\textwidth]{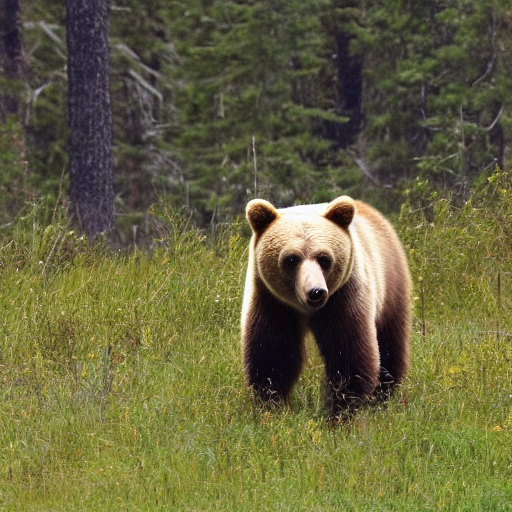}
    \end{subfigure}

    \begin{subfigure}{0.25\textwidth}
        \includegraphics[width=\textwidth]{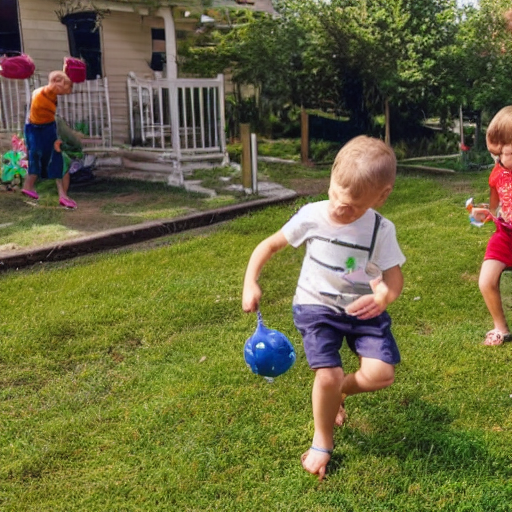}
    \end{subfigure}%
    \begin{subfigure}{0.25\textwidth}
        \includegraphics[width=\textwidth]{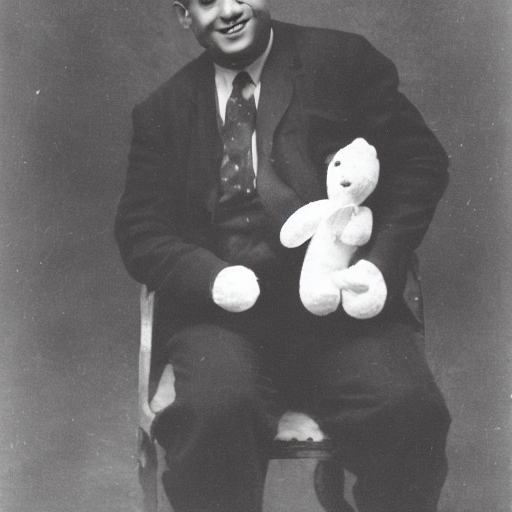}
    \end{subfigure}%
    \begin{subfigure}{0.25\textwidth}
        \includegraphics[width=\textwidth]{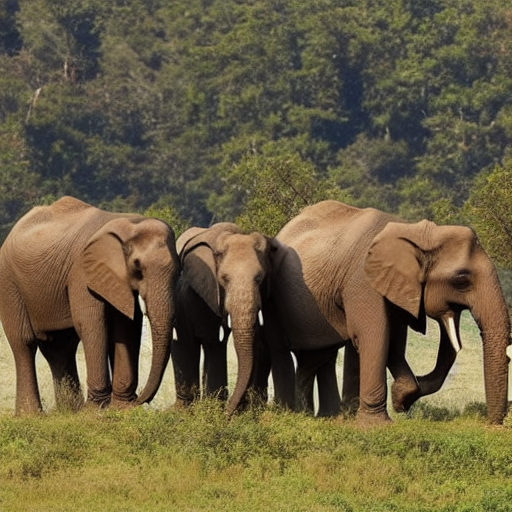}
    \end{subfigure}%
    \begin{subfigure}{0.25\textwidth}
        \includegraphics[width=\textwidth]{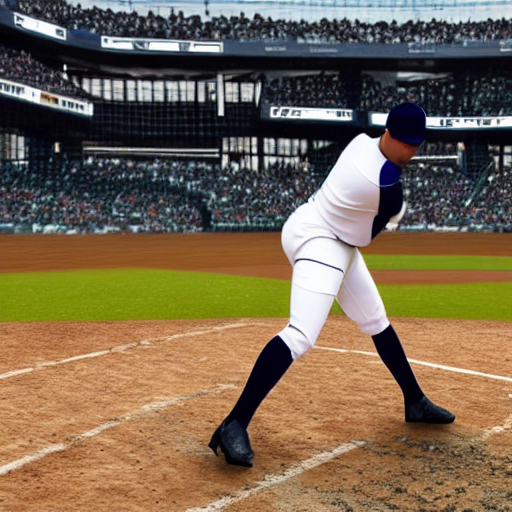}
    \end{subfigure}
    
    \caption{Uncurated samples created using \rex (RK4) and Stable Diffusion v1.5 ($512 \times 512$) and 50 discretization steps.}
    \label{fig:uncurated_rk4_50}
\end{figure}

\begin{figure}[t]
    \centering
    \begin{subfigure}{0.25\textwidth}
        \includegraphics[width=\textwidth]{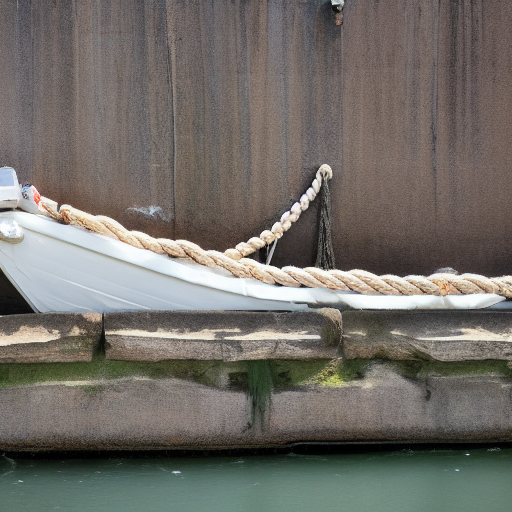}
    \end{subfigure}%
    \begin{subfigure}{0.25\textwidth}
        \includegraphics[width=\textwidth]{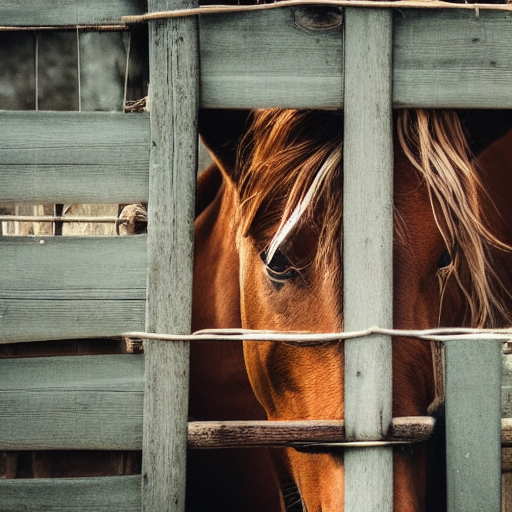}
    \end{subfigure}%
    \begin{subfigure}{0.25\textwidth}
        \includegraphics[width=\textwidth]{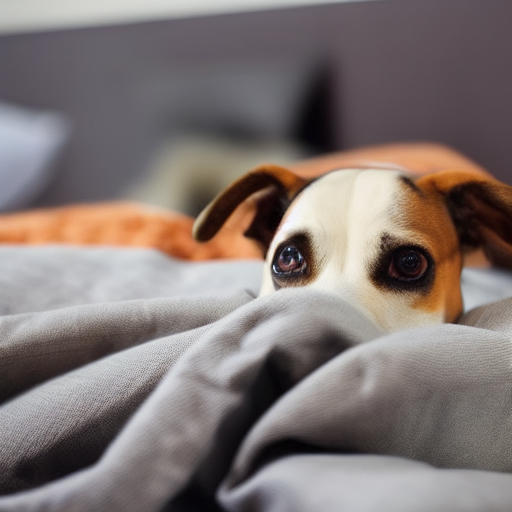}
    \end{subfigure}%
    \begin{subfigure}{0.25\textwidth}
        \includegraphics[width=\textwidth]{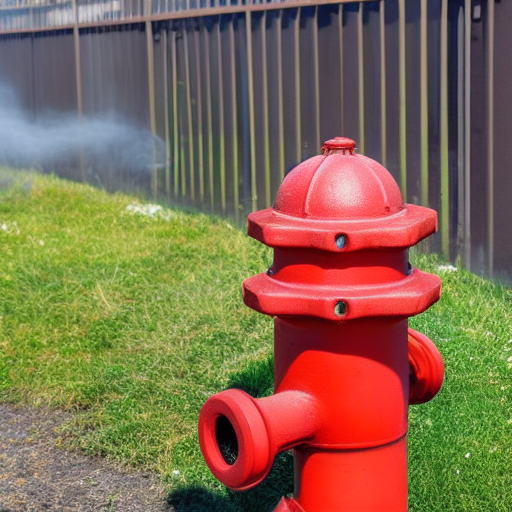}
    \end{subfigure}

    \begin{subfigure}{0.25\textwidth}
        \includegraphics[width=\textwidth]{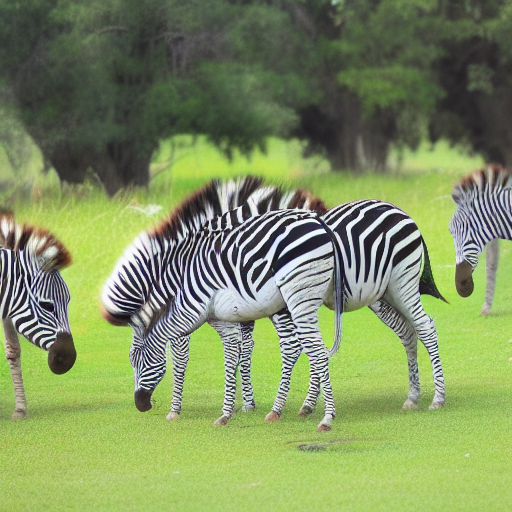}
    \end{subfigure}%
    \begin{subfigure}{0.25\textwidth}
        \includegraphics[width=\textwidth]{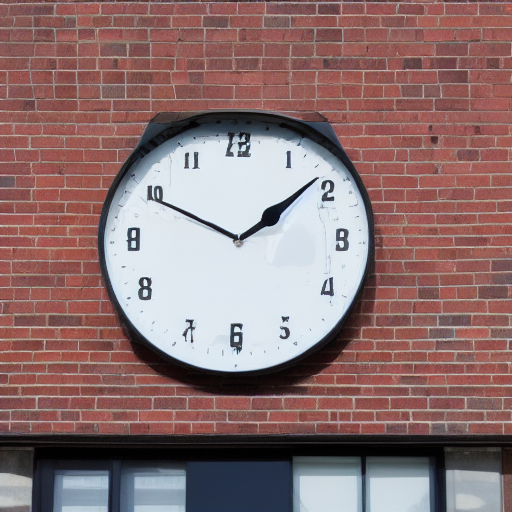}
    \end{subfigure}%
    \begin{subfigure}{0.25\textwidth}
        \includegraphics[width=\textwidth]{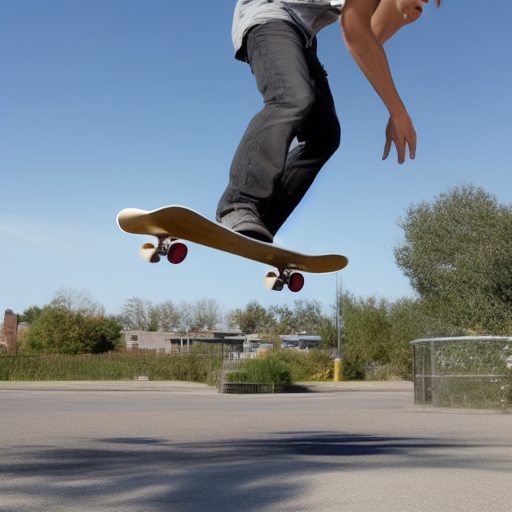}
    \end{subfigure}%
    \begin{subfigure}{0.25\textwidth}
        \includegraphics[width=\textwidth]{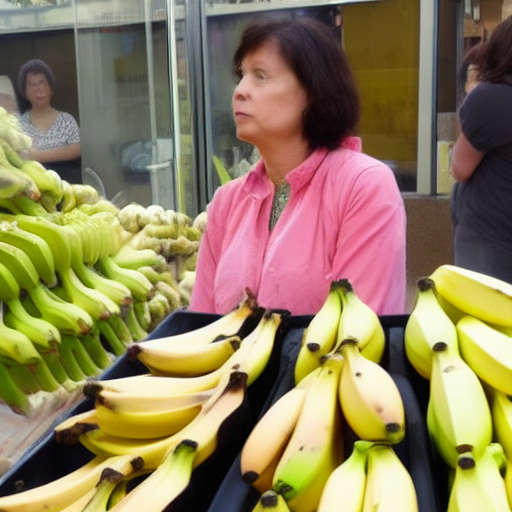}
    \end{subfigure}

    \begin{subfigure}{0.25\textwidth}
        \includegraphics[width=\textwidth]{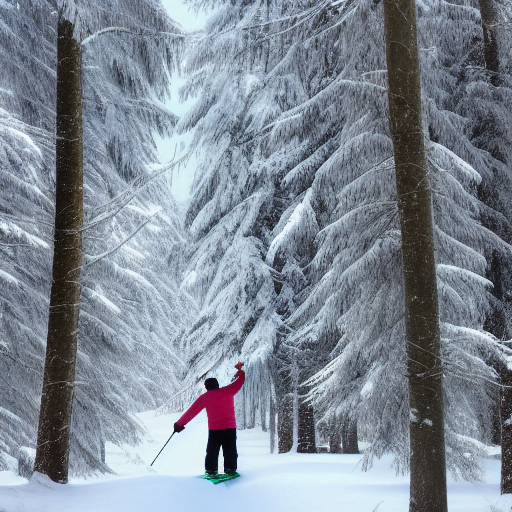}
    \end{subfigure}%
    \begin{subfigure}{0.25\textwidth}
        \includegraphics[width=\textwidth]{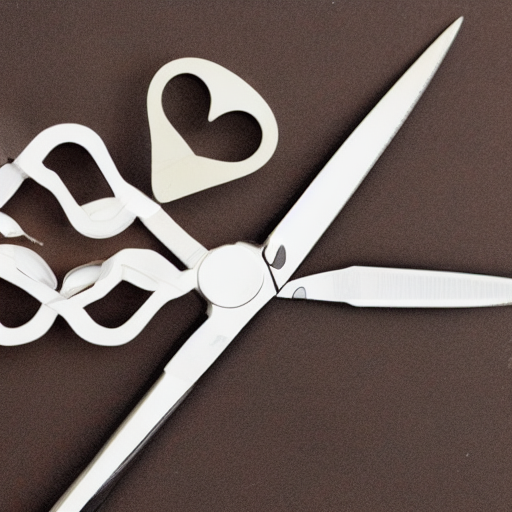}
    \end{subfigure}%
    \begin{subfigure}{0.25\textwidth}
        \includegraphics[width=\textwidth]{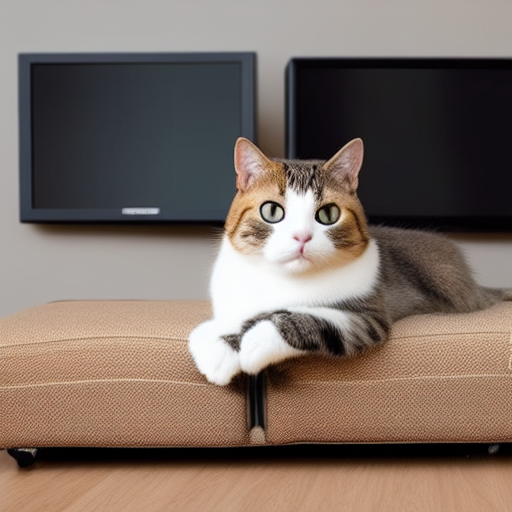}
    \end{subfigure}%
    \begin{subfigure}{0.25\textwidth}
        \includegraphics[width=\textwidth]{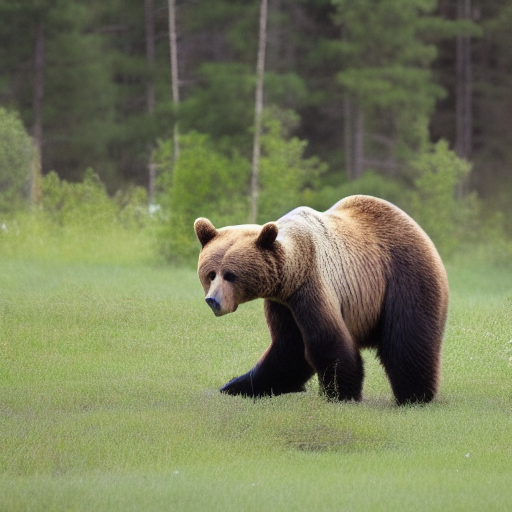}
    \end{subfigure}

    \begin{subfigure}{0.25\textwidth}
        \includegraphics[width=\textwidth]{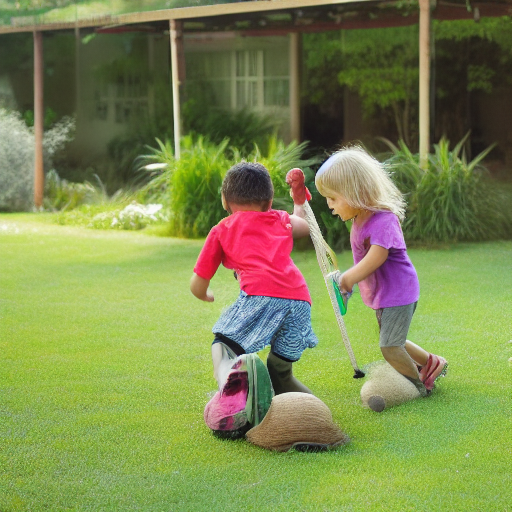}
    \end{subfigure}%
    \begin{subfigure}{0.25\textwidth}
        \includegraphics[width=\textwidth]{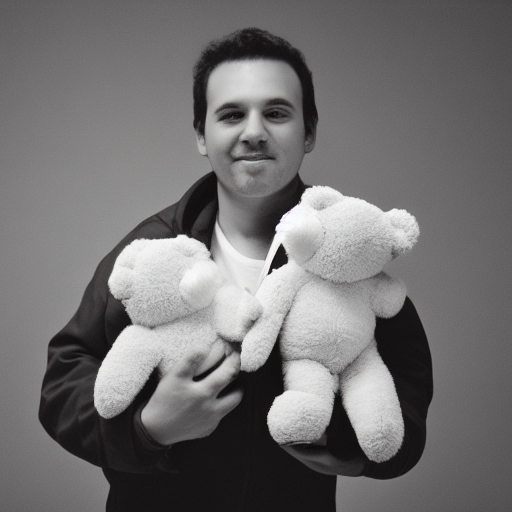}
    \end{subfigure}%
    \begin{subfigure}{0.25\textwidth}
        \includegraphics[width=\textwidth]{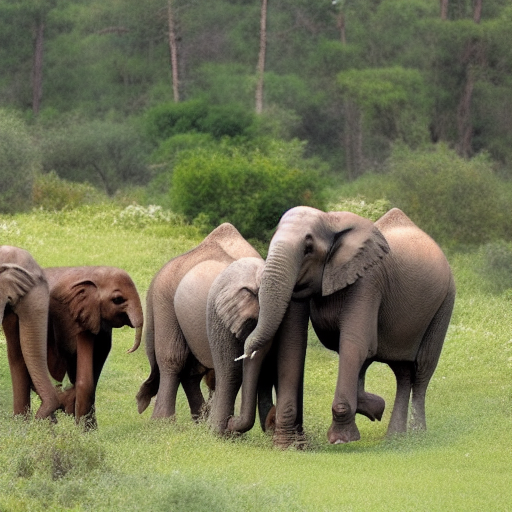}
    \end{subfigure}%
    \begin{subfigure}{0.25\textwidth}
        \includegraphics[width=\textwidth]{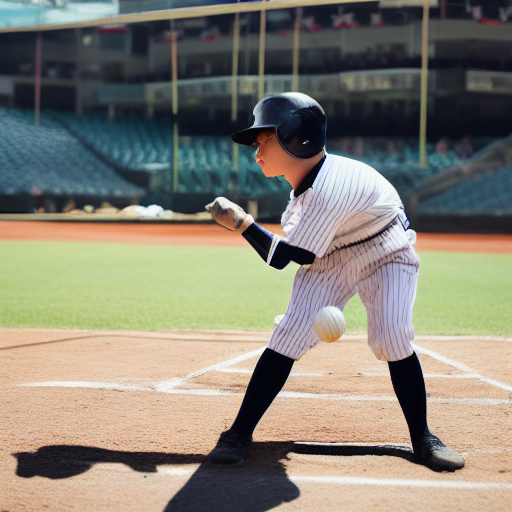}
    \end{subfigure}
    
    \caption{Uncurated samples created using \rex (ShARK) and Stable Diffusion v1.5 ($512 \times 512$) and 10 discretization steps.}
    \label{fig:uncurated_shark_10}
\end{figure}

\begin{figure}[t]
    \centering
    \begin{subfigure}{0.25\textwidth}
        \includegraphics[width=\textwidth]{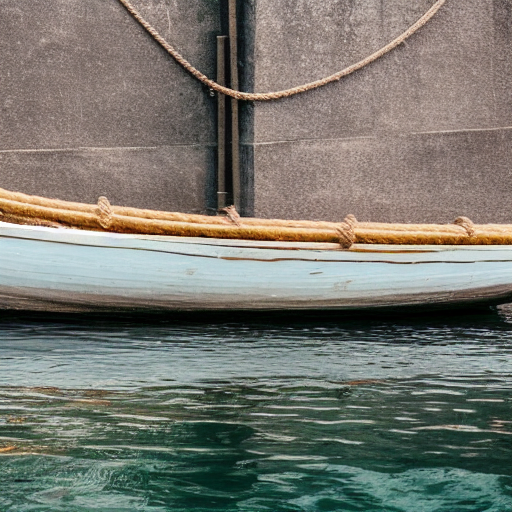}
    \end{subfigure}%
    \begin{subfigure}{0.25\textwidth}
        \includegraphics[width=\textwidth]{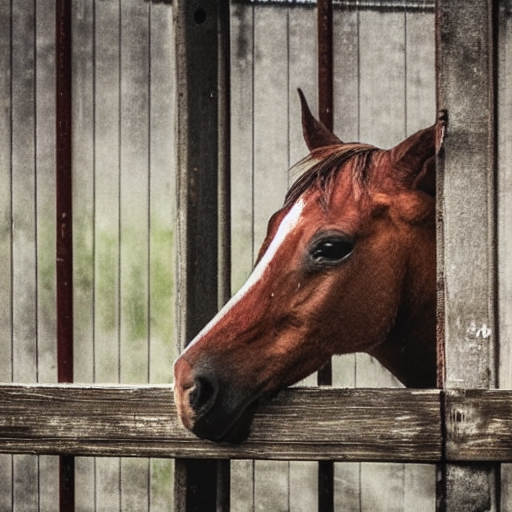}
    \end{subfigure}%
    \begin{subfigure}{0.25\textwidth}
        \includegraphics[width=\textwidth]{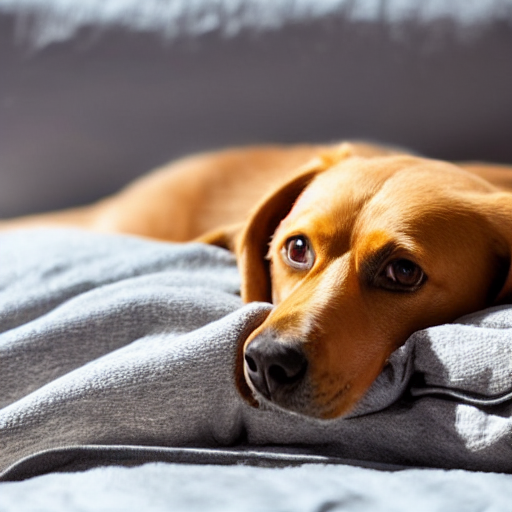}
    \end{subfigure}%
    \begin{subfigure}{0.25\textwidth}
        \includegraphics[width=\textwidth]{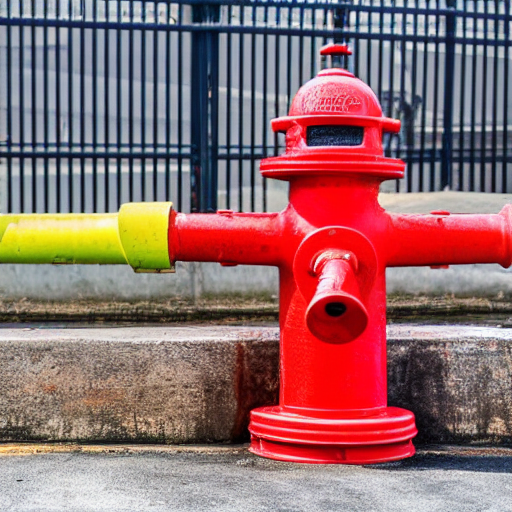}
    \end{subfigure}

    \begin{subfigure}{0.25\textwidth}
        \includegraphics[width=\textwidth]{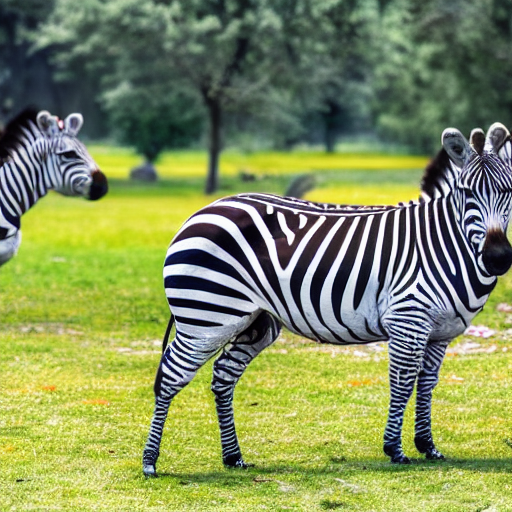}
    \end{subfigure}%
    \begin{subfigure}{0.25\textwidth}
        \includegraphics[width=\textwidth]{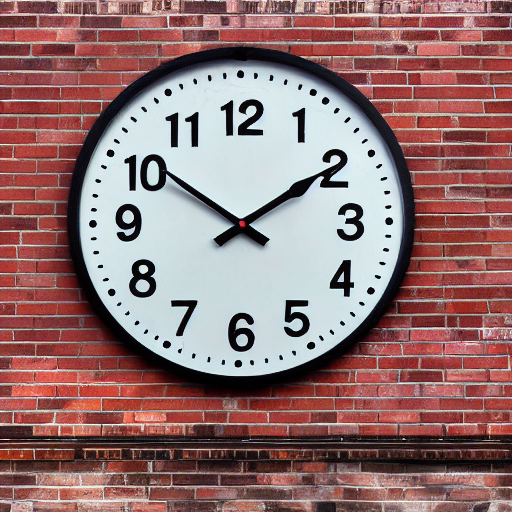}
    \end{subfigure}%
    \begin{subfigure}{0.25\textwidth}
        \includegraphics[width=\textwidth]{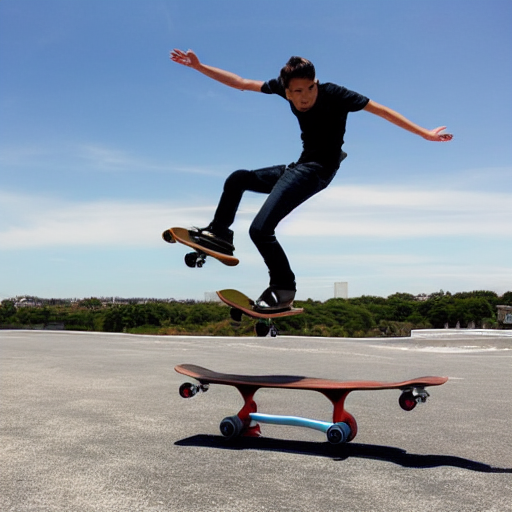}
    \end{subfigure}%
    \begin{subfigure}{0.25\textwidth}
        \includegraphics[width=\textwidth]{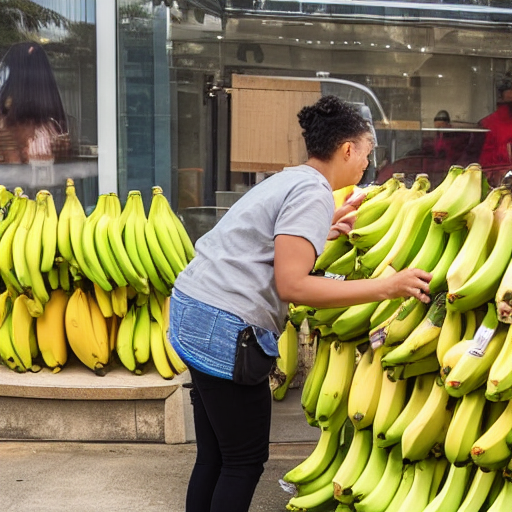}
    \end{subfigure}

    \begin{subfigure}{0.25\textwidth}
        \includegraphics[width=\textwidth]{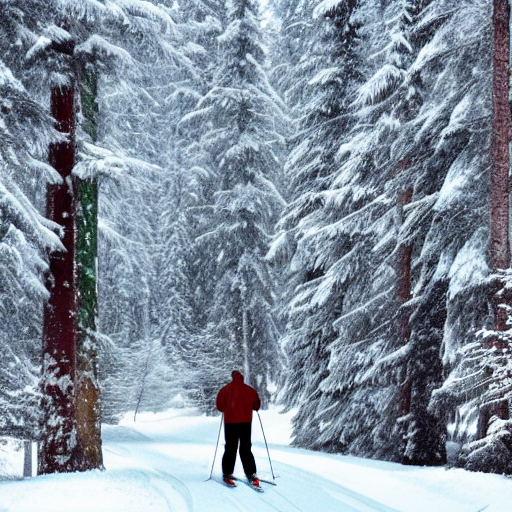}
    \end{subfigure}%
    \begin{subfigure}{0.25\textwidth}
        \includegraphics[width=\textwidth]{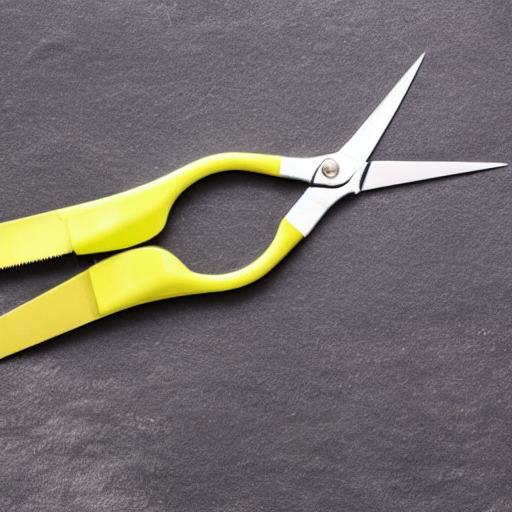}
    \end{subfigure}%
    \begin{subfigure}{0.25\textwidth}
        \includegraphics[width=\textwidth]{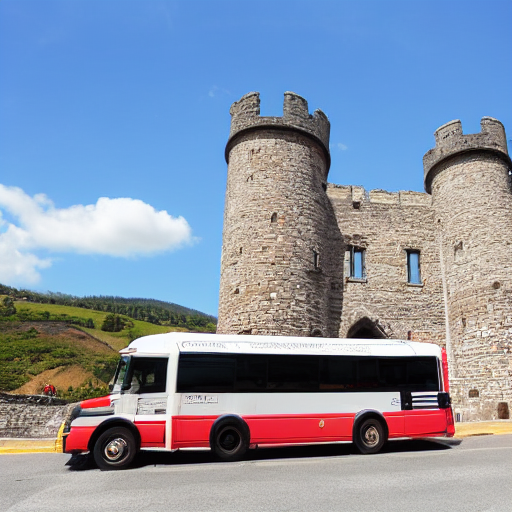}
    \end{subfigure}%
    \begin{subfigure}{0.25\textwidth}
        \includegraphics[width=\textwidth]{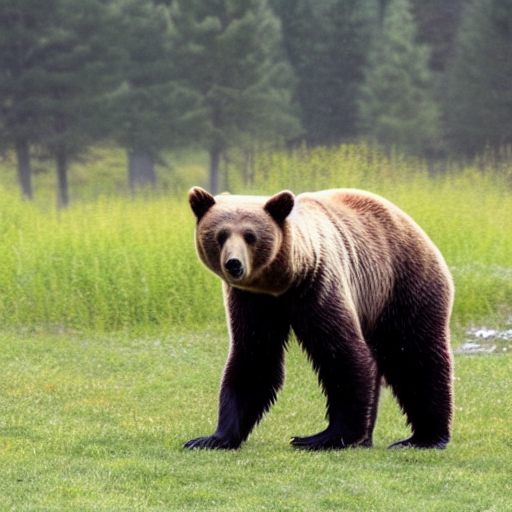}
    \end{subfigure}

    \begin{subfigure}{0.25\textwidth}
        \includegraphics[width=\textwidth]{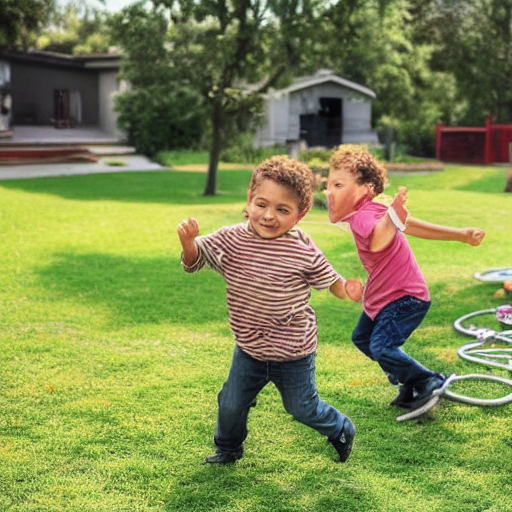}
    \end{subfigure}%
    \begin{subfigure}{0.25\textwidth}
        \includegraphics[width=\textwidth]{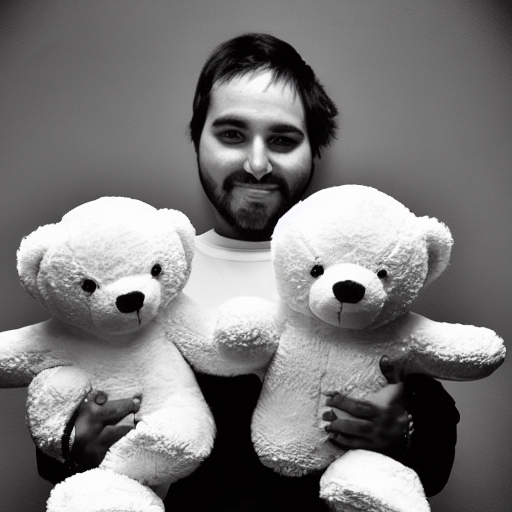}
    \end{subfigure}%
    \begin{subfigure}{0.25\textwidth}
        \includegraphics[width=\textwidth]{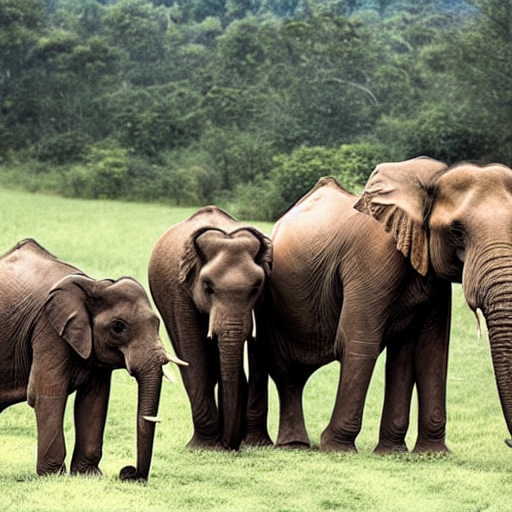}
    \end{subfigure}%
    \begin{subfigure}{0.25\textwidth}
        \includegraphics[width=\textwidth]{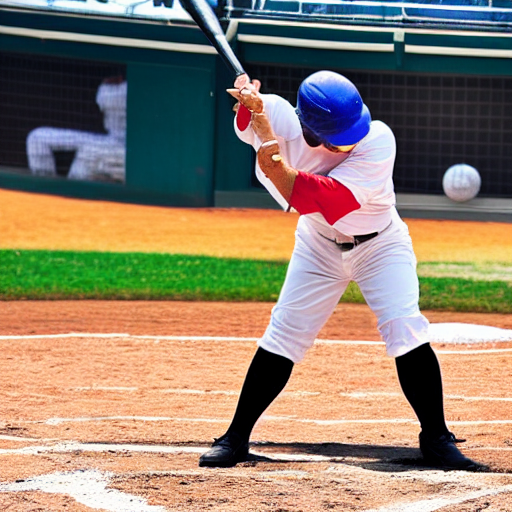}
    \end{subfigure}
    
    \caption{Uncurated samples created using \rex (ShARK) and Stable Diffusion v1.5 ($512 \times 512$) and 50 discretization steps.}
    \label{fig:uncurated_shark_50}
\end{figure}

\end{document}